\newcommand{\bindex}{\ensuremath{{\mathsf{b}}}\xspace}		% Index for bottlenecks
\newcommand{\clusters}{\ensuremath{{\mathsf{C}}}\xspace}	% Set of \clusterstext
\newcommand{\clustertext}{cluster\xspace}		% text for room, e.g. "room" or "cluster"
\newcommand{\clusterstext}{clusters\xspace}		% text for rooms, e.g. "rooms" or "clusters"
\newcommand{\clusterindex}{\ensuremath{\mathsf{c}}\xspace}	% a \clustertext, i.e. element of \clusters
\newcommand{\boundary}[1]{\ensuremath{{{{\partial{#1}}}}}\xspace} % boundary of a room
\newcommand{\picluster}{\ensuremath{{\boldsymbol{\pi}}}\xspace}	% set of fine scale policies in a room
\newcommand{\Ppirestrtocluster}{P_\clusterindex^{\pi_\clusterindex}}
\newcommand{\Rpirestrtocluster}{R_\clusterindex^{\pi_\clusterindex}}
\newcommand{\Gpirestrtocluster}{\Gamma_\clusterindex^{\pi_\clusterindex}}
\newcommand{\problem}[1]{\ensuremath{\mathrm{MMDP}_{(#1)}}\xspace}
\newcommand{\problemone}{\ensuremath{\problem 1}\xspace}
\newcommand{\problemtwo}{\ensuremath{\problem 2}\xspace}
\newcommand{\clusterone}{\ensuremath{\clusterindex_1}\xspace}
\newcommand{\clustertwo}{\ensuremath{\clusterindex_2}\xspace}
\newcommand{\bbone}{\mathbf{1}}
\newcommand{\tr}{\top\!}
\newcommand{\bbE}{{\mathbb E}}
\newcommand{\bbR}{{\mathbb R}}
\newcommand{\bbP}{{\mathbb P}}
\newcommand{\Bpi}{{\mathcal B}^{\pi}}
\newcommand{\B}{{\mathcal B}}
\newcommand{\cO}{{\mathcal O}}
\newcommand{\cL}{{\mathcal L}}
\newcommand{\cW}{{\mathcal W}}
\newcommand{\cG}{{\mathcal G}}
\newcommand{\cP}{{\mathcal P}}
\newcommand{\cZ}{{\mathcal Z}}
\newcommand{\R}{{\mathbb R}}
\newcommand{\sgn}{\operatorname{sgn}}
\newcommand{\dom}{\operatorname{dom}}
\newcommand{\supp}{\operatorname{supp}}
\newcommand{\indic}{\mathbbm{1}}
\newcommand\restr[2]{{% we make the whole thing an ordinary symbol
  \left.\kern-\nulldelimiterspace % automatically resize the bar with \right
  #1 % the function
  \vphantom{\big|} % pretend it's a little taller at normal size
  \right|_{#2} % this is the delimiter
  }}
\newcommand{\Ui}{\accentset{\circ}{\clusterindex}}
\newcommand{\Uprime}{\ensuremath{\clusterindex'_{s'}}\xspace}
\newcommand{\vi}{\accentset{\circ}{\clustertwo}}
\newcommand{\Si}{\accentset{\circ}{S}}
\newcommand{\dcluster}{\ensuremath{\partial\clusterindex}\xspace}
\newcommand{\icluster}{\ensuremath{\accentset{\circ}{\clusterindex}\xspace}}
\begin{document}

%\title{Multiscale Knowledge Transfer in Reinforcement Learning}
%\title{Multiscale Compression of Markov Decision Processes for Transfer Learning
%and Computational Efficiency}
%Fast solutions to markov decision problems
%\title{Transfer and Efficiency for Markov Decision Processes Through Multiscale Compression}
%\title{Multiscale Transfer and Efficiency for Markov Decision Processes}
%\title{Efficient Reinforcement Learning with Multiscale Markov Decision Process Compression}
%\title{Multiscale Compression of Markov Decision Problems for Efficient Reinforcement Learning}
%\title{Multiscale Compression of Markov Decision Problems for Transfer and Efficiency}
%\title{Multiscale Compression of Markov Decision Problems: Transfer and Efficiency}
%\title{Multiscale Compression of Markov Decision Problems: Decoupling and Transfer}
%\title{Multiscale Compression of Markov Decision Problems}% for Transfer Learning}
%\title{Multiscale Markov Decision Process Hierarchies for Transfer Learning}
%\title{Efficiency and Transfer in Reinforcement Learning with Multiscale Representations}
%\title{Multiscale Compression of Markov Decision Problems: Transfer and Computational Efficiency}
\title{Multiscale Markov Decision Problems:\\ Compression, Solution, and Transfer Learning}

\author{\name{Jake Bouvrie}\thanks{Corresponding author. Current address: Laboratory for Computational and Statistical Learning, Massachusetts Institute of Technology, Bldg. 46-5155, 77 Massachusetts Avenue, Cambridge, MA 02139. Email: \texttt{jvb@csail.mit.edu}}\email jvb@math.duke.edu \\
\addr Department of Mathematics \\
Duke University \\
Durham, NC 27708
\AND
\name Mauro Maggioni \email mauro@math.duke.edu \\
\addr Departments of Mathematics, Computer Science, and Electrical and Computer Engineering\\
Duke University \\
Durham, NC 27708
}
\editor{TBD}

\maketitle

%\tableofcontents

%Multi-scale Representations for the Efficient Solution of Markov Decision Processes
%Mulit-scale Analysis and Decomposition of Markov Decision Processes
%Efficient Solution of Markov Decision Processes with Multi-scale Representations

\begin{abstract}
Many problems in sequential decision making and stochastic control often have natural multiscale structure: sub-tasks are assembled together to accomplish complex goals. Systematically inferring and leveraging hierarchical structure, particularly beyond a single level of abstraction, has remained a longstanding challenge. We describe a fast multiscale procedure for repeatedly compressing, or homogenizing, Markov decision processes (MDPs), wherein a hierarchy of sub-problems at different scales is automatically determined. Coarsened MDPs are themselves independent, deterministic MDPs, and may be solved using existing algorithms. The multiscale representation delivered by this procedure decouples sub-tasks from each other and can lead to substantial improvements in convergence rates both locally within sub-problems and globally across sub-problems, yielding significant computational savings. A second fundamental aspect of this work is that these multiscale decompositions yield new transfer opportunities across different problems, where solutions of sub-tasks at different levels of the hierarchy may be amenable to transfer to new problems. Localized transfer of policies and potential operators at  arbitrary scales is emphasized. Finally, we demonstrate compression and transfer in a collection of illustrative domains, including examples involving discrete and continuous statespaces.
%Knowledge transfer is explored within the context of a framework motivated by the multiscale %decomposition, and centered on the transfer of policies and potential operators at 
%
%Many reinforcement learning problems naturally enjoy strong multiscale structure: subtasks are often %assembled together to accomplish complex goals. However, automatically decomposing a problem into a %hierarchy of tasks has remained a longstanding challenge. We propose a fast multiscale algorithm for %repeatedly {\em compressing} or {\em coarsening} Markov decision processes, wherein a hierarchy of %subtasks at different scales is automatically determined. The subtasks are themselves independent MDPs, %and may be solved using any algorithm. The multiscale representation delivered by the algorithm leads %to significant computational savings when solving a problem, as well as immediate transfer learning %opportunities across related reinforcement learning tasks.
\end{abstract}

\begin{keywords}
Markov decision processes, hierarchical reinforcement learning, transfer, multiscale analysis.
\end{keywords}
%% Allerton abstract:
%Many problems in reinforcement learning and stochastic control naturally enjoy strong multiscale %structure: subtasks are often assembled together to accomplish complex goals. However, automatically %decomposing a problem into a hierarchy of tasks has remained a longstanding challenge. We describe a %fast multiscale algorithm for repeatedly compressing or coarsening Markov decision processes, wherein a %hierarchy of sub-problems at different scales is automatically determined. Coarsened MDPs are %themselves  independent MDPs, and may be solved using any method. Furthermore, the multiscale %representation delivered by the algorithm decouples subtasks from each other and admits an aggressive %form of preconditioning. These advantages lead to potentially significant computational savings when %solving a problem, as well as immediate transfer learning opportunities across related Markov control %tasks.

\section{Introduction}
Identifying and leveraging hierarchical structure has been a key, longstanding challenge for sequential decision making and planning research~\citep{SuttonOptions,DietterichHRL,ParrRussell}. Hierarchical structure generally suggests a decomposition of a complex problem into smaller, simpler sub-tasks, which may be, ideally, considered independently~\citep{BartoHRLReview}. One or more layers of abstraction may also provide a broad mechanism for reusing or transferring commonly occurring sub-tasks among related problems~\citep{BarryKael:IJCAI:11,Taylor09,SoniSingh:06,FergusonMahadevan:06}. These themes are  restatements of the divide-and-conquer principle: it is usually dramatically cheaper to solve a collection of small problems than a single big problem, when the solution of each problem involves a number of computations super-linear in the size of the problem. Two ingredients are often sought for efficient divide-and-conquer approaches: a hierarchical subdivision of a large problem into disjoint subproblems, and a procedure merging the solution of subproblems into the solution of a larger problem.

This paper considers the discovery and use of hierarchical structure -- {\em multiscale} structure in particular -- in the context of discrete-time Markov decision problems. Fundamentally, inferring  multiscale decompositions, learning abstract actions, and planning across scales are intimately related concepts, and we couple these elements tightly within a unifying framework. Two main contributions are presented:
\begin{itemize}
\item The first is an efficient  multiscale procedure for partitioning and then repeatedly {\em compressing} or {\em homogenizing} Markov decision processes (MDPs).
\item The second contribution consists of a means for identifying {\em{transfer}} opportunities, representing transferrable information, and incorporating this information into new problems, within the context of the multiscale decomposition.
\end{itemize}

Several possible approaches to multiscale partitioning are considered, in which statespace geometry, intrinsic dimension, and the reward structure play prominent roles, although a wide range of existing algorithms may be chosen. Regardless of how the partitioning is accomplished, the statespace is divided into a multiscale collection of ``\clusterstext'' connected via a small set of ``bottleneck'' states. A key function of the partitioning step is to tie {\em computational complexity} to {\em problem complexity}. It is problem complexity, controlled for instance by the inherent geometry of a problem, and its amenability to being partitioned, that should determine the computational complexity, rather than the choice of statespace representation or sampling. %Exact, dynamic programming based global solution methods are often constrained by the latter that %determines the computational burden. 
The decomposition we suggest attempts to rectify this difficulty.

The multiscale compression procedure is local and efficient because only one \clustertext needs to be  considered at a time. The result of the compression step is a representation decomposing a problem into a hierarchy of distinct sub-problems at multiple scales, each of which may be solved efficiently and independently of the others.  The homogenization we propose is perfectly recursive in that a compressed MDP is again another independent, deterministic MDP, and the statespace of the compressed MDP is a (small) subset of the original problem's statespace.  Moreover, each coarse MDP in a multiscale hierarchy is ``consistent in the mean'' with the underlying fine scale problem. The compressed representation coarsely summarizes a problem's statespace, reward structure and Markov transition dynamics, and may be computed either analytically or by Monte-Carlo simulations.  Actions at coarser scales are typically complex, ``macro'' actions, and the coarsening procedure may be thought of as producing different levels of abstraction of the original problem. In an appropriate sense, optimal value functions at homogenized scales are homogenized optimal value functions at the finer scale.

Given such a hierarchy of successively coarsened representations, an MDP may be solved efficiently. We describe a family of multiscale solution algorithms which realize computational savings in two ways: (1) {\em Localization}: computation  is restricted to small, decoupled sub-problems; and (2) {\em conditioning}: sub-problems are comparatively well-conditioned due to improved local mixing times at finer scales and fast mixing globally at coarse scales, and obey a form of global consistency with each other through coarser scales, which are themselves well-conditioned coarse MDPs. The key idea behind these algorithms is that sub-problems at a  given scale decouple conditional on a solution at the next coarser scale, but must contribute constructively towards solving the overarching problem through the coarse solution; interleaved updates to solutions at pairs of fine and coarse scales are repeatedly applied until convergence. We present one particular algorithm in detail: a localized variant of modified asynchronous policy iteration that can achieve, under suitable geometric assumptions on the problem, a cost of $\cO(n\log n)$ per iteration, if there are $n$ states. The algorithm is also shown to converge to a globally optimal solution from any initial condition.

Solutions to sub-problems may be transferred among related tasks, giving a systematic means to approach transfer learning at multiple scales in planning and reinforcement learning domains. If a learning problem can be decomposed into a hierarchy of distinct parts then there is hope that both a ``meta policy'' governing transitions between the parts, as well as some of the parts themselves, may be transferred when appropriate. We propose a novel form of multiscale knowledge transfer between sufficiently related MDPs that is made possible by the multiscale framework: Transfer between two hierarchies proceeds by matching sub-problems at various scales, transferring policies, value functions and/or potential operators where appropriate (and where it has been determined that transfer can help), and finally solving for the remainder of the destination problem using the transferred information.  In this sense knowledge of a partial or coarse solution to one problem can be used to quickly learn another, both in terms of computation and, where applicable, exploratory experience.

The paper is organized as follows. In Section~\ref{sec:bkgnd-prelim} we collect preliminary definitions, and provide a brief overview of Markov Decision Processes with stochastic policies and state/action dependent rewards and discount factors. Section~\ref{sec:mmdp-toplevel} describes partitioning, compression and multiscale solution of MDPs. Proofs and additional comments concerning computational considerations related to this section are collected in the Appendix. In Section~\ref{sec:transfer} we introduce the multiscale transfer learning framework, and in Section~\ref{sec:examples} we provide examples demonstrating compression and transfer in the context of three different domains (discrete and continuous). We discuss and compare related work in Section~\ref{sec:prior_work}, and conclude with  some additional observations, comments and open problems in Section~\ref{sec:discussion}.

\section{Background and Preliminaries}
\label{sec:bkgnd-prelim}

The following subsections provide a brief overview of Markov decision processes as well as some definitions and notation used in the paper. 
 
\subsection{Markov Decision Processes}
\label{sec:mdp-defns}
Formally, a Markov decision process (MDP) (see e.g.~\citep{Puterman}, \citep{BertsekasVol2}) is a sequential decision problem defined by a tuple $(S,A,P,R,\Gamma)$ consisting of a statespace $S$, an action (or ``control'') set $A$, and for $s,s'\in S, a\in A$, a transition probability tensor $P(s,a,s')$, reward function $R(s,a,s')$ and collection of discount factors $\Gamma(s,a,s')\in(0,1)$. We will assume that $S, A$ are finite sets, and that $R$ is bounded. The definition above is slightly more general than usual in that we allow state and action dependent rewards and discount factors; the reason for adopting this convention will be made clear in Section~\ref{sec:compression}.
The probability $P(s,a,s')$ refers to the probability that we transition to $s'$ upon taking action $a$ in $s$, while $R(s,a,s')$ is the reward collected in the event we transition from $s$ to $s'$ {\em{after}} taking action $a$ in $s$. 
%The symbol $\brange{n}$ will denote the set of integers $\{1,\ldots,n\}$.
%In the development that follows we will assume Markov decision processes where rewards are received only after taking an action and transitioning to the next state. 
%This convention becomes useful particularly when rewards explicitly depend on actions and the source state. 

\subsubsection{Stochastic Policies}
%In a further departure from the canonical setting, we will consider stochastic control policies. 
Let $\cP(A)$ denote the set of all discrete probability distributions on $A$. 
A {\em{stationary stochastic policy}} (simply a {\em{policy}}, from now on) \mbox{$\pi:S\to \cP(A)$} is a function mapping states into distributions over the actions.  Working with this more general class of policies will allow for convex combinations of policies later on. 
A policy $\pi$ may be thought of as a non-negative function on $S\times A$ satisfying $\sum_{a\in A}\pi(s,a)=1$ for each $s\in S$, where $\pi(s,a)$ denotes the probability that we take action $a$ in state $s$. We will often write $\pi(s)$ when referring to the {\em distribution} on actions associated to the (deterministic) state $s\in S$, so that $a\sim\pi(s)$ denotes the $A$-valued random variable $a$ having law $\pi(s)$. Deterministic policies can be recovered by placing unit masses on the desired actions\footnote{We will allow the set of actions available in state $s$ to be limited to a nonempty state-dependent subset $\mathcal{A}(s)\subseteq A$ of {\em feasible} actions, but do not explicitly keep track of the sets $\mathcal{A}(s)$ to avoid cluttering the notation. As a matter of bookkeeping, we assume that these constraints are enforced as needed by setting $P(s,a,s')=0$ for all $s'$ if $a\notin\mathcal{A}(s)$, and/or by assigning zero probability to invalid actions in the case of stochastic policies (discussed below). If a stochastic policy has been restricted to the feasible actions, then it will be assumed that it has also been suitably re-normalized.
}. 
%and $\pi(X_m)$ when referring to the {\em random measure} induced by some element (random variable) $X_m$ of an $S$-valued Markov chain $(X_n)_{n\geq 0}$. 

We may compute policy-specific Markov transition matrices and reward functions by averaging out the actions according to $\pi$:
\begin{subequations}\label{eqn:PpiRpi}
\begin{align}
P^{\pi}(s,s') &= \bbE_{a\sim\pi(s)}[P(s,a,s')]= \sum_{a\in A}P(s, a, s')\pi(s,a) \\
R^{\pi}(s,s') &= \bbE_{a\sim\pi(s)}[R(s,a,s')]= \sum_{a\in A}R(s, a, s')\pi(s,a) \;.
\end{align}
\end{subequations}
For any pair of tensors $X=X(s,a,s'), Y=Y(s,a,s')$ indexed by $s,s'\in S, a\in A$, we define the matrix $(X\circ Y)^{\pi}$ to be the expectation with respect to $\pi$ of the elementwise (Hadamard) product between $X$ and $Y$:
\begin{equation}\label{eqn:hadamard-pi}
\begin{aligned}
[(X\circ Y)^{\pi}]_{s,s'} :=&\; \bbE_{a\sim\pi(s)}[X(s,a,s')Y(s,a,s')]
= \sum_{a\in A} X(s,a,s')Y(s,a,s')\pi(s,a).
\end{aligned}
\end{equation}
Note that $(X\circ Y)^{\pi}=(Y\circ X)^{\pi}$.

Finally, we will often make use of the uniform random or {\em diffusion} policy, denoted $\pi^u$, which always takes an action drawn randomly according to the uniform distribution on the feasible   actions. In the case of continuous action spaces, we assume a natural choice of ``uniform'' measure has been made: for example the Haar measure if $A$ is a group, or the volume measure if $A$ is a Riemannian manifold.

\subsubsection{Value Functions and the Potential Operator}
\label{sec:value_fns}
Given a policy, we may define a value function $V^{\pi}:S\to\R$ assigning to each state $s$ the expected sum of discounted rewards collected over an infinite horizon by running the policy $\pi$ starting in $s$:
\begin{equation}\label{eqn:pi-discounted-rewards}
V^{\pi}(s) = \bbE\left[R(s_0,a_1,s_1) + 
\sum_{t=1}^{\infty}\left\lbrace\prod_{\tau=0}^{t-1}\Gamma(s_{\tau},a_{\tau+1},s_{\tau+1})\right\rbrace R(s_t,a_{t+1},s_{t+1}) ~\Bigl|\Bigr.~ s_0=s\right],
\end{equation}
where the sequence of random variables $(s_i)_{i=1}^{\infty}$ is a Markov chain with transition probability matrix $P^{\pi}$. The expectation is taken over all sequences of state-action pairs $\{(s_t,a_t)\}_{t\geq 1}$, where $a_t$ is an $A$-valued random variable representing the action which brings the Markov chain to state $s_t$ from $s_{t-1}$: if $s_{t-1}$ is observed, then $a_t\sim\pi(s_{t-1})$. Thus, the expectation in~\eqref{eqn:pi-discounted-rewards} should be interpreted as
$
\bbE_{a_1\sim\pi(s_0)}\bbE_{s_1\sim P(s_0,a_1,\cdot)}\bbE_{a_2\sim\pi(s_1)}\cdots.
$ The state- and action-dependent discount factors accrue in a path-dependent fashion leading to the product in \eqref{eqn:pi-discounted-rewards}.
When the discount factors are state dependent, it is possible to define different optimization criteria; the choice~\eqref{eqn:pi-discounted-rewards} is commonly selected because it defines a value function which may be computed via dynamic programming. This choice is also natural in the context of financial applications\footnote{Consider the present value of an infinite stream of future cash payments $q_0,q_1,\ldots$, paid out at discrete time instances $t=0,1,\ldots$. If the risk-free interest rate over the period $[t,t+1]$ is given by $r_t$, then the present value of the payments is given by $C=q_0 + \sum_{t=1}^{\infty}\prod_{\tau=0}^{t-1}\gamma_{\tau}q_t$, where $\gamma_t=(1+r_t)^{-1}$.}.
The optimal value function $V^*$ is defined as $V^*(s)=\sup_{\pi\in\Pi}V^{\pi}(s)$ for all $s\in S$, where $\Pi$ is the set of all stationary stochastic policies, and the corresponding optimal policy $\pi^*$ is any policy achieving the optimal value function. Under the assumptions we have imposed here, a deterministic optimal policy exists whenever an optimal policy (possibly stochastic) exists~\citep[Sec. 1.1.4]{BertsekasVol2}. We will make use of stochastic policies primarily to regularize a class of MDP solution algorithms, rather than to achieve better solutions. 

%Finally, a Markov reward process (MRP) refers to an MDP with a fixed policy and corresponding $P, R$ %restricted to the fixed policy.

The process of computing $V^{\pi}$ given $\pi$ is known as {\em value determination}. Following the usual approach, we may solve for $V^{\pi}$ by conditioning on the first transition in~\eqref{eqn:pi-discounted-rewards} and applying the Markov property. However, when $\pi$ is stochastic, the first transition also involves a randomly selected action, and when the discount factors are state/action dependent, the particular discounting seen in~\eqref{eqn:pi-discounted-rewards} must be adopted in order to obtain a linear system. One may derive the following equation for $V^\pi$ (details given in the Appendix)
\begin{equation}\label{eqn:vpi-linsys}
V^{\pi}(s) = \sum_{s',a}P(s,a,s')\pi(s,a)\bigl[R(s,a,s') + \Gamma(s,a,s')V^{\pi}(s')\bigr],\quad s\in S.
\end{equation}
In matrix-vector form this system may be written as
\[
V^{\pi} = \bigl(I- (\Gamma\circ P)^{\pi}\bigr)^{-1}r
\]
where $r:= (P\circ R)^{\pi}\bbone$.
The matrix
$
\bigl(I- (\Gamma\circ P)^{\pi}\bigr)^{-1}
$
will be referred to as the {\em potential operator}, or fundamental matrix, or Green's function, in analogy with the naming of the matrix $(I-P^{\pi})^{-1}$ for the Markov chain $P^{\pi}$.

\subsection{Notation}
\label{sec:notation}
We denote by $\{X_t\}_{t=0}^{\infty}$ a Markov chain, not necessarily time-homogeneous, governed by an appropriate transition matrix $P$.
%that will be clear from the context. $\Bpi$ denotes the set of all bottleneck states identified on the basis of a Markov process with transition matrix $P^{\pi}$. To reduce clutter, we will often write $\B$ when the dependence on $\pi$ is not of immediate significance. $\Bpi$ induces a partition of the statespace into clusters, or ``rooms'', however {\em room} will refer to a collection of states as well as any  bottlenecks to which they are connected (see Section~\ref{sec:clustering}). 
For $S'\subseteq S$, we define the {\em restriction} of $P:S\times A\times S\to\bbR_{+}$ to $S'$ to be the transition tensor $P_{S'}:S'\times A\times S'\to\bbR_{+}$ defined by
\begin{equation}\label{eqn:P_cluster}
P_{S'}(s,a,s') = 
\begin{cases}
P(s,a,s') & \text{if } s,s'\in S', s\neq s'\\
P(s,a,s) + \sum_{s''\notin S'}P(s,a,s'') & \text{if } s = s', s\in S' \;.
\end{cases}
\end{equation}
The rewards from $R:S\times A\times S\to\bbR$ associated to transitions between states in the subset $S'$ remain unchanged:
\[
R_{S'}(s,a,s') = R(s,a,s'), \quad\text{for all } (s,a,s')\text{ such that }s,s'\in S', a\in A.
\]
We will refer to this operation as {\em truncation}, to distinguish it from restriction as defined by~\eqref{eqn:P_cluster}.
The sub-tensor $\Gamma_{S'}$ is similarly defined from $\Gamma$.  Note that, by definition, $P_{S'},R_{S'},\Gamma_{S'}$ do not include t-uples which start from a state $s$ in the \clustertext but which end at a state $s'$ outside of the  \clustertext.

%\begin{equation}\label{eqn:P_cluster}
%P_i(s,a,s') = \frac{P(s,a,s')}{\sum_{s''}P(s,a,s'')},\qquad s,s'\in\text{ \clustertext } i.
%\end{equation}
The restriction operation introduced above does {\em{not}} commute with taking expectations with respect to a policy. The matrix $P_{S'}^{\pi}$ will be defined by {\em first} restricting to $S'$ by Equation~\eqref{eqn:P_cluster}, and {\em then} averaging $P_{S'}$ with respect to $\pi$ as in Equation~\eqref{eqn:PpiRpi}. Truncation {\em{does}} commute with expectation over actions, so $R_{S'}^{\pi},\Gamma_{S'}^{\pi}$ may be computed by truncating or averaging in any order, although it is clearly  more efficient to truncate before averaging. In fact, to define these and other related quantities,  $\pi$ need only be defined locally on the  \clustertext of interest. For quantities such as $(P_{S'}\circ R_{S'})^{\pi}$, we will always assume that truncation/restriction occurs before expectation.

Lastly, $\text{diag}(v)$ will denote a diagonal matrix with the elements of vector $v$ along the main diagonal, and $a\wedge b$ will denote the minimum of the scalars $a,b$.
\section{Multiscale Markov Decision Processes}
\label{sec:mmdp-toplevel}
The high-level procedure for efficiently solving a problem with a multiscale MDP hierarchy, which we will refer to as an {\em``MMDP''}, consists of the following steps, to be described individually in more detail below:
\begin{enumerate}\itemsep 0pt
 \item[{\em{Step 1}}] {\bf{Partition}} the statespace into subsets of states (``clusters'') connected via ``bottleneck'' states.
 \item[{\em{Step 2}}] Given the decomposition into clusters by bottlenecks, {\bf{compress}} or {\bf{homogenize}} the MDP into another, smaller and {\em{coarser}} MDP, whose statespace is the set of bottlenecks, and whose actions are given by following certain policies in \clusterstext connecting bottlenecks (``subtasks'').

Repeat the steps above with the compressed MDP as input, for a desired number of compression steps, obtaining a hierarchy of MDPs.
\item[{\em{Step 3}}] {\bf{Solve the hierarchy of MDPs}} from the top-down (coarse to fine) by pushing solutions of coarse MDPs to finer MDPs, down to the finest scale. 
\end{enumerate}
%% TAG: consistency
We say that the procedure above compresses or homogenizes, in a multiscale fashion, a given MDP. The construction is perfectly recursive, in the sense that the same steps and algorithms are used to homogenize one scale to the next coarser scale, and similarly for the refinement steps of a coarse policy into a finer policy. We may, and will, therefore focus on a single compression step and in a single refinement step. The compression procedure also enjoys a form of consistency across scales: for example, optimal value functions at homogenized scales are good approximations to homogenized optimal value functions at finer scales.
%It also enjoys various notions of consistency, for example, in a sense that we will make precise later, optimal value functions at homogenized scales are homogenized optimal value functions at the finer scale. 
Moreover, actions at coarser scales are typically, as one may expect, complex, ``higher-level'' actions, and the above procedure may be thought of as producing different levels of ``abstraction'' of the original problem. While automating the process of hierarchically decomposing, in a novel fashion, large complex MDPs, the framework we propose may also yield significant computational savings: if at a scale $j$ there are $r_j$ \clusterstext of roughly equal size, and $n_j$ states, the solution to the MDP at that scale may be computed in time $\cO\bigl(r_j(n_j/r_j)^3\bigr)$. If $r_j=n_j/C$ and $n_j=n/C^j$ (with $n$ being the size of the original statespace), then the computation time across $\log n$ scales is $\cO\bigl(n\log n\bigr)$. We discuss computational complexity in Section~\ref{sec:ms-alg-complexity}, and establish convergence of a particular solution algorithm to the global optimum in Section~\ref{sec:ms-alg-proof}. 
Finally, the framework facilitates knowledge transfer between related MDPs. Sub-tasks and coarse solutions may be transferred anywhere within the hierarchies for a pair of problems, instead of mapping entire problems. We discuss transfer in Section~\ref{sec:transfer}.

The rest of this section is devoted to providing details and analysis for Steps~$(1)-(3)$ above in three subsections. Each of these subsections contains an overview of the construction needed in each step followed by  a more detailed and algorithmic discussion concerning specific algorithms used to implement the construction; the latter may be skipped in a first reading, in order to initially focus on ``the big picture''. Proofs of the results in these subsections are all postponed until the Appendix.

\subsection{{\em{Step 1:}} Bottleneck Detection and statespace Partitioning}
\label{sec:clustering}
The first step of the algorithm involves partitioning the MDP's statespace $S$ by identifying a set $\B\subseteq S$ of bottlenecks. The bottlenecks induce a partitioning\footnote{A partitioning $\clusters=\{\clusterindex_i\}_{i=1}^C$ is a family of disjoint sets $\clusterindex_i$ such that $S=\cup_{i=1}^C \clusterindex_i$.} of $S\setminus\B$ into a family $\clusters$ of connected components. Typically $\B$ depends on a policy $\pi$, and when we want to emphasize this dependency, we will write $\Bpi$. We always assume that {\em{$\Bpi$ includes all terminal states of $P^\pi$.}}
The partitioning of $S\setminus\Bpi$ induced by the bottlenecks is the set of equivalence classes $S/\mathord{\sim}$, under the relation  
\[
s_i\sim s_j, \quad \text{if } s_i,s_j\notin\Bpi \text{ and there is a path from $s_i$ to $s_j$ not passing through any $\bindex\in \Bpi$}.
\]
Clearly these equivalence classes yield a partitioning of $S\setminus\Bpi$.
The term {\em cluster} will refer to an equivalence class plus any  bottleneck states connected to states in the class: 
if $[s]:=\{s'~|~s\sim s'\}$ is an equivalence class, 
\[
\clusterindex([s]) := [s]\cup \bigl\{\bindex\in\Bpi~|~ P^{\pi}(s',\bindex)>0 \text{ or }P^{\pi}(\bindex,s')>0\text{ for some }s'\in[s]\bigr\}.
\]
The set of \clusterstext is denoted by $\clusters$.
If $\clusterindex=\clusterindex([s])$, $[s]$ will be referred to as the \clustertext's {\em interior}, denoted by $\Ui$, and the bottlenecks attached to $[s]$ will be referred to as the \clustertext's {\em boundary}, denoted by $\boundary\clusterindex$.

To each \clustertext\ $\clusterindex$, and policy $\pi$ (defined on at least $\clusterindex$), we associate the Markov process with transition matrix $P^\pi_\clusterindex$, defined according to  Section~\ref{sec:notation}.

We also assume that a set of designated policies $\picluster_{\clusterindex}$ is provided for each cluster $\clusterindex$. For example $\picluster_{\clusterindex}$ may be the singleton consisting of the diffusion policy in \clusterindex. Or $\picluster_{\clusterindex}$ could be the set of locally optimal policies in $\clusterindex$ for the family of MDPs, parametrized by $s'\in\dcluster$ with reward equal to the original rewards plus an additional reward when $s'$ is reached (this approach is detailed in Section~\ref{sec:cluster_pols}).
%Finally, we say that $\boundary\clusterindex$ is $\pi$-{\em{reachable}} for a policy $\pi$, if, for %every starting state $s\in\clusterindex$, there is at least one $s'\in\boundary\clusterindex$ that can %be reached in a finite number of steps of $P^\pi_\clusterindex$.

Finally, we say that $\dcluster$ is $\pi$-{\em{reachable}}, for a policy $\pi$, if  the set $\dcluster$ can be reached in a finite number of steps of $P^\pi_\clusterindex$, starting from any initial state $s\in\clusterindex$.

\subsubsection{Algorithms for Bottleneck Detection}
\label{sec:clustering_details}
\label{sec:diffmaps}
In the discussion below we will make use of diffusion map embeddings~\citep{Coifman:PNAS:05} as a means to cluster, visualize and compare directed, weighted statespace graphs. This is by no means the only possibility for accomplishing such tasks, and we will point out other references later. Here we focus on this choice and the details of diffusion maps and associated hierarchical clustering algorithms.

\noindent{\em Diffusion maps} are based on a Markov process, typically the random walk on a graph. The random walks we consider here are of the form $P^\pi$ for some policy $\pi$, and may always be made reversible (by addition of the teleport matrix adding weak edges connecting every pair of vertices, as in Step (2) of Algorithm~\ref{alg:spectral_clustering}), but may still be strongly directed particularly as $\pi$ becomes more directed. In light of this directedness, we will compute diffusion map embeddings of the underlying states from normalized graph Laplacians symmetrized with respect to the underlying Markov chain's invariant distribution, following~\citep{ChungDirected}:
\begin{equation}\label{eqn:directed-lap}
\cL = I - \tfrac{1}{2}\bigl(\Phi^{1/2}P^{\pi}\Phi^{-1/2} + \Phi^{-1/2}(P^{\pi})^{\tr}\Phi^{1/2}\bigr)
\end{equation}
where $\Phi$ is a diagonal matrix with the invariant distribution satisfying $(P^{\pi})^{\tr}\mu = \mu$ placed along
the main diagonal, $\Phi_{ii}=\mu_i$. One can choose an
orthonormal set of eigenvectors $\{\Psi^{(k)}\}_{k\geq 0}$ with corresponding real eigenvalues $\lambda_k$ which diagonalize $\cL$.
If we place the eigenvalues in ascending order $0=\lambda_{0}\leq\cdots\leq\lambda_{n}$, the diffusion map embedding of the state $s_i$ is given by
\begin{equation}\label{eqn:diffmap-embed}
s_i\mapsto \bigl(\Psi^{(k)}_i(1-\lambda_{k})\bigr)_{k\geq 1},\qquad s_i\in S .
\end{equation}
The {\em diffusion distance} between two states $s_i, s_j$ is given by the Euclidean distance between embeddings,
\[
d^2(s_i,s_j) = \sum_{k\geq 1}(1-\lambda_{k})^2\bigl|\Psi^{(k)}_i - \Psi^{(k)}_j\bigr|^2, \qquad s_i ,s_j\in S \;.
\]
See~\citep{Coifman:PNAS:05} for a detailed discussion. Often times this distance may be well approximated
(depending on the decay of the spectrum of $\cL$) by truncating the sum after $p<n$ terms, in which case only
the top $p$ eigenvectors need to be computed.

In some cases we will need to align the signs of the eigenvectors for two given Laplacians $\cL,\widetilde{\cL}$
towards making diffusion map embeddings for different graphs more comparable.
If $\{\Psi^{(k)}\}_{k\geq 0}$ and  $\{\widetilde{\Psi}^{(k)}\}_{k\geq 0}$ denote the respective sets of eigenvectors, and the eigenvalues of both Laplacians are distinct\footnote{The case of repeated eigenvalues may be treated similarly by generalizing the sign flipping operation $\tau$ to an orthogonal transformation of the subspace spanned by the eigenvectors sharing the repeated eigenvalue.}, we can
define the sign alignment vector $\tau$ as
\begin{equation}\label{eqn:tau-signs}
\tau_k =
\begin{cases}
+1 & \text{if } \sgn\Psi^{(k)}_1=\sgn\widetilde{\Psi}^{(k)}_1 \\
-1 & \text{otherwise}
\end{cases}.
\end{equation}

Given an alignment vector $\tau$, one can extend the above diffusion distance to a distance defined on a union of statespaces. If $S,\widetilde{S}$ are statespaces with embeddings~\eqref{eqn:diffmap-embed} respectively defined by $\{(\Psi^{(k)},\lambda_k)\}_{k=0}^p$, $\{(\widetilde{\Psi}^{(k)},\widetilde{\lambda}_k)\}_{k=0}^p$ for some $p\geq 1$, then we can define the distance $d^2: (S\cup\widetilde{S})\times (S\cup\widetilde{S}) \to \bbR_{+}$ as \\
\begin{equation*}
d^2(s_i,s_j) :=
\begin{cases}
\rho(s_i,s_j) & \text{if } s_i\in S, s_j\in\widetilde{S} \\
\rho(s_j,s_i) & \text{otherwise}.
\end{cases}
\end{equation*}
using 
\[
\rho(s_u,s_v)= \sum_{k\geq 1}(1-\lambda_{k})(1-\widetilde{\lambda}_k)\bigl|\tau_k\Psi^{(k)}_u - \widetilde{\Psi}^{(k)}_v\bigr|^2
\]
with $\tau$ defined by~\eqref{eqn:tau-signs}.\\

\noindent{\em Hierarchical clustering}.
 Given a policy $\pi$, we can construct a weighted statespace graph $G$ with vertices corresponding to states, and edge weights given by $P^{\pi}$. A policy that allows thorough exploration, such as the diffusion policy $\pi^u$, can be chosen to define the weighted statespace graph. 

The hierarchical spectral clustering algorithm we will consider recursively splits the statespace graph into pieces by looking for low-conductance cuts. The spectrum of the symmetrized Laplacian for directed graphs~\cite{ChungDirected} is used to determine the graph cuts at each step. The sequence of cuts establishes a partitioning of the statespace, and  bottleneck states are states with edges that are severed by any of the cuts. Algorithm~\ref{alg:spectral_clustering} describes the process. 
%In this work we employ a form of hierarchical spectral clustering to partition $G$ and identify bottleneck %states, in which the statespace graph is recursively broken down into pieces by looking for low-conductance %cuts. The spectrum of the the directed graph Laplacian~\citep{ChungDirected} is used to determine low-conductance cuts at each step. The sequence of cuts establishes a partitioning of the statespace, and  bottleneck states are states with edges that are severed by any of the cuts. Algorithm~\ref{alg:spectral_clustering} describes the bottleneck detection and graph partitioning procedure.
Other more sophisticated algorithms may also be used, for example Spielman's~\citep{Spielman:LocalClustering} and that of Anderson \& Peres~\citep{Andersen:2009:ESP,Morris_Peres_2003}. One may also consider ``model-free'' versions of the algorithms above, that only have access to a ``black-box'' computing the results of running a process (truncated random walk, evolving sets process, respectively, for the references above), but we do not pursue this here.
%\floatstyle{boxed}
%\restylefloat{algorithm}
%\floatstyle{plain}
\begin{algorithm}[t]
\caption{Recursive spectral partitioning.}
\fbox{\begin{minipage}{0.984\textwidth}
\begin{enumerate}\itemsep 0pt
\item Restrict $P^{\pi}$ to non-absorbing states.
\item Set $P^{\pi}_{\text{tel}} =(1-\eta)P^{\pi} + \eta n^{-1}\bbone\bbone^{\tr}$, for some small, positive $\eta$.
\item Find the eigenvector (invariant distribution) $\mu$ satisfying $(P^{\pi}_{\text{tel}})^{\tr}\mu= \mu$.
\item Let $\Phi=\text{diag}(\mu)$ and compute the symmetrized Laplacian 
%in Equation~\eqref{eqn:directed-lap}
\begin{equation*}
\cL = I - \tfrac{1}{2}\bigl(\Phi^{1/2}P_{\text{tel}}^{\pi}\Phi^{-1/2} + \Phi^{-1/2}(P_{\text{tel}}^{\pi})^{\tr}\Phi^{1/2}\bigr).
\end{equation*}
\item Compute the $K$ eigenvectors of $\cL$ corresponding to the $K$ smallest non-trivial eigenvalues 
$\lambda_1<\cdots<\lambda_K$.
\item Define a set of cuts $\cZ$ by sweeping over thresholds ranging from the smallest entry of $\Psi^{(i)}$ to the largest, for all eigenvectors $\Psi^{(i)}, i=1,\ldots,K$. The points for which $\Psi^{(i)}$ are above/below the given threshold defines the states $Z,Z^{c}\subset S$ on either side of the cut.
\item Choose the cut $Z^*=\arg\min_{Z\in\cZ}\varphi(Z)$ with minimum conductance $$\varphi(Z)=\frac{\sum_{i\in Z}\sum_{j\in Z^c} P^{\pi}_{ij}}{\text{vol}(Z)\wedge\text{vol}(Z^c)},$$ where $\text{vol}(Z)=\sum_{i\in Z}\sum_{j\in S} P^{\pi}_{ij}$.
\item Identify bottleneck states as the states in $Z^*$ on one (and only one) side of the edges in $P^{\pi}$ severed by the cut, choosing the side which gives the smallest bottleneck set.
\item Store the partition of the statespace given by the cut.
\item Unless stopping criteria is met, run the algorithm again on each of the two subgraphs resulting from the cut.
\end{enumerate}
\end{minipage}}
\label{alg:spectral_clustering}
\end{algorithm}
A recursive application of Algorithm~\ref{alg:spectral_clustering} produces a set of bottlenecks $\Bpi$.
Each bottleneck and partition discovered by the clustering algorithm is associated with a spatial scale determined by the recursion depth. The finest scale consists of the finest partition and includes all  bottlenecks. The next coarser scale includes all the bottlenecks and partitions discovered up to but not including the deepest level of the recursion, and so on. In this manner the statespaces and actions of all the MDPs in a multi-scale hierarchy can be pre-determined, although if desired one can also apply clustering to the coarsened statespaces {\em after} compressing using the compressed MDP's transition matrix as graph weights. The addition of a teleport matrix in Algorithm~\ref{alg:spectral_clustering} (Step 2) guarantees that the equivalence classes partition $\{S\setminus\Bpi\}$ and are strongly connected components of the weighted graph defined by $P^{\pi}_{\text{tel}}$.  

Because graph weights are determined by $P^{\pi}$ in this algorithm, which bottlenecks will be identified generally depends on the policy $\pi$. In this sense there are two types of ``bottlenecks'': problem bottlenecks and geometric bottlenecks. Geometric bottlenecks may be defined as interesting regions of the statespace alone, as determined by a random walk exploration if $\pi$ is a diffusion policy (e.g. $\pi^u$). Problem bottlenecks are regions of the statespace which are interesting from a geometric standpoint {\em and} in light of the goal structure of the MDP. If the policy is already strongly directed according to the goals defined by the rewards, then the bottlenecks can be interpreted as choke points for a random walker in the presence of a strong potential.

\subsection{{\em{Step 2:}} Multiscale Compression and the Structure of Multiscale Markov Decision Problems}
\label{sec:compression}
Given a set of bottlenecks $\B$ and a suitable fine scale policy, we can {\em compress} (or {\em homogenize}, or coarsen) an MDP {\em into another MDP with statespace $\B$}. The coarse MDP can be thought of as a low-resolution version of the original problem, where transitions {\em between} \clusterstext are the events of interest, rather than what occurs within each  \clustertext. As such, coarse MDPs may be vastly simpler: the size of the coarse statespace is on the order of the number of \clusterstext, which may be small relative to the size of the original statespace. Indeed, \clusterstext may be generally thought of as geometric properties of a {\em problem}, and are constrained by the inherent complexity of the problem, rather than the choice of statespace representation, discretization or sampling. 

A solution to the coarse MDP may be viewed as a coarse solution to the original fine scale problem. An optimal coarse policy describes how to solve the original problem by specifying which sub-tasks to carry out and in which order. As we will describe in Section~\ref{sec:mdp_solution}, a coarse value function provides an efficient means to compute a fine scale value function and its associated policy. Coarse MDPs and their solutions also provide a framework for systematic transfer learning; these ideas are discussed in detail in Section~\ref{sec:transfer}.  

We have discussed how to identify a set of bottleneck states in Section~\ref{sec:clustering_details} above. As we will explain in detail below, a policy is required to compress an MDP. This policy may encode a priori knowledge, or may be simply chosen to be the diffusion policy. In Section~\ref{sec:cluster_pols} below, we suggest an algorithm for determining good local policies for compression that can be expected to produce an MDP at the coarse scale whose optimal solution is compatible with the gradient of the optimal value function at the fine scale. 

A homogenized, coarse scale MDP will be denoted by the tuple $(\widetilde{S},\widetilde{A},\widetilde{P},\widetilde{R},\widetilde{\Gamma})$.  We first give a brief description of the primary ingredients needed to define a coarse MDP, with a more detailed discussion to follow in the forthcoming subsections.
\begin{itemize}\itemsep 0pt
\item {\textbf{Statespace $\widetilde{S}$}}: The coarse scale statespace $\widetilde{S}$ is the set of bottleneck states $\B$ for the fine scale, obtained by clustering the fine scale statespace graph, for example with the methods described in Section~\ref{sec:clustering}. Note that $\widetilde{S}\subset S$.
\item {\textbf{Action set $\widetilde{A}$}}: A coarse action invoked from $\bindex\in\widetilde{S}=\B$ consists of executing a given fine scale policy $\pi_{\clusterindex}\in\picluster_\clusterindex$ within the fine scale \clustertext\ $\clusterindex$, starting from $\bindex\in\boundary\clusterindex$ (at a time that we may reset to $0$), until the first positive time at which a bottleneck state in $\boundary\clusterindex$ is hit.  Recall that in each \clustertext\ $\clusterindex$ we have a set of policies $\picluster_\clusterindex$.
\item {\textbf{Coarse scale transition probabilities $\widetilde{P}(s,a,s')$}}: If $a\in\widetilde{A}$ is an action executing the policy $\pi_{\clusterindex}\in\picluster_\clusterindex$, then $\widetilde{P}(s,a,s')$ is defined as the probability that the Markov chain $\Ppirestrtocluster$ started from $s\in\widetilde{S}$, hits $s'\in\widetilde{S}$ before hitting any other bottleneck. In particular, $\widetilde{P}(s,a,s')$ may be nonzero only when $s,s'\in\boundary\clusterindex$ for some $\clusterindex\in\clusters$.
\item {\textbf{Coarse scale rewards $\widetilde{R}(s,a,s')$}}: The coarse reward $\widetilde{R}(s,a,s')$ is defined to be the expected total discounted reward collected along trajectories of the Markov chain associated to action $a$ described above, which start at $s\in\widetilde{S}$ and end by hitting $s'\in\widetilde{S}$ before hitting any other bottleneck. 
\item {\textbf{Coarse scale discount factors $\widetilde{\Gamma}(s,a,s')$}}: The coarse discount factor $\widetilde{\Gamma}(s,a,s')$ is the expected product of the discounts applied to rewards along trajectories of the Markov chain $\Ppirestrtocluster$ associated to a action $a\in\widetilde{A}$, starting at $s\in\widetilde{S}$ and ending at $s'\in\widetilde{S}$. 
\end{itemize}

%% TAG: Consistency
One of the important consequences of these definitions is that {\em the optimal fine scale value function on the bottlenecks is a good solution to the coarse MDP, compressed with respect to the optimal fine scale policy, and vice versa.}
%A detailed discussion is given in Section~\ref{sec:ms-consistency} below. \comm{MM:perhaps this should be %stated as a proposition, here or after all the appropriate subsections}. 
It is this type of consistency across scales that will allow us to take advantage of the construction above and design efficient multiscale algorithms. 

The compression process is reminiscent of other instances of conceptually related procedures: coarsening (applied to meshes and PDEs, for example), homogenization (of PDEs), and lumping (of Markov chains). The general philosophy is to reduce a large problem to a smaller one constructed by ``locally averaging'' parts of the original problem.

The coarsening step may always be accomplished computationally by way of Monte Carlo simulations, as it involves computing the relevant statistics of certain functionals of Markov processes in each of the \clusterstext. As such, the computation is embarrassingly parallel\footnote{Moreover, it does not require a priori knowledge of the fine details of the models in each \clustertext, but only requires the ability to call a ``black box'' which simulates the prescribed process in each \clustertext, and computes the corresponding functional (in this sense coarsening becomes model-free).}. While this gives flexibility to the framework above, it is interesting to note that many of the relevant computations may in fact be carried out analytically, and that eventually they reduce to the solution of multiple independent (and therefore trivially parallelizable) small linear systems, of size comparable to the size of a  \clustertext. In this section we develop this analytical framework in detail (with proofs collected in the Appendix), as they uncover the natural structure of the multiscale hierarchy we introduce, and lead to efficient, ``explicit'' algorithms for the solution of the Markov decision problems we consider. The rest of this section is somewhat technical, and on a first reading one may skip directly to Section~\ref{sec:mdp_solution} where we discuss the multiscale solution of hierarchical MDPs obtained by our  construction.

%As will be described below, the compressed MDP depends on one or more fine-scale policies. For example, assuming a random uniform (diffusion) policy corresponding to a random walk throughout the statespace yields a compressed MDP that is strongly influenced by the diffusion-distance geometry of the problem. This can yield both efficient representations, and allow transfer learning to and from a broad range of related problems (Section~\ref{sec:transfer}). Alternatively, if the fine-scale policy used to construct the compressed problem is more directed, then the coarse MDP will include both aspects of the inherent geometry as well as the particular goals at hand.

\subsubsection{Assumptions}
We will always assume that the fine scale policy $\pi$ used to compress has been {\em regularized}, by blending with a small amount of the diffusion policy $\pi^u$:
\begin{equation*}\label{eqn:generic_pol_blending}
\pi(s,\cdot)\gets\lambda\pi^u(s,\cdot) + (1-\lambda)\pi(s,\cdot),\qquad s\in S
\end{equation*}
for some small, positive choice of the regularization parameter $\lambda$. In particular we will assume this is the case everywhere quantities such as $P^{\pi}$ appear below. The goal of this regularization is to address, or partially address, the following issues:
\begin{itemize}
\item The solution process may be initially biased towards one particular (and possibly incorrect) solution, but this bias can be overcome when solving the coarse MDP as long as the regularization described above is included every time compression occurs during the iterative solution process we will describe in Section~\ref{sec:mdp_solution}.
\item Directed policies can yield a fine scale transition matrix which, when restricted to a \clustertext, may render bottleneck (or other) states unreachable. We require the boundary of each \clustertext to be $\pi$-reachable, and this is often guaranteed by the regularization above except in rather pathological situations. If  any interior states violate this condition, they can be added to the \clustertext's boundary and to the global bottleneck set at the relevant scale\footnote{In fact, if any such state is an element of a closed, communicating class, then the entire class can be lumped into a single state and treated as a single bottleneck. Thus, the bottleneck set does not need to grow with the size of the closed class from which the boundary is unreachable. For simplicity however, we will assume in the development below that no lumped states of this type exist.}.
We note that these assumptions are significantly weaker than requiring that the subgraphs induced by the restrictions $P_\clusterindex^{\pi}$, $\pi\in\picluster_\clusterindex$ of $P$ to a \clustertext are strongly connected components; the Markov chain defined by $P_\clusterindex^{\pi}$ need not be irreducible.
\item Compression with respect to a deterministic and/or incorrect policy should not preclude transfer to other tasks. In the case of policy transfer, to be discussed below, errors in a policy used for compression can easily occur, and can lead to unreachable states. Policy regularization helps alleviate this problem.
\end{itemize}

\subsubsection{Actions}
%Suppose there are $N_r$ \clusterstext at the fine scale, and for each \clustertext\ $i\in\brange{N_r}$ we are given a (non-empty) collection of policies $\Pi_i:=\{\pi_\clusterindex\}_j$. 

An action $\widetilde{A}$ at $s\in\widetilde{S}$ for the compressed MDP consists of executing a policy $\pi_\clusterindex\in\picluster_\clusterindex$ at the fine scale, starting in $s$, within some \clustertext\ $\clusterindex$ having $s$ on its boundary, until hitting a bottleneck state in $\clusterindex$. The number of actions is equal to the total number of policies across \clusterstext,$|\widetilde{A}|=\sum_{\clusterindex\in\clusters}|\picluster_\clusterindex|$. 
We now fix a \clustertext\ $\clusterindex$ and a policy $\pi_\clusterindex\in\picluster_\clusterindex$.
The corresponding local Markov transition matrix is $\Ppirestrtocluster$, and let $\Rpirestrtocluster$ denote the reward structure, and $\Gpirestrtocluster$ denote the system of discount factors, following Section~\ref{sec:notation}. Let $((X^{\pi_\clusterindex}_{\clusterindex})_n)_{n\geq 0}$ denote the Markov chain with transition matrix $\Ppirestrtocluster$. If the coarse action is invoked in state $s\in\widetilde{S}$, then we set $X_0=s$. The set of actions available at $s\in\widetilde{S}$ for the compressed MDP is given by
\[
\widetilde{A}(s) := \bigcup_{\clusterindex\in\clusters:s\in\boundary\clusterindex} \Bigl\{\text{``run the MRP }(\Ppirestrtocluster,\Rpirestrtocluster,\Gpirestrtocluster)\text{ in } \clusterindex\text{ until the first } n>0 :(X^{\pi_\clusterindex}_{\clusterindex})_n\in\B\text{''}\Bigr\}_{\pi_{_\clusterindex}\in\picluster_\clusterindex} \,.
\]
A Markov reward process (MRP) refers to an MDP with a fixed policy and corresponding $P, R, \Gamma$ restricted to that fixed policy. The actions above involve running an MRP because while the action is being executed the policy remains fixed. 
%If a coarse scale state $s\in\B$ is not in fine scale  \clustertext\ $\clusterindex$, then we do not allow the coarse actions corresponding to executing $\pi_\clusterindex, \forall j$ to be taken from state $s$.

%With actions defined as above, if a given state $s\in\B$ is connected to bottleneck states bordering other \clusterstext, then it may not always be possible to reach those other states by taking a given action in $\widetilde{A}$. 
Consider the simple example shown on the left in Figure~\ref{fig:meta_ex}, where we graphically depict a simple coarse MDP (large circles and bold edges) superimposed upon a stylized fine scale statespace graph (light gray edges; vertices are edge intersections). Undirected edges between coarse states are bidirectional. Dark gray lines delineate four \clusterstext, to which we have associated fine scale policies $\pi_1,\ldots,\pi_4$. The bottlenecks are the states labeled $s_1,\ldots,s_4$. If an agent is in state $s_1$, for example, the actions ``Execute $\pi_1$'' and ``Execute $\pi_4$'' are feasible. If the agent takes the coarse action $a=\text{``Execute }\pi_1\text{''}$, then it can either reach $s_2$ or come back to $s_1$, since this action forbids traveling outside of \clustertext 1 (top right quadrant). On the other hand if $\pi_4$ is executed from $s_1$, then the agent can reach $s_4$ or return to $s_1$, but the probability of transitioning to $s_2$ is zero. 

In general, the compressed MDP will have action and state dependent rewards and discount factors, even if the fine scale problem does not. In Figure~\ref{fig:meta_ex} (left), the coarse states straddle two \clusterstext each, and therefore have different self loops corresponding to paths which return to the starting state within one of the two \clusterstext. So $\widetilde{R}$ and $\widetilde{\Gamma}$ apparently depend on actions. But, we may reach $s_1$ when executing $\pi_1$ starting from either $s_1$ or from $s_2$, so the compressed quantities in fact depend on both the actions and the source/destination states. Figure~\ref{fig:meta_ex} (right) shows another example, where the dependence on source states is particularly clear. Even if the action corresponding to running a fine policy in the center square is the same for all states, each coarse state $s_1,\ldots,s_4$ may be reached from two neighbors as well as itself.

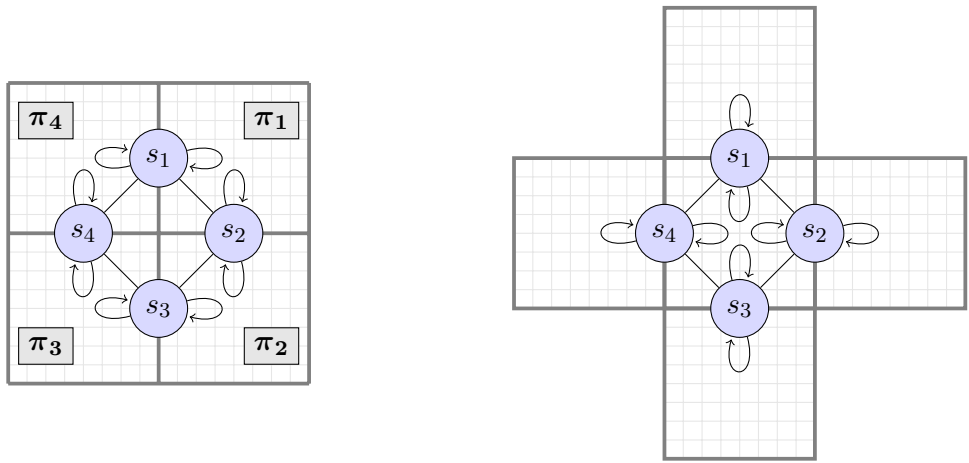
\begin{figure}[t]
\centering
%\ovalbox{
\begin{minipage}[c]{0.4\textwidth}
\centering
\begin{tikzpicture}
% \tikzstyle{every node}=[fill=white,draw,shape=circle];
\definecolor{ltgray}{rgb}{0.9, 0.9,0.9}
\definecolor{dkgreen}{rgb}{0.2,0.7,0.1}
\draw[step=0.25cm,color=ltgray] (0,0) grid (4,4);
\draw[step=2cm,color=gray,line width=0.5mm] (0,0) grid (4,4);
\tikzstyle{every state}=[fill=blue!15,draw,minimum width=0.5cm,minimum height=0.5cm]
\node[state] (s0) at (2,3)  {$s_1$};
\node[state] (s1) at (3,2)  {$s_2$};
\node[state] (s2) at (2,1)  {$s_3$};
\node[state] (s3) at (1,2)  {$s_4$};
\path (s0) edge (s1)
      (s0) edge [loop left] (s0)
      (s0) edge [loop right] (s0)
      (s1) edge (s2)
      (s1) edge [loop above] (s1)
      (s1) edge [loop below] (s1)
      (s2) edge (s3)
      (s2) edge [loop left] (s2)
      (s2) edge [loop right] (s2)
      (s3) edge (s0)
      (s3) edge [loop above] (s3)
      (s3) edge [loop below] (s3);
% text labels
\path (0.5,0.5) node[draw,fill=ltgray] {$\boldsymbol{\pi}_{\bf 3}$};
\path (0.5,3.5) node[draw,fill=ltgray] {$\boldsymbol{\pi}_{\bf 4}$};
\path (3.5,0.5) node[draw,fill=ltgray] {$\boldsymbol{\pi}_{\bf 2}$};
\path (3.5,3.5) node[draw,fill=ltgray] {$\boldsymbol{\pi}_{\bf 1}$};
\end{tikzpicture}
\end{minipage}
%}
\hskip 1.5cm
%%%%%%%%%%%%%%%
\begin{minipage}[c]{0.4\textwidth}
\centering
\begin{tikzpicture}
\definecolor{ltgray}{rgb}{0.9, 0.9,0.9}
%\draw[step=0.25cm,color=ltgray] (-2,-2) grid (4,4);
\draw[step=0.25cm,color=ltgray] (-2,0) grid (4,2);
\draw[step=0.25cm,color=ltgray] (0,-2) grid (2,4);
\draw[color=gray,line width=0.5mm] (-2,0) -- (4,0) -- (4,2) -- (-2,2) -- cycle;
\draw[color=gray,line width=0.5mm] (0,4) -- (0,-2) -- (2,-2) -- (2,4) -- cycle;
\draw[color=gray,line width=0.5mm] (0,0) -- (0,2) -- (2,2) -- (2,0) -- cycle;
\tikzstyle{every state}=[fill=blue!15,draw,minimum width=0.5cm,minimum height=0.5cm]
\node[state] (s0) at (1,0)  {$s_3$};
\node[state] (s1) at (2,1)  {$s_2$};
\node[state] (s2) at (1,2)  {$s_1$};
\node[state] (s3) at (0,1)  {$s_4$};
\path (s0) edge (s1)
      (s0) edge [loop above] (s0)
      (s0) edge [loop below] (s0)
      (s1) edge (s2)
      (s1) edge [loop left] (s1)
      (s1) edge [loop right] (s1)
      (s2) edge (s3)
      (s2) edge [loop above] (s2)
      (s2) edge [loop below] (s2)
      (s3) edge (s0)
      (s3) edge [loop left] (s3)
      (s3) edge [loop right] (s3);
%% text labels
%\path (0.5,0.5) node[draw] {$\pi_3$};
%\path (0.5,3.5) node[draw] {$\pi_4$};
%\path (3.5,0.5) node[draw] {$\pi_2$};
%\path (3.5,3.5) node[draw] {$\pi_1$};
\end{tikzpicture}
\end{minipage}
\caption{{\small\em (Left) Graphical depiction of a simple coarse MDP. The states $s_1,\ldots,s_4$ enclosed in circles are bottleneck states, and act as gateways between the four \clusterstext. The bottleneck states each connect bidirectionally to two other neighboring bottleneck states as well as to themselves (thin black edges). Two self loops are shown per state to emphasize that coarse self transitions can be different depending on the action taken. 
(Right) Another example emphasizing that coarse probabilities, rewards and discounts are in general both state and action dependent. See text for details.}}
\label{fig:meta_ex}
\end{figure}

\subsubsection{Transition Probabilities}
\label{sec:trans_probs}
Consider the \clustertext $\clusterindex$ referred to by a coarse action $a\in\widetilde{A}$.
{\em{The transition probability $\widetilde{P}(s,a,s')$ for $s,s'\in\dcluster\subseteq\widetilde{S}$ is defined as the probability that a trajectory in $\clusterindex\subset S$ hits state $s'$ starting from $s$ before hitting any other state in $\B$ (including itself) when running the fine scale MRP restricted to $\clusterindex$ and along the policy determined by the action $a$.}}

If $s$ is a state not in the \clustertext associated to $a$, then $a$ is not a feasible action when in state $s$. For the example shown in Figure~\ref{fig:meta_ex} (left), for instance, the edge weight connecting $s_1$ and $s_2$ is the probability that a trajectory reaches $s_2$ before it can return to $s_1$, when executing $\pi_1$. These probabilities may be estimated either by sampling (Monte Carlo simulations), or computed analytically. The first approach is trivially implemented, with the usual advantages (e.g. parallelism, access to a black-box simulator is all is needed) and disadvantages (slow convergence); here we develop the latter, which leads to a concise set of linear problems to be solved, and sheds light on both the mathematical and computational structure. 
%We expect that they can be well approximated with only a few eigenvectors of $P^{\pi_i}$ since the Green's fun%ction within a \clustertext is likely to be smooth.
Since the bottlenecks partition the statespace into disjoint sets, the probabilities $\widetilde{P}(s,a,s')$ can be quickly computed in each \clustertext separately.

\begin{proposition}
Let  $a$ be the action corresponding to executing a policy $\pi_\clusterindex$ in \clustertext\ $\clusterindex$. Then
\[
\widetilde{P}(s,a,s') = H_{s,s'},\quad  \text{ for all } s,s'\in \dcluster ,
\]
where $H$ is the {\em minimal non-negative solution}, for each $s'\in\dcluster$, to the linear system
\begin{equation}\label{eqn:trans_probs}
H_{s,s'} =  \Ppirestrtocluster(s,s') + \sum_{s''\in\Ui}\Ppirestrtocluster(s,s'')H_{s'',s'},\quad s\in\clusterindex, s'\in\dcluster \,,
\end{equation}
or in matrix-vector form,
\begin{equation*}\label{eqn:trans_probs_matvec}
\bigl(I - \Ppirestrtocluster(I-J)\bigr)H = \Ppirestrtocluster
\end{equation*}
where $J$ is a matrix of all zeros except $J_{s''s''}=1$ for $s''\in \dcluster $. 
\label{prop:coarsePsas}
\end{proposition}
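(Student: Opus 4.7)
The plan is to identify $\widetilde{P}(s,a,s')$ as a first-passage hitting probability for the Markov chain with transition matrix $\Ppirestrtocluster$, and then derive the linear system by classical first-step analysis. Let $(X_n)_{n\geq 0}$ denote the chain $((X^{\pi_\clusterindex}_{\clusterindex})_n)_{n\geq 0}$ of the statement, and define
\[
\tau := \inf\{n>0 \,:\, X_n \in \dcluster\},\qquad H_{s,s'} := \Pr_s(X_\tau = s',\,\tau<\infty).
\]
By the authors' definition of the coarse action, $\widetilde{P}(s,a,s') = H_{s,s'}$ for $s,s'\in\dcluster$; moreover the quantity $H_{s,s'}$ is meaningful for \emph{all} $s\in\clusterindex$, and it is this extended object that the linear system solves.

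The recursion is then obtained by conditioning on $X_1$ and using the (strong) Markov property. Partitioning the possible values of $X_1$ into (i) $\{s'\}$, (ii) $\dcluster\setminus\{s'\}$, and (iii) $\Ui$, the first contributes $\Ppirestrtocluster(s,s')$ (since on this event $\tau=1$ and $X_\tau=s'$), the second contributes $0$ (since then $\tau=1$ but $X_\tau\neq s'$), and the third contributes $\sum_{s''\in\Ui}\Ppirestrtocluster(s,s'')H_{s'',s'}$ by the Markov property at time $1$. Summing gives
\[
H_{s,s'} = \Ppirestrtocluster(s,s') + \sum_{s''\in\Ui}\Ppirestrtocluster(s,s'')H_{s'',s'},
\]
which is exactly \eqref{eqn:trans_probs}. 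The matrix-vector form is immediate once one observes that $J$ is the orthogonal projection onto coordinates indexed by $\dcluster$, so $(I-J)$ projects onto interior coordinates: the $(s,s')$ entry of $\Ppirestrtocluster(I-J)H$ equals $\sum_{s''\in\Ui}\Ppirestrtocluster(s,s'')H_{s'',s'}$.

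For the minimality claim, I would use the standard truncation argument. Define $H^{(n)}_{s,s'} := \Pr_s(X_\tau = s',\,\tau\leq n)$, which satisfies $H^{(0)}\equiv 0$ and the same recursion with $H$ replaced by $H^{(n-1)}$ on the right-hand side. If $\widetilde{H}\geq 0$ is any other solution to \eqref{eqn:trans_probs}, induction on $n$ (using the non-negativity of $\Ppirestrtocluster$) yields $\widetilde{H}_{s,s'}\geq H^{(n)}_{s,s'}$ for every $n$; passing to the limit and invoking monotone convergence gives $\widetilde{H}\geq H$, so $H$ is the minimal non-negative solution.

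The part that deserves some care is checking that the problem is in fact well-posed: under the $\pi$-reachability assumption on $\dcluster$ (guaranteed by the policy regularization discussed before the proposition), $\tau<\infty$ almost surely from every $s\in\clusterindex$, so $H$ is actually a sub-stochastic matrix with rows summing to $1$ on $\dcluster$; equivalently, the spectral radius of $\Ppirestrtocluster(I-J)$ is strictly less than $1$, so $I - \Ppirestrtocluster(I-J)$ is invertible and the non-negative solution to the linear system is in fact \emph{unique}. The anticipated main (though mild) obstacle is simply bookkeeping the case $s\in\dcluster$ correctly in the first-step analysis — one must remember that $\tau$ is the first \emph{positive} hitting time, so starting at a boundary state the recursion still only routes the mass of $\Ppirestrtocluster(s,\cdot)$ directly to $s'$ or through interior states.
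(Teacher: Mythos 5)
Your proof is correct and follows essentially the same route as the paper's: first-step analysis combined with the (strong) Markov property, identifying $\widetilde{P}(s,a,s')$ with a first-passage hitting probability of $\dcluster$. The only organizational differences are that you use a single first-positive-hitting-time $\tau$ and one unified recursion for all $s\in\clusterindex$, whereas the paper derives the interior equations (via $T_0$) and the boundary equations (via $T_1$) in two stages and then stacks them, and that you spell out the truncation/monotone-convergence argument for minimality, which the paper only asserts.
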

\begin{corollary}
Consider the partitioning 
$$
\Ppirestrtocluster =
\begin{pmatrix}
Q & B\\
C & D
\end{pmatrix},
\qquad
H =
\begin{pmatrix}
h_q \\
h_b
\end{pmatrix}
$$
where the blocks $Q, D$ describe the interaction among non-bottleneck and bottleneck states within \clustertext\ $\clusterindex$ respectively. The compressed probabilities may be obtained by finding the minimal non-negative solution to 
\begin{equation*}\label{eqn:trprob_submtx_prop}
(I-Q)h_q = B
\end{equation*}
followed by computing 
$$h_b = D + Ch_q,$$ 
where $h_b$ is the transition probability matrix of the compressed MDP given the action $a$.
\end{corollary}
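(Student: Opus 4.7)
The plan is to apply Proposition~\ref{prop:coarsePsas} and exploit the fact that the matrix $J$ defined there is the identity on bottleneck states and zero elsewhere, so $I-J$ is the projector onto the interior $\Ui$. Using the interior-first ordering implicit in the block decomposition given in the corollary, one has
\[
J = \begin{pmatrix} 0 & 0 \\ 0 & I \end{pmatrix}, \qquad \Ppirestrtocluster(I-J) = \begin{pmatrix} Q & B \\ C & D \end{pmatrix}\begin{pmatrix} I & 0 \\ 0 & 0 \end{pmatrix} = \begin{pmatrix} Q & 0 \\ C & 0 \end{pmatrix},
\]
so $I-\Ppirestrtocluster(I-J)$ is block lower triangular with diagonal blocks $I-Q$ and $I$. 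Moreover, since only the columns of $H$ indexed by $\dcluster$ are meaningful (see the range of $s'$ in Proposition~\ref{prop:coarsePsas}), the right-hand side $\Ppirestrtocluster$ should be read as its restriction to columns in $\dcluster$, i.e.\ the block column $\bigl(B^\top \; D^\top\bigr)^\top$.

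With these block forms in hand, substituting $H = \bigl(h_q^\top \; h_b^\top\bigr)^\top$ into the matrix equation of Proposition~\ref{prop:coarsePsas} and reading off the two block rows yields
\[
(I-Q)\,h_q = B, \qquad -C\,h_q + h_b = D,
\]
which rearranges to the two formulas in the statement. By the definition of $H$, the entries of $h_b$ are precisely $\widetilde{P}(s,a,s')$ for $s,s'\in\dcluster$, so $h_b$ is indeed the compressed transition probability matrix associated to the action $a$.

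It remains to show that solving $(I-Q)h_q=B$ by direct inversion produces the \emph{minimal} nonnegative solution demanded by Proposition~\ref{prop:coarsePsas}; this reconciliation is the only mild obstacle. I would appeal to the $\pi$-reachability assumption from Section~\ref{sec:compression}: every interior state reaches $\dcluster$ in finitely many steps of $\Ppirestrtocluster$, so the sub-Markov matrix $Q$ (the restriction of $\Ppirestrtocluster$ to $\Ui\times\Ui$ with bottlenecks treated as absorbing) is strictly transient. Hence $\rho(Q)<1$, $I-Q$ is invertible, and $(I-Q)^{-1}=\sum_{n\geq 0}Q^n$ is entrywise nonnegative. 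Therefore $h_q=(I-Q)^{-1}B$ is the unique and thus minimal nonnegative solution, and $h_b=D+Ch_q$ inherits nonnegativity from the fact that $C$, $D$, and $h_q$ are all nonnegative, concluding the proof.
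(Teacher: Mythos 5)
Your proposal is correct and follows essentially the same route as the paper: the two block rows of the system in Proposition~\ref{prop:coarsePsas} (equivalently, the scalar equation split over $s\in\Ui$ and $s\in\dcluster$) give $(I-Q)h_q=B$ and $h_b=D+Ch_q$ directly. Your closing observation that $\pi$-reachability of $\dcluster$ forces $\rho(Q)<1$, so that the unique solution of $(I-Q)h_q=B$ is automatically the minimal non-negative one, is a correct refinement that the paper leaves implicit (it merely retains the ``minimal non-negative solution'' caveat).
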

The proof of this Proposition and a discussion are given in the Appendix.

When deriving the the compressed rewards and discount factors below, we will need to reference the set of all pairs of bottlenecks $s,s'$ for which the probability of reaching $s'$ starting from $s$ is positive, when executing the policy $\pi_\clusterindex$ associated to $a$. Having defined $\widetilde{P}$, this set may be easily characterized as
\[
\supp_a(\widetilde{P}):=\{(s,s')\in\dcluster ~|~ \widetilde{P}(s,a,s') > 0\}
\]
where $\clusterindex$ is the \clustertext associated to the coarse action $a$.

\subsubsection{Rewards}
\label{sec:coarse_rewards}
{\em{The rewards $\widetilde{R}=\widetilde{R}(s,a,s')$, with $s,s'\in\dcluster$ and $a\in\widetilde{A}$,
are defined to be the expected discounted rewards collected along trajectories that start from $s$ and hit $s'$ before hitting any other bottleneck state in $\dcluster $, when running the fine-scale MRP restricted to the \clustertext $\clusterindex$ associated to $a$}}. 

%As before, we will fix a fine scale policy $\pi_\clusterindex$ defined on \clustertext\ $\clusterindex$, and assume that the compressed MDP action $a$ runs the Markov chain $\Ppirestrtocluster$ defined by this policy in \clustertext\ $\clusterindex$. 
In general, rewards under different policies and/or in other \clusterstext are calculated by repeating the process described below for different choices of $\pi\in\picluster_\clusterindex,  \clusterindex\in\clusters$.
As was the case in the examples shown in Figure~\ref{fig:meta_ex}, even if the fine scale MDP rewards do not depend on the source state or actions, the compressed MDP's rewards will, in general, depend on the source, destination and action taken. However as with the coarse transition probabilities, the relevant computations involve at most a single \clustertext's subgraph at a time.

Given a policy $\pi_\clusterindex$ on \clustertext\ $\clusterindex$, consider the Markov chain $(X_t)_{t\geq 0}$ with transition matrix $\Ppirestrtocluster$. Let $T$ and $T'$ be two arbitrary stopping times satisfying $0\leq T<T'<\infty$ (a.s.). The discounted reward accumulated over the interval $T\leq t\leq T'$ is given by the random variable
\[
R_{T}^{T'}:= R(X_T,a_{T+1}, X_{T+1}) + \sum_{t=T+1}^{T'-1}\left[
\prod_{\tau=T}^{t-1}\Gamma\bigl(X_{\tau},a_{\tau+1}, X_{\tau+1}\bigr)\right]
R\bigl(X_t,a_{t+1}, X_{t+1}\bigr) 
\]
where $a_{t+1}\sim\pi_\clusterindex(X_t)$ for $t=T,\ldots,T'-1$, and we set $R_T^T \equiv 0$ for any $T$.

Next, define the hitting times of $\dcluster$: 
\[
T_m =\inf\{t>T_{m-1} ~|~ X_t\in \dcluster \},\qquad m=1,2,\ldots
\]
with $T_0 = \inf\{t\geq 0 ~|~ X_t\in \dcluster \}$. Note that if the chain is started in a bottleneck state $X_0=\bindex\in\dcluster$, then clearly $T_0=0$. We will be concerned with the rewards accumulated between these successive hitting times, and by the Markovianity of $(X_t)_t$, we may, without loss of generality, consider the reward between $T_0$ and $T_1$, namely $R_{T_0}^{T_1}$.

The following proposition describes how to compute the expected discounted rewards by solving a collection of linear systems.

\begin{proposition}\label{prop:coarse_rewards}
Suppose the coarse scale action $a\in\widetilde{A}$ corresponds to executing a policy $\pi_\clusterindex$ in \clustertext\ $\clusterindex$, and let $(X_t)_{t\geq 0}$ denote  the Markov chain with transition matrix $\Ppirestrtocluster$. The rewards $\widetilde{R}$ at the coarse scale may be characterized as
\[
\widetilde{R}(s,a,s') = \bbE_s[R_{0}^{T_1} ~|~ X_{T_1}=s'], \qquad (s,s')\in\supp_a(\widetilde{P})\,.
\]
Moreover, for fixed $a$, $\widetilde{R}(s,a,s')=:H_{s,s'}$ may be computed by finding the (unique, bounded) solution $H$ to the linear system
\begin{subnumcases}{H_{s,s'} =}
\smashoperator[r]{\sum_{\substack{s''\in\Ui\,\cap\,\Uprime,\\ a\in A}}} P_{h_{s'}}(s,a,s'')\Gamma(s,a,s'')H_{s'',s'} 
 + \smashoperator[r]{\sum_{\substack{s''\in \Uprime,\\ a\in A}}} P_{h_{s'}}(s,a,s'')R(s,a,s''),\hspace*{-1em}
  & if $s\in\Ui\cap \Uprime$\label{eqn:coarse_rews_linsys}\\
\smashoperator[r]{\sum_{\substack{s''\in\Ui\,\cap\,\Uprime,\\ a\in A}}} P_{\tilde{h}_{s'}}(s,a,s'')\Gamma(s,a,s'')H_{s'',s'} 
+ \smashoperator[r]{\sum_{\substack{s''\in \Uprime,\\ a\in A}}} P_{\tilde{h}_{s'}}(s,a,s'')R(s,a,s''),\hspace*{-1em} 
& if  $(s,s')\in\supp_a(\widetilde{P})$\vspace{0.5cm}\hspace{3em} \label{eqn:coarse_rews_sums}
\end{subnumcases}
where $\Uprime:=\{s\in \clusterindex~|~h_{s'}(s)>0\}$; 
%$h(s)=h_{s'}(s)=h(s):=\bbP_s(X_{T_0}=s')$ is as in Equation~\eqref{eqn:h_function}, $\Uprime:=\{s\in %\clusterindex ~|~ h(s) > 0\}$, and $P_h, P_{\tilde{h}_{s'}}$ are defined by Equations~\eqref{eqn:Ph_defn} %and~\eqref{eqn:Phtilde_defn}, respectively.
\begin{equation*}\label{eqn:Ph_defn_prop}
P_{h_{s'}}(s,a,s'') := \frac{P_{\clusterindex}(s,a,s'')\pi_\clusterindex(s,a)h_{s'}(s'')}{h_{s'}(s)}
\end{equation*}
for $s\in \Ui\cap\Uprime$,  $a\in A$, $s''\in \Uprime$; and
\begin{equation*}\label{eqn:Phtilde_defn_prop}
P_{\tilde{h}_{s'}}(s,a,s'') := \frac{P_{\clusterindex}(s,a,s'')\pi_\clusterindex(s,a)h_{s'}(s'')}{\widetilde{P}(s,a,s')}
\end{equation*}
for $(s,s')\in\supp_a(\widetilde{P})$, $a\in A$, $s''\in \Uprime$; with 
 $h_{s'}(s):=\bbP_s(X_{T_0}=s'),$ for $s\in\clusterindex, s'\in\dcluster$ denoting the minimal, non-negative, harmonic function satisfying
\[
h_{s'}(s) =
\begin{cases}
\delta_{s,s'} & s\in\dcluster \\
%1 & s\in\dcluster \setminus \{s'\} \\
\Ppirestrtocluster(s,s') + \sum_{s''\in\Ui}\Ppirestrtocluster(s,s'')h_{s'}(s'') & s\in\Ui \,.
\end{cases}
\]
\end{proposition}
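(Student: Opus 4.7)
The plan is to identify $\widetilde{R}(s,a,s')$ as a conditional expectation and derive the linear system via a Doob $h$-transform followed by first-step analysis. By definition of the coarse reward,
\[
\widetilde{R}(s,a,s') = \bbE_s\bigl[R_0^{T_1} \,\bigm|\, X_{T_1} = s'\bigr], \qquad s \in \dcluster,
\]
where $(X_t)$ runs under $\Ppirestrtocluster$. To close the recursion, extend $H$ to interior states $s \in \Ui \cap \Uprime$ by declaring $H_{s,s'} := \bbE_s[R_0^{T_0} \mid X_{T_0}=s']$; these auxiliary values are precisely the unknowns that arise when one unfolds the recursion starting from the boundary.

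The crux is the one-step conditional law of $(a_1, X_1)$ given the terminal boundary event. For $s \in \Ui \cap \Uprime$, a short Bayes/Markov argument---together with the boundary convention $h_{s'}(s'') = \delta_{s'',s'}$ for $s'' \in \dcluster$---gives
\[
\bbP_s\bigl(a_1=a,\, X_1 = s'' \,\bigm|\, X_{T_0}=s'\bigr) = \frac{P_\clusterindex(s,a,s'')\pi_\clusterindex(s,a)\, h_{s'}(s'')}{h_{s'}(s)} = P_{h_{s'}}(s,a,s''),
\]
with the normalization valid precisely because $s \in \Uprime$. For $s \in \dcluster$ the analogous calculation uses $\bbP_s(X_{T_1}=s') = \widetilde{P}(s,a,s')$ (from Proposition~\ref{prop:coarsePsas}), yielding the kernel $P_{\tilde{h}_{s'}}$. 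Thus the process conditioned on its first bottleneck visit is itself Markov with the transition kernels appearing in the statement, and the strong Markov property of the conditioned chain transfers readily from the unconditioned one.

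With the $h$-transformed kernels in hand, decompose the reward into its first-step contribution and remainder,
\[
R_0^{T_1} = R(X_0, a_1, X_1) + \Gamma(X_0, a_1, X_1)\, R_1^{T_1},
\]
and take the conditional expectation under $\{X_{T_1}=s'\}$. The immediate reward contributes the $R$-sum over all $s'' \in \Uprime$, since unreachable $s''$ have zero conditional weight. For the remainder, condition further on $X_1$: if $X_1 \in \Ui \cap \Uprime$, the Markov property of the conditioned chain gives a continuation expectation of $H_{X_1,s'}$, contributing the $\Gamma \cdot H$ term; if $X_1 \in \dcluster$, then necessarily $X_1 = s'$ and $R_1^{T_1} \equiv 0$, so no continuation term appears. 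This recovers \eqref{eqn:coarse_rews_linsys}--\eqref{eqn:coarse_rews_sums} exactly, for both the interior and bottleneck cases.

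Existence and uniqueness of the bounded solution follow from the linear part of the right-hand side being a strict contraction on $\ell^\infty$: the $h$-transformed kernel restricted to $\Ui \cap \Uprime$ is the transition matrix of a transient chain (by $\pi_\clusterindex$-reachability of $\dcluster$), and the factor $\Gamma < 1$ provides additional contraction. The main technical care required is in tracking the state-action-dependent discounts through the $h$-transform, and in handling the denominator change between interior-started and bottleneck-started expectations, which is forced by the degeneracy $h_{s'}|_{\dcluster} = \delta_{\cdot,s'}$; this is the step I would write out most carefully.
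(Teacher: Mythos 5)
Your proposal is correct and follows essentially the same route as the paper's proof: a Doob $h$-transform giving the conditioned one-step kernels $P_{h_{s'}}$ and $P_{\tilde h_{s'}}$, followed by a first-step decomposition $R_0^{T_1}=R(X_0,a_1,X_1)+\Gamma(X_0,a_1,X_1)R_1^{T_1}$ whose conditional expectation yields the two cases of the linear system, with the time-shift identity for $\bbE[R_1^{T_0}\mid X_{T_0}=s',X_1=s'']$ being exactly the technical lemma the paper isolates and proves in detail. The only cosmetic difference is that you argue uniqueness via substochasticity/contraction of the transformed kernel on the interior, whereas the paper cites the standard uniqueness result for bounded solutions under a.s.\ finite hitting times; both are valid under the $\pi_\clusterindex$-reachability assumption.
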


Thus, for each $s'\in\dcluster $, the compressed rewards $\widetilde{R}(s,a,s')$ are computed by first solving a linear system of size at most $|\Ui|\times |\Ui|$ given by~\eqref{eqn:coarse_rews_linsys}, and then computing at most $|\dcluster |$ of the sums given by~\eqref{eqn:coarse_rews_sums} (the function $h_{s'}$ has already been computed during course of solving for the compressed transition probabilities).

See the Appendix for a proof, a matrix formulation of this result, and a discussion concerning  computational consequences. 

\subsubsection{Trajectory Lengths}
\label{sec:exp_path_lens}
Assume the hitting times $(T_m)_{m\geq 0}$ are as defined in Section~\ref{sec:coarse_rewards}. 
We note that the average path lengths (hitting times) between pairs of bottlenecks,
\[
L(s,a,s'):=\bbE_s[T_1 ~|~X_{T_1}=s'], \qquad s,s'\in\dcluster \text{ such that } (s,s')\in\supp_a(\widetilde{P})
\] can be computed using the machinery exactly as described in Section~\ref{sec:coarse_rewards} above by   setting 
\[
\Gamma(s,a,s')=1, \qquad R(s,a,s')=1,\qquad\text{for all }s,s'\in S, a\in A
\]
at the fine scale, and then applying the calculations for computing expected ``rewards'' given by Equations~\eqref{eqn:coarse_rews_linsys} and~\eqref{eqn:coarse_rews_sums} subject to a non-negativity constraint. Although expected path lengths are not essential for defining a compressed MDP, these quantities can still provide additional insight into a problem. For example, when simulations are involved, expected path lengths might hint at the amount of exploration necessary under a given policy to characterize a  \clustertext.

\subsubsection{Discount Factors}\label{sec:coarse_discounts}
When solving an MDP using the hierarchical decomposition introduced above, it is important to seek a good approximation for the discount factors at the coarse scale. In our experience, this results in improved consistency across scales, and improved accuracy of and convergence to the solution. 
In the preceding sections, a coarse MDP was computed by averaging over paths between bottlenecks at a finer scale. Depending on the particular source/destination pair of states, the paths will in general have different length distributions. Thus, when solving a coarse MDP, rewards collected upon transitioning between states at the coarse scale should be discounted at different, state-dependent rates. The correct discount rate is a random variable (as is the path length), and transitions at the coarse scale implicitly depend on outcomes at the fine scale. We will partially correct for differing length distributions (and avoid the need to simulate at the fine scale) by imposing a coarse non-uniform discount factor based on the cumulative fine scale discount applied on average to paths between bottlenecks. The coarse discount factors $\widetilde{\Gamma}$ are incorporated when solving the coarse MDP so that the scale of the coarse value function is more compatible with the fine problem, and convergence towards the fine-scale policy may be accelerated. 
%See Section~\ref{sec:value_fns} for a discussion relating to the solution of MDPs with state and action dependent discounting.

%This adjustment gives improved accuracy and convergence when solving a coarse problem, and leads to a %value function on a scale that is more compatible with the fine problem.
The expected cumulative discounts may be computed using a procedure similar to the one given for computing expected rewards in Section~\ref{sec:coarse_rewards}.
As before, given a policy $\pi_\clusterindex$ on \clustertext\ $\clusterindex$, consider the Markov chain $(X_n)_{n\geq 0}$ with transition matrix $\Ppirestrtocluster$, and let $T, T'$ be two arbitrary stopping times satisfying $0\leq T<T'<\infty$ (a.s.).  The cumulative discount applied to trajectories $(X_{T},X_{T+1},\ldots,X_{T'})$ over the interval $T\leq t\leq T'$ is given by the random variable
\[
\Delta_{T}^{T'} := \prod_{t=T}^{T'-1}\Gamma\bigl(X_t,a_{t+1},X_{t+1}\bigr) \,,
\]
where $a_{t+1}\sim\pi_\clusterindex(X_t)$ for $t=T,\ldots,T'-1$.
The following proposition describes how to compute the expected discount factors by solving a collection of linear problems with non-negativity constraints.

\begin{proposition}\label{prop:coarse_discount_factors}
Suppose the coarse scale action $a\in\widetilde{A}$ corresponds to executing the policy $\pi_\clusterindex$ in \clustertext\ $\clusterindex$. Let $(X_t)_{t\geq 0}$ denote  the Markov chain with transition matrix $\Ppirestrtocluster$, and let $(T_m)_{m\geq 0}$ denote the boundary hitting times defined in Section~\ref{sec:coarse_rewards}. The discount factors $\widetilde{\Gamma}$ at the coarse scale may be characterized as
\[
\widetilde{\Gamma}(s,a,s') = \bbE_s[\Delta_{0}^{T_1} ~|~ X_{T_1}=s'], \qquad (s,s')\in\supp_a(\widetilde{P})
\]
and, letting $H_{s,s'}:=\widetilde{\Gamma}(s,a,s')$, may be computed by finding the minimal non-negative solution $H$ to the linear system
\begin{equation*}
H_{s,s'} = 
\begin{dcases}
\smashoperator[r]{\sum_{s''\in\Ui\cap \Uprime, a\in A}} P_{h_{s'}}(s,a,s'')\Gamma(s,a,s'')H_{s'',s'} + \sum_{a\in A}P_{h_{s'}}(s,a,s')\Gamma(s,a,s'),  & \text{if } s\in\Ui\cap \Uprime\\
\smashoperator[r]{\sum_{s''\in\Ui\cap \Uprime, a\in A}} P_{\tilde{h}_{s'}}(s,a,s'')\Gamma(s,a,s'')H_{s'',s'} + \sum_{a\in A}P_{\tilde{h}_{s'}}(s,a,s')\Gamma(s,a,s'), & \text{if }
 (s,s')\in\supp_a(\widetilde{P})
\end{dcases}
\end{equation*}
where $h_{s'}(s)$, $\Uprime$, $P_{h_{s'}}$, and $P_{\tilde{h}_{s'}}$ are as defined in Proposition~\ref{prop:coarse_rewards}.
\end{proposition}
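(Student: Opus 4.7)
The plan is to mirror the proof of Proposition~\ref{prop:coarse_rewards}, exploiting the fact that $\Delta_0^{T_1}$ is the multiplicative analogue of the additive functional $R_0^{T_1}$. Both admit a first-step decomposition under the Doob $h$-transform that converts the condition $\{X_{T_1}=s'\}$ into a genuine Markov kernel on the interior of $\clusterindex$. The key machinery — the harmonic function $h_{s'}$, the transformed kernels $P_{h_{s'}}$ and $P_{\tilde{h}_{s'}}$, and the partition $\Uprime$ of states from which $s'$ is reachable — is already in place from the reward proposition, so no new ingredients are required.

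First, I would establish the probabilistic identity. Setting $H_{s,s'}:=\bbE_s[\Delta_0^{T_1}\mid X_{T_1}=s']$ for $(s,s')\in\supp_a(\widetilde{P})$ recovers the definition of $\widetilde{\Gamma}(s,a,s')$, so it remains to show that $H$ solves the stated linear system. Conditioning on the first transition of $(X_t)_{t\geq 0}$ under $\Ppirestrtocluster$ and using the strong Markov property,
\[
\bbE_s\bigl[\Delta_0^{T_1}\indic\{X_{T_1}=s'\}\bigr] = \sum_{s'',a} P_\clusterindex(s,a,s'')\,\pi_\clusterindex(s,a)\,\Gamma(s,a,s'')\,\bbE_{s''}\bigl[\Delta_0^{T_0}\indic\{X_{T_0}=s'\}\bigr].
\]
For $s''\in\Ui$ the inner expectation equals $h_{s'}(s'')\,H_{s'',s'}$ (and vanishes unless $s''\in\Uprime$); for $s''\in\dcluster$ we have $T_0=0$ and $\Delta_0^0=1$, so it equals $\delta_{s'',s'}$.

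Second, I would divide by the appropriate normalizing constant to turn the unconditional identity into one for the conditional expectation $H_{s,s'}$. When $s\in\Ui\cap\Uprime$ the normalizer is $h_{s'}(s)=\bbP_s(X_{T_0}=s')$, and after dividing the ratios $P_\clusterindex(s,a,s'')\pi_\clusterindex(s,a)h_{s'}(s'')/h_{s'}(s)$ collapse to $P_{h_{s'}}(s,a,s'')$, with the boundary contribution reducing to the second sum via $h_{s'}(s')=1$. When $s\in\dcluster$ and $(s,s')\in\supp_a(\widetilde{P})$ the correct normalizer is $\widetilde{P}(s,a,s')$ (since starting at a bottleneck forces $T_0=0$ and $T_1>0$), producing the kernel $P_{\tilde{h}_{s'}}$. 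This yields the two cases of the system verbatim.

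Third, I would argue uniqueness and minimal non-negativity. The $h$-transformed transition kernel $P_{h_{s'}}$ restricted to the interior $\Ui\cap\Uprime$ is substochastic: under the conditioned dynamics the chain is eventually absorbed at $s'\in\dcluster$, so $\sum_{s''\in\Ui\cap\Uprime,a}P_{h_{s'}}(s,a,s'')<1$ for at least one state in each communicating class. Multiplying these rows further by $\Gamma(\cdot)\in(0,1)$ makes $I - (\Gamma\circ P_{h_{s'}})$ strictly diagonally dominant (or equivalently has spectral radius strictly less than one), so it is invertible and the system has a unique bounded solution, which is automatically non-negative; the boundary equations then determine $H_{s,s'}$ uniquely for $s\in\dcluster$ as a non-negative convex combination. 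The non-negativity qualifier in the statement only matters to rule out spurious solutions on rows that are not contracted, exactly as in the reward proof.

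The only substantive point that is not purely bookkeeping is the justification of the $h$-transform under state- and action-dependent discounting and in the presence of a stochastic policy, but this is already handled in Proposition~\ref{prop:coarse_rewards}; for the present proposition the derivation is strictly parallel with $R(\cdot)$ replaced by the constant $1$ inside the empty-reward term and $\Gamma$ treated as a multiplicative weight along the trajectory. I therefore expect the proof to consist almost entirely of pointing to the reward argument and verifying the multiplicative bookkeeping at the boundary step.
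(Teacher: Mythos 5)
Your proposal is correct and follows the same overall strategy as the paper's proof: a first-step decomposition expressed through the conditioned kernels $P_{h_{s'}}$ and $P_{\tilde{h}_{s'}}$ inherited from Proposition~\ref{prop:coarse_rewards}, producing the interior recursion and the boundary equations as two separate cases. The one genuine difference lies in how the Markov shift for the multiplicative functional is justified. The paper sets $R\equiv 1$ and exploits the identity $\Delta_T^{T'}=R_T^{T'+1}-R_T^{T'}$ so that Lemma~\ref{lem:one-step-rews} (proved for additive functionals conditioned on $\{X_{T_0}=s'\}$) can be invoked verbatim for $\Delta$; you instead work with the unconditional quantities $\bbE_{s}\bigl[\Delta_0^{T_0}\indic\{X_{T_0}=s'\}\bigr]$, apply the ordinary Markov property, and divide by the normalizers $h_{s'}(s)$ and $\widetilde{P}(s,a,s')$ only at the end. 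Your route sidesteps the delicate conditional shift entirely (the shift is standard for unconditional expectations of path functionals) at the cost of carrying the indicator through the recursion, while the paper's route reuses its lemma but needs the difference trick; both are legitimate and yield the same system. Two minor remarks: your opening display is written for $\Delta_0^{T_1}$ started at a bottleneck, whereas for $s\in\Ui$ the relevant functional is $\Delta_0^{T_0}$ --- the paper keeps these two cases explicitly separate, and you should too when writing this out; and your invertibility/contraction argument goes beyond what the paper actually proves (it only asserts the minimal non-negative solution and observes that the left-hand-side matrix coincides with that of the reward system), but it is sound, since every interior row sum of $(\Gamma\circ P_{h_{s'}})$ is bounded by $\bar{\gamma}<1$.
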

The proof is again postponed until the Appendix.

It is worth observing that if the discount factor at the fine scale is {\em uniform}, $\Gamma(s,a,s')=\gamma$ with no dependence on states or actions, then the expected cumulative discounts may be related to the average path lengths $L(s,a,s')$ between pairs of bottlenecks  described in Section~\ref{sec:exp_path_lens}. In particular, suppose $T(s,a,s')$ is the first passage time of a fine scale trajectory starting at $s\in\dcluster $ and ending at $s'\in\dcluster $ following a policy determined by the coarse action $a\in\widetilde{A}$. Then $L(s,a,s')=\bbE[T(s,a,s')]$, and Jensen's inequality implies that 
\[
\gamma^{\bbE[T]}\leq \bbE[\gamma^T].
\]
Thus $\widetilde{\Gamma}(s,a,s') \geq \gamma^{\bbE[T(s,a,s')]} = \gamma^{L(s,a,s')}$, and this approximation improves as $\gamma\uparrow 1$. However, this is only true for uniform $\gamma$ at the fine scale, and even for $\gamma$ close to $1$, the relationship above may be loose. Although the connection between path lengths and discount factors is illuminating and potentially useful in the context of Monte-Carlo simulations, we suggest calculating coarse discount factors according to Proposition~\ref{prop:coarse_discount_factors} rather than through path length averages.

In this and previous subsections, the approach taken is in the spirit of revealing the structure of the coarsening step and how it is possible to compute many coarser variables, or approximations thereof, by solutions of linear systems. Of course one may always use Monte-Carlo methods, which in addition to estimates of the expected values, may be used to obtain more refined approximations to the law of the random variables  $\Delta_{T_0}^{T_1}$ and  $R_{T_0}^{T_1}$.

\begin{figure}[t]
\centering
\tikzset{
  nonterminal/.style={
    % The shape:
    rectangle,
    % anchor=center, text centered,
    % The size:
    minimum size=6mm,
    % The border:
    very thick,
    draw=red!50!black!50,         % 50% red and 50% black,
                                  % and that mixed with 50% white
    % The filling:
    top color=white,              % a shading that is white at the top...
    bottom color=red!50!black!20, % and something else at the bottom
    % Font
    font=\itshape
  },
  terminal/.style={
    % The shape:
    rounded corners, 
    minimum size=6mm,
    %minimum height=6mm, minimum width=1.5mm,
   % text width=1.5cm,
    % The rest
    very thick,draw=black!50,
    top color=white,bottom color=black!20,
    font=\sffamily},
  skip loop/.style={to path={-- ++(0,#1) -| (\tikztotarget)}}
}
{
  \tikzset{terminal/.append style={text height=1.5ex,text depth=.25ex}}
  \tikzset{nonterminal/.append style={text height=1.5ex,text depth=.25ex}}
}
\begin{tikzpicture}[
        point/.style={coordinate},>=stealth',thick,draw=black!50,
        tip/.style={->,shorten >=0.007pt},every join/.style={rounded corners},
        hv path/.style={to path={-| (\tikztotarget)}},
        vh path/.style={to path={|- (\tikztotarget)}},
        %text height=1.5ex,text depth=.25ex % align text horizontally
    ]
\sffamily
    \matrix[column sep=4mm, row sep=0.9cm] {
        % first row:
        \node (p0) [point] {}; & \node (p1) [point] {}; &
        \node (comp) [nonterminal] {compress}; &
        \node (p2) [point] {}; & \node (sc) [terminal]    {solve coarse};               &
        \node (sp1) [point] {}; & \node (p3) [point] {}; & \node (sp2) [point] {}; &
         \node (if) [terminal]    {update fine};         &
         \node (sp3) [point,xshift=-2mm] {}; &
        \node (p4) [point] {}; & \node (p5) [point] {};\\
        % second row
        & & & & & & & \node (sp1_row2) [point,xshift=2mm] {}; & \node (ub)  [terminal]  {update boundary}; &  
        \node (sp_row2) [point,xshift=2mm] {}; & \node (p4_row2) [point] {}; \\
    };

    { [start chain]
    	\chainin (p4) [join];
    	%\chainin (p4_row2) [join];
    	{ [start branch=p4 loop]
        	\chainin (sp_row2) [join=by {vh path}];
        }
       %     \path (p4) edge node [right] {$90\%$} (sp_row2) ;
    	\chainin (sp_row2);
   		\chainin (ub) [join=by tip];
   		\chainin (sp1_row2) [join];
   		{ [start branch=sp1_row2 loop]
        \chainin (p3) [join=by {hv path,tip}];
        }
    	}
    { [start chain]
        \chainin (p0) [join];
        \chainin (p1) [join];
        %\path[line width=1pt] (p1) edge node [left] {$\pi_0$} (comp);
        \chainin (comp)   [join=by tip];
        \chainin (p2)    [join];
        \chainin (sc)   [join=by tip];
        \chainin (sp1)    [join];
        \chainin (p3)    [join];
        \chainin (sp2)    [join];
        \chainin (if) [join=by tip];
        \chainin (sp3)    [join];
        \chainin (p4)    [join];     
        { [start branch=p4 loop]
        \chainin (p3) [join=by {skip loop=-8mm,tip}];
        }
        { [start branch=p4 loop]
        \chainin (p1) [join=by {skip loop=8mm,tip,dashed}];
        }        
        \chainin (p5)    [join =by tip];
        }
  % line labels
  \draw (p0) node[left, inner sep=.5mm] {$\pi_0$};
  \draw (p3) node[above, inner sep=.5mm] {$V_{\mathrm{coarse}}$};
  \draw (sp3) node[above, inner sep=.5mm] {$\pi_{\mathrm{new}}$};
\end{tikzpicture}
\caption{\small\em Different solution algorithms for solving a pair of coarse/fine MDPs are obtained by
iterating over different paths in this flow graph. See text for details.}
\label{fig:soln-flow}
\end{figure}

\subsection{{\em{Step 3:}} Multiscale Solution of MDPs}
\label{sec:mdp_solution}
Given a (fine) MDP and a coarsening as above, a solution to the fine scale MDP may be obtained by applying one of several possible  algorithms suggested by the flow diagram in Figure~\ref{fig:soln-flow}. Solving for the finer scale's policy involves alternating between two main computational steps: (1) updating the fine solution given the coarse solution, and (2) updating the coarse solution given the fine solution. Given a coarse solution defined on bottleneck states, the fine scale problem decomposes into a collection of smaller independent sub-problems, each of which may be solved approximately or exactly. These are iterations along the inner loop surrounding ``update fine'' in Figure~\ref{fig:soln-flow}. After the fine scale problem has been updated, the solution on the bottlenecks may be updated either with or without a re-compression step. The former is represented by the long upper feedback loop in Figure~\ref{fig:soln-flow}, while the latter corresponds to the outer, lower loop passing through ``update boundary''. Updating without re-compressing may, for instance, take the form of the updates (Bellman, averaging) appearing  in any of the asynchronous policy/value iteration algorithms. Updating by re-compression consists of re-compressing with respect to the current, updated fine policy and then solving the resulting coarse MDP. 

The discussion that follows considers an arbitrary pair of successive scales, a ``fine scale'' and a ``coarse scale'', and applies to any such pair within a general hierarchy of scales. A key property of the  compression step is that it yields new MDPs in the same form, and can therefore be iterated. Similarly, the process of coarsening and refining policies and value functions allow one to move from fine to coarse scales and then from coarse to fine, respectively, and therefore may be repeated through several scales, in different possible patterns.

%%%%%%%%%%%%%%%%%%%%%%%%%%%%%%%%%%%%%%%%%%%%%%%%%%%%%%%%%
\begin{algorithm}[t]
\caption{Top-down solution of MDPs: Alternating interior-boundary approach.}
\fbox{\begin{minipage}{0.975\textwidth}
Set the initial fine scale policy to random uniform if not otherwise given via transfer.
\begin{enumerate}\itemsep 0pt
\item Compress the MDP using one or more policies.
%given the current fine scale policy.
\item Solve the coarse MDP using any algorithm, and save the resulting value function $V_{\mathrm{coarse}}$.
\item Fix the value function $V_{\mathrm{fine}}$ of the fine MDP at bottleneck states $\B$ to $V_{\mathrm{coarse}}$.
\item\label{list:solve-clusters} Solve the local boundary value problems separately within each \clustertext to fill in the rest of $V_{\mathrm{fine}}$, given the current fine scale policy.
\item\label{list:interior-greedy-policy} Recover a fine scale policy $\pi:\Si\times A\to\R_{+}$ on \clustertext interiors ($\Si:=S\setminus\B$) from the resulting $V_{\mathrm{fine}}$. For $s\in\Si$,
\begin{subequations}\label{eqn:alg-greedy-update}
\begin{align}
a^*(s) &= \arg\max_{a\in A}\smashoperator[r]{\sum_{s'\in\clusterindex([s])}}P(s,a,s')\bigl(R(s,a,s') + \Gamma(s,a,s') V_{\mathrm{fine}}(s')\bigr)\label{eqn:alg-max-int} \\
\pi(s,\cdot) &= \delta_{a^*(s)} \;.
\end{align}
\end{subequations}
\item\label{list:blending} Blend in a regularized fashion with the previous global policy. For $s\in\Si$,
\begin{equation}\label{eqn:alg-pol-blending}
\pi_{\mathrm{new}}(s,\cdot) = \lambda\pi(s,\cdot) + (1-\lambda)\pi_{\mathrm{old}}(s,\cdot) \;.
\end{equation}
where $\lambda\in(0,1]$ is a regularization parameter.
\item\label{list:alg-local-PI} (Optional - Local Policy Iteration) Set $\pi_{\mathrm{old}} =\pi_{\mathrm{new}}$. Repeat from step~(\ref{list:solve-clusters}) until convergence criteria met.
\item\label{list:alg-bn-pol} Update the fine policy on bottleneck states by applying Equations~\eqref{eqn:alg-greedy-update}-\eqref{eqn:alg-pol-blending} for $s\in\B$.
\item\label{list:alg-bn-val} Update the boundary states' values either exactly, or by repeated local averaging,
\[
V_{\mathrm{fine}}(s) = 
\bbE_{a\sim \pi_{\mathrm{new}}(s)}\left[
\sum_{s'}P(s,a,s')\bigl(R(s,a,s') + \Gamma(s,a,s') V_{\mathrm{fine}}(s')\bigr) \right], \, s\in\B
\]
where the number of averaging passes $N$ for each bottleneck state $s\in\B$, satisfies $N > \log_{\bar{\gamma}}\tfrac{1}{2}$ with $\bar{\gamma}:=\max_{s,a,s'}\bigl\{\Gamma(s,a,s') \indic_{[P(s,a,s')>0]}\bigr\}$.
\item\label{list:alg-last} Set $\pi_{\mathrm{old}} =\pi_{\mathrm{new}}$. Repeat from step~(\ref{list:solve-clusters}) until convergence criteria met.
\end{enumerate}
\end{minipage}
}
\label{alg:hierarchy-solve}
\end{algorithm}

In a problem with many scales, the hierarchy may be traversed in different ways by recursive applications of the solution steps discussed above. A particularly simple approach is to solve {\em top-down} (coarse to fine) and update {\em bottom-up}. In this case the solution to the coarsest MDP is pushed down to the previous scale, where we may solve again and push this solution downwards and so on, until reaching a solution to the bottom, original problem. 
%If the gradient of the coarsest value function is correct for the problem, then the solution obtained at the finest scale will be the optimal solution. 
It is helpful, though not essential, when solving top-down if the magnitude of coarse scale value functions are directly compatible with the optimal value function for the fine-scale MDP. What is important, however, is that there is sufficient gradient as to direct the solution along the correct path to the goal(s), stepping from \clustertext-to-\clustertext at the finest scale. In Algorithm~\ref{alg:hierarchy-solve}, solving top-down will enforce the coarse scale value gradient. One can mitigate the possibility of errors at the coarse scale by compressing with respect to carefully chosen policies at the fine scale (see Section~\ref{sec:cluster_pols}). However, to allow for recovery of the optimal policy in the case of imperfect coarse scale information, a {\em bottom-up} pass updating coarse scale information is generally necessary. Coarse scale information may be updated either by re-compressing or by means of other local updates we will describe below. 

Although we will consider in detail the solution of a two layer hierarchy consisting of a fine scale problem and a single coarsened problem, these ideas may be readily extended to hierarchies of arbitrary depth: what is important is the handling of pairs of successive scales. The particular algorithm we describe chooses (localized) policy iteration for fine-scale policy improvement, and local averaging for updating values at bottleneck states. Algorithm~\ref{alg:hierarchy-solve} gives the basic steps comprising the solution process. The fine scale MDP is first compressed with respect to one or more policies. In the absence of any specific and reliable prior knowledge, a {\em collection} of \clustertext-specific stochastic policies, to be described in Section~\ref{sec:cluster_pols}, is suggested. This collection attempts to provide all of the {\em coarse} actions an agent could possibly want to take involving the given  \clustertext. These coarse actions involve traversing a particular \clustertext towards each bottleneck along paths which vary in their directedness, depending on the reward structure within the  \clustertext. The Algorithm is local to \clusterstext, however, so computing these policies is inexpensive. Moreover, if given policies defined on one or more \clusterstext a priori, then those policies may be added to the collection used to compress, providing additional actions from which an agent may choose at the coarse scale. Solving the coarse MDP amounts to choosing the best actions from the available pool.

The next step of Algorithm~\ref{alg:hierarchy-solve} is to solve the coarse MDP to convergence. Since the coarse MDP may itself be compressed and solved efficiently, this step is relatively inexpensive. The optimal value function for the coarse problem is then assigned to the set of bottleneck states for the fine problem\footnote{Recall that the statespace of the coarse problem is exactly the set of bottlenecks for the fine problem.}. With bottleneck values fixed, policy iteration is invoked within each \clustertext independently (Steps (\ref{list:solve-clusters})-(\ref{list:blending})). These local policy iterations may be applied to the \clusterstext in any order, and any number of times. 
%The initial policy within \clusterstext may be stochastic or deterministic, and can be transferred from a related problem (see Section~\ref{sec:transfer}). 
The value determination step here can be thought of as a boundary value problem, in which a \clustertext's boundary (bottleneck) values are interpolated over the interior of the  \clustertext. Section~\ref{sec:solve-local} explains how to solve these problems as required by Step~(\ref{list:solve-clusters}) of  Algorithm~\ref{alg:hierarchy-solve}. Note that only values on the interior of a \clustertext are needed so the  policy does not need to be specified on bottlenecks for local policy iteration. 

Next, a greedy fine scale policy on a \clustertext's interior states is computed from the interior values (Step (\ref{list:interior-greedy-policy})). The new interior policy is a blend between the greedy policy and the previous policy (Step (\ref{list:blending})). Starting from an initial stochastic fine scale policy,  policy blending allows one to regularize the solution and maintain a degree of stochasticity that can repair coarse scale errors. 

Finally, information is exchanged between \clusterstext by updating the policy on bottleneck states  (Step~(\ref{list:alg-bn-pol})), and then using this (globally defined) policy in combination with the interior values to update bottleneck values by local averaging (Step~(\ref{list:alg-bn-val})). Both of these steps are computationally inexpensive. Alternating updates to \clustertext interiors and boundaries are executed until convergence. This algorithm is guaranteed to converge to an optimal value function/policy pair (it is a variant of modified asynchronous policy iteration, see~\citep{BertsekasVol2}), however in general 
convergence may not be monotonic (in any norm). Section~\ref{sec:ms-alg-proof} gives a proof of convergence for arbitrary initial conditions.

We note that often approximate solutions to the top level or \clustertext-specific problems is sufficient. Empirically we have found that single policy iterations applied to the \clusterstext in between bottleneck updates gives rapid convergence (see Section~\ref{sec:examples}). We emphasize that at each level of the hierarchy below the topmost level, the MDP may be decomposed into distinct pieces to be solved locally and  independently of each other. Obtaining solutions at each scale is an efficient process and at no point do we solve a large, global problem. 
%The complexity of compression and obtaining local (\clustertext) solutions is limited by the number of \clusterstext and the size of a given \clustertext, respectively. Thus the per iteration cost of the multiscale algorithm we have discussed is much lower than that of global algorithms, such as policy iteration. 

In practice, the multi-scale algorithm we have discussed requires fewer iterations to converge than global, single-scale algorithms, for primarily two reasons. First, the multiscale algorithm starts with a coarse approximation of the fine solution given by the solution to the compressed MDP. This provides a good global warm start that would otherwise require many iterations of a global, single-scale algorithm. Second, the multiscale treatment can offer faster convergence since sub-problems are decoupled from each other given the bottlenecks. Convergence of local (within \clustertext) policy iteration is thus constrained by what are likely to be faster mixing times {\em within} \clusterstext, rather than slow global times across \clusterstext, as conductances are comparatively large within \clusterstext by construction.

%We provide further details explaining the steps in Algorithm~\ref{alg:hierarchy-solve} below.

\begin{algorithm}[t]
\caption{Determining good policies to be used for compression.}
\fbox{\begin{minipage}{0.975\textwidth}
\begin{algorithmic}[1]
\For{each \clustertext\ $\clusterindex$}
 \State Set $S_\clusterindex$ to be \clustertext\ $\clusterindex$ 
 \For{each bottleneck $\bindex\in\boundary\clusterindex$}
 \State Set $P_{\clusterindex,\bindex}$ to be $P_\clusterindex$ modified so that $\bindex$ is absorbing
 \State Set $R_{\clusterindex,\bindex,r}=R_{\clusterindex}$
  \For{each $r\in R\mathrm{int}_\clusterindex$}
   \State $R_{\clusterindex,\bindex,r}(s,a,\bindex)\gets R_{\clusterindex}(s,a,\bindex) + \Gamma_{\clusterindex}(s,a,\bindex)r$ for all $s\in\clusterindex, a\in A$
   \State Solve $\text{MDP}_{\clusterindex,\bindex,r}:=(S_{\clusterindex},A, P_{\clusterindex,\bindex},R_{\clusterindex,\bindex,r},\Gamma_{\clusterindex})$ for a policy $\pi_{\clusterindex,\bindex,r}$ on \clustertext\ $\clusterindex$
  \EndFor
 \EndFor
\EndFor
\end{algorithmic}
\end{minipage}
}
\label{alg:room_comp_pols}
\end{algorithm}

\subsubsection{Selecting Policies for Compression}
\label{sec:cluster_pols}
In the context of solving an MDP hierarchy, the ideal coarse value function is one which takes on the exact  values of the optimal fine value function at the bottlenecks. Such a value function clearly respects the fine scale gradient, falls within a compatible dynamic range, and can be expected to lead to rapid convergence when used in conjunction with Algorithm~\ref{alg:hierarchy-solve}. Indeed, the best possible coarse value function that can be realized is precisely the solution to a coarse MDP compressed with respect to the optimal fine scale policy. We propose a {\em local} method for selecting a collection of policies at the fine scale that can be used for compression, such that the solution to the resulting coarse MDP is likely to be close to the best possible coarse solution.

Algorithm~\ref{alg:room_comp_pols} summarizes the proposed policy selection method, and is local in that only a single \clustertext at a time is considered. The idea behind this algorithm is that  useful coarse actions involving a given \clustertext generally consist of getting from one of the \clustertext's bottlenecks to another. The best fine scale path from one bottleneck to another, however, depends on the reward structure within the  \clustertext. In fact, if there are larger rewards within the \clustertext than outside, it may not even be advantageous to leave it. On the other hand, if only local rewards within a \clustertext are visible, then we cannot tell whether locally large rewards are also globally large. Thus, a collection of policies covering all the interesting alternatives is desired.

For \clustertext\ $\clusterindex$, let $\bar{r}:=\max_{s\in\clusterindex,a,s'\in\clusterindex}R_\clusterindex(s,a,s')$, $\underline{r}:=\min_{s\in\clusterindex,a,s'\in\clusterindex}R_\clusterindex(s,a,s')$, and $\bar{\gamma}=\max_{s\in\clusterindex,a,s'\in\clusterindex}\Gamma_\clusterindex(s,a,s')$. Let $\text{diam}(\clusterindex)$ denote the longest graph geodesic between any two states in \clustertext\ $\clusterindex$.
Then for each bottleneck $\bindex\in\B\cap\clusterindex$, and any choice of $r\in R\mathrm{int}_\clusterindex$, where
$$R\mathrm{int}_\clusterindex=\frac{1-\bar{\gamma}^{\text{diam}(\clusterindex)}}{1-\bar{\gamma}}[\min\{0, \underline{r}\}, \max\{0,\bar{r}\}],$$
we consider the following $\text{MDP}_{\clusterindex,\bindex,r}:=(S_{\clusterindex},A, P_{\clusterindex,\bindex},R_{\clusterindex,\bindex,r},\Gamma_{\clusterindex})$:
\begin{itemize}\itemsep 0pt
\item The statespace $S_{\clusterindex}$ is \clusterindex;
%\item the action space $A_{\clusterindex}$ is the action space $A$ of the original MDP truncated to \clusterindex;
\item The transition probability law $P_{\clusterindex,\bindex}$ is the transition law of the original MDP restricted to \clusterindex, but modified so that $\bindex$ is an absorbing state\footnote{If the \clustertext already contains absorbing (terminal) states, then those states remain absorbing (in addition to $\bindex$).} for $P_{\clusterindex,\bindex}^\pi$, regardless of the policy $\pi$;
\item The rewards $R_{\clusterindex,\bindex,r}$, for fixed $\bindex$, are the rewards $R$ of the original MDP truncated to \clusterindex, with the modification $R_{\clusterindex,\bindex,r}(s,a,\bindex)=R(s,a,\bindex)+\Gamma(s,a,\bindex)r$, for all $s\in\clusterindex$ and $a\in A$;
\item The discount factors $\Gamma_{\clusterindex}$ are the discount factors of the original MDP truncated to \clusterindex.
\end{itemize}
The optimal (or approximate) policy $\pi^*_{\clusterindex,\bindex,r}$ of each $\text{MDP}_{\clusterindex,\bindex,r}$ is computed. As $r$ ranges in a continuous interval, we expect to find only a small number\footnote{which may be found by bisection search of $R_\clusterindex$} of {\em{distinct}} optimal policies $\{\pi^*_{\clusterindex,\bindex,r}\}_{r\in R\mathrm{disc}_{\clusterindex,\bindex}}$, for each fixed $\bindex$, where $R\text{disc}_{\clusterindex,\bindex}$ is the set of corresponding rewards placed at $\bindex$ giving rise to the distinct policies. 
 Therefore the cardinality of this set of policies is $\cO(\sum_{\bindex\in\boundary\clusterindex}|R\mathrm{disc}_{\clusterindex,\bindex}|)$. The set of policies $\picluster_\clusterindex:=\cup_{\bindex\in\boundary\clusterindex}\{\pi^*_{\clusterindex,\bindex,r}\}_{r\in R\mathrm{disc}_{\clusterindex,\bindex}}$ is our candidate for the set of actions available at the coarser scale when the agent is at a bottleneck adjacent to \clustertext\ $\clusterindex$, and for the set of actions which was assumed and used in Sections \ref{sec:clustering} and \ref{sec:compression}.

Finally, we note that Algorithm~\ref{alg:room_comp_pols} is trivially parallel, across \clusterstext and across bottlenecks within \clusterstext. In addition, solving for each policy is comparatively inexpensive because it involves a single  \clustertext.

\subsubsection{Solving Localized Sub-Problems}
\label{sec:solve-local}
Given a solution (possibly approximate) to a coarse MDP in the form of a value function $V_{\mathrm{coarse}}$, one can efficiently compute a solution to the fine-scale MDP by fixing the values at the fine scale's bottlenecks to those given by the coarse MDP value function. The problem is partitioned into \clusterstext where we can solve locally for a value function or policy within each \clustertext independently of the other \clusterstext, using a variety of MDP solvers. Values inside a \clustertext are conditionally independent of values outside the \clustertext given the \clustertext's bottleneck values.

As an illustrative example we show how policy iteration may be applied to learn the policies for each  \clustertext. 
Let $\pi_\clusterindex$ be an initial policy at the fine scale defined on at least $\Ui$. Determination of the values on $\Ui$ given values on $\dcluster $ amounts to solving a {\em boundary value problem}: 
a continuous-domain physical analogy to this situation is that of solving a Poisson boundary value problem with Neumann boundary conditions. The connection with Poisson problems is that if $P_\clusterindex^{\pi_\clusterindex}$ is the transition matrix of the Markov chain $(X_n)_{n\geq 0}$ following $\pi_\clusterindex$ in \clustertext\ $\clusterindex$, then we would like to compute the function 
\[
V(s):=\bbE\bigl[R_0^{T_0} +  \Delta_0^{T_0} V_{\mathrm{coarse}}(X_{T_0}) ~|~ X_0=s\bigr], \qquad s\in\Ui,
\]
where $T_0:=\inf\{n\geq 0 ~|~ X_n\in\dcluster \}$ is the first passage time of the boundary of \clustertext \clusterindex, and
$R_0^{T_0}, \Delta_0^{T_0}$ are respectively defined in Sections~\ref{sec:coarse_rewards} and~\ref{sec:coarse_discounts}. It can be shown that $V(s)$ is unique and bounded 
%and $P_\clusterindex^{\pi_\clusterindex}$-harmonic, 
under our usual assumption that the boundary $\boundary\clusterindex$ be $\pi_\clusterindex$-reachable from any interior state $s\in\clusterindex$. The value function we seek is computed by solving a linear system similar to Equation~\eqref{eqn:vpi-linsys}. We have,
\begin{equation*}
V(s) = 
\begin{cases}
V_{\mathrm{coarse}}(s) & \text{if } s\in\dcluster \\
\sum_{s'\in \clusterindex, a'\in A}P_\clusterindex(s,a,s')\pi_{\clusterindex}(s,a)\bigl[R(s,a,s') + \Gamma(s,a,s')V(s')\bigr] & 
  \text{if } s\in\Ui
\end{cases}
\end{equation*}
where $P_\clusterindex$ is the restriction of $P$ to $\clusterindex$ defined by Equation~\eqref{eqn:P_cluster}. 
It is instructive to consider a matrix-vector formulation of this system. Let $R_\clusterindex, \Gamma_\clusterindex$ denote the respective truncation of $R(s,a,s'), \Gamma(s,a,s')$ to the triples $\{(s,a,s') ~|~ s\in\Ui, s'\in \clusterindex, a\in A\}$. Defining the quantities $(P_\clusterindex\circ R_\clusterindex)^{\pi_\clusterindex}, (\Gamma_\clusterindex\circ P_\clusterindex)^{\pi_\clusterindex}$ using~\eqref{eqn:hadamard-pi}, assume the partitioning 
\[
(\Gamma_\clusterindex\circ P_\clusterindex)^{\pi_\clusterindex} =
\begin{pmatrix}
B & C\\
D & Q
\end{pmatrix},
\quad
(P_\clusterindex\circ R_\clusterindex)^{\pi_\clusterindex} =
\begin{pmatrix}
B_r & C_r\\
D_r & Q_r
\end{pmatrix},
\quad
V = 
\begin{pmatrix}
V_b\\ V_q
\end{pmatrix}
\]
where interactions among bottlenecks attached to \clustertext\ $\clusterindex$ are captured by $B$ labeled blocks and interactions among non-bottleneck interior states by $Q$ labeled blocks. Fix $V_b=V_{\mathrm{coarse}}$, and let $V_q$ denote the (unknown) values of the \clustertext's interior states. The  value function on interior states $V_q$ is given by
\[
V_q = \bigl[D_r ~~ Q_r]\bbone + [D ~~ Q]
\begin{pmatrix}
V_b \\
V_q
\end{pmatrix}
\]
so that we must solve the $|\Ui|\times|\Ui|$ linear system
\begin{equation}\label{eqn:local-system}
(I - Q)V_q = [D_r ~~ Q_r]\bbone + DV_b\,.
\end{equation}

Given a value function for a \clustertext, the policy update step is unchanged from vanilla policy iteration except that we do not solve for the policy at bottleneck states: only the policy restricted to interior states is needed to update the $Q$ and $D$ blocks of
the matrices above, towards carrying out another iteration of value determination inside the \clustertext (the blocks $C,D$ are not needed). This shows in yet another way that policy iteration within a given \clustertext is completely independent of other \clusterstext. 
%Different \clusterstext may solved to differing precision, iterating in parallel or in arbitrary order. 
When policy iteration has converged to the desired tolerance within each \clustertext independently, or if the desired number of iterations has been reached, the individual \clustertext-specific value functions may be simply concatenated together along with the given values at the bottlenecks to obtain a globally defined value function.

As mentioned above, solving for a value function on a \clustertext's interior does not require the initial policy to be defined on bottlenecks\footnote{We will discuss why this situation could arise in the context of transfer learning (Section~\ref{sec:transfer}).}, however a policy on bottleneck states can be quickly determined from the global value function. This step is computationally inexpensive when bottlenecks are few in number and have comparatively low out-degree. A policy defined on \clustertext interiors is obtained either from the global value function, or automatically during the solution process if, for example, a policy-iteration variant is used.

%For very large problems, a small savings may be realized by computing $(I-\gamma Q)^{-1}$ via %eigendecomposition of $Q$, and reusing the decomposition to solve Equation~\eqref{eqn:trprob_submtx} %during a subsequent re-compression pass (if desired). For this savings to be meaningful the problem %should be large enough (in terms of interior states and/or \clusterstext) that the difference in constants %appearing in the complexity of finding an eigendecomposition versus solving a linear system becomes %negligible. Note that in addition, it is the penultimate policy (not the final one) that will %effectively be used to recompress; the solution within each \clustertext should also ideally be close to %convergence so that the loss in accuracy incurred by using the next to last policy is not substantial.

\subsubsection{Bottleneck Updates}
Given any value function $V$ on \clustertext interiors and any globally defined policy $\pi$, values at bottleneck states may be updated using similar asynchronous iterative algorithms: we hold the value function fixed on all \clustertext interiors, and update the bottlenecks. Combined with interior updating, this step comprises the second half of the alternating solution approach outlined in Algorithm~\ref{alg:hierarchy-solve}.

Local averaging of the values in the vicinity of a bottleneck is a particularly simple update,
\[
V(s) \gets 
\sum_{s',a}P(s,a,s')\pi(s,a)\bigl(R(s,a,s') + \Gamma(s,a,s') V(s')\bigr), \quad s\in\B.
\]
Value iteration and modified value iteration variants may be defined analogously. Value determination at the bottlenecks may be characterized as follows. Consider the (global) quantities $(P\circ R)^{\pi}, (\Gamma\circ P)^{\pi}$ (these do not need to be computed in their entirety), and the partitioning 
\[
(\Gamma\circ P)^{\pi} =
\begin{pmatrix}
B & C\\
D & Q
\end{pmatrix},
\quad
(P\circ R)^{\pi} =
\begin{pmatrix}
B_r & C_r\\
D_r & Q_r
\end{pmatrix},
\quad
V = 
\begin{pmatrix}
V_b\\ V_q
\end{pmatrix}
\]
where, as before, interactions among bottlenecks are captured by $B$ labeled blocks and interactions among non-bottleneck (interior) states by $Q$ labeled blocks. Let $V_q$ be held fixed to the known interior values, and let $V_b$ denote the unknown values to be assigned to bottlenecks. Then $V_b$ is obtained by solving the $|\B|\times|\B|$ linear system
\begin{equation}\label{eqn:local-bn-system}
(I - B)V_b = [B_r ~~ C_r]\bbone + CV_q\,.
\end{equation}
In the ideal situation, by virtue of the spectral clustering Algorithm~\ref{alg:spectral_clustering}, the blocks $C$ and $C_r$ are likely to be sparse (bottlenecks should have low out-degree) so the matrix-vector products $CV_q$ and $C_r\bbone$ are inexpensive. Furthermore, even though $|\B|$ is already small, by similar arguments bottlenecks should not ordinarily have many direct connections to other bottlenecks, and $B$ is likely to be block diagonal.
Thus, solving~\eqref{eqn:local-bn-system} is likely to be essentially negligible.

\subsubsection{Proof of Convergence}
\label{sec:ms-alg-proof}
Algorithm~\ref{alg:hierarchy-solve} is an instance of modified asynchronous policy iteration  (see~\citep{BertsekasVol2}  for an overview), and can be shown to recover an optimal fine scale policy from any  initial starting point.
\begin{theorem}
Fix any initial fine-scale policy $\pi_0$, and any collection of compression policies 
$\{\picluster_{\clusterindex}\}_{\clusterindex\in{\sf C}}$ such that for each ${\clusterindex\in{\sf C}}$, $\dcluster$ is $\pi_{\clusterindex}$-reachable for all $\pi_{\clusterindex}\in\picluster_{\clusterindex}$. Let $V^k$ denote the global fine scale value function after $k>0$ passes of Steps~(\ref{list:solve-clusters})-(\ref{list:alg-last}) in  Algorithm~\ref{alg:hierarchy-solve}.
For an appropriate number of updates $N$ per bottleneck per algorithm iteration satisfying
\begin{equation}\label{eqn:num-bn-updates}
%N(s) > \log_{\bar{\gamma}(s)}\tfrac{1}{2}, \quad s\in\B
N > \log_{\bar{\gamma}}\tfrac{1}{2}
\end{equation}
with $\bar{\gamma}:=\max_{s,a,s'}\bigl\{\Gamma(s,a,s') \indic_{[P(s,a,s')>0]}\bigr\}$,
the sequence $V^k$ generated by the alternating interior-boundary policy iteration Algorithm~\ref{alg:hierarchy-solve} satisfies
\[
\adjustlimits\lim_{k\to\infty} \max_{s\in S}|V^*(s)-V^k(s)| = 0
\]
where $V^*$ is the unique optimal value function.
\end{theorem}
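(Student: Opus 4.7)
The plan is to recognize Algorithm~\ref{alg:hierarchy-solve} as a block-decomposed instance of modified asynchronous policy iteration (MAPI) in the sense of \citep[Ch.~2]{BertsekasVol2}, and to invoke the standard MAPI convergence theorem once the hypotheses have been verified. Let $T_\pi V(s)=\sum_{s',a}P(s,a,s')\pi(s,a)[R(s,a,s')+\Gamma(s,a,s')V(s')]$ and $TV(s)=\max_a \sum_{s'}P(s,a,s')[R(s,a,s')+\Gamma(s,a,s')V(s')]$; both are sup-norm contractions with modulus at most $\bar{\gamma}<1$ under the stated assumption $\Gamma(s,a,s')\in(0,1)$. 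Under this correspondence Step~(\ref{list:solve-clusters}) is an \emph{exact} solve of $V=T_{\pi}V$ restricted to the interior $\accentset{\circ}{\clusterindex}$ with boundary values frozen at $V_{\mathrm{coarse}}$ on $\partial\clusterindex$; Steps~(\ref{list:interior-greedy-policy})--(\ref{list:blending}) implement a (blended) greedy policy improvement on interiors; Steps~(\ref{list:alg-bn-pol})--(\ref{list:alg-bn-val}) combine greedy improvement on bottlenecks with $N$ applications of $T_\pi$ at each bottleneck. The $\pi$-reachability assumption on $\partial\clusterindex$ guarantees that the spectral radius of the interior block $Q$ in Equation~\eqref{eqn:local-system} is strictly less than one for every admissible policy, so the local value determination is well-posed and has a unique bounded solution.

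I would then verify the two properties that drive the MAPI convergence theorem: (i) monotone improvement of the Bellman residual in sup-norm, and (ii) sufficient per-step contraction at the bottlenecks. Property (i) follows from the well-known inequality $T_{\pi_{\mathrm{new}}}V\le TV$ for any policy $\pi_{\mathrm{new}}$ together with the fact that the interior solve is a fixed point of $T_{\pi}$ restricted to $\accentset{\circ}{\clusterindex}$, so one interior sweep plus greedy improvement realizes one full policy-iteration step on the interior partition. Property (ii) uses exactly the quantitative hypothesis: each local averaging pass at a bottleneck contracts the bottleneck-only error by a factor $\bar{\gamma}$, so $N$ passes contract it by $\bar{\gamma}^N<\tfrac{1}{2}$, which is the threshold under which the standard MAPI fixed-point argument (\citep[Prop.~2.5]{BertsekasVol2}) closes, since the alternating interior/boundary Gauss--Seidel sweeps then form a global contraction in a suitable weighted sup-norm on $S$.

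The main obstacle is handling Step~(\ref{list:blending}), the convex blending $\pi_{\mathrm{new}}=\lambda\pi+(1-\lambda)\pi_{\mathrm{old}}$, since classical MAPI proofs assume a strictly greedy improvement. I would dispose of this in two steps. First, by linearity of $T_\pi$ in $\pi$, the blended policy still satisfies $T_{\pi_{\mathrm{new}}}V^k \le TV^k$, which is all that is required for monotone non-deterioration of the value sequence. Second, note that Step~(\ref{list:blending}) is an exponential moving average with rate $\lambda\in(0,1]$; if the greedy action stabilizes on a set of states (which must occur eventually once $V^k$ enters a neighborhood of $V^*$ on which $T$ has a unique maximizer), then $\pi_{\mathrm{old}}$ converges geometrically at rate $1-\lambda$ to the greedy policy on that set. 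Combined with the sup-norm contraction from (ii), an induction on algorithm iterations $k$ shows that the distance $\|V^k-V^*\|_\infty$ is bounded by a contraction factor strictly less than one, yielding $V^k\to V^*$ regardless of the initial $\pi_0$. The optimality and uniqueness of $V^*$ then follow from the Banach fixed-point theorem applied to $T$.
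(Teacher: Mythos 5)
Your proposal founders on exactly the point that makes this theorem nontrivial. The standard MAPI convergence theorem you want to invoke delivers monotone sup-norm convergence only under the initial-condition hypothesis $T_{\pi_0}V^0 \geq V^0$, and that hypothesis fails here: $V^0$ is assembled from the coarse MDP solution on the bottlenecks (or from transferred data) and is zero elsewhere, and the usual remedy of shifting $V^0$ down by a constant of order $(1-\bar{\gamma})^{-1}$ is ruled out because it would destroy consistency across the decoupled sub-problems and slow convergence drastically. Consequently your property (i), ``monotone improvement of the Bellman residual in sup-norm,'' is not available: the iterates of Algorithm~\ref{alg:hierarchy-solve} need not be monotone in any norm, and the inequality $T_{\pi_{\mathrm{new}}}V \leq TV$ only bounds the update from above --- it does not give non-deterioration of the value sequence from an arbitrary start. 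The argument that actually closes is the Williams--Baird convergence theorem for asynchronous modified policy iteration from \emph{arbitrary} initial conditions: one sets $L^* := \max_{s}\bigl(\limsup_{i}V^i(s) - \liminf_{i}V^i(s)\bigr)$, shows that $N$ applications of $T_{\pi}$ at a state contract the oscillation there by $\bar{\gamma}^N$ (using $\bbE_{s}[\Delta_0^N]\leq\bar{\gamma}^N$, which is where the state/action-dependent discounts enter), and then observes that an intervening greedy policy improvement can at most \emph{double} that oscillation. The resulting self-bound $L^* < 4\varepsilon\bar{\gamma}^N + 2\bar{\gamma}^N L^*$ forces $L^*=0$ precisely when $2\bar{\gamma}^N<1$. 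That doubling step is the true origin of the threshold $N>\log_{\bar{\gamma}}\tfrac{1}{2}$; your reading of it as the condition for a ``global contraction in a weighted sup-norm'' misattributes its role and would not survive an attempt to write the weighted norm down.

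A secondary weakness: your treatment of the blending step (\ref{list:blending}) is circular, since the claim that the greedy action ``must stabilize once $V^k$ enters a neighborhood of $V^*$'' presupposes the convergence you are trying to prove. In the oscillation-based argument this is not needed --- the bound on how much a policy change can widen the interval holds for any policy produced by the improvement step, blended or not --- which is another reason that route is the right one here.
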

\begin{proof}
We first note that the value function updates in Algorithm~\ref{alg:hierarchy-solve}, on both interior and boundary states at the fine scale may be written as one or more applications of (locally defined) averaging operators $T_{\pi}$ of the form
\begin{equation}\label{eqn:Tpi_op}
(T_{\pi}V)(s) = \sum_{s',a}\pi(s,a)P(s,a,s')\bigl(R(s,a,s') + \Gamma(s,a,s')V(s')\bigr) \,.
\end{equation}
Value determination is equivalent to an ``infinite'' number of applications. The main challenge is that we require convergence to optimality from any initial condition $(V^0,\pi_0)$. Under the current assumptions on the policy, modified asynchronous policy iteration is known to converge (monotonically) in the $L^{\infty}$ norm to a unique optimal $V^*$, with corresponding optimal $\pi^*$, provided the initial pair $(V^0,\pi_0)$ satisfies $T_{\pi}V^0 \geq V^0$~\citep{BertsekasVol2}, where $T_{\pi}$ is the DP mapping defined by~\eqref{eqn:Tpi_op}. This initial condition is {\em not} in general satisfied here, since $(V^0,\pi_0)$ may be set on the basis of transferred information and/or coarse scale solutions. In Algorithm~\ref{alg:hierarchy-solve} for instance, the initial value function $V^0$ is the initial coarse MDP solution on the bottlenecks, and zero everywhere else. Furthermore, a common fix that shifts $V^0$ by a large negative constant (depending on $(1-\gamma)^{-1})$ does not apply because it could destroy consistency across sub-problems, and moreover can make convergence extraordinarily slow.  

Alternatively, Williams \& Baird show that modified asynchronous policy iteration will converge to optimality from any initial condition, provided enough value updates $T_{\pi}$ are applied~\citep{WilliamsBaird:90,WilliamsBaird:93}. The condition~\eqref{eqn:num-bn-updates} adapts the precise requirement found in~\citep[Thm. 8]{WilliamsBaird:90} to the present setting, where discount factors are state and action dependent. The proof follows~\citep[Thm. 4.2.10]{WilliamsBaird:93} closely, so we only highlight points where differences arise due to this state/action dependence, and due to the use of multi-step operators, $T_{\pi}^n$. We direct the reader to the references above for further details.

First note that if $(s_n)_n$ is the Markov chain with transition law $P^{\pi}$, 
\[
(T_{\pi}^nV)(s) = \bbE\bigl[R_0^{n} + \Delta_0^{n}V(s_n) ~|~ s_0=s\bigr]
\]
where $R_0^{n}, \Delta_0^{n}$ are as defined in Sections~\ref{sec:coarse_rewards} and~\ref{sec:coarse_discounts} above. One can see this by defining a recursion $V_n=T_{\pi}V_{n-1}$ with $V_0=V$, and repeated substitution  of~\eqref{eqn:Tpi_op}. Fix $\varepsilon>0$ and choose $N$ large enough so that 
\[
\liminf_{i\to\infty} V^i(s) - \varepsilon < V^k(s) < \limsup_{i\to\infty}V^i(s) + \varepsilon
\]
for all $k\geq N$ and all $s$. Let $L^*:=\max_{s}\big\{\limsup_{i\to\infty}V^i(s)- \liminf_{i\to\infty} V^i(s)\bigr\}$, and let $s^*$ be any state at which the maximum $L^*$ is achieved. It is enough to show that $L^*=0$ (convergence of the sequence $V^k$) to ensure convergence to optimality~\citep[Thm. 4.2.1]{WilliamsBaird:93}, however we note that this convergence need not be monotonic in any norm.
The action of $T_{\pi}^n$ after $N$ iterations can be bounded as follows:
\begin{align*}
\delta &= \sup_{k\geq N} (T^n_{\pi}V^k)(s^*) - \inf_{k\geq N} (T^n_{\pi}V^k)(s^*)\\
 &<\bbE_{s^*}\Bigl[\Delta_0^{n}(\limsup_{i\to\infty}V^i(s_n) + \varepsilon)\Bigr] - 
   \bbE_{s^*}\Bigl[\Delta_0^{n}(\liminf_{i\to\infty}V^i(s_n) - \varepsilon)\Bigr]\\
 &\leq 2\varepsilon\bbE_{s^*}[\Delta_0^{n}] + L^*\bbE_{s^*}[\Delta_0^{n}]\\
 &\leq 2\varepsilon\bar{\gamma}^n + \bar{\gamma}^n L^* .
\end{align*}
Following the reasoning in~\citep[Thm. 4.2.10, pg. 27]{WilliamsBaird:93}, subsequent policy improvement at state $x^*$ can at most double the length of the interval $\delta$. Hence, $L^*\leq 2\delta < 2\varepsilon\bar{\gamma}^n + \bar{\gamma}^n L^*$, so that
\[
L^* < \frac{4\varepsilon\bar{\gamma}^n}{1-2\bar{\gamma}^n }
\]
for any $\varepsilon > 0 $, giving that $L^*=0$ as long as
\[
\bar{\gamma}^n < 1/2.
\]
For the interior states, the condition $\bar{\gamma}^n < 1/2$ is clearly satisfied, since we perform value determination at those states in Algorithm~\ref{alg:hierarchy-solve}.
\end{proof}

% NOTE: We probably don't need to mention the compression policy condiiton in the statement of this theorem. It's just used for the initial compression, and is assumed anyways by our construction. Putting it in the Thm kind of makes it look like the convergence  of the later algorithm steps depends on the compression policies, which it doesn't really.

\subsection{Complexity Analysis}
\label{sec:ms-alg-complexity}
We discuss the running time complexity of each of the three steps discussed in the introduction of this section: partitioning, compression, and solving an MMDP. For the latter two steps, the computational burden is always limited to at most a single cluster at a time. In all cases, we consider worst case analyses in that we do not assume sparsity, preconditioning or parallelization, although there are ample, low-hanging opportunities to leverage all three. \\

% note parallelism, and fact that problems are sparse, since P is usuallyt sparse.
\noindent\textbf{Partitioning:}
The complexity of the statespace partitioning and bottleneck identification step depends in general on the algorithm used. The local clustering algorithm of Spielman and Teng~\citep{Spielman:LocalClustering} or Peres and Andersen~\citep{Andersen:2009:ESP} finds an approximate cut in time ``nearly'' linear in the number of edges. The complexity of Algorithm~\ref{alg:spectral_clustering} above is dominated by the cost of finding the stationary distribution of $P_{\text{tel}}$, and of finding a small number of eigenvectors of the directed graph Laplacian $\cL$. The first iteration is the most expensive, since the computations involve the full statespace. However, the invariant distribution and eigenvectors may be computed inexpensively. $P$ is typically sparse, so $P_{\text{tel}}$ is the sum of a sparse matrix and a rank-1 matrix, and obtaining an exact solution can be expected to cost far less than that of solving a dense linear system. Approximate algorithms are often used in practice, however. For example, the algorithm of~\citep{ChungPR:LNCS:10} computes a stationary distribution within an error bound of $\epsilon$ in time $\cO\bigl(|E|\log(1/\epsilon)\bigr)$ if there are $|E|$ edges in the statespace graph given by $P$.  The eigenvectors of $\cL$ may also be computed efficiently using randomized approximate algorithms. The approach described in~\citep{Halko:2011} computes $k$ eigenvectors (to high precision with high probability) in $\cO\bigl(|S|^2\log k\bigr)$ time, assuming no optimizations for sparse input. Finding eigenvectors for the subsequent sub-graphs may be accelerated substantially by preconditioning using the eigenvectors found at the previous clustering iteration.\\

\noindent\textbf{Compression:} As discussed above, compression of an MDP involves computations which only consider one \clustertext at a time. This makes the compression step local, and restricts time and space requirements.  But assessing the complexity is complicated by the fact that non-negative solutions are needed when finding coarse transition probabilities and discounts. Various iterative algorithms for solving non-negative least-squares (NNLS) problems exist~\citep{BjorckLSBook,ChenPlemmons:10}, however guarantees cannot generally be given as to how many iterations are required to reach convergence. The recent quasi-Newton algorithm of Kim et al.~\citep{KimNNLS:10} appears to be competitive with or outperform many previously proposed methods, including the classic Lawson-Hanson method~\citep{LawsonHansonBook:74} embedded in MATLAB's \texttt{lsqnonneg} routine. We point out, however, that it is often the case in practice that the unique solution to the {\em unconstrained} linear systems appearing in Propositions~\ref{prop:coarsePsas} and~\ref{prop:coarse_discount_factors} are indeed also minimal, non-negative solutions. In this case, the complexity is
$\cO\bigl(|\dcluster||\icluster|^3 + |\icluster||\dcluster|^2\bigr)$ per \clustertext, for finding the coarse transition probabilities and coarse discounts corresponding to a single fine scale policy.

Solving for the coarse rewards always involves solving a linear system (without constraints), since the rewards are not necessarily constrained to be non-negative. This step also involves 
 $\cO\bigl(|\dcluster||\icluster|^3 + |\icluster||\dcluster|^2\bigr)$ time per \clustertext, per fine policy. 
 
We note briefly, that these complexities follow from naive implementations; many improvements are possible. First of all in many cases the graphs involved are sparse, and iterative methods for the solution of linear systems, for example, would take advantage of sparsity and dramatically reduce the computational costs. For example, solving for the transition probabilities involves solving for multiple ($|\dcluster|$) right-hand sides, and the left-hand side of the linear systems determining compressed rewards and discounts are the same. The complexities above also do not reflect savings due to sparsity. In addition, the calculation of the compressed quantities above are embarrassingly parallel both at the level of \clusterstext as well as bottlenecks attached to each \clustertext (elements of $\dcluster$). The case of compression with respect to multiple fine policies is also trivially parallelized.  \\
%Coarse Rewards:  $\cO\bigl(|\dcluster||\icluster|^3 + |\icluster||\dcluster|^2\bigr)$ per cluster. 
% multiple RHS NNLS, sparse NNLS, parallel NNLS papers.

\noindent\textbf{MMDP Solution:}
As above, the complexity of solving an MMDP will depend on the algorithm selected to solve coarse MDPs and local sub-problems. Here we will consider  solving with the exact, (dynamic programming based) policy iteration algorithm, Algorithm~\ref{alg:hierarchy-solve}. In the worse case, policy iteration can take $|S|$ iterations to solve an MDP with statespace $S$, but this is pathological. We will assume that the number of iterations required to solve a given MDP is much less than the size of the problem. This is not entirely unreasonable, since we can assume policies are regularized with a small amount of the diffusion policy, and moreover, if there is significant complexity at the coarse scale, then further compression steps should be applied until arriving at a simple coarse problem where it is unlikely that the worse-case number of iterations would be required. Similarly, solving for the optimal policy within \clusterstext should take few iterations compared to the size of the \clustertext because, by construction of the bottleneck detection algorithm, the Markov chain is likely to be fast mixing within a \clustertext.

Assume the MDP has already been compressed, and consider a fine/coarse pair of successive scales. Given a coarse scale solution, the cost of solving the fine local boundary value problems  (Step~\ref{list:solve-clusters}) is $\sum_{\clusterindex\in\clusters} \cO\bigl(|\icluster|^3\bigr)$ (ignoring sparsity). Updating the policy everywhere on $\Si$ (Step~\ref{list:interior-greedy-policy}) involves solving $|\Si|$ maximization problems, but these problems are also local because the \clustertext interiors partition $\Si$ by construction. The cost of updating the policy on $\Si$ is therefore the sum of the costs of locally updating the policy within each \clustertext's interior. The cost for each \clustertext $\clusterindex\in\clusters$ is $\cO\bigl(|A||\icluster||\clusterindex| +|A||\icluster|\bigr)$ time to compute the right-hand side of Equation~\eqref{eqn:alg-max-int} and search for the maximizing action. The cost of updating the policy and value function at bottleneck states is assumed to be negligible, since ordinarily $|\B|\ll|S|$. The cost of each iteration of Algorithm~\ref{alg:hierarchy-solve} is therefore dominated by the cost of solving the collection of boundary value problems.

The cost of solving an MMDP with more than two scales depends on just how ``multiscale'' the problem is. The number of possible scales, the size and number of clusters at each scale, and the number of bottlenecks at each scale, collectively determine the computational complexity. These are all strongly problem-dependent quantities so to gain some understanding as to how these factors impact cost, we proceed by considering a reasonable scenario in which the problem exhibits good multiscale structure. 
For ease of the notation, let $n$ be the size of the original statespace. If at a scale $j$ (with $j=0$  the finest scale) there are $n_j$ states and $r_j$ \clusterstext of roughly equal size, an iteration of Algorthm~\ref{alg:hierarchy-solve} at that scale has cost $\cO\bigl(r_j(n_j/r_j)^3\bigr)$. If the sizes of the \clusterstext are roughly constant across scales, then we can say that $r_j=n_j/C$ for all $j$ and some size $C$.  If, in addition, the number of bottlenecks at each scale is about the number of clusters, then $n_j=n/C^j$, and the computation time across $\log n$ scales is $\cO\bigl(n\log n\bigr)$ per iteration. By contrast, global DP methods typically require $O(n^3)$ time per iteration. The assumption that there are $\log n$ scales corresponds to the assumption that we compress to the maximum number of possible levels, and each level has multiscale structure. If we adopt the assumption above that the number of iterations required to reach convergence at each scale is small relative to $n_j$, then the cost of solving the problem is $\cO(n\log n)$.

\section{Transfer Learning}
\label{sec:transfer}
Transfer learning possibilities within reinforcement learning domains have only relatively recently begun to receive significant attention, and remains a long-standing challenge with the potential for substantial impact in learning more broadly. We define transfer here as the process of transferring some aspect of a solution to one problem into another problem, such that the second problem may be solved faster or better (a better policy) than would otherwise be the case. Faster may refer to either less exploration (samples) or fewer computations, or both.

Depending on the degree and type of relatedness among a pair of problems, transfer may entail small or large improvements, and may take on several different forms. It is therefore important to be able to {\em systematically}:
\begin{enumerate}\itemsep 0pt
\item Identify transfer opportunities;
\item Encode/represent the transferrable information;
\item Incorporate transferred knowledge into new problems.
\end{enumerate}

We will argue that {\em a novel form of systematic knowledge transfer between sufficiently related MDPs is made possible by the multiscale framework discussed above}. In particular, if a learning problem can be decomposed into a hierarchy of distinct parts then there is hope that both a ``meta policy'' governing transitions between the parts, as well as the parts themselves, may be transferred when appropriate. In the former setting, one can imagine transferring among problems in which a sequence of tasks must be performed, but the particular tasks or their order may differ from problem to problem. The transfer of distinct sub-problems might for instance involve a database of pre-solved tasks. A new problem is solved by decomposing it into parts, identifying which parts are already in the database, and then stitching the pre-solved components together into a global policy. Any remaining unsolved parts may be solved for independently, and learning a  meta policy on sub-tasks is comparatively inexpensive.

A key conceptual distinction is {\em the transfer of policies rather than value functions}. Value functions reflect, in a sense, the global reward structure and transition dynamics specific to a given problem. These quantities may not easily translate or bear comparison from one task to another, while a policy may still be very much applicable and is easier to consider locally. Once a policy is transferred (Section~\ref{sec:policy-xfer}) we may, however, assess goodness of the transfer (Section~\ref{sec:xfer-detect}) by way of value functions computed with respect to the destination problem's transition probabilities and rewards. As transfer can occur at coarse scales or within a single partition element at the finest scale, conversion between policies and value functions is inexpensive. If there are multiple policies in a database we would like to test in a \clustertext, it is possible to quickly compute value functions and decide which of the candidate policies should be transferred.

If the transition dynamics governing a pair of (sub)tasks are similar (in a sense to be made precise later), then one can also consider {\em transferring potential operators} (defined in Section~\ref{sec:mdp-defns}). In this case the potential operator from a source problem is applied to the reward function of a destination problem, but along a suitable pre-determined mapping between the respective statespaces and action sets. The potential operator approach also provides two advantages over value functions: reward structure independence and localization. The reward structure of the destination problem need not match that of the source problem. And a potential operator may be localized to a specific sub-problem at any scale, where locally the transition dynamics of the two problems are comparable, even if globally they aren't compatible or a comparison would be difficult. 

Both policy transfer and potential operator transfer provide a systematic means for identifying and transferring information where possible. At a high-level, the transfer framework we propose consists of the general steps given in Algorithm~\ref{alg:transfer-general}. Transfer between two hierarchies proceeds by matching sub-problems at various scales, testing whether transfer can actually be expected to help, transferring policies and/or potential operators where appropriate, and finally solving the  unsolved problem using the transferred information. Each of these steps is discussed in detail in the sections below. 

%, including the question of whether a value function or
%policy can be transferred. The transfer possibilities we will address are summarized in
%Table~\ref{table:transfer-possibilities}:
%
%\begin{table}[ht]
%\centering
%\begin{tabular}{c|c|c}
%Scale & Room & Transfer \\ \hline
%coarse & full & values \\
%fine & full & policy \\
%any & partial & policy
%\end{tabular}
%\caption{\small\em Transfer possibilities.}
%\label{table:transfer-possibilities}
%\end{table}
%
% If it is possible to transfer into an entire \clustertext at
%any coarse scale $j\geq 1$, the value function should be transferred. If an entire \clustertext can be transferred in
%at the finest scale $j=0$, then a policy should be transferred. If only part of a \clustertext may recieve transferred
%information, then only the policy should be transferred. Each of these situations will be described in detail in
%the sections immediately below.

\begin{algorithm}[t]
\caption{High-level transfer learning steps.}
\fbox{\begin{minipage}{0.975\textwidth}
 Given a pair of problems $(\problemone,\problemtwo)$ and a solved multiscale MDP hierarchy for the first problem:
\begin{enumerate}\itemsep 0pt
\item Compute a hierarchy of MDPs for the second, new problem as described in 
Section~\ref{sec:compression}.
\item For each scale $j\geq 0$, select the sub-problems (\clusterstext) $\bigl\{r_k^{(j)}\bigr\}_{k}$ in the destination hierarchy where transfer should be attempted.
%For example, \clusterstext where the sizes of the source and destination statespaces are comparable.
\item Match the selected sub-problems $\bigl\{r_k^{(j)}\bigr\}_{j,k}$ in $\problemtwo$ to sub-problems in $\problemone$. (Section~\ref{sec:xfer-cluster-correspond}) \label{item:cluster-match-xfer-alg}
\item (Optional) Match the statespace graphs of paired sub-problems using a suitable graph-matching algorithm. This can be done globally at a given scale, or locally for each matched pair  of \clusterstext from Step~(\ref{item:cluster-match-xfer-alg}). (Section~\ref{sec:graph-matching}) \label{item:corr-xfer-alg}
\item Proceeding bottom-up, from the finest scale upwards:
\begin{enumerate}\itemsep 0pt
\item If a pair of matched problems at the current scale is contained within a region of the previous, finer statespace where transfer has already occurred, remove the pair from the list of transfer possibilities.
\item For each remaining pair of matched sub-problems at the current scale, perform action correspondence and {\em tentatively} transfer the candidate sub-problem policies and/or potential operators from $\problemone$ to $\problemtwo$, along the statespace correspondences determined in Step~(\ref{item:corr-xfer-alg}). (Sections~\ref{sec:policy-xfer}, \ref{sec:potential-xfer})
\item Determine transferability of the policies and/or potential operators solving the first problem to the matched sub-problems in the second problem from Step~(\ref{item:cluster-match-xfer-alg}). (Section~\ref{sec:xfer-detect}) \label{item:detect-xfer-alg}
\item Retain only the transferred sub-problem policies/potential operators identified in Step~(\ref{item:detect-xfer-alg}) to be transferrable.
\end{enumerate}
\item Solve $\problemtwo$ with an algorithm such as Algorithm~\ref{alg:hierarchy-solve} (or  other variants discussed in Section~\ref{sec:mdp_solution}), starting from the transferred policies and potential operator derived values as initial policies and guesses for $V_{\textrm{coarse}}$ (respectively), within the appropriate \clusterstext and scales.
\end{enumerate}
\end{minipage}
}
\label{alg:transfer-general}
\end{algorithm}

% - IF coarsest scale: then there are no BNs (we didn't compress further). But this is usually a small statespace, 
%  and so we can try to do graph matching, and  xfer over the full policy -- to the whole statespace. from that, 
%  we solve for a value fn using the \problemtwo objects (r, P). again, this is cheap since it's the coarsest scale.

%
\subsection{Notation and Assumptions}
In the following, $\problemone,\problemtwo$ will denote two distinct MDP hierarchies with underlying statespaces $S_1,S_2$ and action sets $A_1,A_2$, respectively. The notation $P_i,R_i,\Gamma_i$  for $i\in\{1,2\}$ will in this section refer to the respective transition, reward and discount tensors for problems $\problem{i}, i\in\{1,2\}$. To simplify the notation we will not explicitly attach \clustertext indices to these quantities, and assume an appropriate truncation/restriction (see Section~\ref{sec:notation}) that will be clear from the context. The notation $\clusterindex\in \problem{i}$ indicates that a \clustertext\ \clusterindex is a \clustertext at some scale of the hierarchy $\problem{i}$. As before, $\icluster, \dcluster$ denote the interior and boundary of the  \clustertext\ $\clusterindex$, respectively. For all objects, the scale in question will either be arbitrary or clear from the context. Unless otherwise noted, $\pi^*$ refers to the optimal policy for $\problemone$ at the appropriate scale. Throughout this section, we will assume for simplicity that optimal source problem policies $\pi^*$ are deterministic maps from states to actions. This assumption is not important for the main ideas discussed here, and is natural since transferred information is assumed to pass from a pre-solved source problem to an unsolved or partially solved destination problem. In this case the policy for the solved source problem may be chosen deterministic\footnote{In any event, if the optimal policy  is non-deterministic, one can still consider taking $\pi^o(s)=\arg\max_a\pi^*(s,a)$ as the optimal policy for transfer purposes.}.

At the coarsest scale in a hierarchy, there is only one ``\clustertext'' and there are no local bottlenecks. To see the the coarsest scale as just a special case falling within a more general framework, we will treat  the coarsest scale as a single \clustertext consisting of only interior states; the boundary will be the empty set. As will be explained below, partial transfer of a policy to a subset of the states in a \clustertext\ is possible, but since the coarsest scale usually involves a small statespace, full statespace graph matching between $\problemone$ and $\problemtwo$ should be inexpensive and potentially less error-prone. If this is the case, we may transfer into the entire scale, although the transfer will be seen as a transfer into the interior of a single \clustertext in order to fit within a common transfer framework.

We set some further ground rules, since the range of possible transfer scenarios is large and diverse. We will restrict our attention to transferring policies and potential operators from a \clustertext\ $\clusterone\in \problemone$ at scale $j$ belonging to a solved source problem, to a \clustertext\ $\clustertwo\in \problemtwo$, also at scale $j$, belonging to an unsolved destination problem. We assume scales have been suitably matched, and do not address transfer between different scales. We will assume that if a matching $\eta:S_2'\subset S_2\to S_1'\subset S_1$ between the statespaces of the source and destination problems is given (see Section~\ref{sec:graph-matching} below), it is a bijection onto its image. We do not treat degenerate situations, such as $\eta(\clustertwo)\cap \clusterone = \emptyset$, and only consider transfer between subsets of \clusterone and \clustertwo for which there is a given correspondence.

Finally, when considering \clustertext-to-\clustertext policy transfers, we will focus our attention on transfer to the {\em interior} of a \clustertext only -- bottlenecks on a \clustertext's boundary will not receive a transferred policy. Unless prior knowledge regarding the matching between \clusterstext is available, we do not recommend transferring a policy to bottlenecks. Bottleneck states typically play a pivotal role in transitions across \clusterstext, and transfer errors at bottlenecks can slow down the solution process. Assessing transferability (Section~\ref{sec:xfer-detect}) at bottleneck states forming the boundary of a given \clustertext is also more involved because one has to decide how to keep the problem of determining transferability for one \clustertext separate from that of the other \clusterstext. 
%For these reasons we will consider transfer to interiors only.
Instead, solutions on the bottlenecks at a given scale should ordinarily be computed jointly as the solution to the next higher (compressed) MDP, or one can pursue transfer of an entire coarse scale (all bottlenecks simultaneously) if possible.

\subsection{Cluster Correspondence}
\label{sec:xfer-cluster-correspond}
A correspondence between \clusterstext at a given scale is established by pairing \clusterstext deemed to be closest
to each other in a suitable metric on graphs. A natural distance between graphs is given by the average pair-wise Euclidean distances between diffusion-map embeddings of the underlying states. Let $G,G'$ be two directed, weighted statespace graphs corresponding to a pair of \clusterstext of size $|S|,|S'|$, and let $\{\xi\}_k, \{\xi'\}_k$ denote the respective collections of diffusion map embeddings computed according to Section~\ref{sec:diffmaps}.
Then we define
\[
d(G,G') := \frac{1}{|S||S'|}\sum_{i,j}\|(\tau\circ\xi_i) - \xi'_j\|_2
\]
where $\tau$ is the sign alignment vector defined by Equation~\eqref{eqn:tau-signs}. Given a pair of problems $(\problemone, \problemtwo)$, a \clustertext\ \clusterone in $\problemone$ is matched to the \clustertext
$$
\clustertwo^* =\arg\min_{\clustertwo\in \problemtwo}d(G_{\clusterone},G_{\clustertwo})
$$ in $\problemtwo$, where $G_{\clusterindex}$ denotes the weighted statespace subgraph corresponding to $P^{\pi}_{\clusterindex}$ for some choice of $\pi$ (e.g. $\pi^u$). We will only compare \clusterstext occurring at similar scales.

\begin{figure}[t]
\centering
\begin{tikzpicture}[->,>=stealth', shorten >= 1pt,semithick, auto]
% \tikzstyle{every node}=[fill=white,draw,shape=circle];
%\definecolor{ltgray}{rgb}{0.9, 0.9,0.9}
%\draw[step=0.25cm,color=ltgray] (0,0) grid (4,4);
%\draw[step=2cm,color=gray,line width=0.5mm] (0,0) grid (4,4);
\tikzstyle{every state}=[fill=blue!20,draw,minimum width=0.8cm,minimum height=0.8cm]
\tikzstyle{MySmall}=[fill=black!10,draw=black!50,minimum width=0.6cm,minimum height=0.6cm]
\tikzstyle{MyDashed}=[dashed,draw=black!75]
%background boxes
\fill[color=black!20!red!5] (-0.7,0.3) rectangle (2.7, 4.5);
\fill[color=black!20!red!5] (4.3,0.3) rectangle (7.7, 4.5);
\node[state] (a) at (1,1)  {$s$};
\node[state] (ap) at (1,3)  {$s'$};
\node[state] (b) at (6,1)  {$w$};
\node[state] (bp) at (6,3)  {$w'$};
% peripheral nodes
\node[state] (ap_l) at (0,3) [MySmall]  {};
\node[state] (ap_r) at (2,3) [MySmall]  {};
\node[state] (bp_l) at (5,3) [MySmall]  {};
\node[state] (bp_r) at (7,3) [MySmall]  {};
\path (a) edge (ap)
      (b) edge (bp)
      (a) edge[draw=black!30] (ap_l)
      (a) edge[draw=black!30] (ap_r)
      (b) edge[draw=black!30] (bp_l)
      (b) edge[draw=black!30] (bp_r);
\path (ap) edge[MyDashed,->,bend left=45] node[below] {$\eta^{-1}$} (bp)
      (a) edge[MyDashed,<-] node {$\eta$} (b);
%\fill[white] (0,1.7) rectangle (0.9,2.3);
\path (-0.1,1.9) node {$\pi^*(s)$};
\path (7.1,1.9) node {$\pi_{\clustertwo}(w)$};
\path (0.2,4.17) node {$\mathbf{\problemone}$};
\path (6.85,4.17) node {$\mathbf{\problemtwo}$};
\end{tikzpicture}
\caption{{\small\em Policy transfer: action mapping. We set $\pi_{\clustertwo}(w)=\arg\max_a P_2(w,a,w')$
if $w$ is matched with $s$, and $w'$ is matched with $s'=\arg\max_{\ell} P_1(s,\pi^*(s),\ell)$.}}
\label{fig:policy-xfer}
\end{figure}
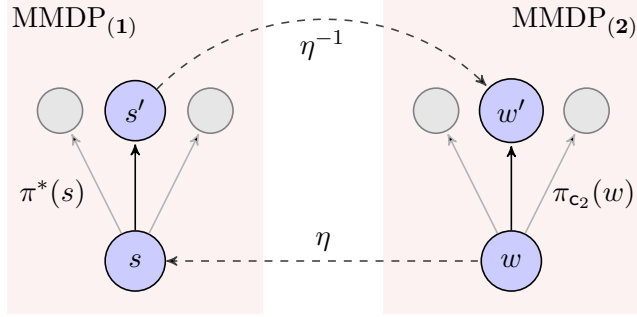

\subsection{Policy Transfer}
\label{sec:policy-xfer}
Given a pair of matched \clusterstext\ $\clusterone\in \problemone, \clustertwo\in \problemtwo$, we describe how the deterministic optimal policy $\pi^*$ from $\problemone$ can be transferred to some or all of $\vi$. Policy transfer may be carried out at any scale, and for subsets of a \clustertext's interior states. For example, one may also find that there are sub-tasks which are not exactly the same as solved tasks in a database, but do nevertheless bear a strong resemblance. In these cases one may pursue  partial policy transfer possibilities. Transferring via policies provides a convenient way to incorporate both full and partial knowledge:  a transferred piece of the policy can serve as an initial condition for computing a value function everywhere in the  \clustertext.

Assume we are given a bijective statespace mapping $\eta:S'_2\to S'_1$ such that 
$S'_1\subseteq S_1$, $S'_2\subseteq S_2$, $\vi\cap S'_2 \neq\emptyset$ and $\eta(\vi\cap S'_2)\cap \clusterone \neq\emptyset$.
That is, we require that $\eta$ matches at least part of $\vi$ with part of \clusterone.
Statespace graph matching will be discussed in Section~\ref{sec:graph-matching} immediately below; here we will assume $\eta$ is either given or simply taken to be the identity map. Next, let
\begin{equation}\label{eqn:W_intersect}
 \cW_{\eta} := \eta^{-1}\bigl(\eta(\vi\cap \dom\eta) \cap \clusterone\bigr)
\end{equation}
denote the subset of $\vi$ with a correspondence in \clusterone\footnote{Recall that, by construction, any state at any level of an MDP hierarchy is a state from the finest scale. Regardless of the scale at which the \clusterstext\ $\clusterone,\clustertwo$ occur, we may always consider subsets of $\clusterone,\clustertwo$ as subsets of the original underlying statespaces $S_1, S_2$ (resp.).}, and assume that $\clusterone,\clustertwo$ are both associated to scale $j$. 
An important aspect of policy transfer is the mapping of actions along $\pi^*$ in $\problemone$ to $\problemtwo$. 
Figure~\ref{fig:policy-xfer} illustrates action mapping for an arbitrary state $w\in \clustertwo$. 
If $\eta(w)=s\in S_1$, we can follow $\pi^*$ by
finding the state $\pi^*(s)$ is trying to transition to, 
$$
s'=\arg\max_{\ell\in S_1} P_1(s,\pi^*(s),\ell) .
$$
Then if $s'\in S_1$ corresponds back to $w'\in \clustertwo$ via $w'=\eta^{-1}(s')$, the $\problemtwo$ action most likely to induce a transition between $w$ and $w'$ is taken to be the transferred policy at $w$:
\begin{align*}
a^* &= \arg\max_{a\in A_2} P_2(w,a,w') \\
\pi_{\clustertwo}(w,\cdot) &= \delta_{a^*}, \qquad w\in\cW_{\eta} .
\end{align*}

Once each $w\in\cW_{\eta}$ has been assigned an action, the remaining missing policy entries in $\vi$
can be set to either the uniform distribution or to a previous policy guess. Abusing notation and using $\pi^u$ to denote either of the random uniform or previous policy,
\[
\pi_{\clustertwo}(s)=\pi^u(s),\qquad s\in \{\vi\setminus\cW_{\eta}\} .
\]

The resulting policy $\pi_{\clustertwo}$ can then be used as a starting point for local policy iteration in \clustertwo  with e.g. Algorithm~\ref{alg:hierarchy-solve}. A high-level summary of policy transfer and the subsequent solution process is given in Algorithm~\ref{alg:policy-xfer}. In particular, a value function $V_j$ everywhere in \clustertwo is computed by solving the local boundary value problem 
\begin{equation}\label{eqn:xfer-values}
V_j(s) =
\begin{dcases}
\bbE_{a\sim \pi_{\clustertwo}(s)}\left[
\sum_{s'\in {\clustertwo}}P_2(s,a,s')\bigl(R_2(s,a,s') + \Gamma_2(s,a,s')V_j(s')\bigr) \right]
& \text{if } s\in\vi, \\
V_{j+1}(s) & \text{if } s\in \partial {\clustertwo}
\end{dcases}
 \end{equation}
where $V_{j+1}$ is the value function associated to the coarse scale $j+1$ (see Section~\ref{sec:solve-local}). The value function $V_j$ and its associated policy can then be propagated up or down the hierarchy using the ideas discussed in Section~\ref{sec:mdp_solution}. For instance, $\problemtwo$ could be re-compressed from the scale at which \clustertwo resides (scale $j$) upwards using an updated policy derived from $V_j$ (possibly blended with a previous policy). The value function in \clustertwo can also be used to solve downwards below the current \clustertext by applying Algorithm~\ref{alg:hierarchy-solve} to the previous scale with $V_j$ serving as the initial coarse data.
%, or converted back to a policy in the usual way if \clustertwo is at the finest scale (Step 4).

%We note that although policy transfer is possible at any scale, if the
%entire \clustertext\ \clustertwo can receive a policy from \clusterone (e.g. $u\cap v = v$) and the scale in question is a coarse scale, %then the value function from $\problemone$ should be transferred instead of the policy. This is because at coarse scales %the value function is the object that is pushed down towards solving at the next finer scale. Transfer of a policy
%in this case would still require subsequent calculation of the corresponding value function.

% Note that you can always transfer, and then simulate and see how much reward is collected and how
% long the episodes are (time to absorption). If its bad, then xfer was a bad idea.

\begin{algorithm}[t]
\caption{Policy transfer and subsequent solution process: high-level steps.}
\fbox{\begin{minipage}{0.984\textwidth}
\begin{enumerate}\itemsep -1pt
%\item (Optional) Determine state correspondence $\eta$ between $v\in \problemtwo$ and $S_1$.
\item For each $\problemtwo$ state in $\cW_{\eta}$, transfer the policy from the corresponding state in $\problemone$ by mapping actions from $\problemone$ to $\problemtwo$.
\item For remaining states in $\vi$, fill in the policy with either the diffusion policy or a previous guess, if available.
\item Apply Algorithm~\ref{alg:hierarchy-solve} (or its variants) to $\problemtwo$ using the policy on $\vi$ as an initial condition.
\item Push down the resulting value function in \clustertwo to the previous scale, and continue solving downwards, or stop and return the resulting policy on $\vi$ if \clustertwo is at the finest scale $(j=0)$.
\end{enumerate}
\end{minipage}}
\label{alg:policy-xfer}
\end{algorithm}

%%%%%%%%%%%%%%%%%%%%%%%%%%%%%%%%%%%%%%%%%%%%%%%%%%%%%%%%%%%%%%%%%%%%%%%%%%

\subsection{Transfer of Potential Operators}
\label{sec:potential-xfer}
%Full, coarse scales only.
Suppose $S_i^j$ denotes the full statespace for problem $i$ at scale $j$. 
At any (coarse) scale $j>0$ above the finest scale one can consider transferring the potential operator 
\begin{align*}
\cG &: \bbR^{S_1^j}\to \bbR^{S_1^j}\\
\cG &:= \bigl(I- (\Gamma_1\circ P_1)^{\pi^*}\bigr)^{-1}
\end{align*}
associated to the optimal policy $\pi^*$ at scale $j$ of problem $\problemone$. Here we let $P_1, \Gamma_1$ generically denote the Markov transition tensor and compressed discount factors (respectively) at the relevant scale of problem $\problemone$. We will consider for simplicity of the exposition the transfer of entire scales, and require for the moment that the statespace correspondence satisfy $\eta(S_2^j)=S_1^j$. In general, potential operators specific to \clusterstext may be readily transferred by extending the development here to include the ideas discussed in Section~\ref{sec:solve-local} (see Equation~\eqref{eqn:local-system} in particular).

Sub-problems at scales below $j$ of $\problemtwo$ decouple from each other given values at the states in $S_2^j$. A value function $V_2^j$ on $S_2^j$, may be computed from the transferred potential operator  and $\problemtwo$ rewards as follows. Let $\pi_{\clustertwo}$ denote the policy on $S_2^j$ transferred from $\pi^*$ according to Section~\ref{sec:policy-xfer}. Next, consider the $\problemtwo$ rewards $R_2$, aggregated over one step with respect to $(\pi_{\clustertwo}, P^{\pi^*})$, and mapped back to $\problemone$:
\begin{equation}\label{eqn:potential-rewards}
R_{2,1}(s) = \sum_{s'\in S_1^j}P^{\pi^*}(s,s')\bbE_{a\sim \pi_{\clustertwo}(\eta^{-1}(s))}\Bigl[
R_2\bigl(\eta^{-1}(s),a,\eta^{-1}(s')\bigr)\Bigr],\quad s\in S_1^j .
\end{equation}
The expectation on the right-hand side defines a system of rewards on $S_1^j$ by collecting the (one-step) rewards in $\problemtwo$ following the policy on $S_2^j$ determined by mapping $\pi^*$ to $\problemtwo$. Since the policy $\pi^*$ is deterministic, we do not take an expectation with respect to $\problemone$ actions  anywhere in~\eqref{eqn:potential-rewards}.
The value function $V_2^j$ is computed by applying $\cG$ to these rewards and mapping back to $\problemtwo$  along $\eta$,
\begin{equation}\label{eqn:potential-values}
V_j(s) = \bigl(\cG R_{2,1}\bigr)\bigl(\eta(s)\bigr), \qquad s\in S_2^j.
\end{equation}
We draw attention to the fact that if the reward system for $\problemtwo$ depends on actions, then as shown in Equation~\eqref{eqn:potential-rewards} computing a value function requires a set of rewards aggregated with respect to a transferred policy $\pi_{\clustertwo}$. In general, transferring a potential operator therefore also entails mapping actions across problems, and transferring a policy. If the $\problemtwo$ rewards in \clustertwo do not depend on actions however, then~\eqref{eqn:potential-rewards} reduces to a simpler expression not involving $\pi_{\clustertwo}$, however this situation is unlikely at any coarse scale.

Using a potential operator from $\problemone$ to compute values for $\problemtwo$ sub-problems provides three major advantages. Value determination is fast, $\cO(|S_1^j|^2)$ worst case, because the operator is given. The resulting value function for $\problemtwo$ at scale $j$ also respects the specific reward structure of $\problemtwo$. The third advantage is more subtle: the coarse MDP initially associated to scale $j$ of $\problemtwo$ results from compression with respect to a stochastic policy guess, and may not be compatible with the optimal policy at scale $j$ of $\problemone$. If we simply transferred $\pi^*$ from $\problemone$, but used $\problemtwo$'s coarse transition dynamics to perform value determination, it is likely that any improvement arising from knowledge of $\pi^*$ would be eliminated.  Assuming the transfer is viable,  $\problemone$'s potential operator will determine a value function that obeys the ``correct'' coarse-scale Markov process and provides a warm start towards finding the optimal fine-scale policy.

\subsection{Detecting Transferability}
\label{sec:xfer-detect}
Given a pair of matched \clusterstext\ $(\clusterone\in \problemone, \clustertwo\in \problemtwo)$, we would like to know whether transfer of a policy or potential operator from $\clusterone\in \problemone$ to some or all of $\vi$ can be reasonably expected to help solve $\problemtwo$. As in Section~\ref{sec:policy-xfer}, we will restrict our attention to detecting opportunities for transfer only to the interior of \clustertwo. 

%We consider this question when the reward functions for $\problemone$ in \clusterone and $\problemtwo$ in \clustertwo are {\em %comparable} in the sense that they are on similar scales and are structured according to similar %conventions. For example, a reward system that places $+10$ on the goal(s) and $-1$ everywhere else %would be comparable across problems. When the reward functions cannot be easily compared, the methods %described below may not always apply.

\subsubsection{Policy Transferability}
One way to approximately determine transferability of a policy is to check whether running $\problemone$'s optimal policy $\pi^*$ in $\clustertwo\in \problemtwo$ results in more aggregate reward on average than executing the current policy we have for $\problemtwo$. In most cases the ``current'' policy will just be the diffusion policy $\pi^u$, so we will overload this notation to indicate either of the current policy or the diffusion policy. If the reward collected following $\problemone$'s policy is lower than that collected using the current policy, this suggests that $\pi^*$ does not provide a warm start, and transfer should not be attempted.  In addition, if no statespace correspondence is given then this test can be used to check whether assuming the default identity correspondence can still support transfer from \clusterone to \clustertwo.

The value function in \clustertwo given the current policy $\pi^u$ for $\problemtwo$ is computed in the usual way. 
Letting $V_j^u$ denote the desired value function and $V_{j+1}^u$ denote the current value function at the next coarser scale $j+1$, we must solve the system
\begin{equation}\label{eqn:vu-detect}
V_j^u(s) =
\begin{dcases}
\bbE_{a\sim\pi^u(s)}\left[\sum_{s'\in {\clustertwo}}P_2(s,a,s')\bigl(R_2(s,a,s') + \Gamma_2(s,a,s')V_j^u(s')\bigr) \right] & \text{if } s\in\vi, \\
V_{j+1}^u(s) & \text{if } s\in \partial {\clustertwo} .
\end{dcases}
 \end{equation}

The test for transferability compares $V_j^u$ to the value function $V_j$ describing rewards collected in ${\clustertwo}\in \problemtwo$ running $\pi^*$ from $\problemone$ computed according to Equation~\eqref{eqn:xfer-values}. In other words, we transfer following Section~\ref{sec:policy-xfer}, and then check whether we see any improvement relative to the current policy in \clustertwo. The result of the computations in  Equations~\eqref{eqn:xfer-values} and~\eqref{eqn:vu-detect} can be reused during the first iteration of Algorithm~\ref{alg:hierarchy-solve} (or its variants) when solving the transfer problem.
%computed using the optimal policy $\pi^*$ for $\problemone$ must invoke the Markov transition law for $\problemtwo$. Because $\pi^*$ i%s used in conjunction with $P_2$,
%a second ingredient -- an action mapping $a:A_1\to A_2$ -- is also needed. This map translates actions for $\problemone$% into actions for $\problemtwo$ as needed, and is assumed to be defined as described in Section~\ref{sec:policy-xfer} (see al%so Figure~\ref{fig:policy-xfer}). Given $a$, we define $V_j^1$ in \clustertwo as
%\begin{equation}\label{eqn:v2-detect}
%V_j^1(s) =
%\begin{dcases}
%%\bbE_a\left[\sum_{s'\in v}P_2(s,a(a),s')\bigl(R_2(s,a(a),s') + \gamma V_j^1(s')\bigr) ~\Bigl.\Bigr|~ %a\s%im(\pi^*\circ\eta)(s) \right] & \text{if } s\in\vi, \\
%\sum_{s'\in v}
%P_2\bigl(s,(a\circ\pi^*\circ\eta)(s),s'\bigr)\bigl[R_2\bigl(s,(a\circ\pi^*\circ\eta)(s),s'\bigr) 
%+ \gamma V_j^1(s')\bigr] & \text{if } s\in\vi, \\
%V_{j+1}^u(s) & \text{if } s\in \partial v .
%\end{dcases}
% \end{equation}
%Next, we will compare the two value functions on the subset of $\vi$ with a correspondence in \clusterone,
%since the \clusterstext\ $u,v$ may be of different size and the correspondence $\eta$ could be only partially specified:
%\[
%\cW_{\eta} := \eta^{-1}\bigl(\eta(\vi)\cap u\bigr) .
%\]
If similar underlying states of the environment play different roles in the different tasks $\problemone,\problemtwo$, then $V_j$ could differ significantly from $V_j^u$. The two functions should be compared on all of $\vi$ (not just $\cW_{\eta}$ defined in Equation~\eqref{eqn:W_intersect}). One can take a conservative approach and only pursue transfer if $V_j(s)\geq V_j^u(s), \forall s\in\vi$. Or, if the situation is less clear, assessing the improvement may involve other heuristics. For example, a relative comparison such as
\begin{equation}\label{eqn:detect-test-matching}
 \sum_{s\in\vi}\sgn\bigl(V_j(s) - V_j^u(s)\bigr)\log\left(\frac{|V_j(s) - V_j^u(s)|}{|V_j^u(s)| 
+ \bbone_{(V_j^u(s)=0)}} + 1\right) \stackrel{?}{>} 0 .
%\left\|\restr{V_j}{\cW_{\eta}} - \restr{V_j^u}{\cW_{\eta}}\right\| < \varepsilon .
\end{equation}
This test checks whether the policy $\pi^*$ provides a ``warm start'' relative to $\pi^u$ given the transition dynamics and reward structure for $\problemtwo$. If the inequality~\eqref{eqn:detect-test-matching} above is satisfied, then we can proceed with transferring the policy from \clustertext\ $\clusterone\in \problemone$ to $\clustertwo\in \problemtwo$. Note that since interior \clustertext values are computed in Equation~\eqref{eqn:xfer-values} with the \clustertext boundary fixed, the problem of assessing transferability from \clusterone to \clustertwo is independent of other \clusterstext in $\problemone$ or $\problemtwo$.

%- ALSO: Don't try to do policy xfer within \clusterstext that contain the goal state in either of \problemone or \problemtwo, unless
%  the goal states are the same.    
%- SAY WE ALWAYS ASSUME THE UNDERLYING STATESPACES for p1,p2 are the SAME or subsets of the same superset?

\subsubsection{Potential Operator Transferability}
The process for determining whether transferring a potential operator will be helpful or not is similar to the procedure for policies. Transferring a potential operator is equivalent to assuming that the dynamics of ${\clusterone}\in \problemone$ apply to ${\clustertwo}\in \problemtwo$. Thus, as with policies, it is worthwhile to consider comparing the expected discounted rewards collected while following the transition dynamics governing $\problemone$ in $\problemtwo$ to the no-transfer alternative. The expected reward starting from states in $\cW_{\eta}$ under potential operator transfer is given by Equation~\eqref{eqn:potential-values}.
If $\cW_{\eta}$ is a proper subset of $\vi$, then a value function everywhere on $\vi$ can be obtained by solving a small boundary value problem. In this case, $\cW_{\eta}$ is added to the boundary (in addition to \clustertwo's bottlenecks) and the values computed by Equation~\eqref{eqn:potential-values} serve as the boundary values for states in $\cW_{\eta}$:
\begin{equation*}\label{eqn:potfn-xfer-bvp}
V_j(s) =
\begin{dcases}
\bbE_{a\sim\pi^u(s)}\left[\sum_{s'\in {\clustertwo}}P_2(s,a,s')\bigl(R_2(s,a,s') + \Gamma_2(s,a,s')V_j(s')\bigr) \right], & \text{if } s\in\vi\setminus\cW_{\eta}, \\
V_{j+1}(s), & \text{if } s\in \partial {\clustertwo} \\
\bigl(\cG R_{2,1}\bigr)\bigl(\eta(s)\bigr), & \text{if } s\in \cW_{\eta}
\end{dcases}
 \end{equation*}
where $\pi^u$ is the initial policy (diffusion or otherwise), and $V_{j+1}$ is the initial coarse value function. Values for the remaining states are computed according to $\problemtwo$'s rewards and transition probabilities, as this the only possibility in the absence of a wider correspondence. Analogous to the case of policy transfer, the computations in Equation~\eqref{eqn:potfn-xfer-bvp} may be reused when solving the transfer problem.
 
If we do not transfer the potential operator, we would otherwise just follow the current (or uniform stochastic) policy in \clustertwo with the usual $\problemtwo$ dynamics. The expected discounted reward when there is no transfer is determined by solving Equation~\eqref{eqn:vu-detect} as before. The final comparison between transfer/no-transfer can be performed on all of $\vi$, and may involve a heuristic such as Equation~\eqref{eqn:detect-test-matching}. If the reward system in $\problemtwo$ is strongly dependent on actions, then the quality of a potential operator transfer may also depend on the quality of the mapped policy $\pi_{\clustertwo}$ by way of Equation~\eqref{eqn:potential-rewards}. In such situations one can also assess the quality of $\pi_{\clustertwo}$ by using the procedure above.

\subsection{Statespace Graph Matching}
\label{sec:graph-matching}
Establishing a correspondence between the discrete, finite statespaces of two problems can be an important prerequisite for some, if not most, types of transfer. Recall that a problem's statespace graph is a graph with states as its vertices and edges/weights defined by a transition probability kernel. Such a graph may, for instance, be characterized by a graph Laplacian of the type defined in Section~\ref{sec:diffmaps}. The goal of a statespace graph matching is to establish a correspondence between the {\em roles} played by states in each problem. Consider for example two
related problems $\problemone,\problemtwo$, each with a single terminal (goal) state. It would be desirable to be able to match the terminal states as ``goals'', even if the terminal states are different in the
sense that they have different representations in some underlying space (e.g. as features or coordinates in a Euclidean space). The same could be true for other states that play a pivotal role, such as ``gateway'' states directly connected to goal states. Graph matching ultimately seeks to abstract away problem-specific roles from the underlying state representations, and then match similar roles across problems. We will further illustrate this concept by way of several examples in Section~\ref{sec:examples}.

Although statespace matching can be important for a transfer problem, it can also be expensive computationally and
imprecise in practice. For some problems defined on discrete domains, key correspondences may need to be correct, otherwise the transferred information may actually diminish performance. For these reasons we do not require graph matching nor do we propose a full solution to the matching problem. We will restrict our attention to transfer scenarios where:
\begin{enumerate}\itemsep -1pt
\item[(i)] It is possible to use a default ``identity'' correspondence, or detect that that the identity is a poor choice and transfer should not be attempted (using the ideas in Section~\ref{sec:xfer-detect}).
\item[(ii)] The graph matching is relatively simple, and there is a limited potential for catastrophic errors. For example, matching at coarse scales between small collections of states.
\end{enumerate}

The algorithm we will use to match statespace graphs is heuristic. Given two sets of states $\clusterone\in \problemone, \clustertwo\in \problemtwo$,
\begin{enumerate}\itemsep -1pt
\item Compute the pairwise diffusion distances $d(s_i,s_j)$, $s_i\in {\clusterone}, s_j\in {\clustertwo}$, according to Section~\ref{sec:diffmaps}. Note that this does not involve any particular underlying representation associated to the statespaces.
\item Build the affinity matrix
$W_{ij} = \exp\bigl(-d^2(s_i,s_j)/\sigma^2\bigr)$, for some appropriate choice of $\sigma$ (e.g. the median pairwise distance in the set).
\item Apply any graph matching algorithm based on affinities.
\end{enumerate}
Graph matching is itself an area of active research, and several algorithms exist~\citep{Fremuth-Paeger99,SanghaviMW07,Huang:AISTATS:07,Huang:AISTATS:11}. For a graph with $|V|$ vertices and $|E|$ edges, the min-cost flow algorithm of \citep{Fremuth-Paeger99} has $\cO(|V||E|)$ running time. Jebara and collegues have improved upon this with a belief-propagation algorithm giving a running time of $\cO(|V|^{2.5})$ on average~\citep{Huang:AISTATS:07,Huang:AISTATS:11} (but $\cO(|V||E|)$ in the worst case).

The diffusion map embeddings may be computed either locally within \clusterstext if \clusterone and \clustertwo are contained in \clusterstext at a coarser scale, or approximately with a small number of eigenvectors when \clusterone and/or \clustertwo is the entire problem statespace. Specific problem knowledge may guide in many cases the choice of \clusterone and \clustertwo. For example, we may need to match a \clustertext\ $\clustertwo\in \problemtwo$ to states in $\problemone$, but might reasonably expect that \clustertwo can only correspond to a small number of states in $\problemone$ rather than all of $S_1$. The examples discussed in Section~\ref{sec:examples} below illustrate graph matching and the transfer procedures suggested above in more detail.

\section{Experiments}
\label{sec:examples}
We will illustrate compression and transfer learning in the case of three examples: a discrete $50\times 50$ gridworld domain with multiscale structure, a 3-dimensional continuous two-task inverted pendulum problem, and pair of problems based on the ``playroom'' domain of~\cite{Singh2004,Barto:ICDL:04}. The gridworld tasks require an agent to navigate to a goal location in a 2D environment. The inverted pendulum problem involves first moving a cart to a desired position, and then moving the cart while balancing the pendulum to another position. Finally, the playroom domain examples involve learning to carry out sequences of specific interactions with various objects and actuators in a desired order. The setup of compression, transfer and transfer detection is the focus of this section, rather than an exhaustive performance comparison with other algorithms. For this reason, most of the performance plots below show error versus the number of algorithm iterations, even if different algorithms have dramatically different computational complexity per iteration (in particular, the proposed multiscale algorithms have a cost per iteration much smaller than global algorithms such as policy iteration).

We will consider several multiscale algorithms, obtained by choosing different paths along the flow diagram in Figure~\ref{fig:soln-flow}, and different numbers of interior or boundary update iterations. Each variant has the basic structure of Algorithm~\ref{alg:hierarchy-solve}, however the particular updates applied may differ. The following table summarizes the multiscale algorithms we will consider:

\begin{table}[h]
\centering
\label{tab:expt-algs}
\begin{tabular}{c|c|c}
Algorithm Name & Interior Update & Boundary Update \\ \hline
\texttt{oo} & once & once\\
\texttt{oc} & once & Alg.~\ref{alg:hierarchy-solve} \\
\texttt{or} & once & recompress \\
\texttt{co} & to convergence & once\\
\texttt{cc} & to convergence & Alg.~\ref{alg:hierarchy-solve} \\
\texttt{cr} & to convergence & recompress
\end{tabular}
\caption{Multiscale algorithms tested in the experiments.}
\end{table}

The designations ``once'' and ``to convergence'' refer to the number of updates applied to the interior/boundary states, before updating the boundary/interior. There are two \clustertext interior update possibilities: either we perform policy iteration in each \clustertext until convergence ({\em ``to convergence''} -- when the relative error between iterates falls below 0.01), or we apply only one policy iteration per \clustertext ({\em ``once''}). To make comparisons fair, for algorithms iterating within \clusterstext to convergence, each pass applying one local policy iteration update to all the \clusterstext is counted as a single outer ``algorithm iteration'' in the plots\footnote{This is overly-conservative because, in general, convergence rates will be different across \clusterstext. We have assumed that every \clustertext has the worst convergence rate.}. The bottlenecks (boundary) are updated either as in Algorithm~\ref{alg:hierarchy-solve}, by way of repeated local averaging steps ({\em ``Alg.~\ref{alg:hierarchy-solve}''}), or by {\em recompressing} the fine scale MDP and then solving the resulting coarse MDP ({\em ``recompress''}). For accounting purposes, a boundary update, regardless of type, is considered part of the same outer algorithm iteration as the immediately preceding interior update (or pass over \clustertext interiors). In all experiments, the cost of initial hierarchy construction and transfer detection/policy-mapping (when applied) is not included, as they are only done once for a problem.

Which of the algorithms is best suited to a given problem strongly depends on whether 
the initial data can be trusted. There are three kinds of initial data in question: the initial fine scale policy, the initial coarse value function, and the policy or policies used to initially compress the fine scale MDP. Empirically, we have observed the latter two types to be the most significant. If the coarse value function is trustworthy, then iterating within \clustertext interiors to convergence before updating the bottlenecks is generally optimal. Initial boundary information is allowed to propagate throughout the fine scale interior, and the boundary values are modified only after the interior cannot be improved further. This situation might arise when pursuing potential operator transfer, or if the boundary value function solves a coarse MDP compressed according to a pool of policies as in Section~\ref{sec:cluster_pols}. By contrast, if the initial coarse data is suspect, then we may choose to iterate the interior once or a small number of times, and then improve the boundary immediately afterwards. This may be the case if, for example, the initial coarse value function solves a coarse MDP compressed with respect to the diffusion policy. Applying many interior iterations can otherwise propagate erroneous information, and slow the solution process considerably. In short, when the initial coarse information is trustworthy, it should be leveraged as far as possible. Otherwise, if it is suspect, the coarse initial data should be imposed lightly.

For those experiments where a pool of \clustertext policies was used to initially compress the fine scale (following Section~\ref{sec:cluster_pols}) {\em and} there is recompression, we will effectively {\em add} the current fine policy to the existing pool of initial guesses, and use the augmented pool to recompress. This allows the solution at the coarse scale to ignore actions invoking the current fine policy if the actions corresponding to the initial guess policies are better. Under these conditions, the coarse value function can only increase, since we are providing additional actions beyond those resulting from the initial compression. Since each fine scale \clustertext policy corresponds to a coarse action, recompression is efficient in practice, and involves compressing only with respect to the new fine policy. One can concatenate new coarse probabilities, discounts, and rewards with those resulting from the initial compression and then proceed to solve at the coarse scale. In experiments involving initial compression with respect to the diffusion policy only, we recompressed using the current fine policy, and discarded coarse actions corresponding to the diffusion policy. In all experiments, compression involved blending each \clustertext policy with a small amount ($\lambda=0.01$) of the diffusion policy in order to preserve the boundary reachability assumption.

\begin{figure}[t]
\centering
\fbox{
\includegraphics[width=0.3\textwidth]{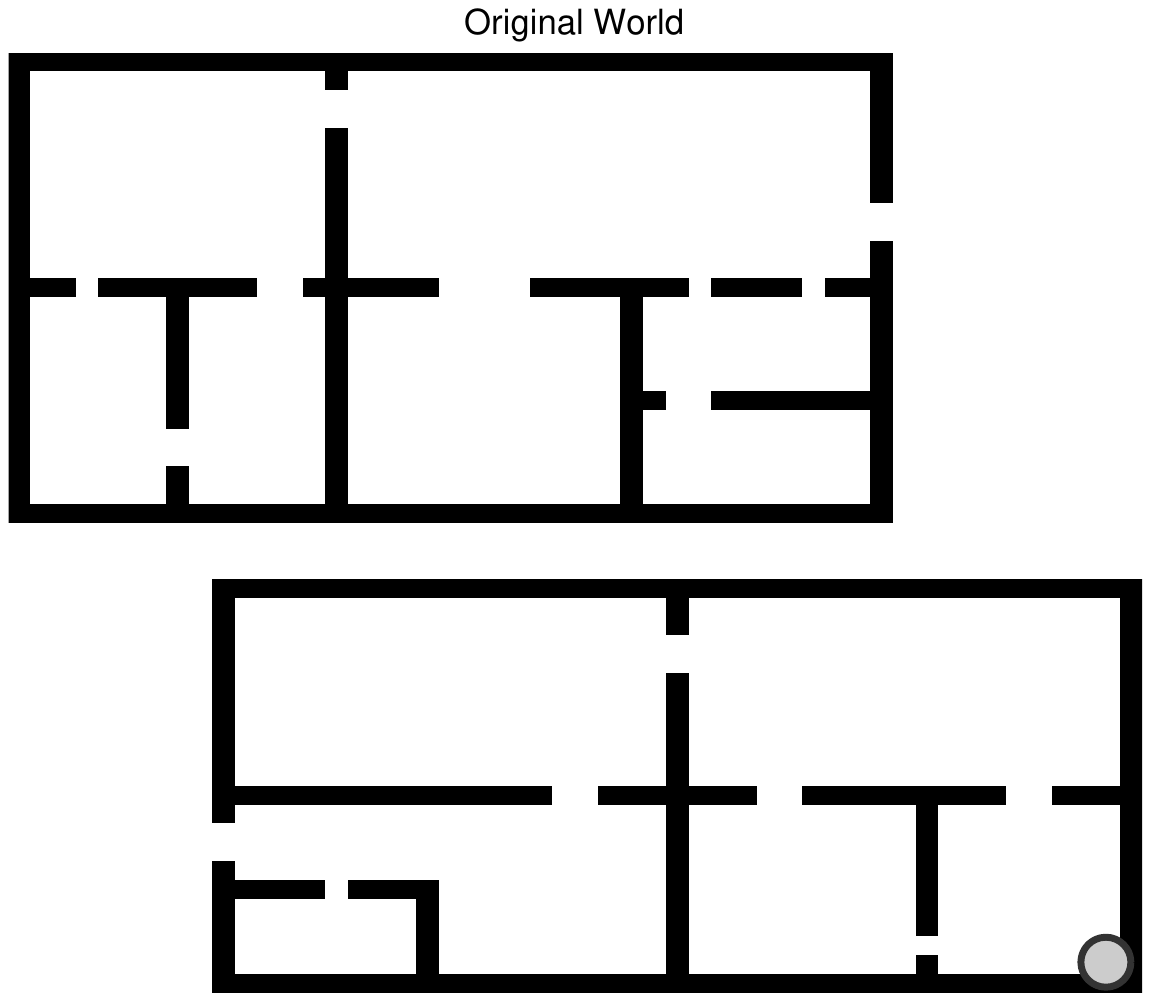}}
\hskip 0.05cm
\fbox{
\includegraphics[width=0.3\textwidth]{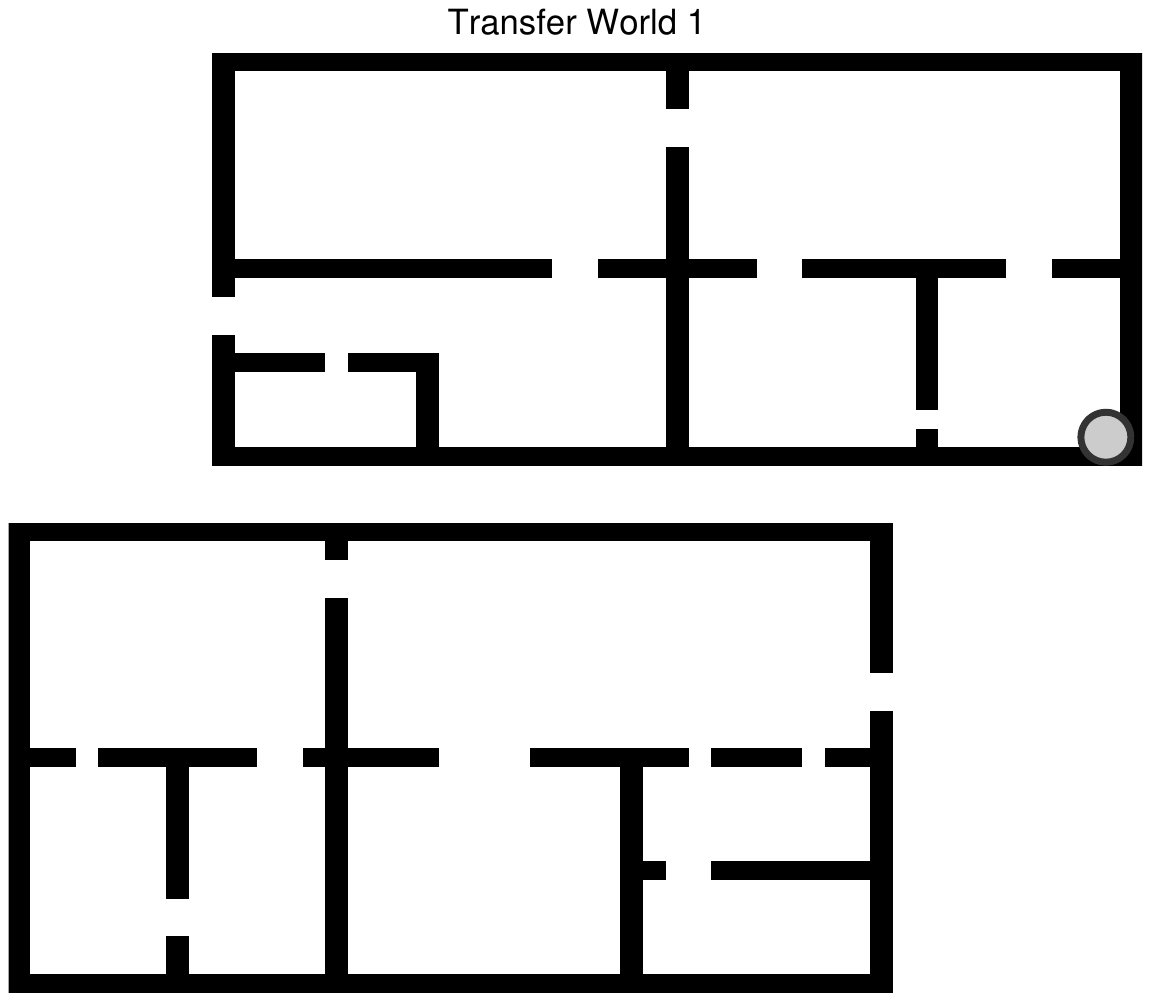}}
%\hskip 0.05cm
%\fbox{
%\includegraphics[width=0.3\textwidth]{}}
\caption{{\small\em (Left) Original grid world. (Right) Transfer to a world with similar multiscale structure and (relative) goals, but a completely different optimal policy. 
%(Right) Transfer to a world with different geometry and goal, but still similar multiscale structure.
}}
\label{fig:grid_transfer}
\end{figure}

\begin{figure}[t]
\centering
\fbox{\includegraphics[width=0.31\textwidth]{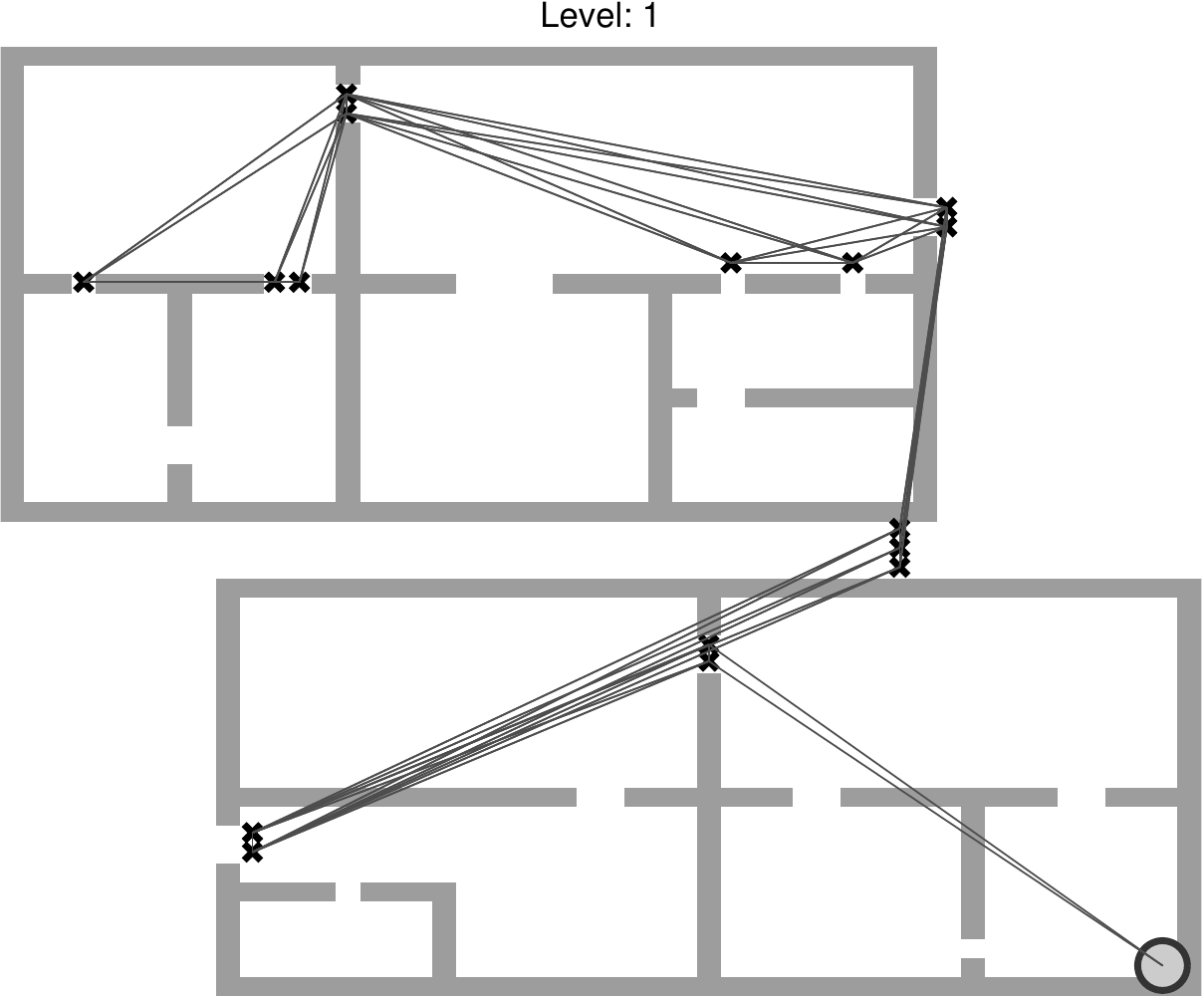}}
\fbox{\includegraphics[width=0.31\textwidth]{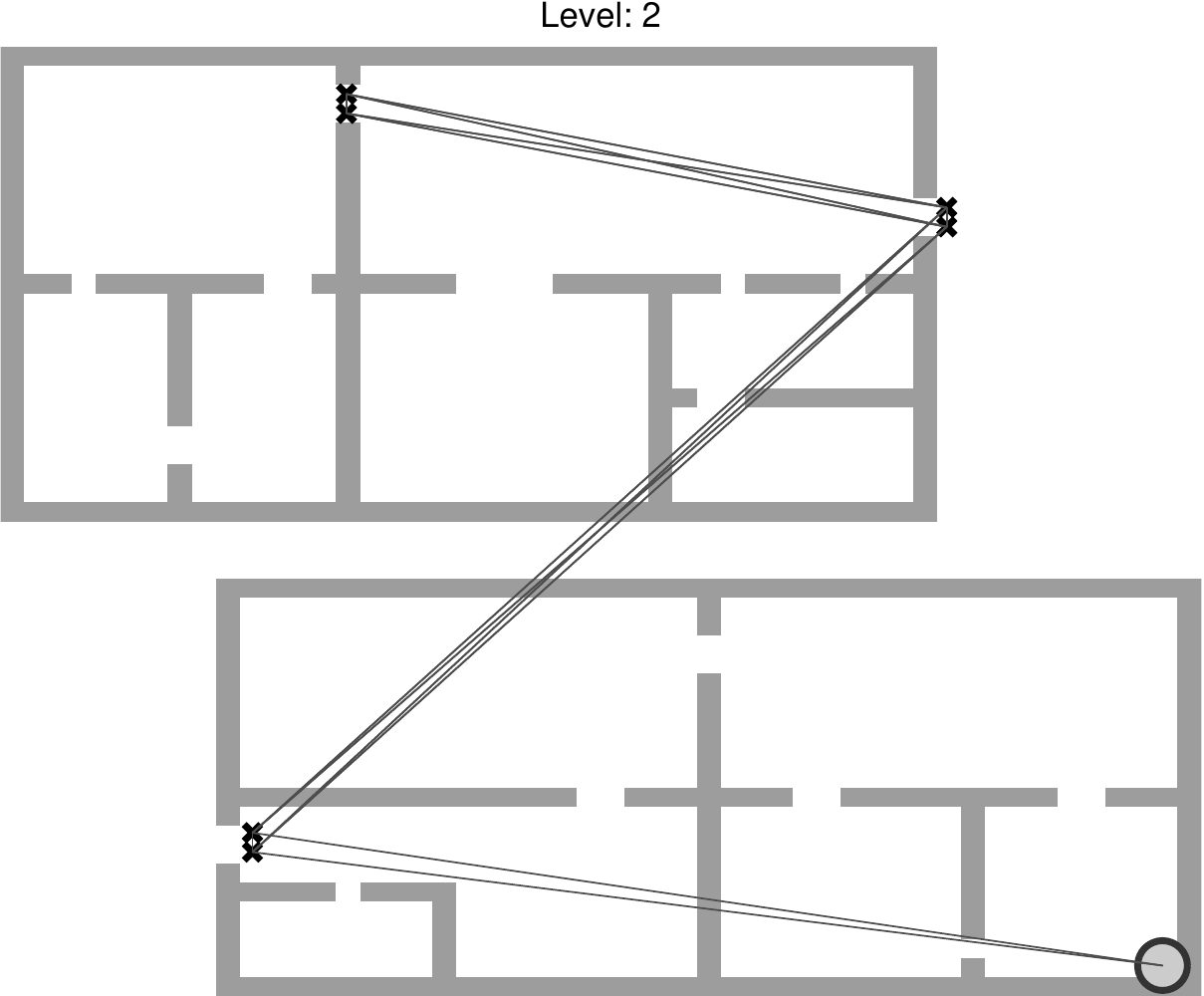}}
\fbox{\includegraphics[width=0.31\textwidth]{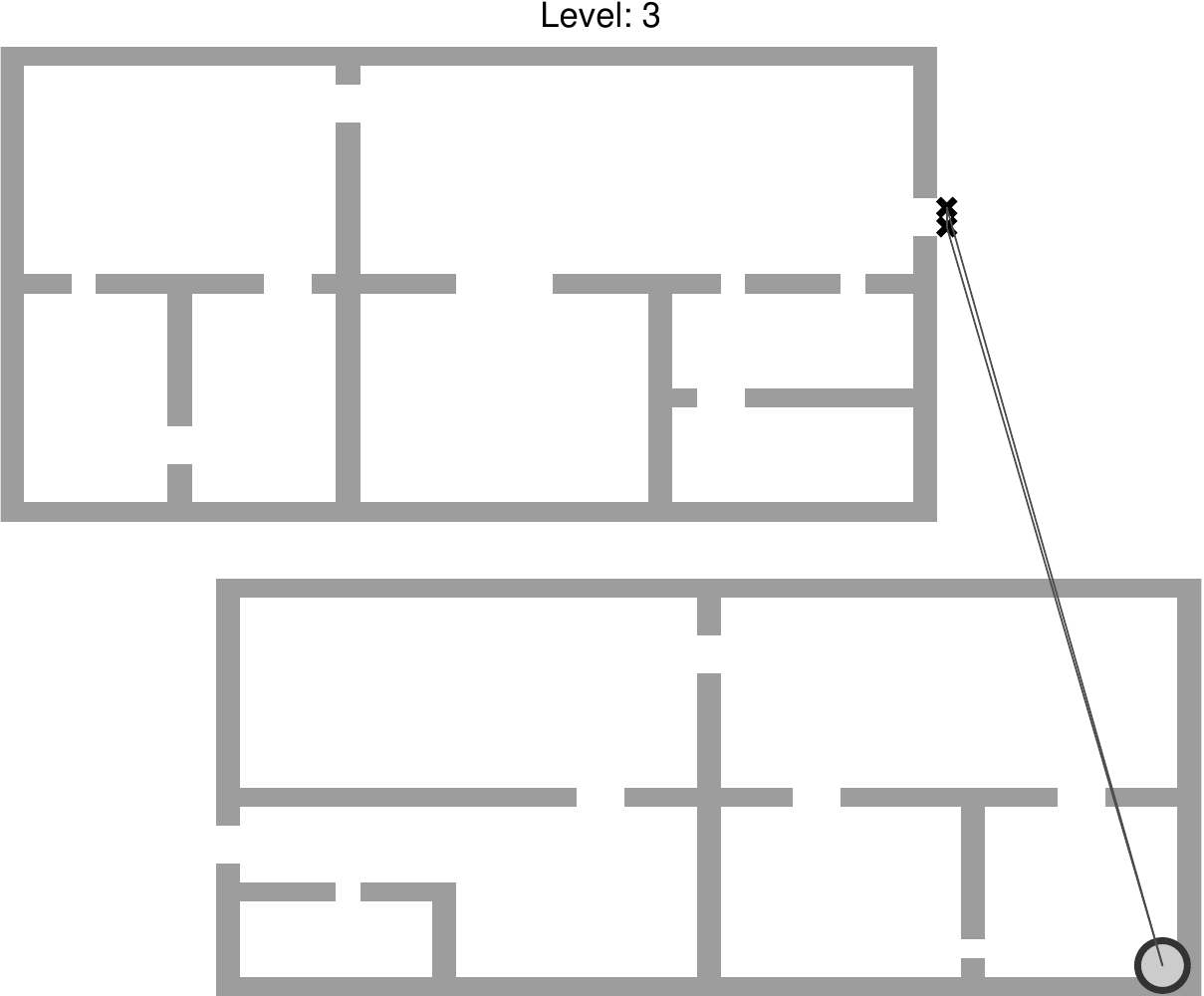}}
%\fbox{\includegraphics[width=0.25\textheight]{figs/level1_graph}}
%\fbox{\includegraphics[width=0.25\textheight]{figs/level2_graph}}\\
%\fbox{\includegraphics[width=0.25\textheight]{figs/level3_graph}}
%\fbox{\includegraphics[width=0.25\textheight]{figs/level4_graph}}
%(Clockwise from top-left)
\caption{{\small\em  Detected gridworld bottlenecks ('\textnormal{\texttt{x}}') at scales ranging  from fine to coarse (left to right). At each scale, non-zero entries of the corresponding compressed MDP's transition matrix are plotted as links between bottleneck states, and depict the directed statespace graphs. The terminal goal state is marked by a circle. }}
\label{fig:grid_bns}
\end{figure}

\begin{figure}[t]
\centering
\fbox{\includegraphics[width=0.31\textwidth]{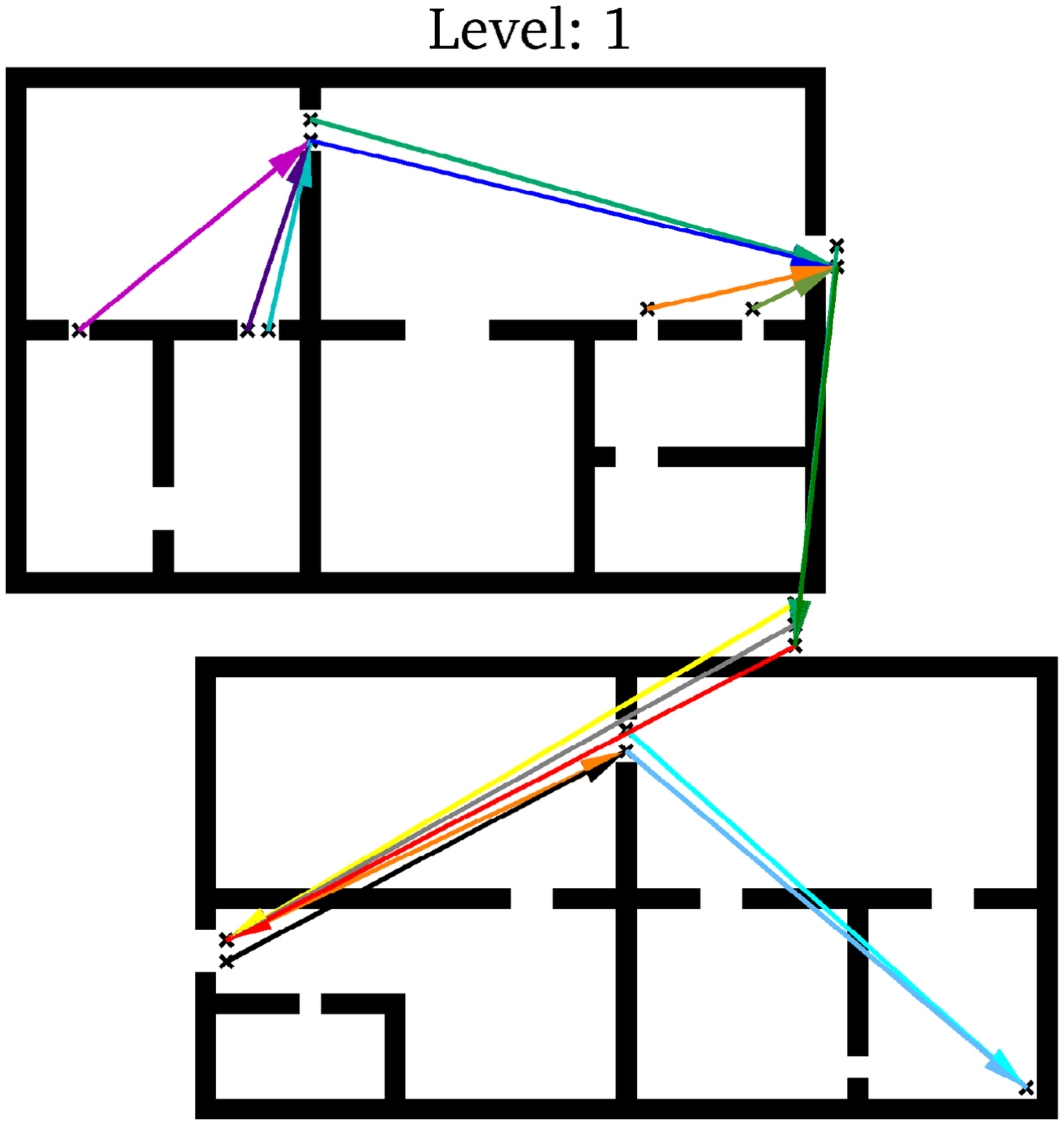}}
\fbox{\includegraphics[width=0.31\textwidth]{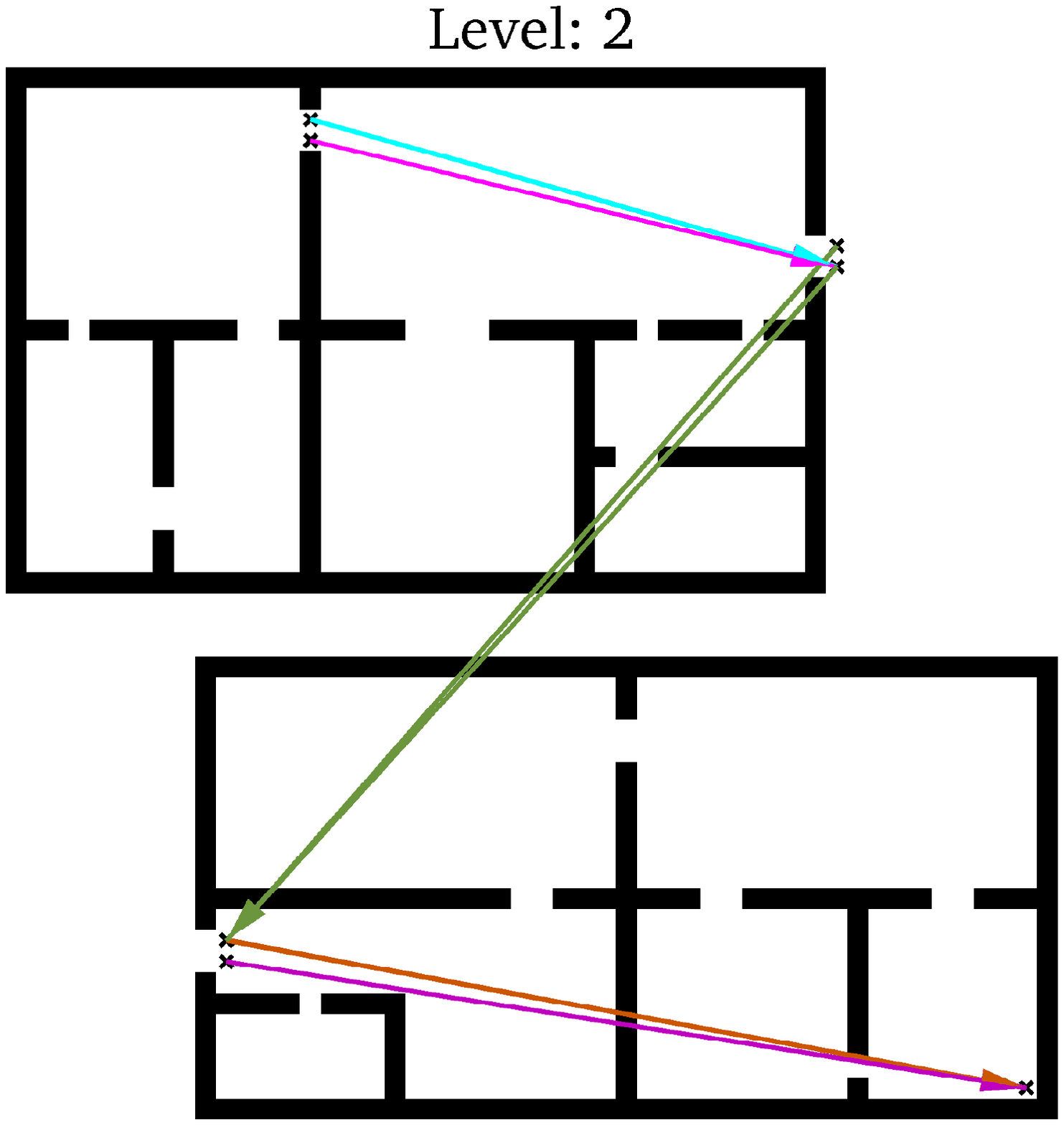}}
\fbox{\includegraphics[width=0.31\textwidth]{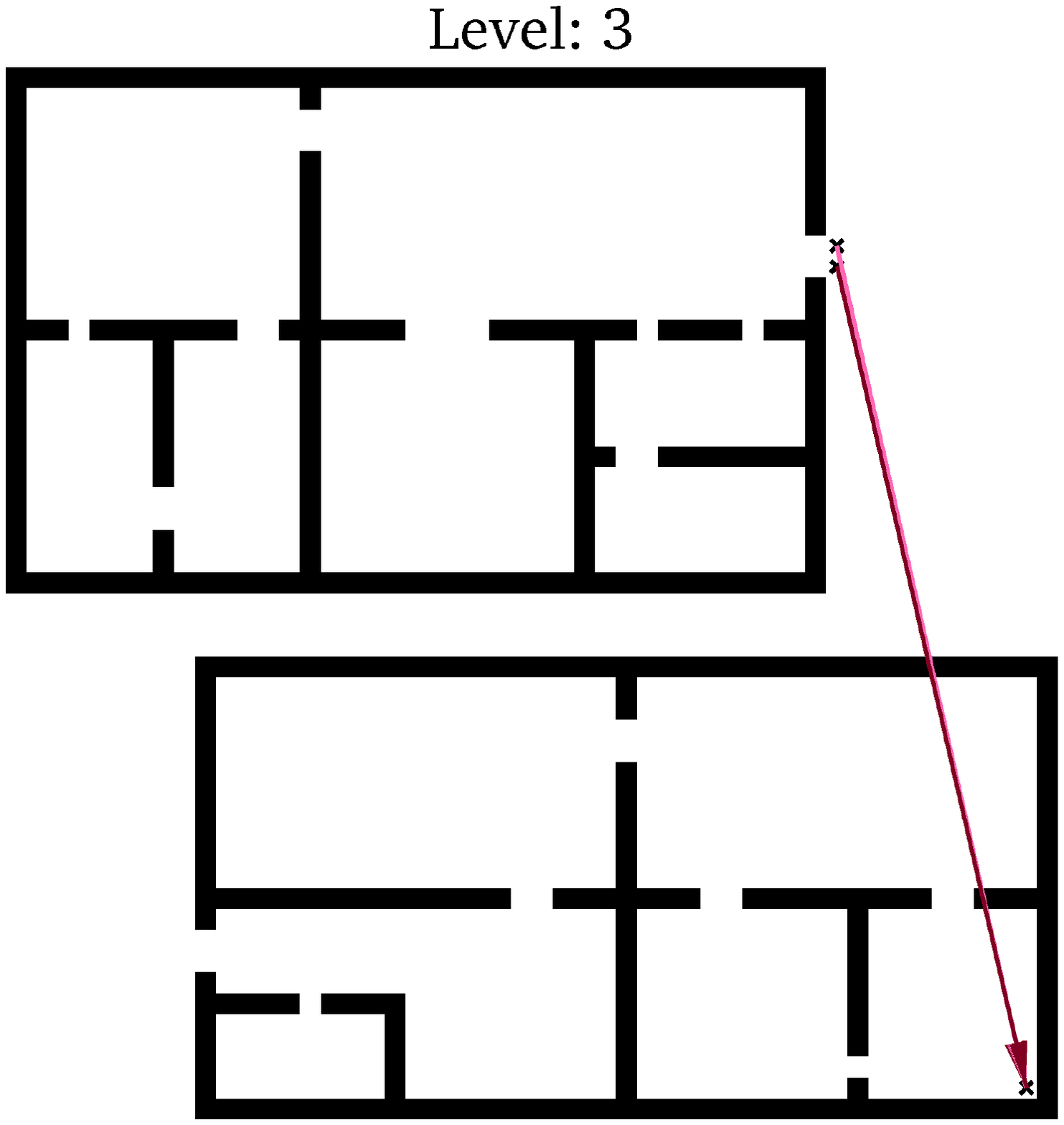}}
\caption{{\small\em Greedy coarse policies solving MDPs at successively coarser scales (left to right), visualized as arrows representing transitions between bottlenecks. The MDPs were compressed with respect to the local guesses (for each scale) described in Section~\ref{sec:cluster_pols}. }}
\label{fig:grid_pols}
\end{figure}

\begin{figure}[t]
\centering
\includegraphics[width=0.99\textwidth]{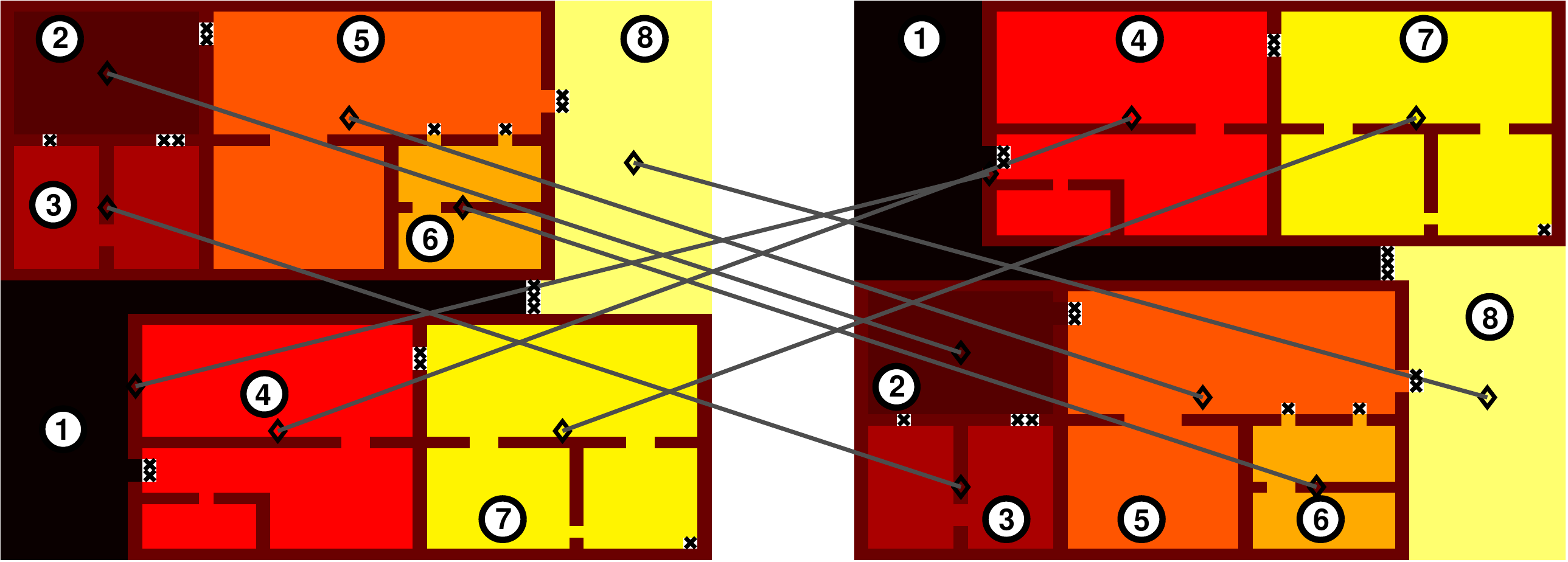}
\caption{{\small\em Output of the \clustertext correspondence algorithm (Section~\ref{sec:xfer-cluster-correspond}) applied to the gridworld transfer problem described in Section~\ref{sec:grid_expt}. All \clusterstext are correctly matched.}}
\label{fig:grid-tr1-matching}
\end{figure}

%\begin{figure}[t]
%\centering
%\includegraphics[width=0.49\textwidth]{figs/gridworld-tr1-iters}
%\hskip 0.05cm
%\includegraphics[width=0.49\textwidth]{}
%\caption{{\small\em (Left) Solution of the gridworld transfer problem described in Section~\ref{sec:grid_expt} with partial policy transfer (Algorithm~\ref{alg:policy-xfer}) and
%the alternating multiscale solution method (``MS'' traces). Comparisons to policy iteration (``PI'' traces) both with and without the transferred policy as an initial guess are also shown. 
%(Right) Running time comparison between the multiscale algorithm vs. vanilla policy iteration on a standard desktop computer.}}
%\label{fig:grid-tr1}
%\end{figure}

\subsection{Gridworld Domain}
\label{sec:grid_expt} 
In the gridworld domain, an agent must navigate within a two-dimensional world from an arbitrary starting point to a designated goal state. Two $50\times 50$ gridworlds we will consider are shown in Figure~\ref{fig:grid_transfer}, where grey blocks represent immovable obstacles (walls) and large grey circles denote terminal goal states. The actions available to the agent are {\texttt{up}, \texttt{down}, \texttt{left}, \texttt{right}}. The four movement actions are reversible, and succeed with probability 0.9. Actions that would otherwise allow the agent to step on or through an obstacle fail with probability 1, and the agent remains in place. For all worlds, the reward function is set to $-1$ for all states except the goal state, which is assigned a reward of $+10$.  We assume that these transition probabilities $P$ and rewards $R$ are given. 

Bottleneck detection and partitioning was done once, before compressing, to a maximum of 3 scales. We then compressed the gridworld problem 3 times using the initial local guess policies described in Section~\ref{sec:cluster_pols}. Figure~\ref{fig:grid_bns} shows, clockwise from top left, the detected bottlenecks (marked by `\texttt{x}' characters) and statespace adjacency graphs (from the compressed transition probability matrices) superimposed on the original world for each successively coarsened MDP. The graphs are directed, however for readability we do not show directionality in the plots. One can see that as the problem is repeatedly compressed, \clusterstext become successively lumped together into coarser approximations to the original world. A solution to the MDP at the first compressed level, for example, determines the optimal sequence of \clusterstext the agent should traverse to reach the goal. Figure~\ref{fig:grid_pols} shows policies resulting from solving the coarse MDPs, depicted as directed arrows marking a path along bottleneck states to the goal. For this problem, solutions to the coarse problems are compatible with the optimal fine scale policy.

In Figure~\ref{fig:grid_transfer} (right), we show a gridworld to which knowledge may be transferred given a solution at some scale to the problem on the left. In the transfer world (right) the optimal policy and state-transition behavior is significantly different from that of the original world (left). The reward function is also different, since the goal has moved, however the multiscale structure is similar, and the goal is in the same \clustertext as before (though the \clustertext has moved). The optimal fine scale policy within \clusterstext of similar geometry are also similar across problems. Indeed, for this world some or all of the solution at any of scales 1-4 may be transferred following the process discussed in Section~\ref{sec:transfer}. We will consider a simple transfer scenario in which the optimal fine scale policy is transferred wherever transferability detection (Section~\ref{sec:xfer-detect}) indicates it is advantageous to do so. Details discussing the application of transfer Algorithm~\ref{alg:transfer-general} are given below.\\
%Note that the scales at which transfer is possible are detected automatically. In the second transfer %world (Figure~\ref{fig:grid_transfer}, right) \clusterstext have moved as before, but two small \clusterstext in the %top-right have been combined into one \clustertext, and this larger \clustertext's shape has changed. The goal has also %moved relative to its previous position in the corresponding \clustertext of the original problem. In this case %we cannot transfer a solution at Level 1 since the statespace graphs are not easily matched. But Level %2 and higher can be immediately transferred. Given a graph matching, the sequence of \clusterstext an agent %should traverse are the same for both problems at these scales.

\noindent{\em Cluster Correspondence:} For this problem, the \clustertext correspondence algorithm described in Section~\ref{sec:xfer-cluster-correspond} correctly pairs together the \clusterstext in the source and destination problems. Figure~\ref{fig:grid-tr1-matching} shows the partitioning of each world into \clusterstext identified by the recursive spectral clustering step (Algorithm~\ref{alg:spectral_clustering}), as well as the correspondences returned by the \clustertext correspondence algorithm. Within each world, \clusterstext are demarcated by shade of color, and across worlds, gray lines connect the centroids of \clusterstext that have been paired together.\\

\noindent{\em Transfer Detection:} The transfer detection algorithm described in Section~\ref{sec:xfer-detect} was applied to each pair of matched \clusterstext. It is clear that with the exception of \clusterstext 1 and 8 in Figure~\ref{fig:grid-tr1-matching}, the \clusterstext are similarly oriented in both worlds. Thus for this problem, one should be able to skip statespace matching {\em within} \clusterstext and rely on transfer detection to confirm whether this was ultimately a safe thing to do. Omitting a statespace matching at the fine scale is equivalent to assuming that paired subproblems have the same orientation with respect to the problem domain and bottlenecks. In general of course it is hard to know a priori whether identified sub-problems share the same orientation as a pre-solved problem stored in a database of solutions, and statespace matching at {\em all} scales involved in the transfer should be performed. Nevertheless, one can still attempt to assume the orientations are correct, and then detect whether this assumption is valid or not. This approach may be particularly fruitful whenever the fine scale statespaces are large and complex, so that graph matching is difficult and error-prone. For the present  gridworld problem, the detection algorithm identifies \clusterstext\ $2-7$ as policy transfer candidates, and rejects \clusterstext 1 and 8. This result coincides with our earlier visual intuition from Figure~\ref{fig:grid-tr1-matching}.\\

\noindent{\em Fine Scale Policy Transfer:} Within \clusterstext\ $2-7$, the fine scale optimal policy for the source problem was mapped to the destination problem following the mapping procedure described in Section~\ref{sec:policy-xfer}. States in the destination world which did not receive a policy by transfer were  given a deterministic policy that always recommends the \texttt{up} action, rather than a uniform distribution over all actions.  Mapping the actions across worlds is easy in this case, since the action spaces are identical, and pieces of the optimal policy for the original problem largely transfer without modification. Comparing the two worlds in Figure~\ref{fig:grid-tr1-matching}, however, the detected bottleneck states are not always in the same place relative to a given  \clustertext. For example, the  two bottlenecks at the top-left corner of \clustertext 5 are one grid space to the right in the transfer world as compared to the original world. The policy transfer algorithm in~Section~\ref{sec:policy-xfer} maps a  policy between \clustertext interiors along the established correspondence, which in our case is simply an enumeration of the \clustertext's states in column-scan order, from top-left to bottom-right. For \clustertext 5, the difference in relative positioning of the bottlenecks creates a misalignment between the source and destination \clusterstext' interior states. For this particular problem, however, this misalignment imposes little error since the optimal policy is constant over large portions of the  \clustertext. This is likely the reason why \clustertext 5 was identified as a good candidate for transfer, despite alignment errors at the fine scale. In general, policy transfer may be relatively robust to correspondence errors at the fine scale since, by construction, the underlying Markov chain is fast mixing within \clusterstext.\\

\subfiglabelskip=0pt
\begin{figure}[p]
\centering
\subfigure[][]{
\includegraphics[scale=0.533]{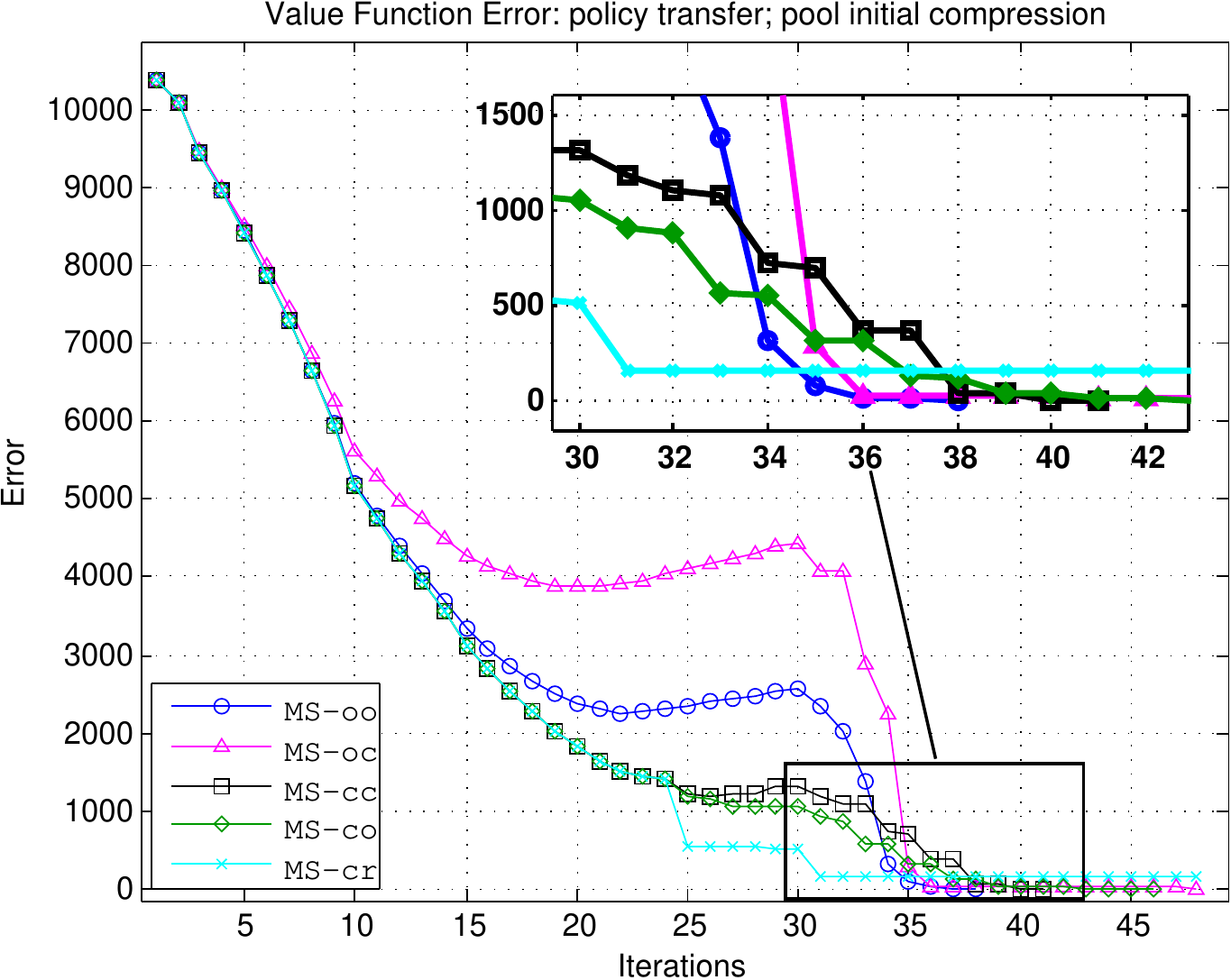}
\label{fig:gridworld-partial-transfer-pool}
}
\subfigure[][]{
\includegraphics[width=0.47\textwidth]{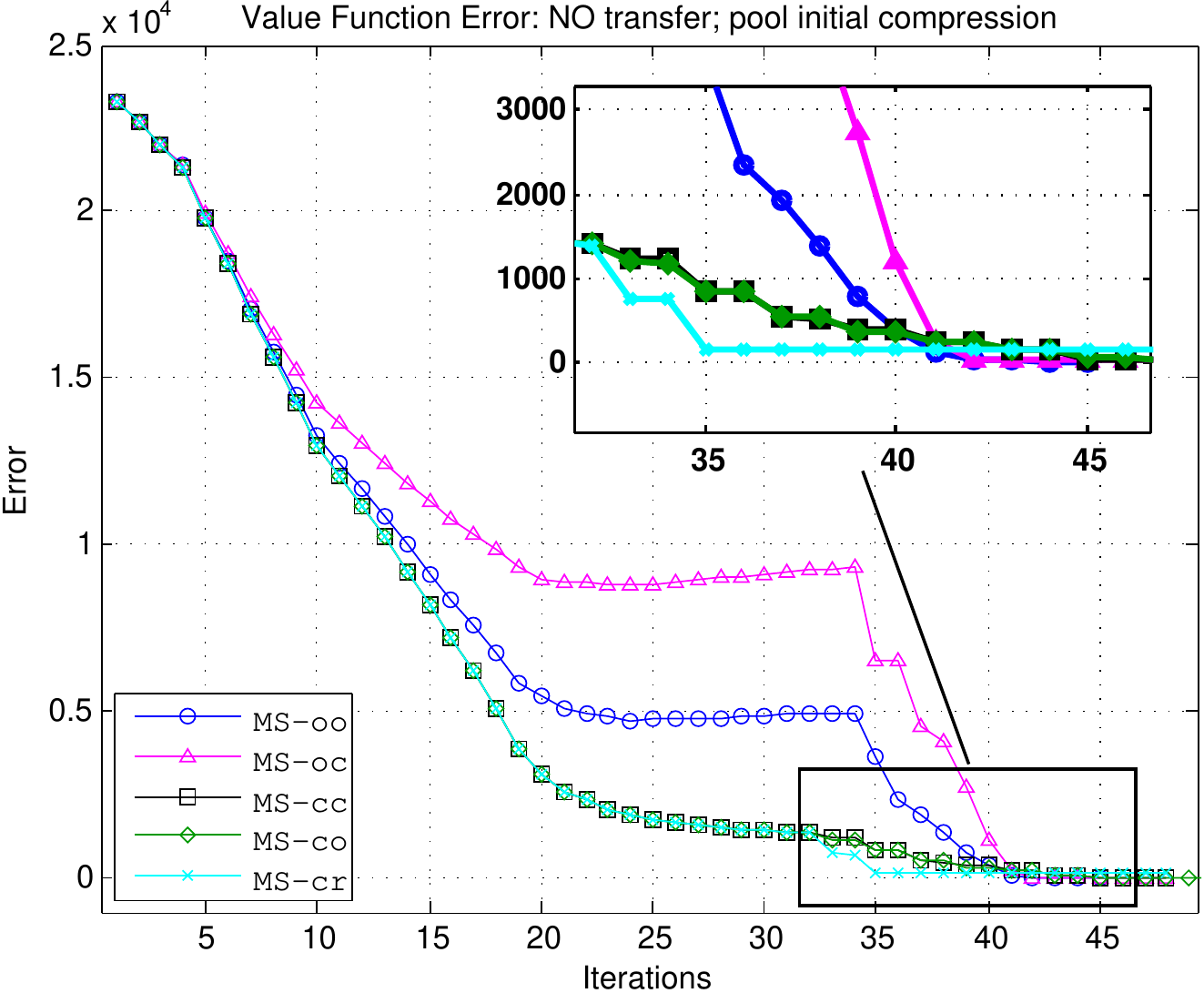}
\label{fig:gridworld-partial-notransfer-pool}
}
\\
\subfigure[][]{
\includegraphics[width=0.47\textwidth]{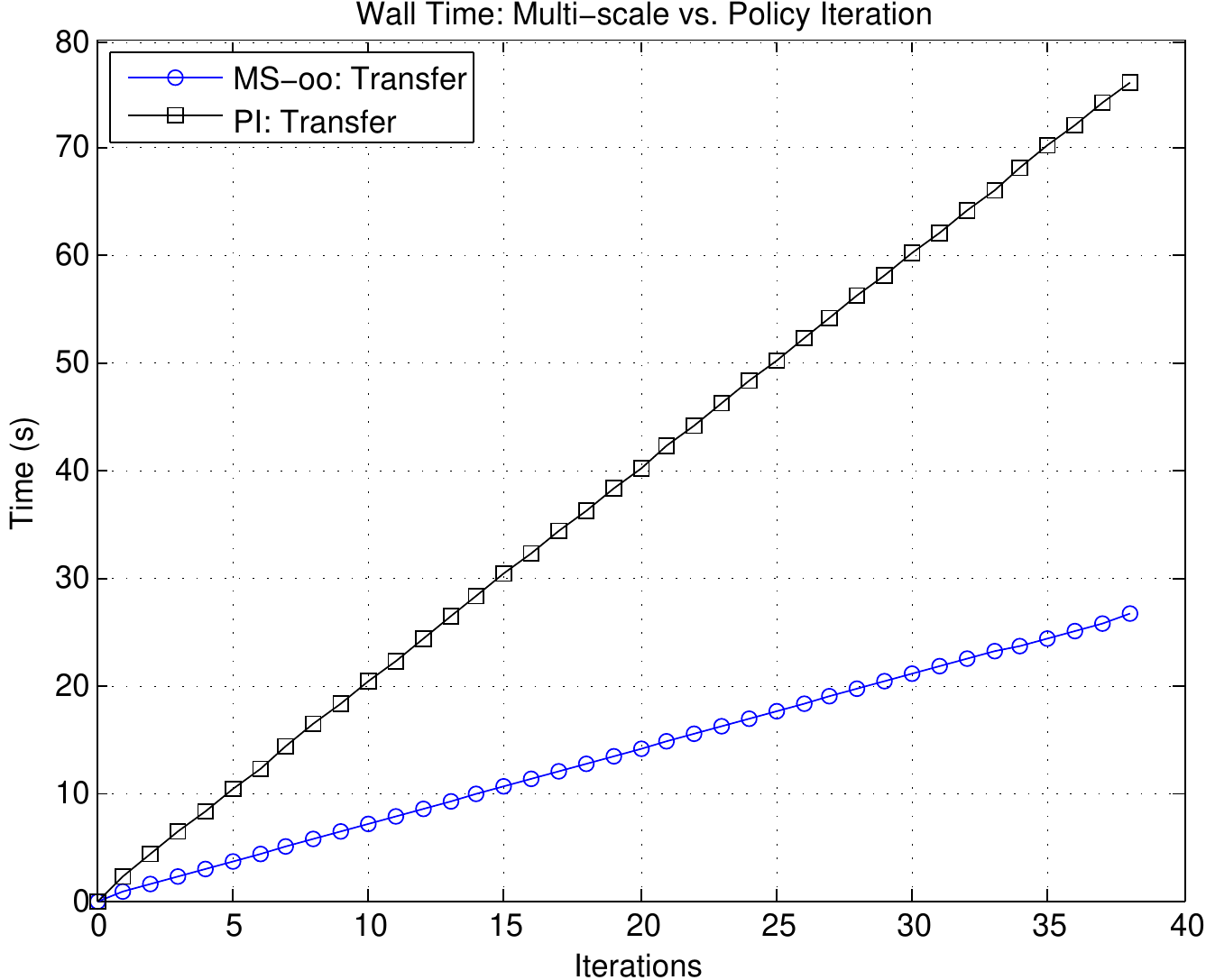}
\label{fig:gridworld-tr1-time}
}
\\
\subfigure[][]{
\includegraphics[width=0.47\textwidth]{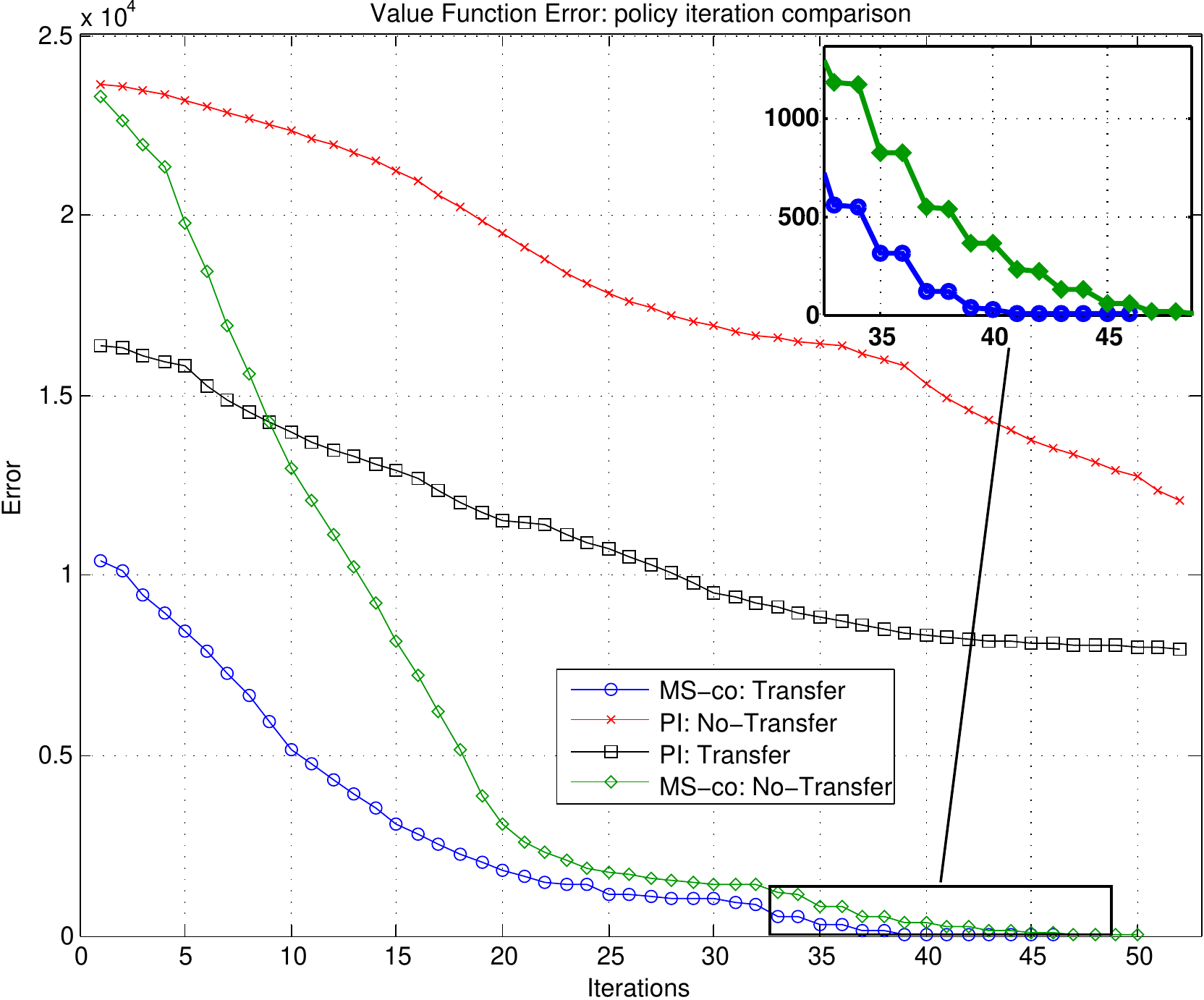}
\label{fig:gridworld-partial-compare}
}
\subfigure[][]{
\includegraphics[width=0.47\textwidth]{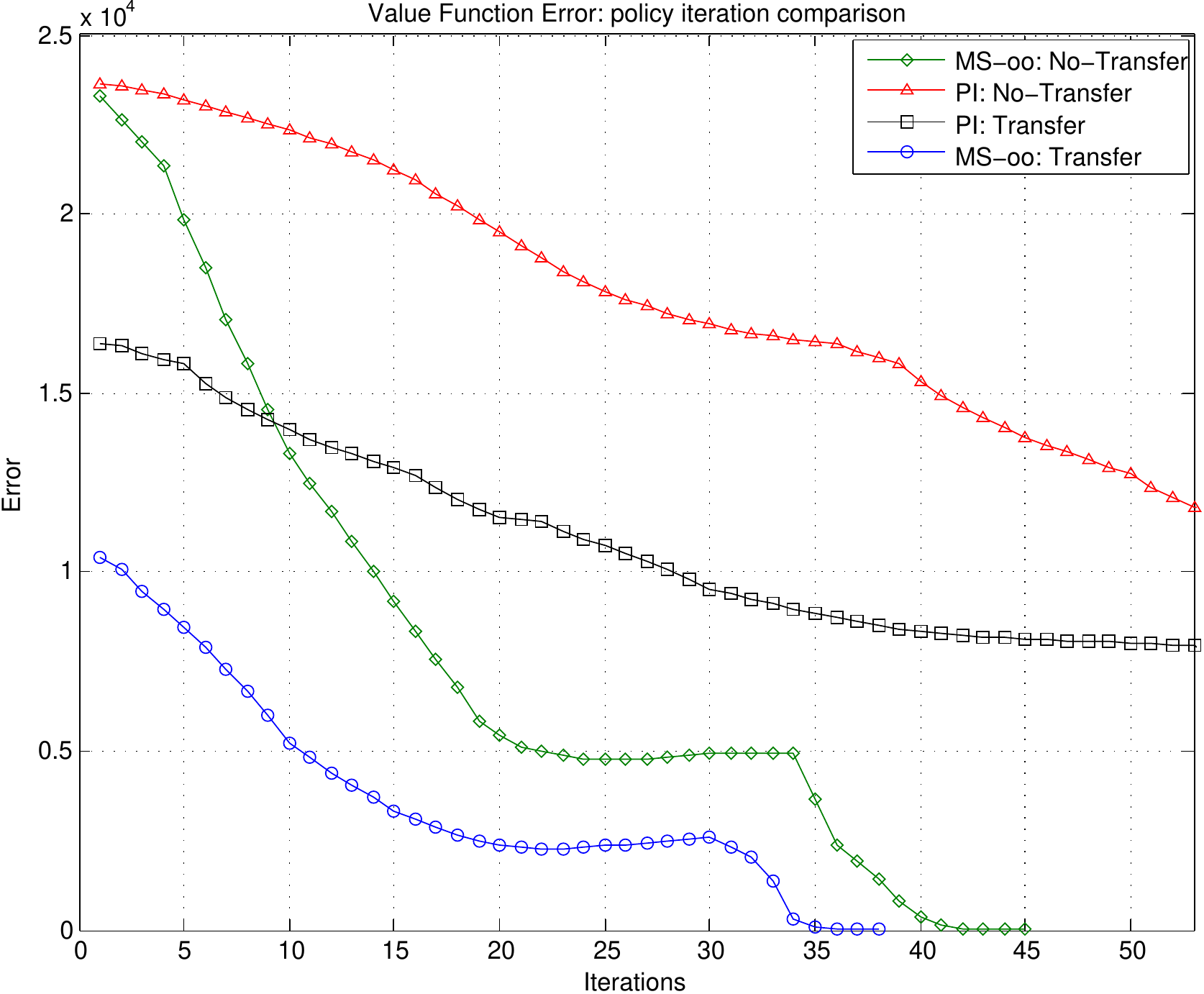}
\label{fig:gridworld-partial-compare-oo}
}
\caption{{\small\em Solution of the gridworld transfer problem described in Section~\ref{sec:grid_expt}: comparison among various multiscale algorithms, both with and without transfer, and comparison to policy iteration. See text for details.}}
\label{fig:gridworld-partial}
\end{figure}

\noindent{\em Transfer Problem Solution:} Several multiscale algorithms listed in Table~\ref{tab:expt-algs} were evaluated, with the transferred (fine scale) policy serving as the initial policy for local policy iteration in the destination problem. In all cases, the initial coarse scale value function was obtained by solving the coarse MDP given by compression with respect to the pool of local policy guesses discussed in Section~\ref{sec:cluster_pols}, and in all experiments, the blending parameter appearing in policy updates  was set to $\lambda=1$, thereby imposing a greedy policy updating convention. 

Plots in Figure~\ref{fig:gridworld-partial} with $y$-axis labeled ``Error'' show the Euclidean
distance between the value function after $t$ iterations ($x$-axis) of the given algorithm and the optimal value function for this problem\footnote{In this experiment and those that follow, we will use the $L_2$ norm to measure error rather than $L_{\infty}$, as it is a more revealing indicator of  progress over the entire statespace in question.}.  Inset plots detail boxed regions. See Table~\ref{tab:expt-algs} and surrounding discussion for a description of the algorithms and their labels. The default, fine-scale initial policy for experiments without transfer is an arbitrary deterministic policy that always chooses the \texttt{up} action.

Figures~\ref{fig:gridworld-partial-transfer-pool} and~\ref{fig:gridworld-partial-notransfer-pool}  show the performance of various multiscale algorithms with and without fine scale policy transfer, respectively.  Comparing the scale of the vertical axes across the two plots,  transfer provides a good warm start for all algorithms. Because the initial coarse value function solves an MDP compressed with respect to a pool of initial policy guesses, we may expect that the initial coarse value function is trustworthy. For gridworld-type problems in particular, this is a reasonable expectation. Comparing curves within the figures, it is clear that the coarse initial data is indeed good: algorithms which leverage the initial condition as far as possible, and iterate inside \clusterstext to convergence before updating the boundary  (``MS-\{\texttt{cc,co,cr}\}'' traces), perform better than the algorithms that only iterate over interiors once  between boundary updates (``\{MS-\texttt{oo,oc}\}'' traces). Algorithms updating the bottlenecks by recompression (``MS-\texttt{cr}'' traces) are seen to  converge to a suboptimal value function, however the corresponding policy is optimal after iteration 29 in Figure~\ref{fig:gridworld-partial-transfer-pool} and after iteration 33 in Figure~\ref{fig:gridworld-partial-notransfer-pool}. MS-\texttt{cr} is the single best algorithm for solving the gridworld problem, both with and without transfer, and MS-\texttt{co} is the best algorithm not involving recompression.

%
%The following table summarizes the experimental conditions we have considered:
%\begin{center}
%\begin{tabular}{c|c|c}
%Figure & Initial Compression & Transfer \\\hline
%Figure~\ref{fig:gridworld-partial-transfer-pool} & pool & \texttt{cr}  \\
%Figure~\ref{fig:gridworld-partial-notransfer-pool} & pool & \texttt{cc} \\ 
%Figure~\ref{fig:gridworld-tr1-time} & pool & \texttt{cr,cc} \\
%Figure~\ref{fig:gridworld-partial-compare} & diffusion & \texttt{or} \\ 
%Figure~\ref{fig:gridworld-partial-compare-oo} & diffusion & \texttt{oc}
%\end{tabular}
%\end{center}

Because we have only considered fine-scale policy transfer, we can compare to the performance of the canonical policy iteration algorithm given the transferred policy as the initial condition\footnote{Note that in general if there is transfer at coarse scales, then such a comparison is not possible since the policy iteration algorithm cannot directly take advantage of coarse information.}. Figures~\ref{fig:gridworld-partial-compare} and~\ref{fig:gridworld-partial-compare-oo}  respectively compare policy iteration with and without transfer to multiscale algorithms \texttt{co} and \texttt{oo}, with and without transfer. The multiscale algorithm curves in these figures are the same as the corresponding curves in Figures~\ref{fig:gridworld-partial-transfer-pool} and~\ref{fig:gridworld-partial-notransfer-pool}. Comparing curves within plots, it is clear that the transferred policy also provides policy iteration with a helpful warm start. However, considering the difference in starting error between multiscale transfer/no-transfer curves to those of policy iteration, the multiscale algorithms are better able to take advantage of the transferred information. The improvement of the multiscale no-transfer curve over the policy iteration no-transfer curve reflects the improvement due to both coarse information as well as the multiscale approach. The two multiscale algorithms converge to optimality in about the same number of iterations, however for this domain MS-\texttt{co} exhibits stronger non-monotonicity.

As discussed in Section~\ref{sec:mdp_solution}, there are two primary reasons why the multiscale algorithm can perform better than policy iteration: The multiscale algorithm starts with coarse knowledge of the fine solution given by the solution to the compressed MDP, and the multiscale approach can offer faster convergence since convergence of local (\clustertext) policy iteration is constrained by faster mixing times within \clusterstext rather than slow times across \clusterstext.
These are reasons why Algorithm~\ref{alg:hierarchy-solve} can converge in fewer iterations. But an iteration-count comparison to vanilla policy iteration is not entirely fair because each iteration of the multiscale algorithm is significantly cheaper, as described in Section~\ref{sec:ms-alg-complexity}. Figure~\ref{fig:gridworld-tr1-time} shows elapsed wall time after $t$ iterations for policy iteration and MS-\texttt{oo} algorithms. Experiments were conducted on a dual Intel Xeon E5320 machine running 64-bit MATLAB release 2010b under Linux, with no parallelization or  optimization beyond the default BLAS/Atlas multicore routines embedded into native MATLAB linear algebra calls. It is evident that the multiscale algorithm compares favorably to policy iteration, both in terms of total iterations and per-iteration scaling in time. We note that the ratio of the slopes in Figure~\ref{fig:gridworld-tr1-time} is specific to this example. In general this ratio is determined by the cardinality of the statespace and the number of identified \clusterstext (statespace graph cuts), and will vary across problems.

%% TODO:
%%   - Explain why we chose to use the "bad" initial deterministic policy we did.
%%   - Explain that the point is not really the number of iterations, but to illustrate the 
%%     transfer mechanism.
%- number of iterations is not dramatic b/c these problems are easily solvable.
%- emphasize that setup of transfer and detection is our point, not the actual performance.

%%%%%%%%%%%%%%%%%%%%%%%%%%%%%%%%%%%%%%%%%%%%%%%%%%%%%%%%%%%%%%%%%%%%%%%%%%%%%%%%%%%

% Pendulum topics to address:
% domains (default/transfer problems).
% discretization/simulation. defining abs states.
% MMDP construction: clustering of states into rooms/BNs.  2 layer hier.
% Policy transfer: room correspondence 
%     statespace correspondence; transfer detection, fine scale policy transfer.
% transfer problem solution

\subsection{Continuous Two-Task Cart-Pendulum Domain}
\label{sec:pend-expt}
The cart-pendulum problem is a classic continuous control task where an inverted pendulum attached to a cart on a track must be balanced by applying force to the cart. We consider a slightly more complex domain in which there are two additions: the cart must be moved to a particular goal location, and at some portions of the track the pendulum is held fixed and does not need to be balanced, while in other regions the pendulum is free and must be balanced. In all simulations, the length of the pendulum is $0.5$m, its mass is $1$kg, and the mass of the cart is $5$kg. There are three actions, corresponding to applying horizontal forces of $-20$N, $0$N, or $+20$N to the cart. These control inputs are subsequently corrupted at each time step by i.i.d. additive, zero-mean Gaussian noise with standard deviation $\sigma=5$. Three state variables are used, $\{\theta, \dot{\theta}, x\}$, where $\theta$ is the angle of the pendulum from the vertical, $\dot{\theta}=\tfrac{d\theta}{dt}$, and $x$ is the horizontal position along a track spanning the interval $[-30\text{m},30\text{m}]$. If the pendulum falls over ($|\theta| = \pi/2$) or the end of the track is reached ($|x|=30$), a reward of $-1$ is received, and the simulation ends. Unless otherwise noted below, at any other state a reward of $0$ is received. Within this domain we will consider two different tasks:

\begin{description}
\item[{\normalfont Default ($\problemone$):}]
The goal of the default task is to move the cart to position $x=+20$ along the track, whereupon a reward of +100 is received, and the simulation ends. If the cart is at any position $x>0$, the pendulum is held fixed in the upright position $(\theta=0,\dot{\theta}=0)$, but the cart is free to move. Otherwise, if $x\leq 0$ the pendulum is able to move freely as usual and must be balanced. If a simulation is started at some initial position $x_0 < 0$, then two sub-tasks must be solved in order: (1) The pendulum must be balanced while moving right until reaching $x=0$ (``\texttt{balance}''), and (2) the pendulum is held upright but must be carried while moving right towards $x=20$ (``\texttt{carry}'').
\item[{\normalfont Transfer ($\problemtwo$):}] 
The goal of the transfer task is the same: the cart must be moved to the position $x=+20$ along the track, whereupon a reward of $+100$ is received, and the simulation ends. The regions where the pendulum must be carried vs. balanced are swapped, however. If the cart is at any position $x<0$, the pendulum is held fixed at $(\theta=0,\dot{\theta}=0)$. Otherwise, if $x\geq 0$ the pendulum is able to move freely and must be balanced. Thus, for simulations starting at some initial position $x_0 < 0$,  the two sub-tasks that must solved occur in the opposite order (\texttt{carry},\texttt{balance}): (1) Carry the pendulum while moving right until reaching $x=0$, and then (2) balance the pendulum while moving right.
\end{description}

The goal of transfer for this domain is to convey the ability to carry or balance the pendulum. The agent must still learn when to apply these skills, and in which order. Although this particular pair of problems involves only two sub-tasks, it serves to illustrate how transfer within the multiscale framework we have described can be achieved in the context of a continuous control problem. Furthermore, in contrast to the other example domains, the domain described here has a multiscale structure induced by changes in the {\em intrinsic dimension} of the statespace; diffusion geometry is less helpful here. In light of these differences, we will consider alternative choices for the statespace partitioning and graph matching below.\\

\subfiglabelskip=0pt
\begin{figure}[p]
\centering
\subfigure[][]{
\includegraphics[width=0.8\textwidth]{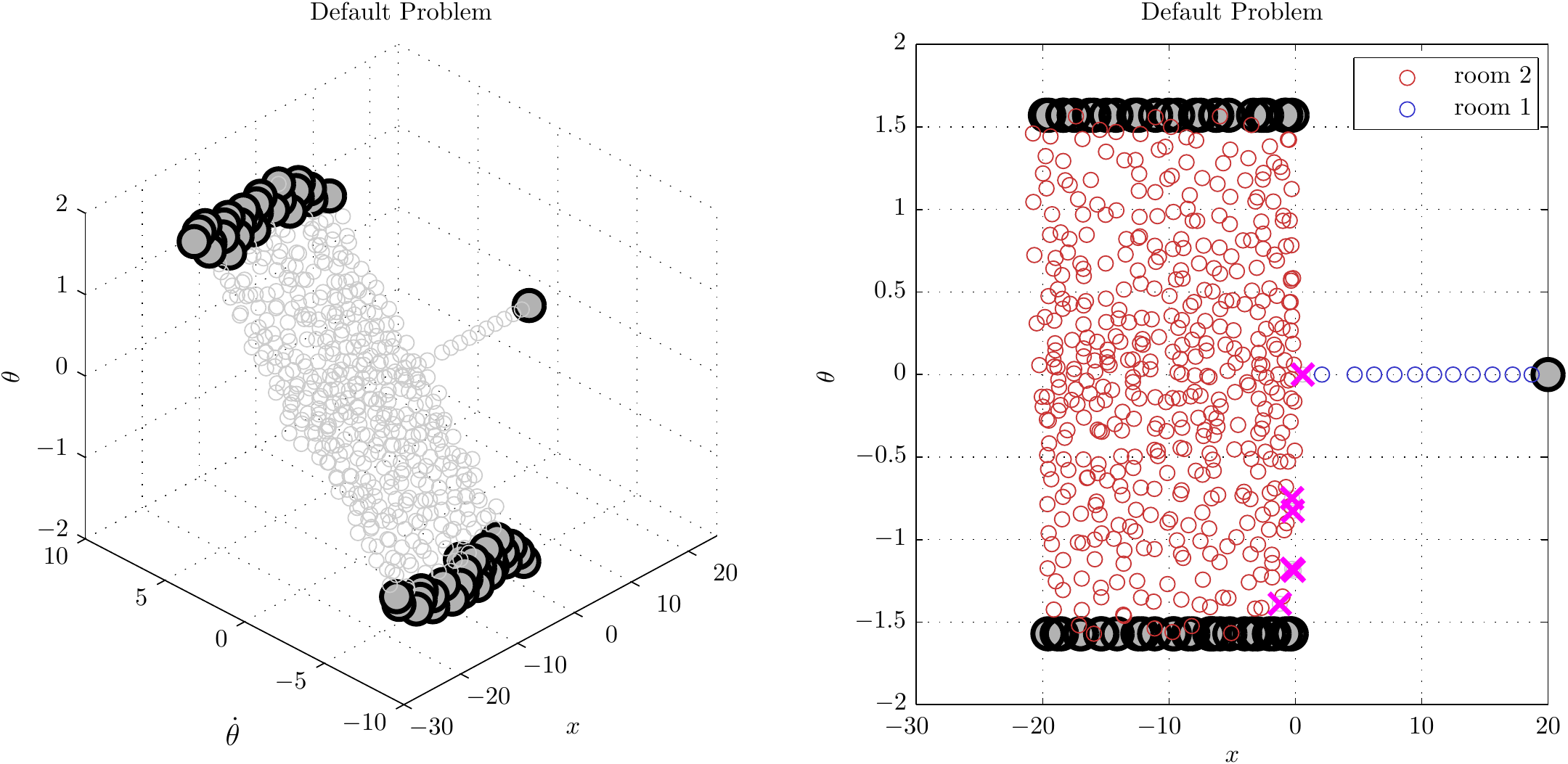}
\label{fig:pend-statespace-default}
}
\\
\subfigure[][]{
\includegraphics[width=0.8\textwidth]{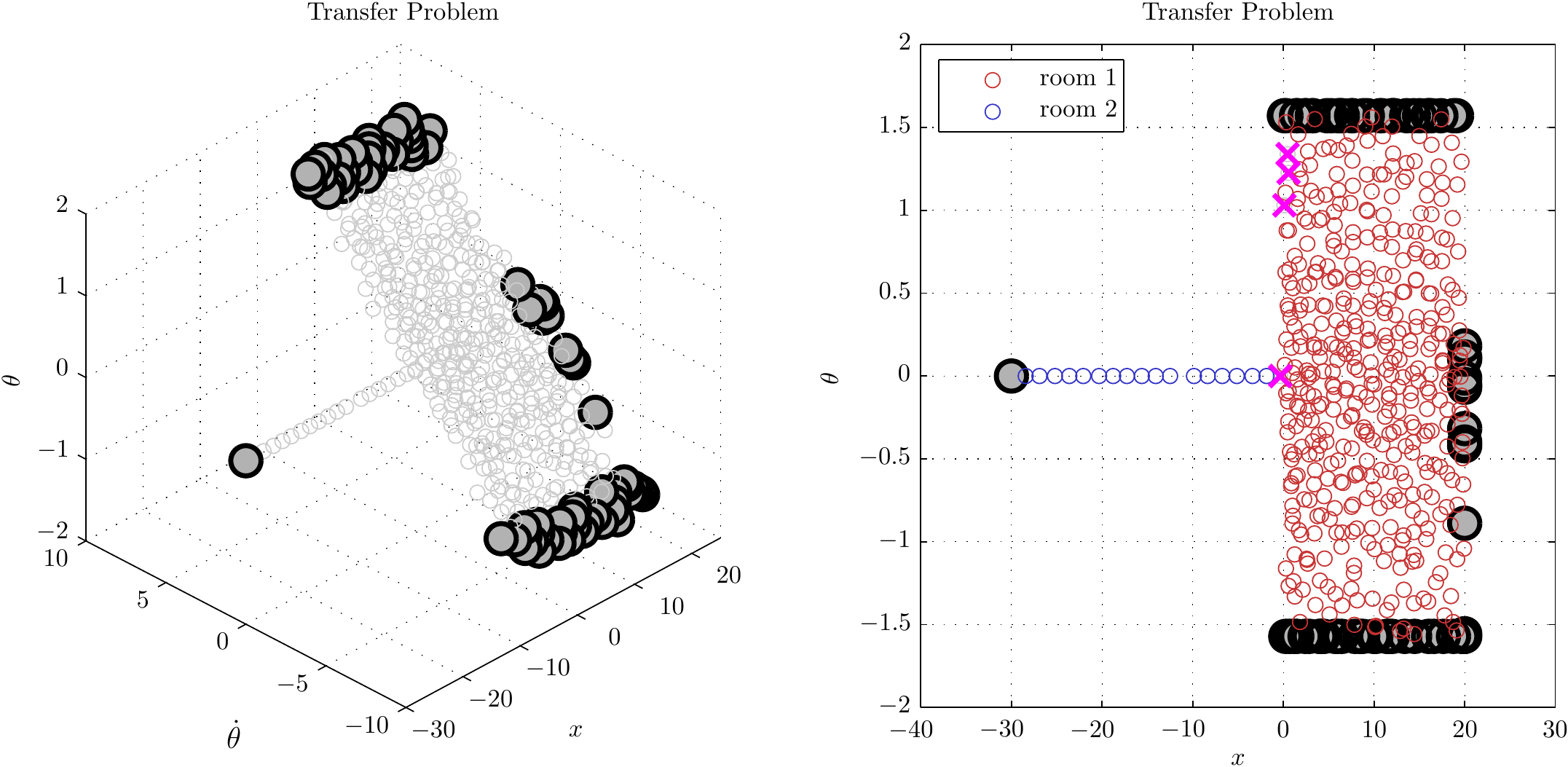}
\label{fig:pend-statespace-transfer}
}
\caption{{\small\em Discretized statespaces for the two different pendulum control problems described in Section~\ref{sec:pend-expt}. A fine scale policy is transferred from the default problem having the  statespace shown in~\subref{fig:pend-statespace-default}, to the transfer problem with statespace in~\subref{fig:pend-statespace-transfer}. Large circles indicate terminal states, while {\normalfont \texttt{x}} indicates non-absorbing bottleneck states. See text for details.}}
\label{fig:pendulum-statespace}
\end{figure}

\noindent{\em Simulation and Statespace Discretization:}
For both problems above, fine scale MDPs were estimated from Monte-Carlo simulations.  For each problem, we simulated the systems with the diffusion (uniform random) policy for 8000 episodes. The initial state\footnote{Our simulator uses the additional state variable $\dot{x}=\tfrac{dx}{dt}$ internally, however this variable is ignored externally (that is, during clustering, policy evaluation, etc.).} for each episode was drawn according to $\theta,\dot{\theta}\sim\mathcal{U}[-0.2,0.2], x\sim\mathcal{U}[-20,20], \dot{x}\sim\mathcal{U}[-0.1,0.1]$, where $\mathcal{U}[a,b]$ denotes the uniform distribution on the interval $[a,b]$. The simulation stepsize was set to $\Delta t=0.1s$, giving a 10Hz control input.  

The resulting samples were normalized to have equal range in each coordinate, and clustered according to a $\delta$-net with $\delta$ chosen so that we obtained approximately 500 representative states from the pool of samples. These 500 states determined the discrete statespace on which the given MDP was defined. Absorbing states reached during the simulation were separately clustered and the resulting cluster representatives added to the previous 500 states. This ensured that the terminal boundaries of the problem were clearly represented in the discretized statespace. The above procedure was applied separately to each problem, and the final size of the statespaces were 502 and 515, of which 56 and 75 were terminal, for the default and transfer problems respectively. Figures~\ref{fig:pend-statespace-default} and~\ref{fig:pend-statespace-transfer} show plots of these discretized statespaces. Within each figure, the left-hand plot shows states as circles in $(x,\theta,\dot{\theta})$ coordinates, with large dark circles indicating terminal states. Right-hand plots graph  states in $(x,\theta)$ coordinates.\\

\noindent{\em MMDP Construction:} Given the simulation samples and clusters defined above, transition and reward statistics between the $\delta$ regions claimed by the representative states were computed to estimate $P(s,a,s')$ and $R(s,a,s')$ for the respective fine scale MDPs. Absorbing boundaries were enforced by forcing MDP states which had absorbing samples as their closest neighbor (out of all samples) to be absorbing. If any states were subsequently rendered unreachable, they were removed from the problem.  Rewards for terminal states were similarly enforced by imposing the reward received at the  neighbors nearest to the designated absorbing MDP states. Collectively these steps ensured that the absorbing rewards and states -- the boundary conditions -- were sufficiently captured in the translation from a continuous to a discrete problem.

To define a coarse scale MDP, we next partitioned the statespace into \clusterstext\ and identified bottlenecks. For the problems described above, however, geometry is not as helpful, and we chose to pursue an approach different from the spectral clustering algorithm described in Section~\ref{sec:clustering}. Here, there is a natural partitioning of the statespace based on intrinsic  dimension; balancing is a 3D task, while carrying (without balancing) is a 1D task. Thus, we detected where dimension changes in the statespace take place, and partitioned accordingly. The right-hand plots in Figures~\ref{fig:pend-statespace-default} and~\ref{fig:pend-statespace-transfer} respectively show the result of this partitioning for the default and transfer problems. States are colored according to membership in one of two resulting \clusterstext. One can see that the \clusterstext clearly correspond to sub-tasks \texttt{carry} and \texttt{balance}. Terminal states are marked by large gray circles, and non-absorbing bottlenecks are indicated with (magenta) \texttt{x}'s. As would be expected, in both problems the state just at the interface of the two \clusterstext is a bottleneck. Some additional bottlenecks also result from the partitioning.

On the basis of the \clusterstext and bottlenecks shown in Figures~\ref{fig:pend-statespace-default} and~\ref{fig:pend-statespace-transfer}, the transfer problem was compressed once with respect to the diffusion policy\footnote{In this simple example, we illustrate the core ideas by carrying out only fine scale policy transfer into the finest scale of a hierarchy; thus, only the destination problem needs to be compressed. For general transfer into arbitrary levels of a hierarchy, compression of both problems would be required.}, assuming a fine scale uniform discount rate $\gamma=0.99$.\\

\noindent{\em Policy Transfer:}
We next established \clustertext and fine scale statespace graph correspondences in order to transfer the fine scale policy from \problemone to \problemtwo. The action sets across problems are already in correspondence, so action mapping was not pursued\footnote{In more complex problems where this may not be as obvious, the procedure described in Section~\ref{sec:policy-xfer} may be applied.}. Since the statespaces partition into \clusterstext on the basis of dimension, we simply matched \clusterstext  having the same  dimension across problems. In this way, the \texttt{carry} and \texttt{balance} subtasks were easily placed into correspondence (respectively).

To construct a fine scale graph matching, we assumed that the statespace coordinate systems were already aligned across problems (that is, we assumed we knew which coordinate in \problemone corresponds to the coordinate for $x$ in \problemtwo, and similarly for $\theta$ and $\dot{\theta}$). Letting $S_i$ denote the statespace of $\problem{i}$, we then considered a state mapping $\phi:S_2\to S_1$ of the form
\[
\phi: (x,\theta,\dot{\theta})\mapsto \bigl(f(x),g(\theta),h(\dot{\theta})\bigr) .
\]
The \clustertext correspondence previously established induces a natural mapping between $x$ coordinates; for instance, we simply mapped the $x$ interval $(-30,0]$ in \problemtwo onto the interval $[0,20)$ for \problemone to define $f$ on states within the \texttt{carry} \clustertext of \problemone. A similar mapping was constructed to define $f$ on states within the \texttt{balance} \clustertext.  The coordinate maps $g,h$ were taken to be the identity, since $\theta,\dot{\theta}$ are directly comparable across problems.  A statespace correspondence $\eta$ was then defined based on the nearest-neighbor Euclidean distance under $\phi$, assuming a neighbor search constrained to fall within matching \clusterstext. If $\clustertwo\in\problemtwo$ has been matched to $\clusterone\in\problemone$, then
\[
\eta(s) = \arg\min_{s'\in \clusterone} \|\mathsf{N}(s')-\mathsf{N}(\phi(s))\|_2,\qquad s\in\clustertwo,
\]
where $\mathsf{N}(\cdot)$ is the same coordinate-wise range normalization used in the state clustering steps above for \problemone. Given the fine scale state mapping $\eta$, the optimal fine policy for the default problem \problemone was transferred to \problemtwo by transferring separately within matched \clusterstext (that is, between matched sub-tasks). \\

\subfiglabelskip=0pt
\begin{figure}[p]
\centering
\subfigure[][]{
\includegraphics[width=0.47\textwidth]{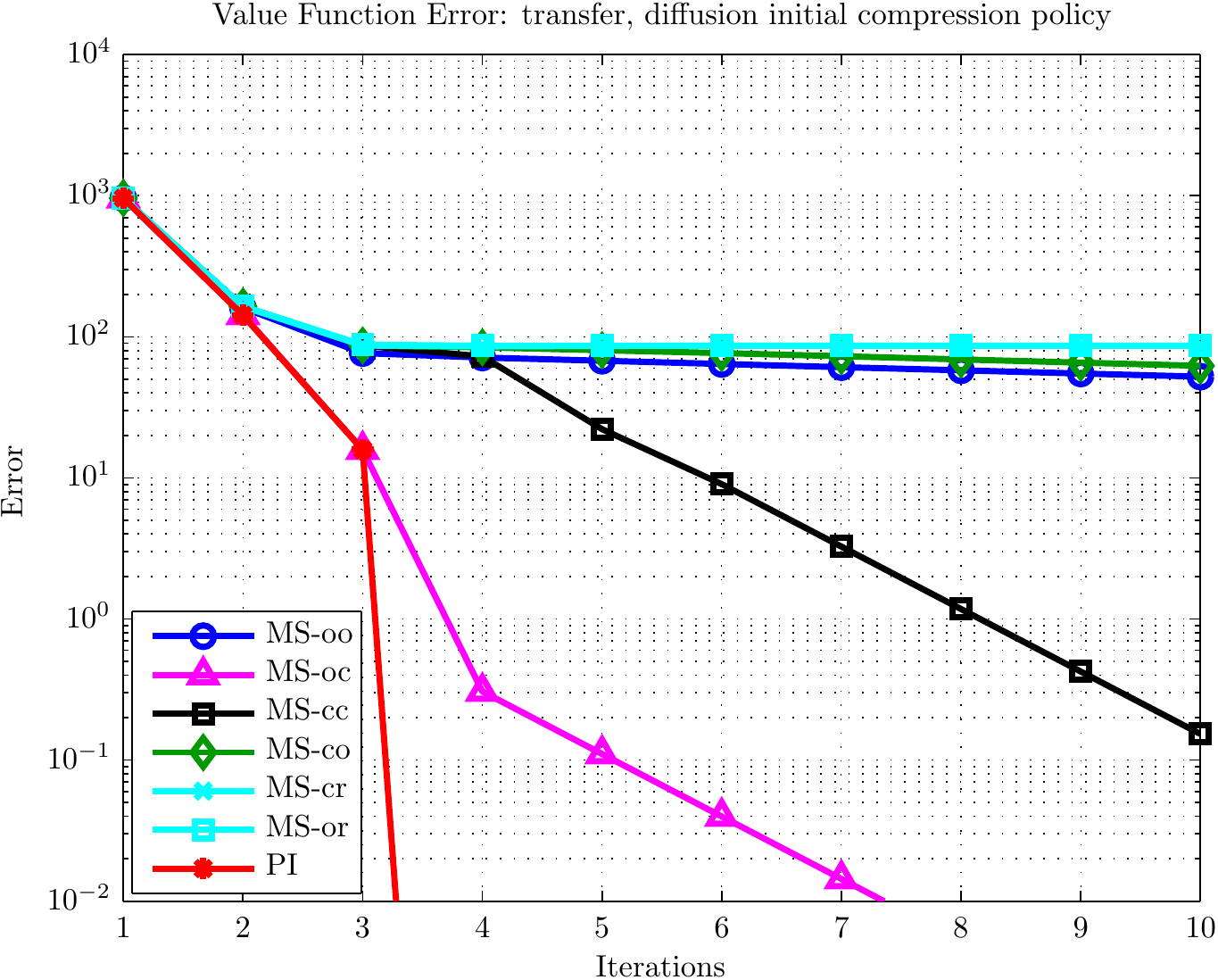}
\label{fig:pend-transfer}
}
\subfigure[][]{
\includegraphics[width=0.47\textwidth]{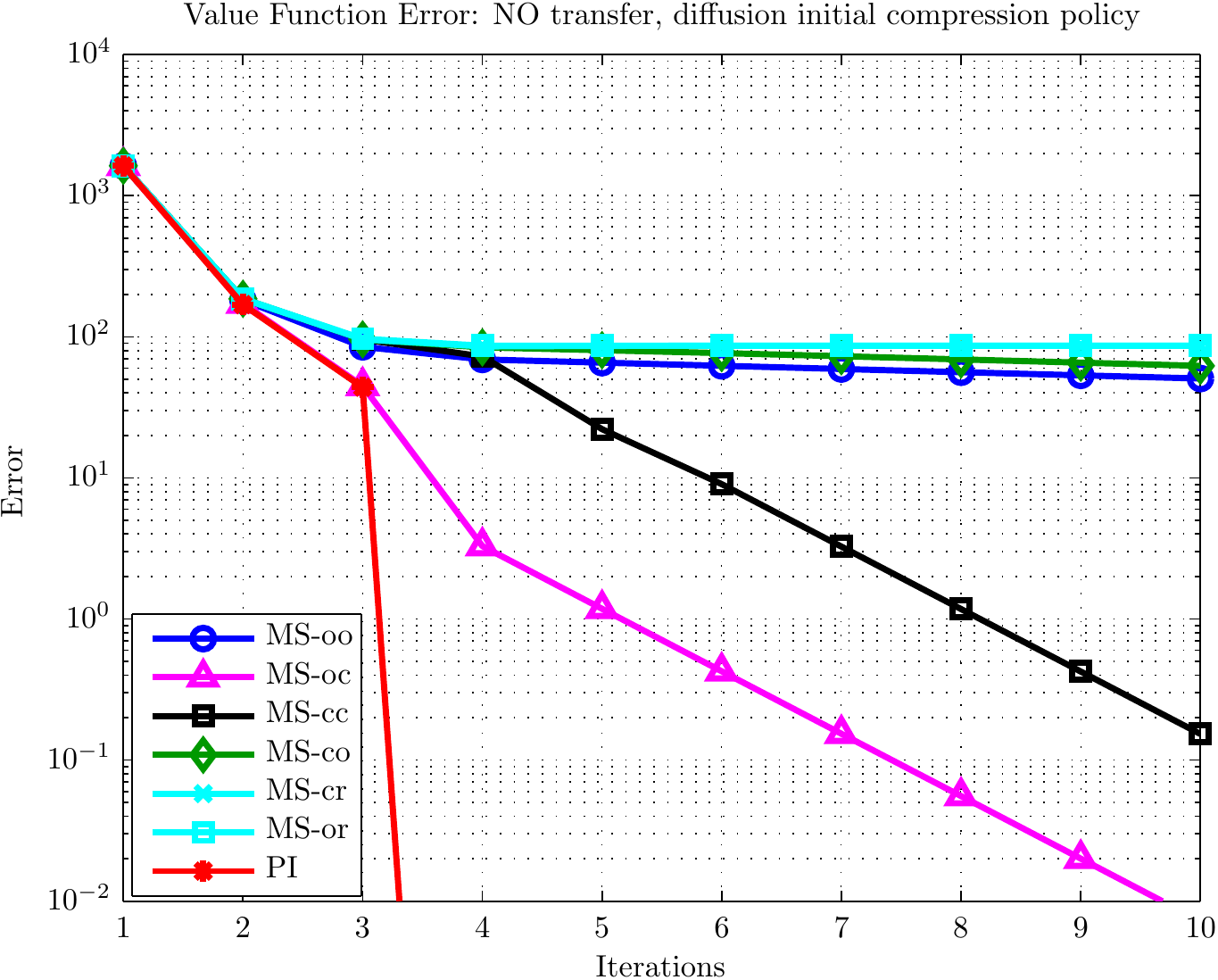}
\label{fig:pend-notransfer}
}
\subfigure[][]{
\includegraphics[width=0.47\textwidth]{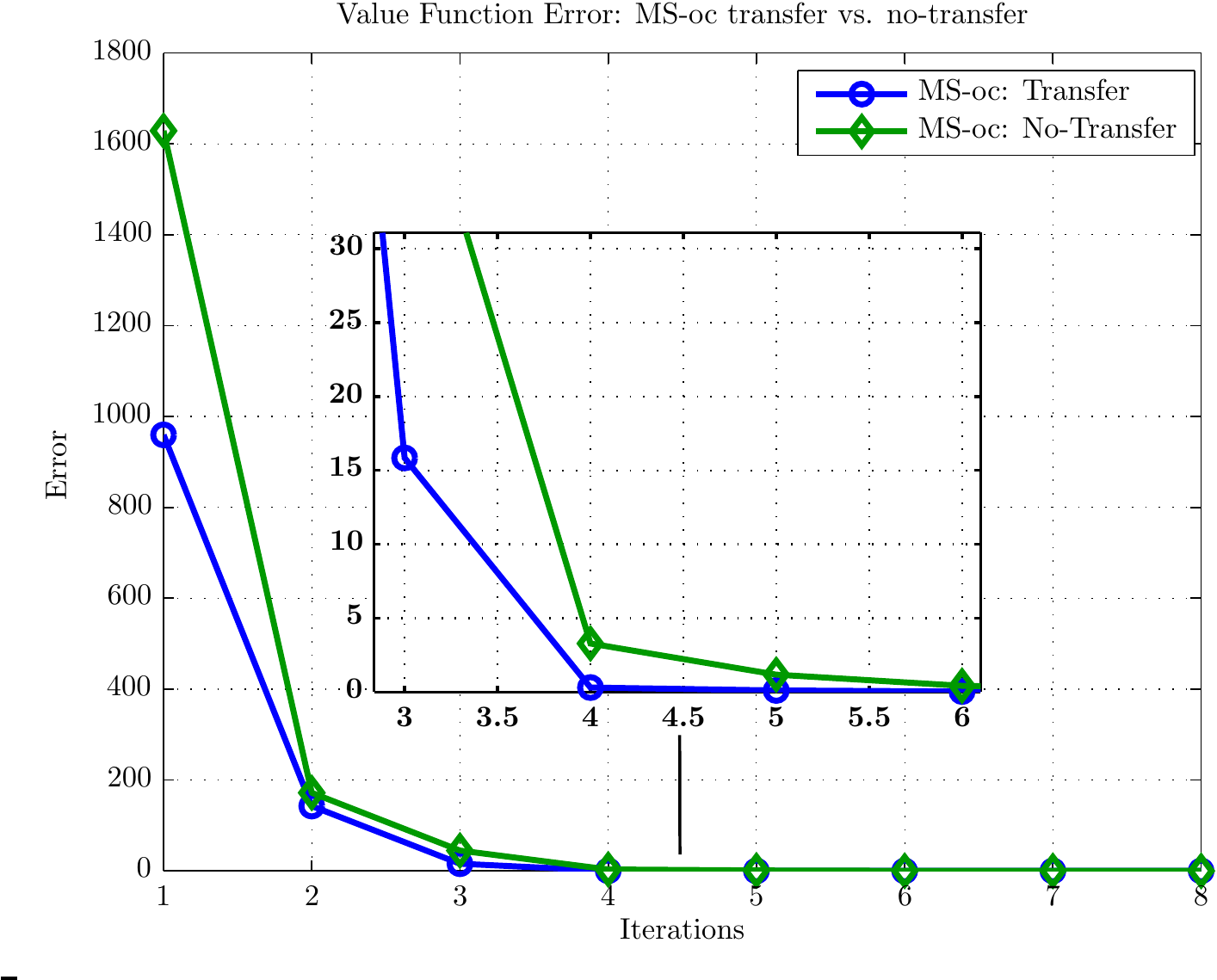}
\label{fig:pend-compare-oc}
}
\subfigure[][]{
\includegraphics[width=0.47\textwidth]{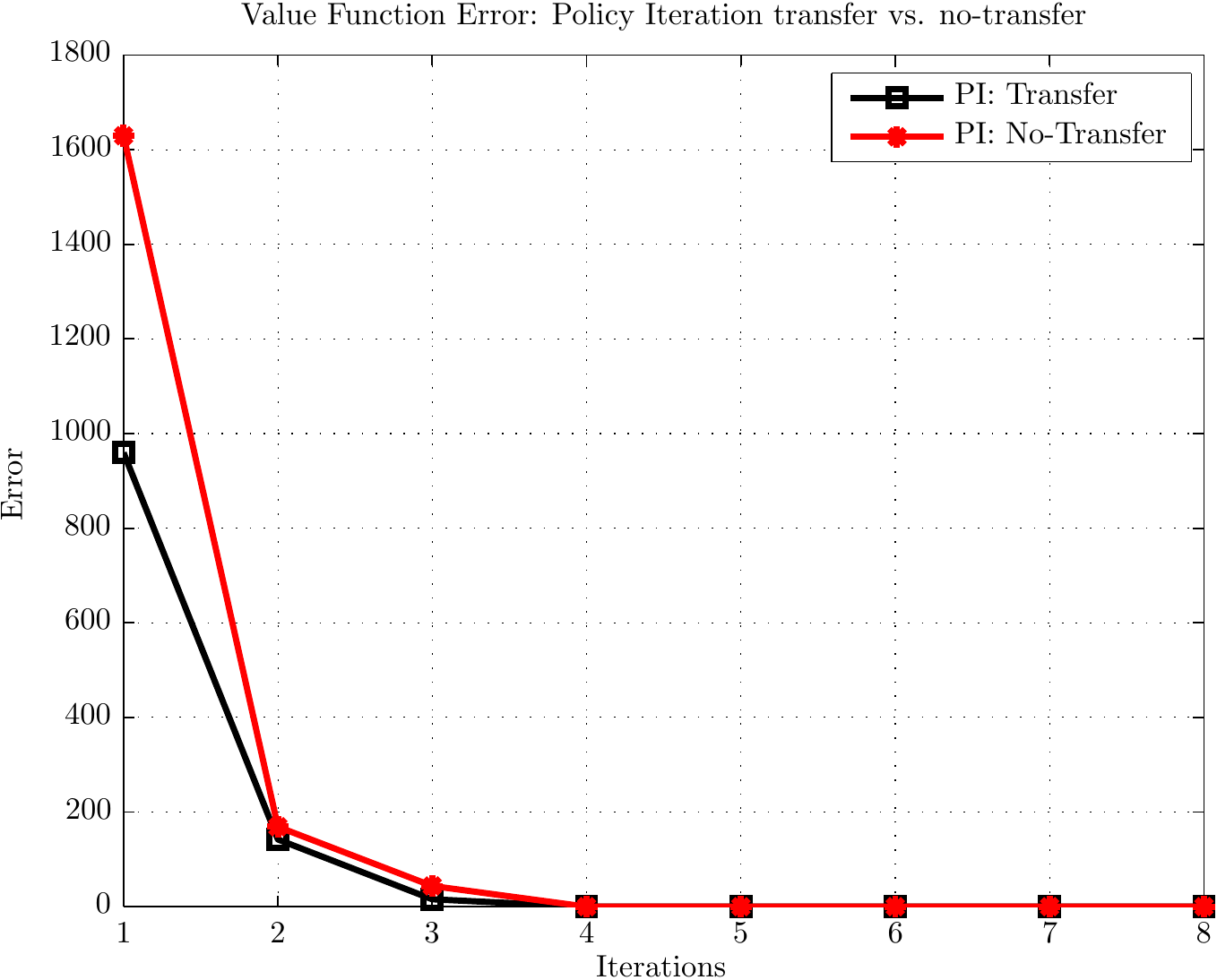}
\label{fig:pend-compare-pi}
}
\caption{{\small\em Solution of the pendulum transfer problem described in Section~\ref{sec:pend-expt}: Comparison among multiscale algorithms both with and without fine scale policy transfer, and comparison to policy iteration. See text for details.}}
\label{fig:pendulum-iters}
\end{figure}

\noindent{\em Transfer Problem Solution:}
We applied several multiscale algorithms listed in Table~\ref{tab:expt-algs} (see surrounding text for details describing these algorithms and notation) to explore the impact of the transferred policy.  Experiments with transfer and without are compared, and we compare to ordinary policy iteration. Where transfer was considered, the initial fine scale policy was the transferred policy. The default, fine-scale initial policy for experiments without transfer is an arbitrary deterministic policy that always chooses the action which applies no force to the cart. In all multiscale experiments, the initial coarse value function was the value function solving a coarse MDP resulting from compression with respect to the diffusion policy. The blending parameter appearing in policy updates was set to $\lambda=1$.

In Figure~\ref{fig:pendulum-iters} we show plots detailing the Euclidean distance ($y$-axis, ``Error'') between the value function after $t$ iterations ($x$-axis) of the given algorithm and the optimal fine value function for \problemtwo. Figures~\ref{fig:pend-transfer} and~\ref{fig:pend-notransfer}  show the performance of the  multiscale algorithms as well as policy iteration (``PI''), respectively with and without fine scale policy transfer. Here, error is plotted on a logarithmic scale. Comparing these two plots at $t=1$ iteration, transfer provides a warm start for all algorithms. Furthermore, the improvement seen in Figure~\ref{fig:pend-transfer} can be entirely attributed to transfer, since the multiscale algorithms converge as fast as or slower than policy iteration, and do not contribute any improvement in convergence rate in and of themselves for this problem. (Of course, the complexity of each iteration is still much smaller for the multiscale algorithms compared to global algorithms such as policy iteration.) 

From Figures~\ref{fig:pend-transfer} and~\ref{fig:pend-notransfer} it is also evident that \texttt{MS-oc} is the single best multiscale algorithm, and has nearly the same rate of convergence as policy iteration (we note that after $t=3$ iterations the relative error has decreased to $<1\%$, and the problem is nearly solved). Figure~\ref{fig:pend-compare-oc} gives a more detailed view of the improvement transfer confers in the context of \texttt{MS-oc}. Error is plotted on a linear scale to make the difference more visible. Figure~\ref{fig:pend-compare-pi} compares policy iteration with and without the transferred policy as the starting point. Comparing Figure~\ref{fig:pend-compare-oc} to~\ref{fig:pend-compare-pi}, the improvement due to transfer as well as the convergence rates are seen to be similar. 

Algorithms involving recompression, \texttt{MS-cr} and \texttt{MS-or}, do not converge to optimality for this problem, and \texttt{MS-oo,MS-co} converge very slowly. That the \texttt{MS-oo,MS-co} algorithms converge slowly relative to \texttt{MS-oc} confirms the importance of the bottleneck near the origin, and by extension, the updates at this bottleneck. The fact that \texttt{MS-cc} converges more slowly than \texttt{MS-oc} suggests that either of the coarse or fine initial data contain some errors. Forcing too much of the initial coarse value function or fine scale policy results in poorer performance here. For this domain, however, we found that compression with respect to a pool of policies (Section~\ref{sec:cluster_pols}, simulations not shown) does not yield an initial coarse value function giving any better performance than the coarse value function derived from compression with respect to the diffusion policy. This is likely due to the fact that there are few non-absorbing bottlenecks, and only the bottleneck near the origin evidently plays a key role; the gradients obtained from either coarse value function contain comparable information in this case.

%%%%%%%%%%%%%%%%%%%%%%%%%%%%%%%%%%%%%%%%%%%%%%%%%%%%%%%%%%%%%%%%%%%%%%%%%%%%%%%%%%%%%%%%%
\subsection{Playroom Domain}
\label{sec:expts-playroom}
We consider a simplified version of the playroom domain introduced in~\cite{Singh2004,Barto:ICDL:04}. In our formulation, an agent interacts with four objects in a room: a ball, a bell, a music button and a light switch. The actions available to the agent are:
\begin{enumerate}\itemsep 0pt
\item Look at a randomly selected object. (Succeeds with probability 1).
\item Place a marker on the object the agent is looking at.
\item Press the music button.
\item Kick the ball towards the marker.
\item Flip the light switch.
\end{enumerate}
All actions except the first succeed with probability 0.75. In order to take the latter three actions, the agent must first be looking at the relevant object unless otherwise noted (see modifications below). If the ball is kicked into the bell, the bell rings for exactly one time period. If the light switch is flipped to the on position, the light stays on for exactly one time period, and then switches to the off state. The state is 5-dimensional, and consists of the following variables:
\begin{enumerate}\itemsep 0pt
\item Object the agent is looking at.
\item Object the marker is currently placed on.
\item Music on/off.
\item Bell on/off.
\item Light on/off.
\end{enumerate}

We will consider two pairs of problems within the playroom domain to illustrate compression and transfer. Each pair consists of a baseline task and a variation task. For a pair of tasks, the rules governing the environment will remain fixed, however the goal of the tasks will change. The objective is thus to apply knowledge gained from solving one problem in the environment towards solving another problem in the same environment.

To build MDP models, the tasks were independently simulated for 1000 episodes of maximum length 1000 actions. Each trial was ended upon reaching either the goal state, or the maximum number of actions. Since the statespaces are small, samples were simply binned according to the underlying state variables. Transition probabilities were then estimated empirically from the samples. Rewards were set to $+10$ for transitions to the goal state, and to $-1$ for all other transitions. In all cases, we fixed the discount parameter to $\gamma=0.96$. Next, the spectral clustering procedure described in Section~\ref{sec:clustering} was applied, stopping
after a single iteration. Note that one iteration of Algorithm~\ref{alg:spectral_clustering} can potentially result in more than two parts, since a single cut can produce multiple disconnected subgraphs. We next compressed the tasks once, assuming the diffusion policy at the fine scale. For the baseline tasks from which information is transferred, the MDP hierarchies were solved to the optimal solution following Algorithm~\ref{alg:hierarchy-solve}.

\subsubsection{Coarse Transfer Example}
\label{sec:playroom-simple}
In this example we illustrate the transfer of a coarse scale potential operator from one task to another. We will refer to the problem supplying the potential operator as the {\em default} problem, and the problem into which we transfer information as the {\em transfer} problem. The default problem is assumed to be solved, in the sense that coarse MDPs have been compressed with respect to optimal  policies, and the policy used to define the potential operator is optimal. 

For the pair of playroom problems we will explore in this section, the light is turned on for one period by taking the ``flip the light switch'' action with the marker on the bell, while looking at the bell. These are the same conditions for ringing the bell, only now the agent can alternatively flip the light on. The tasks are as follows:
%\begin{enumerate}[itemsep=0pt,leftmargin=2cm]
\begin{description}
\item[{\normalfont Default ($\problemone$):}]
The goal of the default task is to cause the bell to ring while the music is playing. Note that there is no action that will directly ring the bell. The agent must: look at the music button, press the music button, look at the bell, place the marker on the bell, look at the ball, and finally, kick the ball into the bell. Because the bell only rings for a single period, the music must be turned on before ringing the bell. In this task, {\em the light switch does not play a role}. Each episode begins with the agent looking at a random object, the marker on a random object, and all on/off objects in the off state.
\item[{\normalfont Transfer ($\problemtwo$):}] The goal is to flip the light switch to the on position while the music is playing. However, to reach the goal state the agent must still have the marker on the bell, and must be looking at the ball in order to take the ``flip light switch'' action. That is, the role of the light switch and ball kicking actions in solving the problem have been swapped. The agent must: look at the music button, turn on the music, look at the bell, place the marker on the bell, look at the ball, flip the light switch.
\end{description}
%\end{enumerate}
The difference between the default and transfer tasks is that the final action leading to the goal state has been switched and the underlying goal state itself has changed.

\begin{figure}[t]
\centering
\includegraphics[width=\textwidth]{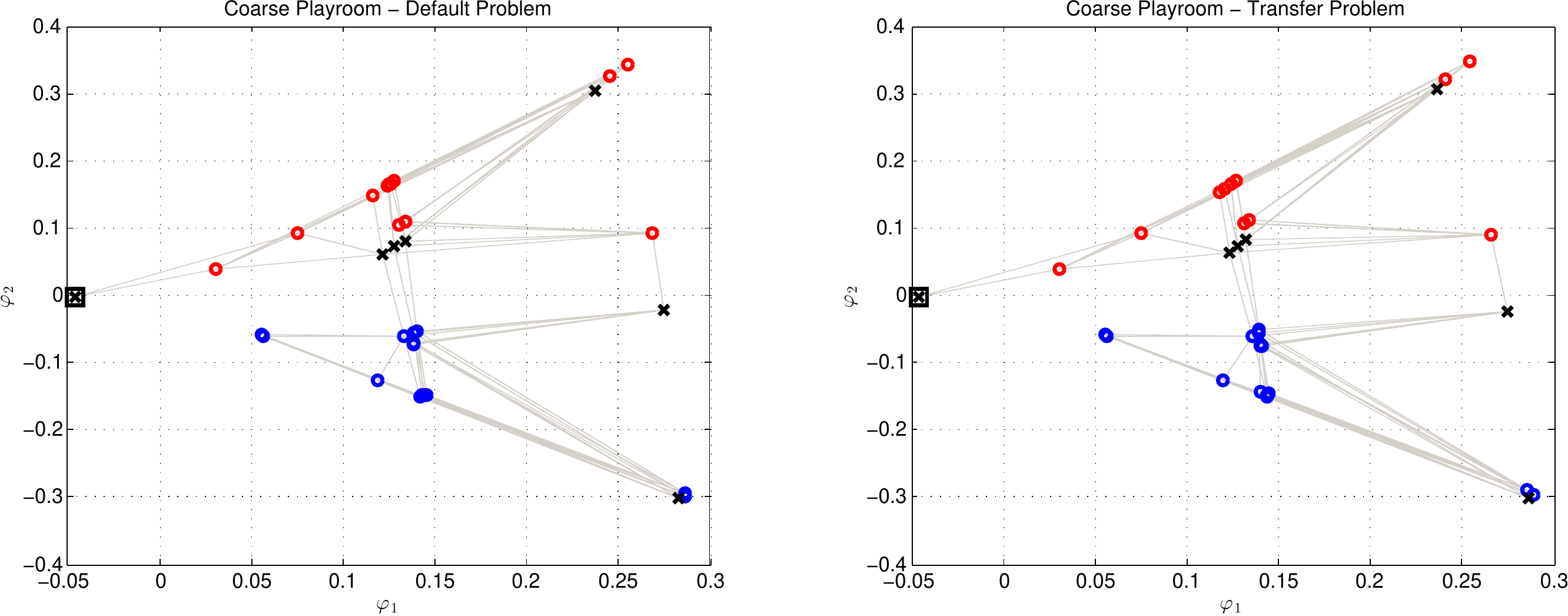}
\caption{{\small\em Diffusion map embeddings of the statespaces for the playroom coarse transfer example of Section~\ref{sec:playroom-simple}: default (left), and transfer (right) tasks. See text for details.}}
\label{fig:playroom-easy-diffmaps}
\end{figure}

Figure~\ref{fig:playroom-easy-diffmaps} shows a 2D diffusion map~\citep{Coifman:PNAS:05} visualization of the statespace graphs for each of the two tasks. Even with two coordinates, the graphs appear nearly identical so that statespace graph matching should not be difficult.  Bottlenecks are marked by '\texttt{x}',  ordinary states with '\texttt{o}', and goal (terminal) states are boxed. Non-bottleneck states are colored according to membership in one of the two possible identified \clusterstext. Edges connect pairs of states for which there is a non-zero transition probability given some action.  Although state {\em embeddings} may be similar, the underlying states can be very different. As can be seen from the plots, the goal states in particular have similar diffusion map coordinates, but of course represent different underlying states of the environment. For both tasks, spectral clustering resulted in two \clusterstext and seven bottlenecks. 

To demonstrate coarse transfer from the default task to the transfer task, we will:
\begin{enumerate}\itemsep -1pt
\item Match bottlenecks across problems by coarse scale statespace graph matching, following  Section~\ref{sec:graph-matching}.
\item Compute values for the transfer tasks's bottlenecks by transferring the default task's potential operator following Section~\ref{sec:potential-xfer}, along the coarse statespace correspondence determined in Step~(1). For this problem, the coarse action mapping determined following Section~\ref{sec:policy-xfer} is simply the canonical correspondence defined by executing the fine scale policy in a designated  \clustertext. If action $a$ in $\problemone$ means ``execute the fine policy in \clustertext\ $\clusterindex$'', then this action is mapped to an action $a'$ in $\problemtwo$ corresponding to ``execute the fine policy in \clustertext\ $\clusterindex^{\prime}$'', if \clustertext\ $\clusterindex^{\prime}$ is matched to \clustertext\ $\clusterindex$ following Section~\ref{sec:xfer-cluster-correspond}.
\item Push the coarse solution down to solve the transfer task at the fine scale following  multiscale Algorithm~\ref{alg:hierarchy-solve} and variants thereof listed in Table~\ref{tab:expt-algs}.
\end{enumerate}

\noindent{\em Bottleneck Matching:} Any graph matching algorithm may be used. We used the procedure described in Section~\ref{sec:graph-matching}, together with the matching algorithm of~\cite{Huang:AISTATS:11} (using their freely available MATLAB implementation), to confirm that the matching for this problem can be done easily. The bottlenecks and their correspondences were as follows (matched bottlenecks are listed on the same row):

\begin{center}
\begin{tabular}{l l}
Default  & Transfer \\
\texttt{(look, marker, music, bell, light)} & \\ \hline
\texttt{(ball, bell, on, on, off)} & \texttt{(ball, bell, on, off, on)} \\
\texttt{(music, ball, on, off, off)} & \texttt{(music, ball, on, off, off)} \\
\texttt{(music, music, off, off, off)} & \texttt{(music, music, off, off, off)} \\
\texttt{(music, bell, on, off, off)} & \texttt{(music, bell, on, off, off)} \\
\texttt{(music, light, on, off, off)} & \texttt{(music, light, on, off, off)} \\
\texttt{(bell, bell, off, off, off)} & \texttt{(bell, bell, off, off, off)} \\
\texttt{(bell, bell, on, off, off)} & \texttt{(bell, bell, on, off, off)}
\end{tabular}
\end{center}
The the goal states in each problem (top row) are successfully paired, while the rest of the bottlenecks are
identical across tasks.\\

\noindent{\em Transfer Problem Solution:}
We evaluated several multiscale algorithms listed in Table~\ref{tab:expt-algs} (see surrounding text for a description). For all algorithms, the blending parameter appearing in policy updates was set to $\lambda=1$. In all non-transfer experiments, the initial coarse scale value function was obtained by solving the coarse MDP given by compression with respect to the diffusion policy. The diffusion policy was chosen over the pool method in Section~\ref{sec:cluster_pols} for simplicity, so that actions across the two problems could be placed into a natural correspondence, and so that error due to the action mapping would not be conflated with other sources of error. In all experiments, the initial fine scale policy was chosen arbitrarily to be the (deterministic) policy which always takes the \texttt{look} action. 

Figure~\ref{fig:playroom-simple} compares performance among multiscale algorithms and to the canonical  policy iteration method. As before, vertical axes labeled ``Error'' show the Euclidean
distance between the value function after $t$ iterations ($x$-axis) of the given algorithm, and the optimal value function for this problem. Inset plots detail boxed regions. See Table~\ref{tab:expt-algs} for a description of the algorithms' labels. 

Figures~\ref{fig:playroom-easy-transfer} and~\ref{fig:playroom-easy-notransfer} show the performance of various multiscale algorithms with and without fine scale policy transfer, respectively.  Comparing  the two plots, transfer provides a good warm start (lower starting error), and also affords faster convergence (fewer iterations) for all multiscale algorithms. For transfer experiments (Figure~\ref{fig:playroom-easy-transfer}), the coarse initial value function comes from transfer, and we may reasonably assume it is reliable. For this reason, algorithms which leverage the initial coarse information as far as possible, and iterate inside \clusterstext to convergence before updating the boundary  (``MS-\{\texttt{cc,co,cr}\}'' traces), perform better than the algorithms that only iterate over interiors once  between boundary updates (``MS-\{\texttt{oo,oc,or}\}'' traces). In contrast, for the experiments which did not involve transfer (Figure~\ref{fig:playroom-easy-notransfer}), the MS-\{\texttt{cc,co,cr}\} algorithms exhibit slower convergence than the MS-\{\texttt{oo,oc,or}\} family. Since the initial coarse value function was derived from a fine scale diffusion policy in the no-transfer setting, we can conclude, as one would expect, that the initial coarse estimate was not entirely reliable. When there is potential operator  transfer, algorithms  MS-\{\texttt{cc,co,cr}\} are equally good for solving the problem, and MS-\texttt{cc} is the best algorithm not involving recompression. In the absence of transfer, MS-\texttt{or}  performs best and converges faster than policy iteration, while MS-\texttt{oc} is the best algorithm not involving recompression. Although in this example the recompression algorithms (``MS-\{\texttt{or,cr}\}'' traces) do not converge to the optimal value function, the sequence of policies do converge to the optimal policy. All of the multiscale algorithms reach optimal policies in fewer iterations than policy iteration in the case of transfer. When there is no transfer of information, and the initial coarse scale data is unreliable, then the multiscale algorithms not involving recompression can take more iterations to converge as compared to policy iteration. However, as mentioned previously, each iteration of the multiscale algorithms is substantially faster (involving local computations) than iterations of policy iteration, which is a global algorithm (see Section~\ref{sec:grid_expt} for a discussion regarding this point). 

In Figures~\ref{fig:playroom-easy-compare-cc} and~\ref{fig:playroom-easy-compare-oc} we compare algorithms MS-\texttt{cc} and MS-\texttt{oc}, with and without transfer, to the policy iteration algorithm on a linear scale. Policy iteration in all cases starts from the same initial fine scale policy as the multiscale algorithms. These plots more clearly demonstrate the advantage afforded by the coarse scale transfer: traces labeled ``Transfer'' confirm that there is both a warm start (lower starting error) and faster convergence (fewer iterations). The effect is also more pronounced in the case of MS-\texttt{cc}, since this algorithm maximally leverages the transferred information.

\subfiglabelskip=0pt
\begin{figure}[p]
\centering
\subfigure[][]{
\includegraphics[width=0.47\textwidth]{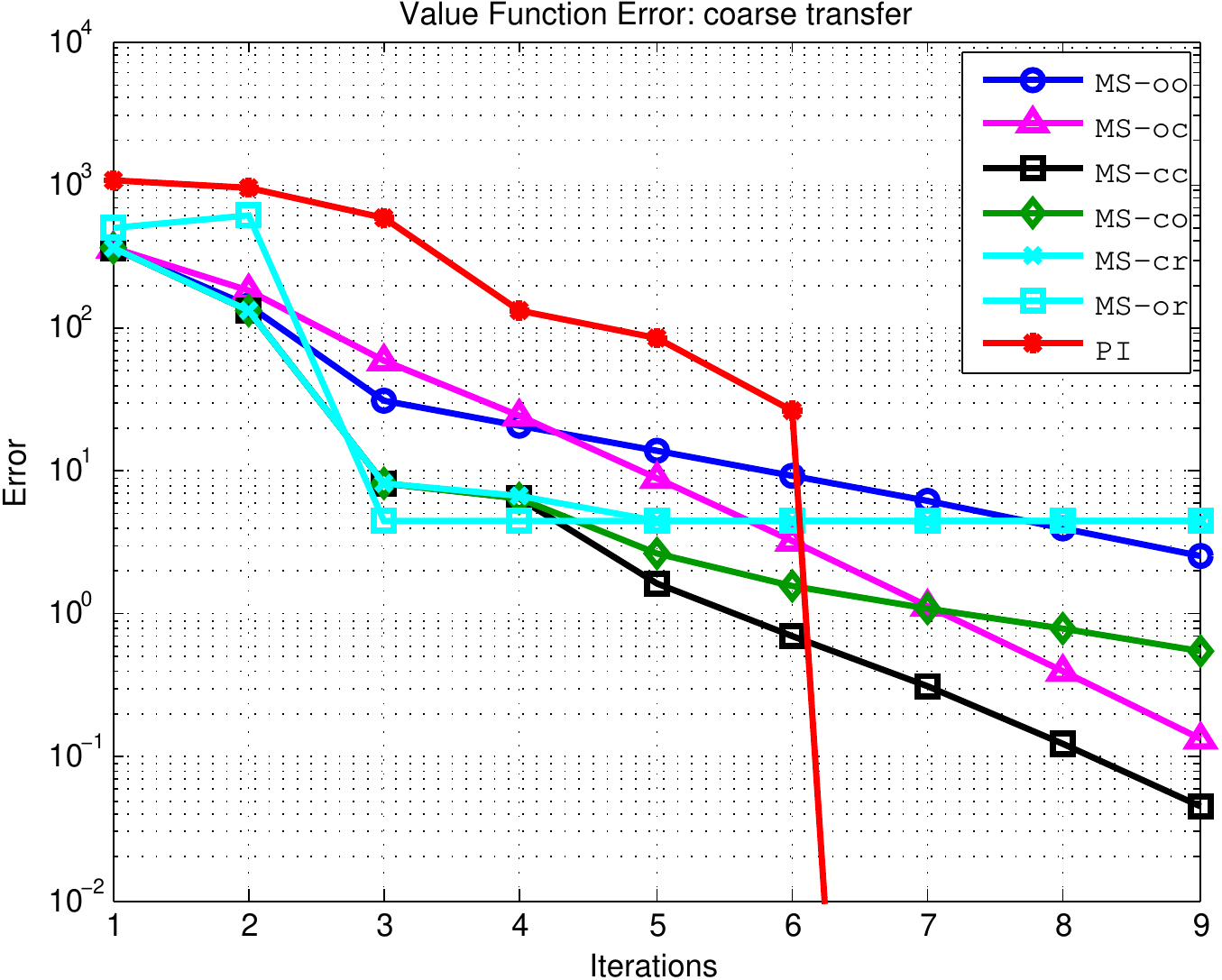}
\label{fig:playroom-easy-transfer}
}
\subfigure[][]{
\includegraphics[width=0.47\textwidth]{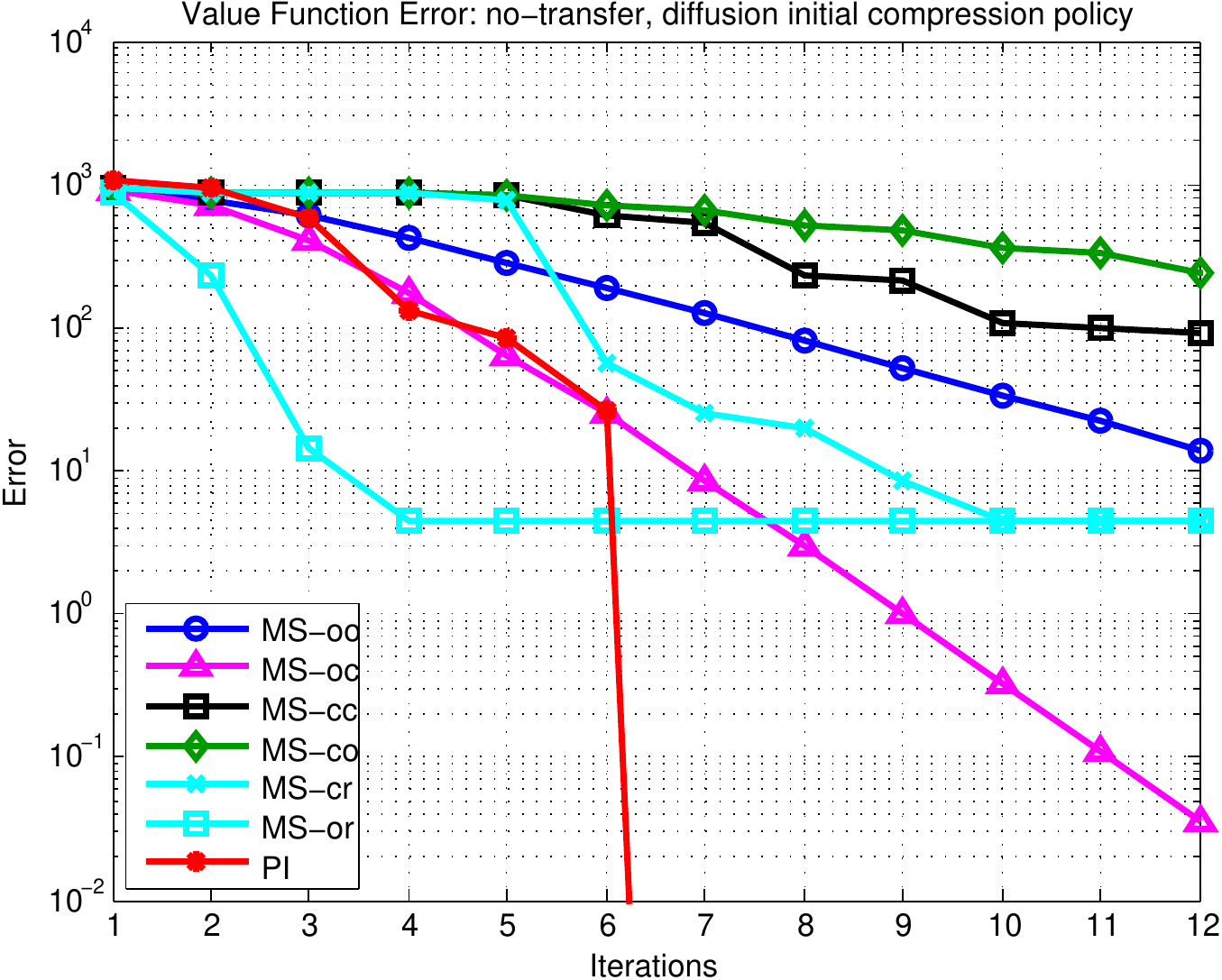}
\label{fig:playroom-easy-notransfer}
}
\\
\subfigure[][]{
\includegraphics[width=0.47\textwidth]{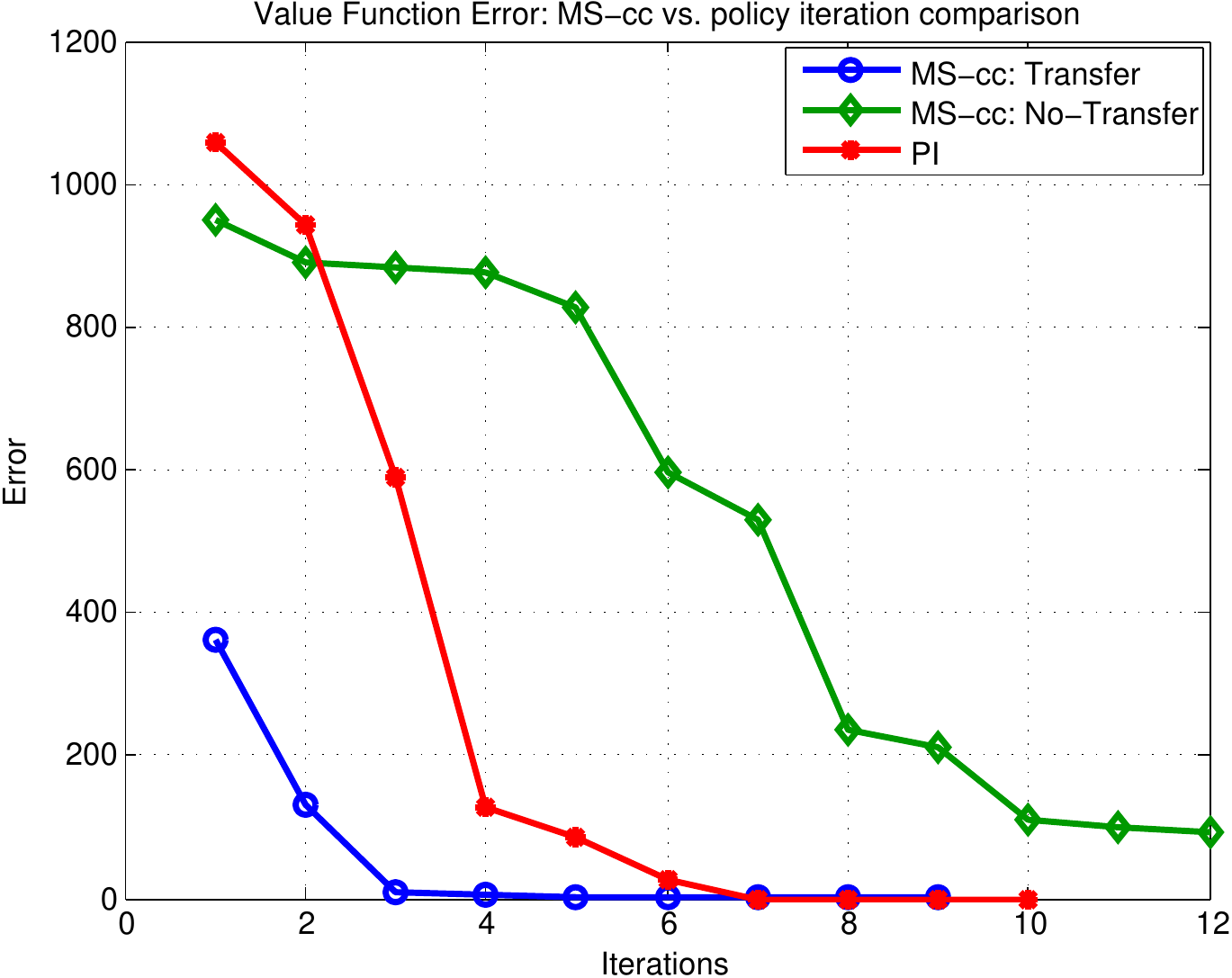}
\label{fig:playroom-easy-compare-cc}
}
\subfigure[][]{
\includegraphics[width=0.47\textwidth]{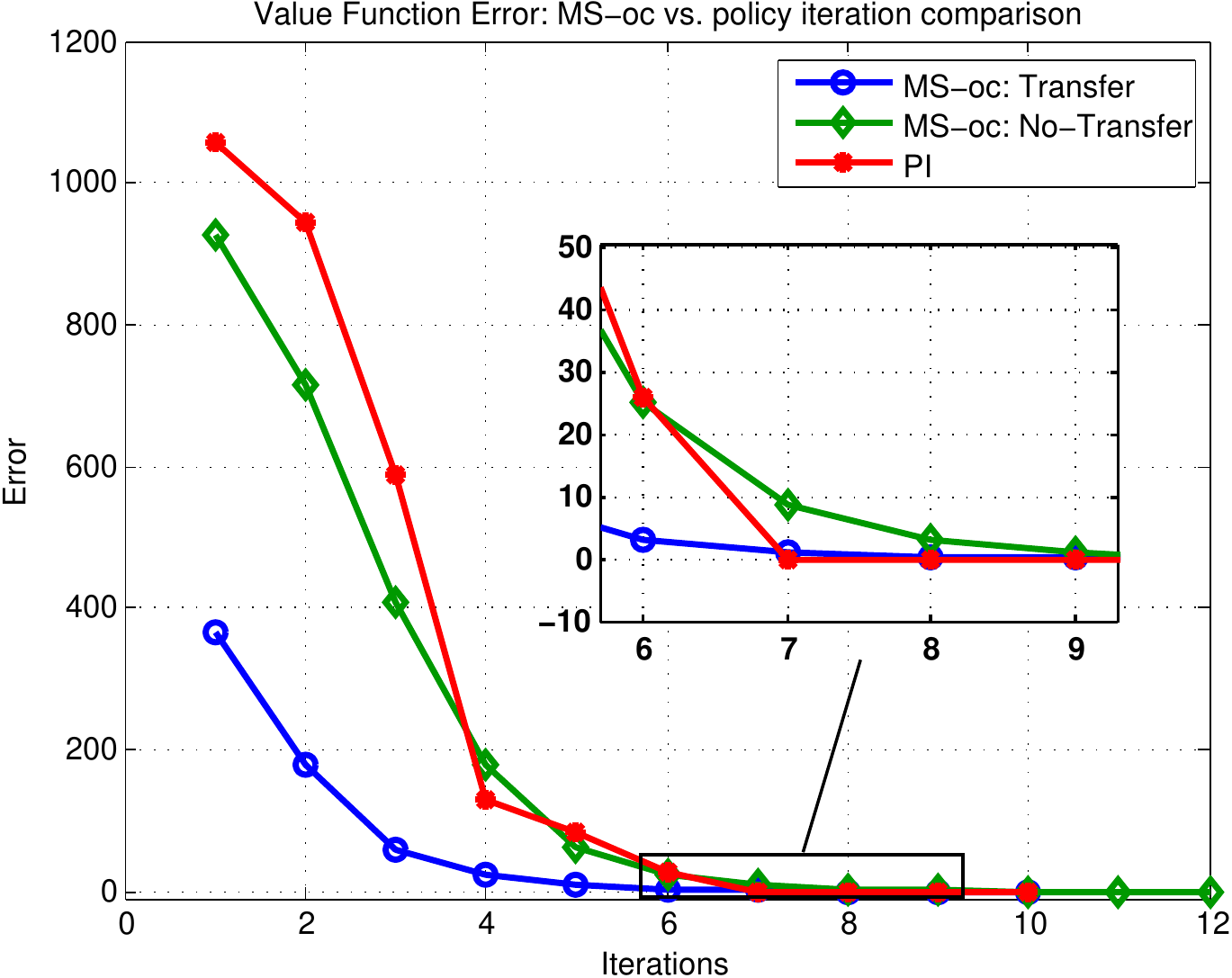}
\label{fig:playroom-easy-compare-oc}
}
\caption{{\small\em Solution of the playroom coarse transfer problem described in Section~\ref{sec:playroom-simple}: Comparison among multiscale algorithms both with and without coarse scale potential operator transfer, and comparison to policy iteration. See text for details.}}
\label{fig:playroom-simple}
\end{figure}

%%%%%%%%%%%%%%%%%%%%%%%%%%%%%%%%%%%%%%%%%%%%%%%%%%%%%%%%%%%%%%%%%%%%%%%%%%%%%%%%%%%%%

\subsubsection{Partial Policy Transfer Example}\label{sec:playroom-partial}
This example illustrates partial transfer of a policy at the fine scale of a two scale hierarchy.
The pair of problems in this section differ from the previous section only in that to turn on the light, the agent must be looking at the light switch and the marker must be on the bell. Now, turning on the light differs from ringing the bell by two actions. The tasks are as follows:
%\begin{enumerate}[itemsep=0pt,leftmargin=2cm]
\begin{description}
\item[{\normalfont Default (\problemone):}]
The goal is to cause the bell to ring while the music is playing. The agent must look at the music button, press the music button, look at the bell, place the marker on the bell, look at the ball, and finally, kick the ball into the bell. The light switch does not play a role.
\item[{\normalfont Transfer (\problemtwo):}] The goal is to flip the light switch to the on position while the music is playing. The agent must: look at the music button, turn on the music, look at the bell, place the marker on the bell, {\em look at the light switch, flip the light switch}.
\end{description}
%\end{enumerate}
As before, each episode begins with the agent looking at a random object, the marker on a random object, and all on/off objects in the off state.

\begin{figure}[t]
\centering
\includegraphics[width=\textwidth]{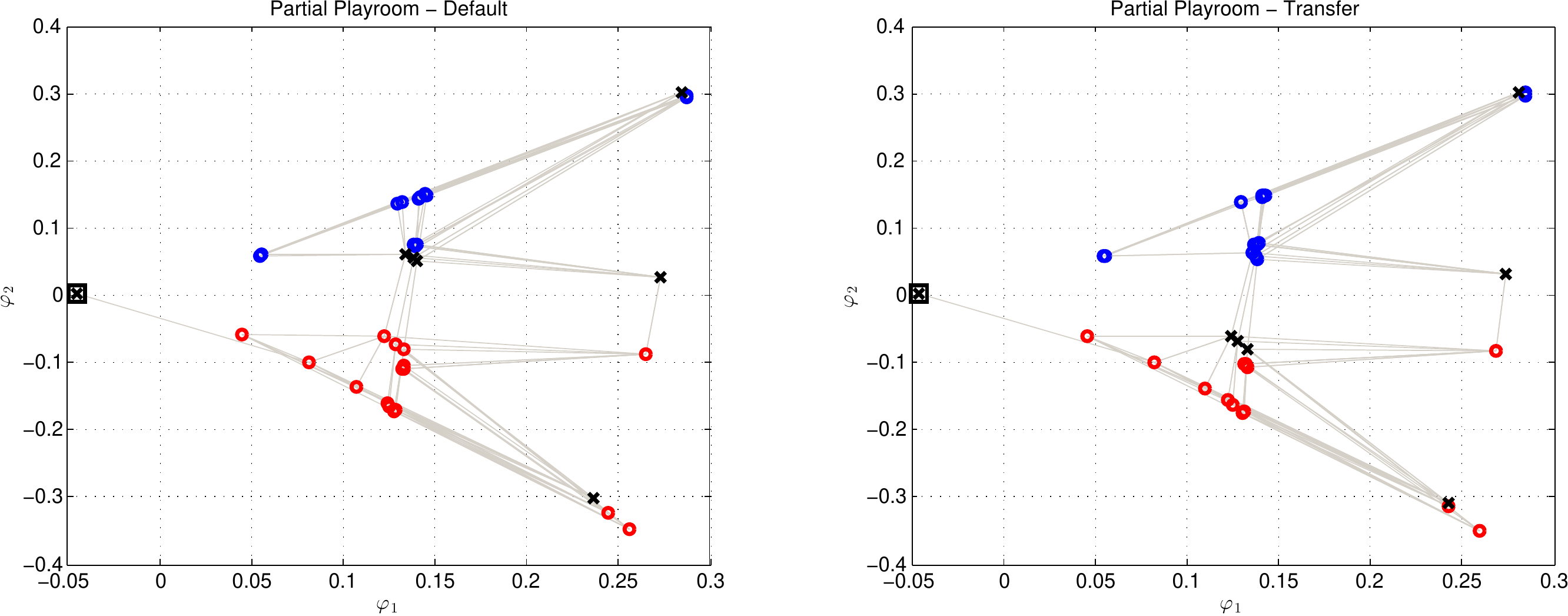}
\caption{{\small\em Diffusion map embeddings of the statespaces for the partial playroom transfer example of
Section~\ref{sec:playroom-partial}: default (left), and transfer (right) tasks. See text for details. }}
\label{fig:playroom-partial-diffmaps}
\end{figure}

Although this pair of problems involves only one additional action change in the sequence leading up to the goal as compared to the previous section's pair, the detected bottlenecks at the coarse scales cannot be easily matched. Figure~\ref{fig:playroom-partial-diffmaps} shows 2D diffusion map visualizations of the statespace graphs for the two tasks described in this section. Again, bottlenecks are marked by '\texttt{x}',  ordinary states with '\texttt{o}', and goal states are boxed. In contrast to Figure~\ref{fig:playroom-easy-diffmaps}, here it can be seen comparing the default task to the transfer task, that some bottlenecks become interior states and vice versa. Thus, direct matching of the coarse scale statespaces (assuming a two layer compression hierarchy) and subsequent coarse scale policy transfer is not an immediate possibility here.

What we might hope, however, is that we can transfer the portion of the fine scale policy dealing with states in which the music is off. It is only after the music is on that the optimal action sequences for the two tasks diverge, and when the music is off the immediate sub-goal for both tasks is to turn it on. Figure~\ref{fig:playroom-partial-diffmaps} confirms this possibility: interior states are colored according to the spectral partition as before, and the \clusterstext in this case correspond to ``music ON'' vs. ``music OFF'' states\footnote{ Of course, one does not need to know or identify what the \clusterstext mean in order to transfer something; we provide this explanation for illustrative purposes.}. As can be seen from the plots, the sign of the (Fiedler) eigenvector $\varphi_2$ gives this partitioning. The procedures described in Section~\ref{sec:transfer} were next applied to the current pair of tasks in order to detect transferability and effect policy transfer.\\

\noindent{\em Cluster Correspondence:}
The \clustertext correspondence step (Section~\ref{sec:xfer-cluster-correspond}) correctly paired together ``music OFF'' and ``music ON'' \clusterstext, respectively. The pairwise \clustertext distances
were found to be:
\begin{center}
\begin{tabular}{l|c|c}
& \problemtwo: Cluster 1 & \problemtwo: Cluster 2\\ \hline
\problemone: Cluster 1 & 0.2850 & 0.4115 \\
\problemone: Cluster 2 & 0.3583 & 0.3201
\end{tabular}.
\end{center}
Here, Cluster 1 is the ``music OFF'' \clustertext and Cluster 2 is the ``music ON''  \clustertext. \\

\begin{figure}[t]
\centering
\includegraphics[width=\textwidth]{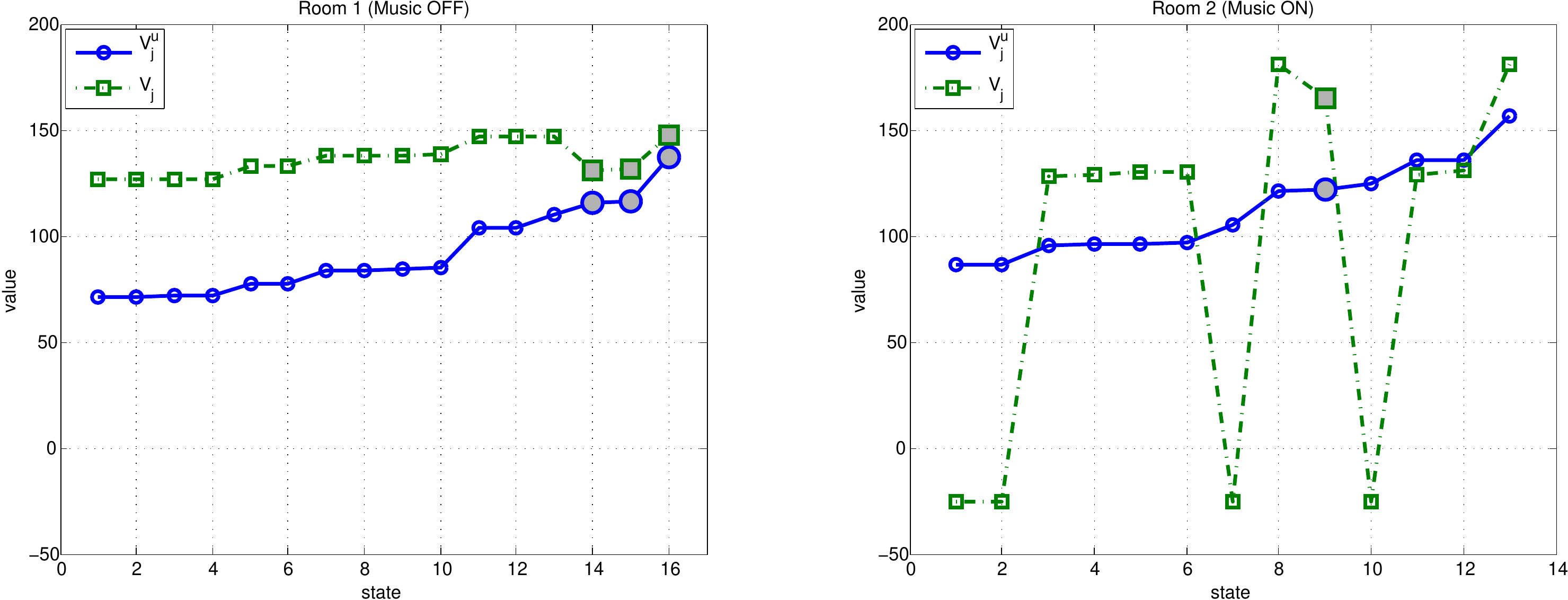}
\caption{{\small\em Transfer detection. Value functions for Cluster 1 (left) and Cluster 2 (right), shown over the entire respective \clustertext interiors. States inside intersection regions are plotted with small open points,  and large filled points identify all other states. The value functions were obtained using $\problemone$'s optimal policy following Section~\ref{sec:xfer-detect}, and are plotted in ascending sorted order according to the magnitude of $V_0$. See text for details.}}
\label{fig:playroom-partial-detect}
\end{figure}

\noindent{\em Detecting Transferability:}
Next, the transfer detection algorithm described in Section~\ref{sec:xfer-cluster-correspond} was applied separately to the pairs (Cluster 1 $\in\problemone$, Cluster 1 $\in\problemtwo$) and (Cluster 2 $\in\problemone$, Cluster 2 $\in\problemtwo$) following the \clustertext correspondence above. No statespace graph matching was done for this example, so we are effectively assuming that that the roles of the states in each problem are the same. Assessing transferability is therefore important in order to determine if and where this assumption might hold. Figure~\ref{fig:playroom-partial-detect} shows the value functions calculated using $\problemone$'s optimal policy (green box traces, labeled $V_0$ for scale $j=0$) and the diffusion policy  (blue circle traces, labeled $V_0^u$). States inside $\problemone,\problemtwo$ \clustertext intersection regions are plotted with small open points,  and large filled points identify all other states. The states (horizonal axis) are ordered according to the magnitude of $V_0^u$ for improved readability. The left-hand plot shows values for states in Cluster 1 (``music OFF''). The transferred policy clearly leads to more expected reward everywhere, and the two value functions follow a similar general trend. The right-hand plot in Figure~\ref{fig:playroom-partial-detect} shows value functions on the states in Cluster 2 (``music ON''). Here there is large disagreement on several states, suggesting that applying the  optimal policy from $\problemone$ to $\problemtwo$ in Cluster 2 could be problematic. This is not surprising considering that the goal states for both tasks are either inside or connected to the respective problem's Cluster 2. Indeed, the test given by Equation~\eqref{eqn:detect-test-matching} produces  $T=-1.31$ in Cluster 2, while $T=+6.64$ in Cluster 1. We conclude that transfer in Cluster 1 should be attempted, but transfer in Cluster 2 should not be attempted in the absence of a better statespace mapping.\\

\noindent{\em Policy Transfer:}
With confirmation that transfer within Cluster 1 could be helpful, the optimal policy from $\problemone$ was
transferred to the overlapping interior $\problemtwo$ Cluster 1 states following the process described in Section~\ref{sec:policy-xfer}. As mentioned earlier, the identity correspondence between the relevant underlying states in each task was assumed, and the actions mapped accordingly. For this particular problem the actions do not change, but we followed the general mapping process anyhow since one does not generally know {\em a priori} whether actions need to be changed, or whether the representation of the two problems is such that actions are known by the same labels or not. The initial policy, post transfer, as well as the optimal policy on the interior of $\problemtwo$ Cluster 1 were as follows:

\begin{center}
{\tt
\begin{tabular}{l|c|c|c|c|c|c|c|c|c|c|c|c|c|c|c|c}
{\rm Initial} & 1 & 1 & 1 & 1 & 1 & ? & ? & ? & 2 & 2 & 2 & 1 & 1 & 1 & 1 & 1  \\ \hline
{\rm Optimal} & 1 & 1 & 1 & 1 & 1 & 3 & 3 & 3 & 2 & 2 & 2 & 1 & 1 & 1 & 1 & 1
\end{tabular}}
\end{center}
where {\tt 1} means ``look at a random object'', {\tt 2} means ``place the marker'' and {\tt 3} means ``press music button'' (see action definitions at the top of Section~\ref{sec:expts-playroom}). Question marks in the initial policy indicate states which did not recieve a policy from $\problemone$. Cluster 1 in $\problemone$ contained 13 interior states, while Cluster 1 in $\problemtwo$ contained 16; thus the maximum of 13 policy entries were transferred to $\problemtwo$. Unknown states are given a default guess below. As can be seen in the table above, for this task the transferred policy entries correctly matched the optimal policy for $\problemtwo$. \\

\subfiglabelskip=0pt
\begin{figure}[p]
\centering
\subfigure[][]{
\includegraphics[width=0.47\textwidth]{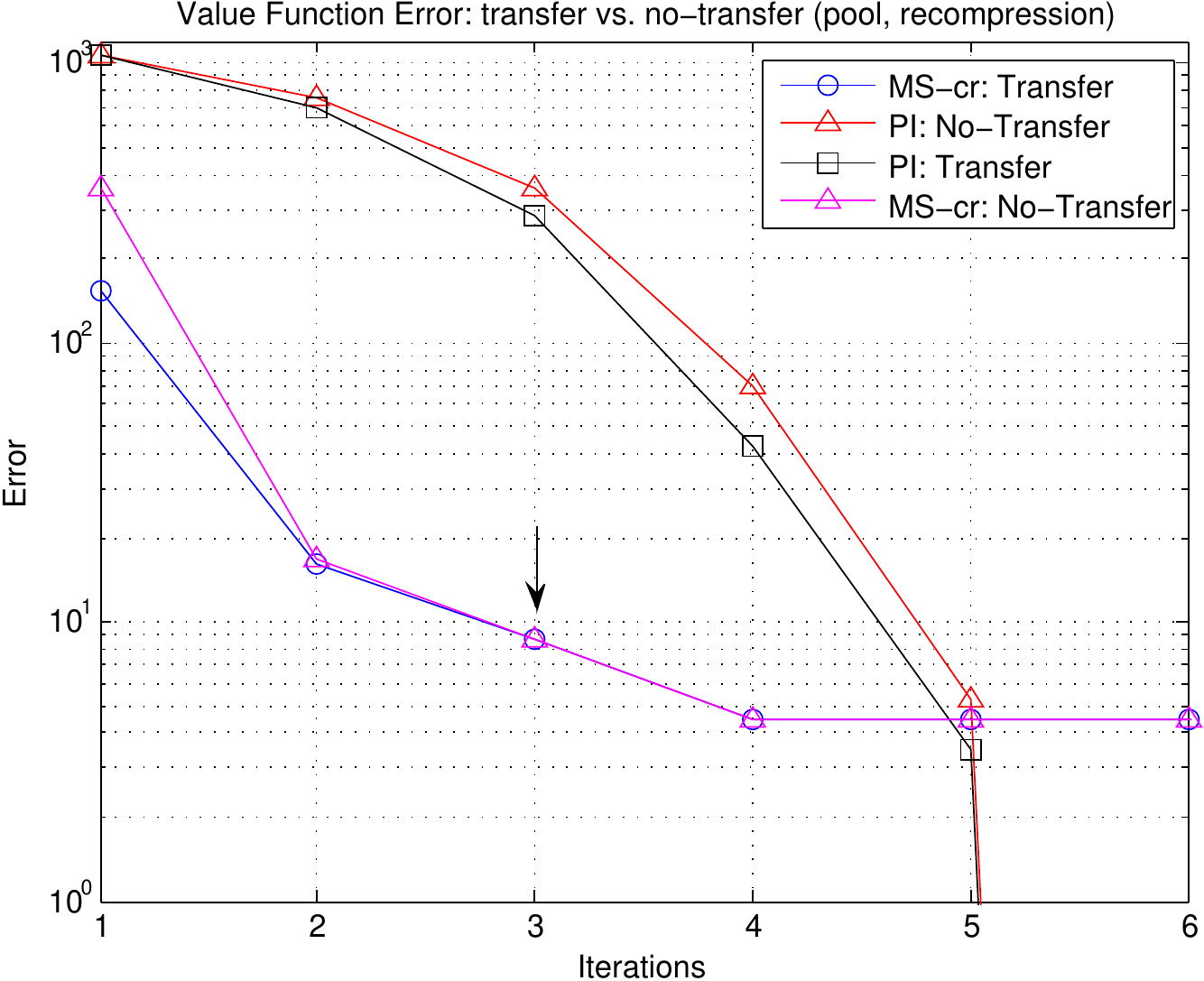}
\label{fig:play-partial-pool-comp}
}
\subfigure[][]{
\includegraphics[width=0.47\textwidth]{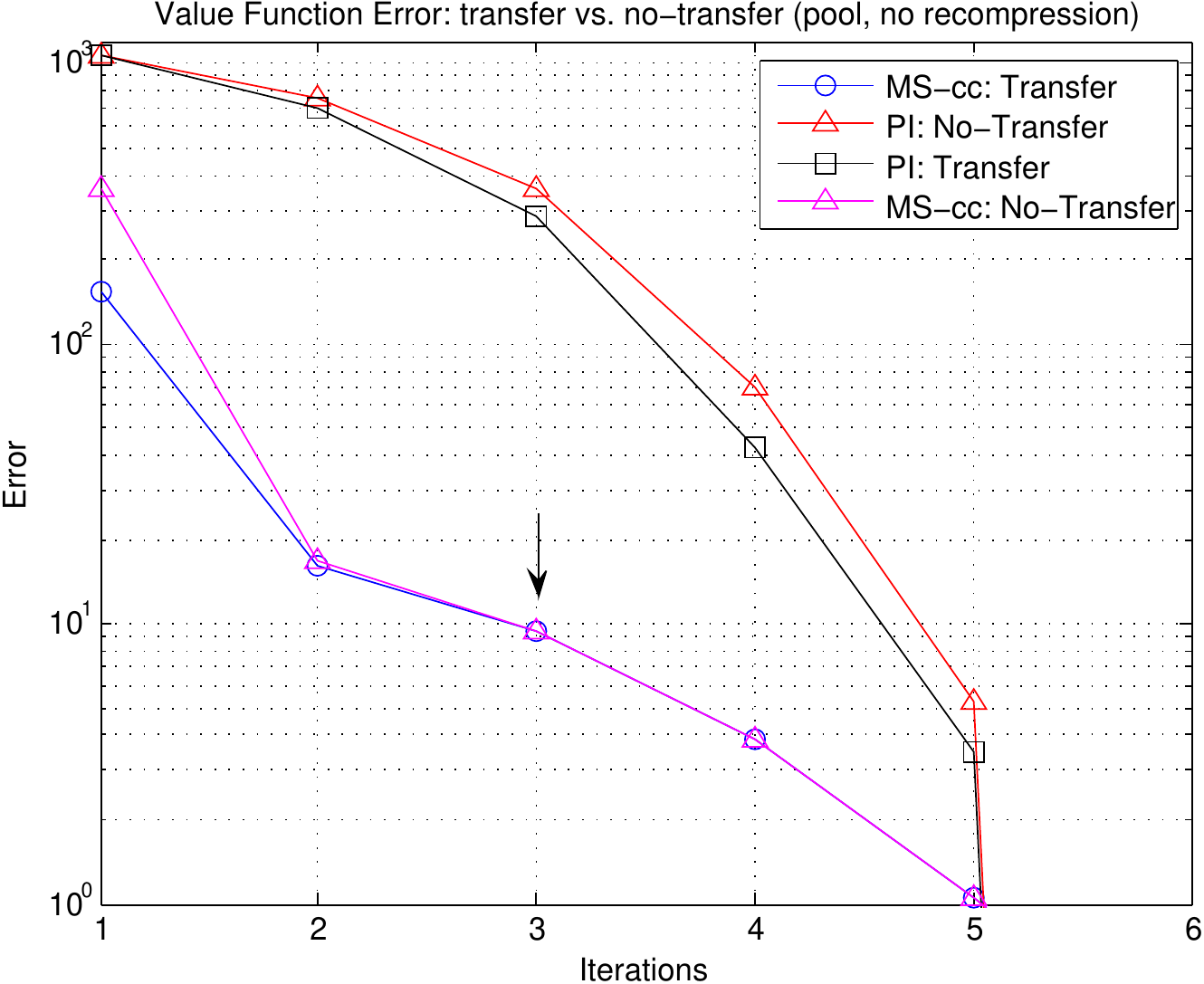}
\label{fig:play-partial-pool-nocomp}
}
\\
\subfigure[][]{
\includegraphics[width=0.47\textwidth]{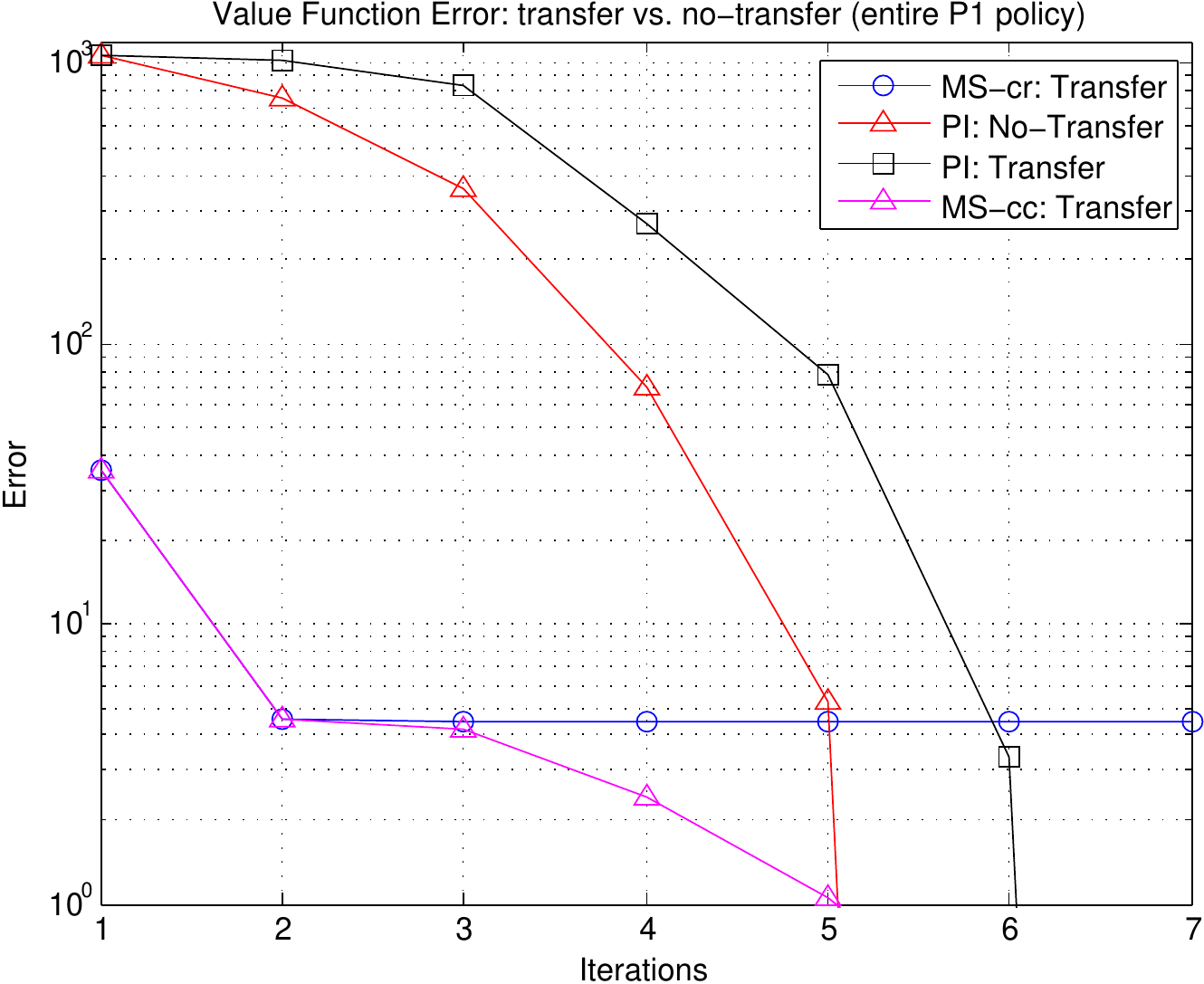}
\label{fig:play-partial-pool-full}
}
\\
\subfigure[][]{
\includegraphics[width=0.47\textwidth]{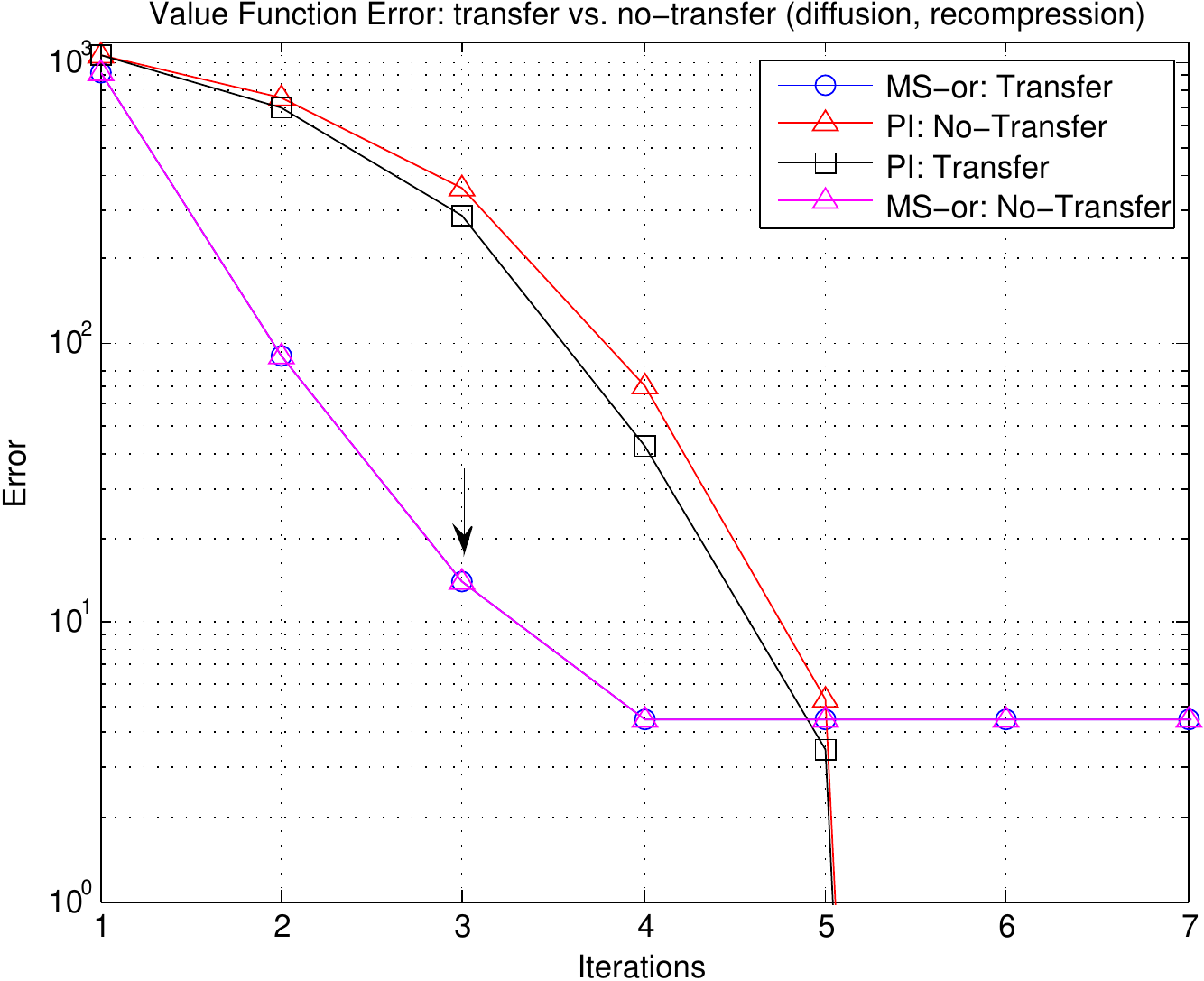}
\label{fig:play-partial-diff-comp}
}
\subfigure[][]{
\includegraphics[width=0.47\textwidth]{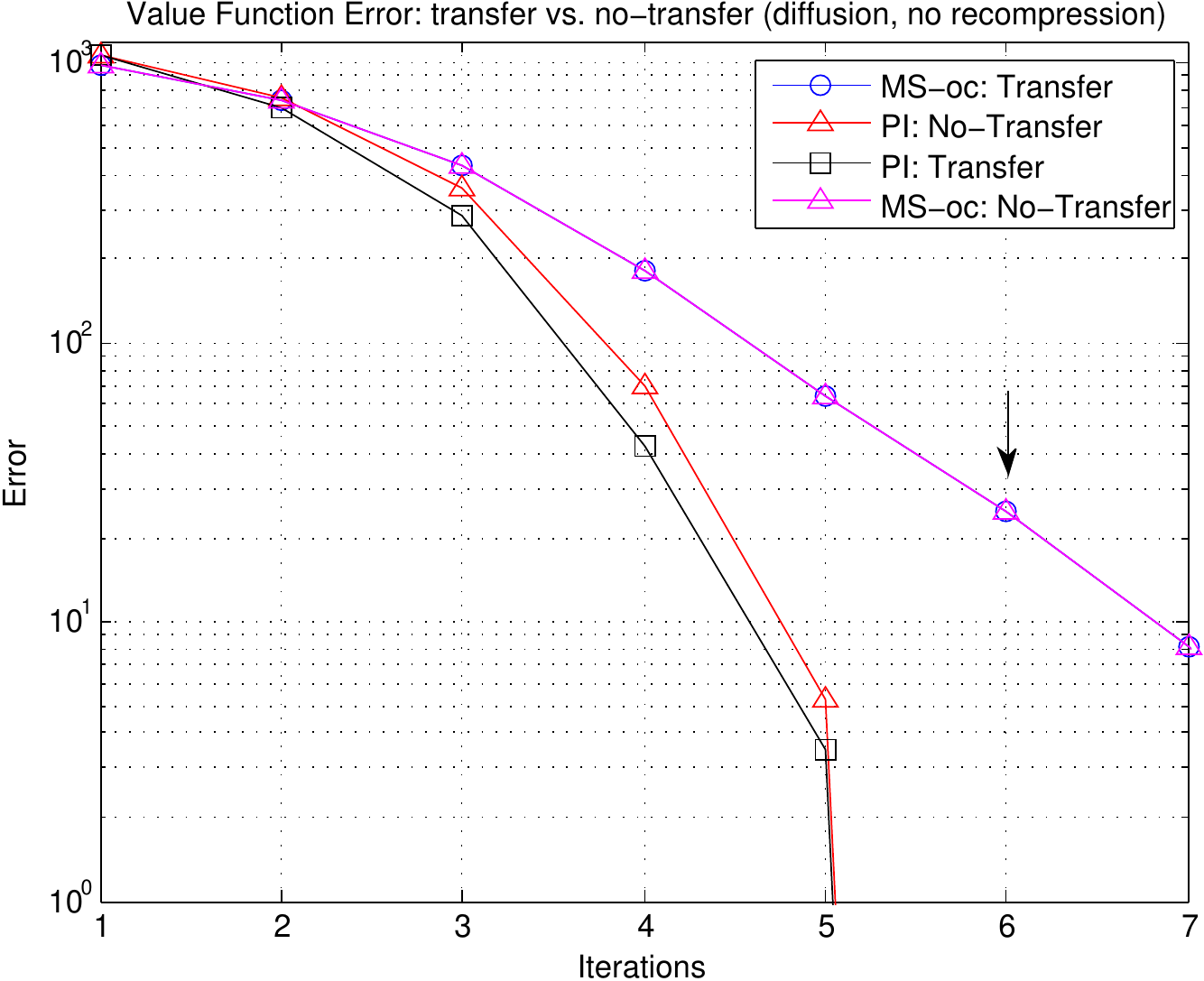}
\label{fig:play-partial-diff-nocomp}
}
%\comm{Comment about this and other figures: errors should be relative errors, for example $||V_{\mathrm{true}}-V_{\mathrm{approx}}||/||V_{\mathrm{true}}||$. Would it be hard to to do this in all the plots? It's not terribly important in this situation since different examples are completely different so even if on different scales, all that matters is the comparison within each example. So if it's a pain to re-do the plots, no worries.}
\caption{{\small\em Value function errors for the playroom partial transfer example of Section~\ref{sec:playroom-partial}: comparison between policy iteration, and multiscale algorithms both with and without compression. See text for details.}}
\label{fig:playroom-partial}
\end{figure}

\noindent{\em Transfer Problem Solution:}
Figure~\ref{fig:playroom-partial} compares performance both with and without partial fine-scale policy transfer, across several different solution algorithms. The four curves in each plot correspond to different initial conditions and/or solution algorithms, and give the Euclidean distance (``Error'', vertical axis) between intermediate value functions computed after $t$ iterations (horizontal axis) of a given algorithm, and the true, optimal value function for $\problemtwo$. Traces labeled with the prefix ``PI: No-Transfer'' correspond to vanilla policy iteration on the global statespace, with no transfer information, while the traces labeled ``PI: Transfer'' correspond to policy iteration starting from the transferred fine scale policy. The ``PI: No-Transfer'' curve is the same in all plots, and the ``PI: Transfer'' curve is the same in all plots except Figure~\ref{fig:play-partial-pool-full}. Traces labeled ``MS: Transfer'' and ''MS: No-transfer'' refer to different multiscale algorithms appearing in Table~\ref{tab:expt-algs}, with and without initial coarse solutions obtained following Section~\ref{sec:cluster_pols}, and with and without fine-scale policy transfer (respectively). In the transfer experiments, the transferred policy served as the initial policy. In all cases, the blending parameter was set to $\lambda=1$ (no blending), giving a purely greedy policy update. Arrows in the plots mark the point at which the {\em policy} has converged to the optimal (fine) policy\footnote{This is clearly not detectable in practice, but is informative in the context of recompression-based algorithms which do not converge to the optimal value function in this example.}. The particular multiscale algorithms and conditions we tested in each plot are as follows:

\begin{center}
\begin{tabular}{c|c|c}
Figure & Initial Compression & Multiscale Algorithm \\\hline
Figure~\ref{fig:play-partial-pool-comp} & pool & \texttt{cr}  \\
Figure~\ref{fig:play-partial-pool-nocomp} & pool & \texttt{cc} \\ 
Figure~\ref{fig:play-partial-pool-full} & pool & \texttt{cr,cc} \\
Figure~\ref{fig:play-partial-diff-comp} & diffusion & \texttt{or} \\ 
Figure~\ref{fig:play-partial-diff-nocomp} & diffusion & \texttt{oc}
\end{tabular}
\end{center}
See Table~\ref{tab:expt-algs} and surrounding discussion for a description of the algorithms. 
The ``Initial Compression'' column above specifies whether the initial coarse value function solved a  coarse MDP compressed with respect to a collection of policies as described in Section~\ref{sec:cluster_pols} ({\em ``pool''}), or with respect to the diffusion policy ({\em ``diffusion''}). We assume that the initial coarse value function under the {\em pool} condition is trustworthy, and choose multiscale algorithms which iterate within \clustertext interiors to convergence before updating the boundary. For initial coarse value functions derived from the diffusion policy, we assume there could be errors and opt for multiscale algorithms which only update \clustertext interiors once before each boundary update. 

Several conclusions (specific to this problem domain) may be drawn from these experiments:
\begin{enumerate}
\item The impact of the transferred policy is essentially only noticeable when used in conjunction with a good initial coarse guess (Figs.~\ref{fig:play-partial-pool-comp},\ref{fig:play-partial-pool-nocomp}). Both algorithms (recompression/no-recompression) give similar performance. The recompression-based algorithm does not converge to the optimal value function, although the corresponding policy sequence does converge to the optimal policy.
\item For the canonical policy iteration algorithm, using the transferred policy as the initial condition gives only a slight advantage. However, policy iteration is far less robust to errors in the transferred policy than the family of multiscale algorithms. Figure~\ref{fig:play-partial-pool-full} shows the result of transferring the {\em entire} fine policy for $\problemone$, even in the \clustertext where transfer detection suggested transfer could be error prone. The multiscale algorithm with recompression is labeled ``\texttt{MS-c}'', and without compression ``\texttt{MS-nc}''. For this particular problem, the multiscale algorithms are tolerant of these errors, and convergence is in fact faster. (We emphasize, however, that this may not at all be true for other problems.) Policy iteration, however, suffers, and takes additional time to correct errors in the second (error-prone) \clustertext's policy. 
\item When the diffusion policy is used for compression (Figures~\ref{fig:play-partial-diff-comp},\ref{fig:play-partial-diff-nocomp}), transfer has little impact. Furthermore, recompression during the solution process is necessary to quickly correct errors imposed by a poor initial coarse  value function. The algorithm involving local bottleneck updates (Figure~\ref{fig:play-partial-diff-nocomp}) requires more iterations to converge to optimality, as compared to the other algorithms.
\item Ignoring cost per iteration, most of the improvement of the multiscale algorithms over policy iteration come from the algorithms themselves rather than the transferred information. However, in large complex domains where \clusterstext may themselves be complex tasks, even small transfer improvements may lead to substantial savings.
\end{enumerate}

\section{Related Work}\label{sec:prior_work}
Our work has many points of contact with the literature, and we do not attempt a comprehensive comparison. We highlight the main, most important similarities and differences.

There are several overarching themes which distinguish our work from much of the literature:
\begin{itemize}\itemsep 1pt
\item Multiscale structure: Multiscale is is a unifying, organizational principle in our work. Our approach enforces a strong multiscale decomposition of tasks into subtasks, such that each scale may be treated independently of the others. Hierarchies of arbitrary depth may be easily constructed. Many approaches ultimately require some form of ``flattening'' (see for instance HAMs~\citep{ParrRussell}, options~\cite{SuttonOptions}), or do not generalize well beyond a single layer of abstraction.
\item Multiscale consistency: Coarse scales are consistent with finer scales ``in the mean'', and each scale is a separate MDP. Semi-Markov decision processes (SMDPs)~\citep{Puterman}, for example, do not share this notion of consistency.
\item Computational efficiency:  The multiscale structure we impose localizes computation and improves conditioning. The computational complexity of learning and planning can be significantly reduced, both in time and in space.
\item Coupling between learning, planning, and structure discovery: Our approach combines learning of macro-actions, multiscale planning, and inference of multiscale structure in a fundamental way. Many existing approaches focus on one or the other, resulting in a disconnect that leads to inefficiency and unresolved challenges. 
\item Transfer: MMDPs support systematic, scale-independent transfer of knowledge between tasks. Knowledge may be exchanged in the form of potential operators, policies, or value functions. 
\item Generality: Different statespace partitioning and bottleneck detection algorithms may be used. Compression may be carried out with respect to any policy, or collection of locally defined policies.  Different value function representations and off the shelf algorithms for solving MDPs may be chosen. Key MMDP quantities may be computed analytically (if the model or an estimate of the model is known), or by Monte Carlo simulation. We do not assume specific choices of algorithms where possible. On the other hand, MMDPs are {\em more constrained} than SMDPs. SMDPs are very general objects, but this generality comes at the expense of conceptual and computational complexity.
\end{itemize}
% MAJOR DIFFERENCES SUMMARIZED HERE: we try to do it all systematically, in an integrated framework.
% Multiscaleness, consistency, etc. stuff above.  can do more scales, and algs are indep of scale.
% transfer of policies, vale fns, or potential operators. 
% versatility: mdps can be solved using any alg. 
% consistent/uniformity of the rep: eveyrthing is an mdp, within and acros scales.
% improvement/updating of transferred info is straightforward.
% reduced use of heuristics. everything follows from the construction, since it is standard.
%   there is a unifying, organizational principle at work in our stuff,
A more detailed comparison to specific work in the literature follows below.

\subsection{Hierarchical Reinforcement Learning}
Empirically, ``standard'' approaches to learning within flat problem spaces are often slow, scale poorly, and do not lend themselves well to the inclusion of prior knowledge. The hierarchical reinforcement learning (HRL) literature (see \cite{BartoHRLReview} for a review,  and~\cite{SuttonOptions,DietterichHRL,ParrRussell} in particular) has long sought to address these challenges by incorporating hierarchy into the domain and into the learning process. The essential goal of the hierarchical learning literature is to divide-and-conquer, paralleling similar strategies for coping with complexity found throughout biology and neuroscience. The notion of state abstraction has been considered extensively. In most cases, coarse or ``macro'' actions are broadly defined as temporally extended sequences of primitive actions. The  pioneering ``options'' framework~\citep{SuttonOptions} proposed a means to solve reinforcement learning problems, given pre-specified collections of such macro-actions. The options framework is closely related to SMDPs, see~\citep{Puterman,DasMahadevan99}, and SMDPs have more generally become a modeling formalism of choice for HRL.  If a set of options have been pre-specified, the problem of learning an optimal policy over a set of options is an SMDP. Many of the hierarchy discovery algorithms surveyed below construct options, and then employ SMDP learning techniques. For this reason we devote special attention to the options framework, and discuss how it relates to the present work.

\subsubsection{Relation to Options and SMDPs}
In the options framework~\citep{SuttonOptions}, a hierarchical value function is used to define a flat, global policy. One level of abstraction is typically considered: Options are policies accompanied by a specification as to when an option can be invoked (``initiation set''), and when it should end, once triggered (``termination condition''). Some elements of our work can be described in the language of options, however there are some important differences distinguishing our framework from that of options/SMDPs. 

Options and SMDPs are general approaches, and some of our design decisions may be viewed as a specialization of certain aspects of options/SMDPs. Other aspects of our work may be viewed as more substantial departures from the options framework. These differences confer computational, transfer and multiscale advantages, and promote learning of the macro-actions themselves: MMDPs are constructed specifically with these objectives in mind. The options framework does not consider learning of the options, and is primarily designed for {\em planning} with pre-specified macro-actions. This disconnect between learning and planning leads to inefficiencies, in terms of both computation and exploration, when SMDPs are combined with separate schemes for learning options. How to learn macro-actions within the options framework is a major challenge that has received considerable attention in the literature, yet there is no clear consensus as to how coarse rewards and discounts should be set locally in order to efficiently learn a macro-action that can be combined with others in a consistent way. Fundamentally, we take a holistic point of view, and couple learning of macro-actions and planning with macro-actions together.  The hierarchical structure we consider is also tightly coupled with computation and conditioning. Solving for macro-actions takes advantage of improved local mixing times at finer scales and fast mixing globally at coarse scales. At any scale, learning is localized, in terms of space and computation, by way of the multiscale decomposition. The local learning is fast, and consistent globally, due to information feeding in from distant parts of the statespace through coarse scale solutions. Our framework also more easily accommodates multiscale representations of arbitrary depth.  Each scale is an MDP and may be solved using any algorithm. In contrast to options, the introduction of additional scales does not necessarily add complexity to the planning phase (indeed, it usually reduces it). As we will discuss below, that each scale is an MDP also   supports further transfer opportunities.

We point out a few other salient commonalities and differences:

%we also have computation in mind. mixing times; and decomposing into small pieces so faster (in %learning the options). in this sense, scale and hierarchy is lost.  We impose a specific, stronger form %of hierarchy, that abstracts away one scale from another. if you want to learn a fine scale policy, do %you need to conisder a “flat” Q-fiunction defined on all options at all scales?   
%
% -- we do not work with state-sojourn time transition probabilities.
\begin{itemize}\itemsep 1pt
\item Our coarse actions, or ``macro-actions'', are temporally extended sequences of actions at the previous (finer) scale,  and involve executing a policy within a particular \clustertext. In the language of options, the initiation set for a coarse action is any bottleneck connected to a \clustertext on which the action's policy is defined, and the policy terminates whenever a bottleneck is reached. Our coarse actions are always Markov (not semi-Markov), and the termination condition depends only on the current state. Furthermore, hierarchies of coarse actions do not lead to semi-Markov options in our framework -- they always remain Markov\footnote{This is because the homogenization we prescribe results in deterministic quantities, and Markovianity would not necessarily be preserved if coarse variables were random.}.

However, options may only direct the agent in one direction, and may terminate in the initiation set of at most one other successor option. To get around these limitations, new, separate options must be defined, increasing the problem's branching factor, and care must be taken to avoid loops (if so desired). An MMDP coarse action leaves the ``direction'' of the action undecided: the same fine policy may be executed starting in several bottleneck states, and may take the agent in one of several directions until arriving at one of multiple destinations from which different successor coarse actions may be taken. In the context of MMDPs, if one wanted to be able to transfer policies on the same cluster which guide an agent in different particular directions, separate local policies would need to be stored in the ``database'' of solved tasks. However, we would only need to transfer and plan with {\em one} of them.

\item A strength of the options framework is that multiple related queries, or tasks, may be solved essentially within the same SMDP. However, the tasks must be closely related in specific ways (e.g. tasks differing only in the goal state), and this strength comes at the expense of ignoring problem-specific information when one only wants to solve one problem. Our approach to the construction of MMDPs differs in that while we assume a particular problem when building a decomposition, we are able to consider a broader set of transfer possibilities.

\item Bottlenecks and partitioning do not explicitly enter the picture in options or SMDPs. Options may be defined on any subset of the statespace, and in applications may often take the form of a macro-action which directs the agent to an intermediate goal state starting from {\em any} state in a (possibly large) neighborhood. For example, an option may direct a robot to a hallway from any state in a room. We constrain our ``initiation'' and ``termination'' sets to be bottleneck states, however this means that learning policies at coarse scales is fast, and can be carried out completely independent of other scales. Coarse scale learning involves only the bottleneck states, giving a drastically reduced computational complexity. Provided the partitioning of a scale is well chosen, this construction allows one to capitalize on improved mixing times to accelerate convergence.

\item MMDPs are a representation for MDPs: we cannot solve problems that cannot be phrased as an MDP (i.e. problems whose solutions require non-Markov policies). A policy solving an MMDP, at any scale, is a Markov policy. SMDPs may in general have non-Markov solutions (for example, policies which depend on which option is currently being executed). 
%ex: picking up then putting down an object, without using any state variable to indicate whether "pick %up" has happened yet or not.

\item Multiscaleness: The options framework is arguably, at its core, a flat method. 
In general, options may reference other options, but essentially any and all options may be made available in a given state. In order to plan with options, one needs to know which option is best to execute, and at which scale, for the given task. To choose an option at a given ``scale'', options at  other scales must be ruled out. In this sense, planning with options is ``bottom-up'', while our approach may described as ``top-down''. The bottom-up approach is potentially problematic for two reasons: (1) A domain expert has to specify coarse policies. Determining a multiscale collection of policies by hand can be difficult, if not intractable. (2) To learn a fine policy solving the problem, all options across all scales need to be considered at once potentially. This is accomplished by effectively flattening the hierarchy: a state-option $Q$-function would need to have entries for every option which might be initiated from a given state, across all scales. Scale is lost in the sense that  the value at a state may only be defined as the value at that state under a flat policy.  Without significant user guidance and tailoring of the SMDP, this cannot be avoided. Either the burden on the user is high, or the computational burden is high.

Even if one repeatedly defines SMDPs on top of each other, a number of difficulties arise: (1) The resulting transition probability kernels and reward functions are not consistent across layers. For example, the transition kernel at a coarse scale is not the transition kernel of the embedded Markov chain observed only on initiation/termination/goal states. (2) An SMDP could have a disconnected statespace at the coarse layer, and it is not clear how this problem can be resolved. (3) The lack of isolation of scales necessarily implies an increase in the number of actions, and thus an increase in the branching factor of the problem.

For these reasons, the extension of options to multiple levels may not be easily carried out, and is not often seen in the literature. By contrast, our approach is strongly multiscale. We impose a specific, stronger form of hierarchy, that abstracts each scale away from the others. At a given scale, coarse actions may refer to fine actions, {\em but only through the fixed multiscale organization}, and they cannot invoke coarse actions at or above the current scale. The multiscale structure is always enforced. This leads to significant computational savings, and a form of consistency of problems and their solutions across scales.

\item Options/SMDPs define a coarse scale transition probability law which {\em combines} transition probabilities between a macro's starting/ending states, the trajectory (sojourn) length distributions, and uniform discounting (multi-time model). This definition suffices if the options are user-specified, and one is only interested in a single layer of abstraction. The definition is problematic, however, for multi-layer hierarchies, non-uniform discounting, transfer learning, and learning of the options themselves. In our construction, coarse transition probabilities and discount factors are computed separately. One advantageous consequence of this is that path length distributions do not need to be estimated or represented explicitly. But it is also important to keep these quantities separate for the following reasons: (1) Transfer (detection, transfer of potential operators, and partial policy transfer); (2) One needs to modify the transition probability tensor in order to restrict to a local \clustertext, and solve localized sub-problems efficiently; (3) It is important to preserve multiscale consistency: a compressed MDP is an MDP that is consistent with the fine scale in the mean. This is not true of the quantities defined in the context of SMDPs. If there are non-uniform discounts, then it is the product of the discounts over trajectories that must be considered rather than a constant raised to the path length (see Section~\ref{sec:exp_path_lens} for a discussion concerning this distinction).
\item Options/SMDPs define a coarse reward function which does not depend on the termination state; aggregate rewards are pre-averaged over all possible ending states. In the context of learning and transfer, this choice can lead to serious errors. Consider the effect of averaging over paths ending at the starting state (small reward) with paths spanning a large \clustertext (large rewards). It is likely that with such a system of rewards, coarse solutions would contain little information for solving at the fine scale. In any event, this definition does not yield multiscale consistency in the sense discussed above. MMDPs keep track of the coarse reward for each possible starting and ending state, and these quantities are  approximated by analytically computed moments given a model or by Monte-carlo estimates.  Space requirements are small, however, because only bottlenecks, of which there are few, can be termination states for a macro-action. This convention also ensures multiscale consistency.
\end{itemize}

% PArr (HAMQ) alg requires flattening the hierachy, is computationally undesirable consequences.
\subsubsection{Other HRL Approaches}
The MAXQ algorithm~\citep{DietterichHRL} is a method for learning a collection of policies at each layer of a programmer-specified hierarchy of subroutines, using  a form of semi-Markov Q-learning.
% The learned policies are locally optimal, but the resulting policy at the finest scale is not %guaranteed to be optimal.
Two types of optimality are discussed: recursive optimality, where each sub-problem's policy is locally optimal, but the overall solution may not be optimal, and hierarchical optimality, where the global policy is optimal given the constraints the hierarchy imposes. We consider optimality with respect to the true, global optimum in the set of all stationary, Markov policies, and have discussed algorithms for solving MMDPs above which converge to optimal policies at the finest scale. More recent work~\citep{Kaelbling:CRA:11} has begun to explore hierarchy as a means for reducing the amount of search that is required to learn and/or construct a hierarchical plan. In this paper, we do not consider using a (partial) MMDP to speed up exploration and subsequent elaboration of itself, although this is an interesting avenue we leave for future work.  

Hierarchical learning in partially observable domains has also been considered~\citep{TheoKael:NIPS:04,HeBrunskill:JAIR:11,Pineau:UAI:03,Kurniawati:09}, but is a less developed topic. \cite{HeBrunskill:JAIR:11} is noteworthy in that they consider online learning with macro-actions in POMDPs, with a particular emphasis on scalability, although macro-actions must be pre-specified by the user, and are constrained to be open-loop sequences.  Our coarse actions may also be seen as open-loop controllers, but they only end when a bottleneck state is reached.

\subsubsection{Hierarchy Discovery}
Hierarchy is important in the literature referenced above,  but the meaning of the hierarchy and its geometric interpretation is often detached from the solution process. If the hierarchy must be provided by a domain expert, the solution algorithms can only make limited assumptions about what the user has provided. In this paper, {\em structure discovery} and {\em learning} are intimately connected. Hierarchies are (automatically) defined based on the geometry and goals of the problem, and this is exploited to achieve locality of the computations and scalability, and to create opportunities for transfer.  Much of the early HRL research primarily sought to define algorithms for learning given user-specified hierarchies, and only later did researchers consider automatic discovery and characterization of task hierarchies. As a result, many approaches to structure discovery appear to rest on top of generic HRL frameworks (such as options), and lack synergy with the underlying learning process. Within this collection of approaches, there are however several ideas which overlap with portions of our work. The literature concerning automatic detection of macro-actions and/or hierarchies can be roughly organized into three categories: (1) Approaches which  aggregate states based on statistics observed at individual states during simulation, (2) approaches involving graph-based clustering/analysis, and (3) approaches based on ``experimentation'' or demonstration trajectories not necessarily directly related to the task to be solved.

The work of~\cite{StollePrecup:02} recognizes a form of ``bottleneck'', there defined to be states which are visited frequently. The authors propose a heuristic algorithm which, given simulated trajectories, takes the top most frequently visited states as bottlenecks, and uses these states to define options. Others have attempted to capture similar properties. The HEXQ~\citep{Hengst02} and VISA~\citep{Jonsson06} algorithms group states based on the frequency with which their values change. \cite{Marthi07} perform a direct greedy search for hierarchical policies consistent with sample trajectories and an analysis of changes in states' values. However, a commonly occurring problem with these approaches is that they are computationally intensive and can require large trajectory samples. Exploration is global in these approaches, which could be problematic for large, complex domains. Both VISA and the HI-MAT approach of~\cite{Mehta08} (which automatically creates MAXQ hierarchies) require estimating and analyzing a dynamic Bayesian network in order to determine clusters of mutually relevant states. Approaches which assume DBN transition models allow for compact representations, but also lead to a solution cost that is exponential in the size of the representation unless specialized approximate algorithms are used. In addition, some of these approaches do not maintain a principled or consistent notion of scale.

Our intuitive notion of a bottleneck state in Algorithm~\ref{alg:spectral_clustering} is close in spirit to several other graph-theoretic definitions appearing in the literature, although we emphasize that we have not designed our approach to HRL around any single characterization of bottlenecks. Several graph clustering algorithms are proposed in~\citep{MenacheMannor:02,MannorMenache:04} for identifying subgoals (online) to accelerate Q-Learning. The latter reference employs a form of local spectral clustering (local because good bottlenecks may not always be part of a global cut)  and is related to the work of~\cite{OsentoskiMahadevan:10}, while the former proposes a clustering method that can take advantage of the current value function estimate. In these papers a weighted graph is periodically built from observed state transitions and then cut into clusters. Options are learned so that neighboring pieces of the graph can reach each other. Although policies on clusters (options) are computed separately, they are computed on the basis of an artificial reward, and can therefore be incorrect. The work of~\cite{SimsekBarto:ICML:05} is similar, and considers local spectral clustering on the basis of a limited, recent collection of trajectory samples. The statespace is successively explored and bottlenecks are identified without having to perform global computations. Options corresponding to the clusters resulting from graph cuts are again learned, but may also be incorrect, so re-learning of the options is prescribed. Another approach, distinct from spectral clustering, is the identification of bottleneck states based on ``betweenness'', proposed by~\cite{SimsekBarto:NIPS:08}. The idea follows from the observation that bottlenecks may not always be identifiable based on node connectivity.  Bottlenecks are defined to be states through which a large fraction of graph geodesics must pass. States within a small neighborhood with comparatively high betweenness are identified as bottlenecks, and options are defined on the basis of these subgoals. Betweenness is a natural alternative to diffusion based clustering techniques, but can give substantially different results depending on the geometry of the statespace and how the graph weights are chosen. Each of the clustering methods described may also be used within our framework to choose bottlenecks and  partition the statespace, and in online exploration scenarios the current statespace graph may be re-estimated (and the MDP re-compressed) as desired. 

The third category of hierarchy discovery research may be represented by the intrinsic motivation work of~\cite{Singh2004,Barto:ICDL:04}, and the skill discovery method of~\cite{Konidaris:NIPS:09}. A significant difference between these references and the work described here is that, while coarse structure is used to decompose the fine scale problem into more manageable pieces, there is no explicit independent coarse problem that is solved and pushed downwards in order to guide/accelerate the solution at the fine scale, and only one level of abstraction is considered. In~\citep{Singh2004}, an agent discovers skills by experimenting within the domain and receiving rewards for actions which lead to novel, salient, or intrinsically interesting outcomes. The learned skills then serve as options within an SMDP framework, to learn policies that can accomplish extrinsically rewarded tasks. The authors consider two layer hierarchies (one level of abstraction), and require manual specification as to which events are salient and how they are rewarded. This work differs from ours in that the sub-tasks we learn are tailored to a particular goal. In our development, nothing is required from the user and learning may be faster, but sub-task solutions may not as easily transfer to other problems. One possible way around this would be to solve multiple MMDPs corresponding to different objectives within the same domain, and store the \clustertext specific solutions in a database. 

The skill chaining method proposed by~\cite{Konidaris:NIPS:09} seeks to learn skills in continuous domains by working backwards from a goal state, and is particularly useful under a {\em query paradigm} in which multiple, closely related problems need to be solved (for instance, involving different goal states). The skill discovery process is local in the sense that only a neighborhood around the previous milestone is considered in defining a new skill. The notion of chaining together skills is similar to the constraint we have imposed stipulating that coarse actions can only be taken in bottleneck states: the initiation/terminal sets of given skill can only be initiation and/or terminal sets of other skills.  In a subsequent paper~\citep{Konidaris:IJRR:12}, the authors extend skill chains to trees of skills. However, the trees refer to the arrangement of skills within a single level of abstraction, and does not refer to a tree of scales.  Skill chains and trees  may be described as effectively imposing a localized representation for the value function, driven by the geometry of the problem. By using localized basis functions to represent the value function  in a flat model~\citep{MahadevanMaggioni:JMLR:07,OsentoskiMahadevan:10}, it is likely that similar structure could be captured. On the basis of these observations, it is possible that the recursive spectral clustering algorithm described in Section~\ref{sec:clustering} can lead to a  hierarchy of sub-tasks respecting similar geometric properties as that of the skill trees in~\citep{Konidaris:IJRR:12} but at more than one level of abstraction.

The references above propose different heuristics for defining how and when options (or other sub-goals) are learned and/or updated. However, a major challenge is to define an isolated problem whose solution can be obtained efficiently (i.e. locally), but is still consistent with other macro-actions. In several of the approaches above, the rewards used to learn the subtasks and the values fixed at bottleneck states may not be compatible, in which case policies can conflict (with respect to a designated goal) across subtasks. The multiscale framework described here provides a principled way to learn consistent local policies, given a partition of the statespace into subtasks. The reward function, boundary values, and discounts are determined automatically in our setting. Consistency is also maintained across scales so that the process may be readily repeated as necessary. If a single macro-action is itself a large problem, it is not clear how the methods above can be extended to another scale because of scale-dependent assumptions. The learning algorithms we have proposed are independent of the scale at which they are applied. Finally, it is often the case that non-standard algorithms are required to solve a given problem, once a set of subtasks is identified. Our approach provides a hierarchy of ordinary MDPs which can be solved using standard techniques.
%and which is itself inexpensive to compute.

A more integrated approach to HRL with automatic construction of the hierarchy is the recent work of~\cite{BarryKael:IJCAI:11}. The DetH* algorithm proposed in~\citep{BarryKael:IJCAI:11} shares some common themes with our work (and we like the name), although there are also pointed differences. DetH* uses a type of coarse policy (a deterministic map between coarse states) to decompose the fine scale problem into sub-problems (extending beyond two layers is not discussed), but does not determine a coarse value function that can be used to solve the independent sub-problems in a manner which maintains global consistency. A type of local optimality is, however, guaranteed. The end product at the coarse scale is not an MDP, and the complexity of the proposed coarse solution algorithm depends on a quantity similar to the worst case fine scale cluster diameter. The multiscale MDPs proposed here are hierarchies of independent, self-contained MDPs. DetH* clusters the statespace by trading off the size of the clusters against a reachability condition. Heuristics are applied to ensure that clusters do not become too large or too small. The heuristics indirectly attempt to settle on an appropriate scale, but it is not immediately clear how geometry enters the picture. ``Too-large'' appears to mainly be a computational consideration (i.e. the number of states). Our perspective is that it is perfectly fine, indeed desirable, for clusters to be large -- depending on the problem geometry or other notion of intrinsic complexity. For example, if the Markov chain associated to a policy is fast mixing within the room. More precisely, the number of states in a cluster is not a computational problem so much as {\em conditioning} is. The approach taken in our work seeks to find the right scale and partitioning based directly on local geometry, and by extension, conditioning. 
%If a cluster is too complex, this means that coarser scales should be considered. 

Another difference worth mentioning is that DetH* defines coarse states to be representatives of entire fine scale clusters, and goal states are also lumped into a single macro goal state. In our framework the coarse states are bottlenecks, and are elements of the fine scale statespace. Because the coarse states in DetH* are sets, the authors define a cost function on coarse states based on averages of shortest paths between clusters. The extent to which the underlying dynamics can captured is not clear, and the coarse quantities are not consistent with the fine scale in a precise, probabilistic sense invoking an underlying Markov chain. In addition, DetH* determines a hierarchy on the basis of shortest paths, and cannot consider a coarsening with respect to a particular policy. Transitions between coarse states are deterministic, whereas we construct a coarse problem which is itself an MDP, having its own transition kernel. This allows for greater generality and encompasses a richer set of problems. Finally, \cite{BarryKael:IJCAI:11} and the other references above do not consider transfer within their respective hierarchical frameworks. 

%DetH notes:
%split the rooms based on reachability problems.
%macro-states: are sets of states in original MDP. same for goals (they're lumped).
%transfer is not considered
%so costs between macro-states are averages over entire room costs, and they use shortest paths,
%so compression is not done w/ respect to any particular policy. 
%transitions between coarse states are deterministic. (we have Ptilde)
%not consistent in the mean of course. may not represent the underlying dynamics.
%also rely on setting the value of states outside of a pair of rooms to have very large negative
%   value -- which could lead to very slow convergence.
%they further decompose within rooms to compute a cost fn estimating the reward accumulated between
%  transitions between rooms. but it's averaged over all states in a room.
%  complexity of solving at coarse scale depends on complexity of the fine scale, i.e. the diameter of
%  rooms at fine scale. ours depends only on the number of bottlenecks.

\subsection{Transfer Learning}
A good overview of the current landscape for transfer in reinforcement learning is~\citep{Taylor09}; we provide only a brief summary of some efforts related to ours. The literature discussed up to this point has been primarily concerned with discovering state abstractions for a specific problem, and transfer may be possible only to problems in the same domain, if at all. The approach taken in~\citep{Konidaris:JMLR:12,Konidaris:IJCAI:07} posits the existence of shared features across related tasks, and discusses transfer of value functions defined from those features (for example, the features may be coefficients in a basis function expansion). The shared features constitute a representation which simultaneously captures relatedness among tasks and solves any relevant correspondence problems. Transferred value functions serve as shaping functions for learning new tasks. Reward systems are the same across tasks. In some instances, options are transferred~\citep{Konidaris:IJCAI:07}, again given a suitable feature space, but the option transfer is limited to a single level of abstraction.  The authors do not discuss how to identify a suitable feature space (``agent space'') to carry out transfer, although this is a crucial element and determines what kind of transfer is possible, and the degree to which transfer helps in new tasks. The earlier work of~\cite{Guestrin:ICJAI:03} is related to that of \cite{Konidaris:JMLR:12} in that sets of similar problems are defined from the ground-up using a common, class-based formalism. In the language of~\cite{Konidaris:JMLR:12}, \cite{Guestrin:ICJAI:03} specify problems with a specific, predetermined feature space in mind, so that value functions defined on the features immediately transfer to tasks within the class, by construction. The approach is not, however, hierarchical. 

\cite{FergusonMahadevan:06} (also~\cite{Tsao:AAMAS:08}) do not consider transfer in a hierarchical setting, but take an approach that can also be incorporated into our development. In their approach, eigenfunctions of a graph Laplacian describing the domain are transferred (the graph is constructed from trajectories). These eigenfunctions (or ``proto-value functions'') serve as basis functions for defining a value function over the statespace~\citep{MahadevanMaggioni:JMLR:07}. Transfer is considered among tasks with identical domains but varied reward functions, or among tasks with fixed reward function and geometry, but scaled statespaces. Since subtasks (coarse or fine) within an MMDP are themselves MDPs, one may also select various basis functions on which to expand the local value functions specific to \clusterstext in our framework. This extension of the basic MMDP solution methodology we have described (Section~\ref{sec:mdp_solution}) can be applied at any scale, based on graph Laplacians derived from either simulations, or from a transition model $P$. The resulting proto-value functions may be stored as part of the solution to a sub-task in the library of transferrable objects, and transferred when appropriate.

Multiscale MDPs, as defined in this paper, contrast with the approaches above in that transfer may be pursued at any scale (out of many), and the procedure for carrying out transfer is the same regardless of scale. We support transfer of coarse or fine scale knowledge, or combinations thereof. In addition, we have attempted to automatically handle the problem of systematically identifying transfer opportunities and encoding the knowledge to be transferred. Although value functions (defined with basis functions or otherwise) may be transferred within our framework,  transfer can also take the form of policies or potential functions, so that transfer can occur between more dissimilar tasks. We still, however, require statespace graph matchings, which may be challenging to obtain depending on the problems and scales under consideration. Finally, refinement and improvement of a transferred quantity is straightforward when working with MMDPs. Any algorithm may be used to improve the policy, since  problem and sub-task representations are always independent MDPs, both within and across scales.

%%%%%%%%%%%%%%%%%%%%%%%%%%%%%%%%%%%%%%%%%%%%%%%%%%%%%%%%%%%%%%%%%%%%%%%%%%%%%%%
% Discussion
%%%%%%%%%%%%%%%%%%%%%%%%%%%%%%%%%%%%%%%%%%%%%%%%%%%%%%%%%%%%%%%%%%%%%%%%%%%%%%%

\section{Discussion}
\label{sec:discussion}
We have presented a general framework for efficiently compressing Markov decision problems, and considered multiscale knowledge transfer between related MDPs. Our treatment is multiscale, and centers on a hierarchical decomposition in which coarse scale problems are independent, deterministic MDPs, and in which local sub-problems at fine scales may be decoupled given a coarse scale solution. We then argued that such multiscale representations may be used to efficiently solve a problem, and to transfer localized and/or coarse solutions rather than global solutions.  The experiments we considered demonstrated computational speedups as well as the transfer of localized potential functions and policies at both coarse and fine scales across three example domains. As one would expect, problems receiving transferred information were shown to be solvable using less computational effort. 

In the subsections below, we address a few generalizations and suggest outstanding directions for future consideration. 

%\subsection{Algorithmic Variations}
%Bottom-up vs. top-down. Iterating some at the fine scale first so that compressed representation is better (and more directed). Alternating between pairs of layers in the hierarchy. Variants of policy iteration, with regularization and blending with the random policy to always include some stochasticity. Variants for BN detection: sequential outward heat flow, recursive spectral partitioning using multiple eigenvectors. Directed laplacians; Fan Chung's algorithm. Re-detecting BNs after improving the policy at any level. Interleaving BN detection, an policy iteration, and compression steps.
%
% One may also consider re-compressing at any particular scale, given an updated policy, to obtain a better coarse representation. In this sense the hierarchy can be traversed in both directions by solving coarse MDPs (downwards flow of information) and compressing (upward flowing refinement of information).

\subsection{Model-based vs. Model-free Learning}
The compression procedure introduced above assumes access to a pre-specified model described by $P, R$ and $\Gamma$, although it is only mildly model-dependent in the sense that compression involves averaging so that the ``model'' is not needed to high precision. The assumption that the model is known may be relaxed entirely, however. The general multiscale approach to compression we have described may be extended to include a completely model-free setting by considering a fully empirical, Monte-Carlo based compression and bottleneck detection regime. Bottlenecks may be initially detected on the basis of a local exploration (see for example Spielman's local heat flow algorithm~\citep{Spielman:LocalClustering} or Peres' evolving sets~\citep{Andersen:2009:ESP,Morris_Peres_2003}), so that the entire $P$ matrix is not needed.  The exploration may be done starting from  a goal state, for example, and would be inexpensive because only the \clustertext enclosing the goal state needs to be considered. Given the bottlenecks, Monte-Carlo based simulations can be used to compress the MDP locally in the vicinity of the starting state by directly estimating the coarse ingredients (transition probabilities, rewards, and discounts). The process may then be repeated starting from the detected bottleneck states, proceeding outwards, to build up a global picture successively adding one (or a few) \clusterstext at a time.  On-policy exploration could be accelerated in previously explored regions by using the compressed model as a fast simulator.  This approach could make  difficult problems, where long sequences of actions are necessary to reach the goal, more tractable.

%Monte-carlo based compression and BN detection. Given BNs, we can perform MC based simulations to compress the MDP. This means estimating hitting times and average rewards etc. BN detection can be replaced with e.g. Spielman's local heat flow algorithm, so that the entire $P$ matrix is not needed (model-free). Heat flow can be done sequentially from, for instance, a goal state outwards. Heat is flowed only within a \clustertext at a time, and so involves a comparatively inexpensive calculation. Our algorithm is also mildly model-dependent: Since compression involves averaging, the ``model'' is not needed to high precision.

\subsection{Continuous Domains, Sampling, and Dictionary Expansions}
We have assumed discrete state and action spaces, although it should be emphasized that the multiscale development here does not critically depend on the discrete assumption, and may be adapted to continuous domains. A simple approach, which was considered in Section~\ref{sec:pend-expt}, is to discretize and then build a model on a discrete set of representative states. The discretization is in general problem dependent, and need not be dense in order to apply the homogenization prescribed above. A continuous problem could be discretized with a coarse sampling determined by the problem's complexity, with the expectation of good results since complexity (and geometry) largely determines the multiscale decomposition. The translation from continuous to discrete could be model-based (e.g., eigenfunctions of $P$) or model-free (e.g., eigenvectors of the graph Laplacian built from simulated  trajectories). Moreover, in this case the coarse MDPs have discrete statespaces, so that handling continuous variables is only a concern at the finest scale.

Another approach might be to overlay a discrete coarse MDP on top of a continuous fine scale problem. In this case, the fine scale quantities (including policies) could be represented by expansions on basis function  dictionaries for the statespace, or even with factored representations and neural networks. One could then consider collapsing bottleneck regions (as sets) into single coarse states, and compressing based on fine scale trajectory statistics between these regions. Depending on the form of the model, it is possible that analytical expressions for the coarse quantities can be obtained (for example, if one  uses Gaussian basis functions to describe $P, R$, leading to Gaussian integrals). The local boundary value problems occurring at the fine scale could be amenable to solution with existing approximate DP algorithms.

We point out that adopting basis  function representations for the model could also support a broader set of transfer possibilities. Basis functions may themselves be transferred locally (representation transfer), in addition to solutions. A careful choice of basis can also impart invariance properties, and provide a means to accommodate domain changes (e.g., scaling, by way of Nystrom extensions, and goal changes) and reward function changes across problems (see~\cite{FergusonMahadevan:06} for a discussion related to representation transfer).

\subsection{Partially Observable Domains}
Partially observable MDPs (POMDPs) can be cast as fully observable MDPs on a continuous belief statespace, so that in theory POMDPs can be decomposed, solved and transferred using the framework discussed in this paper. In practice, solving belief MDPs exactly can be computationally prohibitive. Extending the multiscale MDP framework described above to POMDPs in a more fundamental way could lead to efficient approximate solution algorithms. For instance, solutions to more tractable coarse problems could be used to provide  interpolated solutions to finer problems, where accuracy vs. complexity of the interpolation can be balanced locally. 
%In this case, the key ingredients of the multiscale representation could be continuous, and described %with local, multiscale basis functions learned from the statespace geometry.

\section*{Acknowledgments}
This work was partially supported by DARPA grants MSEE FA8650-11-1-7150; Washington State U. DOE contract SUB\#113054 G002745; NSF grants IIS-08-03293, DMS-08-47388. The authors also gratefully acknowledge helpful discussions with, and suggestions from, Sridhar Mahadevan.

%This work was partially supported by DARPA grants FA8650-11-1-7150 SUB\#7-3130298, MSEE FA8650-11-1-7150; Washington State U. contract SUB\#113054 G002745; NSF grants IIS-08-03293, DMS-08-47388; and ONR contract N00014-07-1-0625. The authors also gratefully acknowledge helpful discussions and suggestions by Sridhar Mahadevan.

\appendix

\section{Derivation of the linear system describing a value function}
Let $\pi$ be a policy on $S$.
Here we prove equation \eqref{eqn:vpi-linsys}:
\begin{equation*}
V^{\pi}(s) = \sum_{s',a}P(s,a,s')\pi(s,a)\bigl[R(s,a,s') + \Gamma(s,a,s')V^{\pi}(s')\bigr],\quad s\in S.
\end{equation*}
where we recall that $V^\pi$ is the value function defined in~\eqref{eqn:pi-discounted-rewards},
\begin{equation*}
V^{\pi}(s) = \bbE\left[R(s_0,a_1,s_1) + 
\sum_{t=1}^{\infty}\left\lbrace\prod_{\tau=0}^{t-1}\Gamma(s_{\tau},a_{\tau+1},s_{\tau+1})\right\rbrace R(s_t,a_{t+1},s_{t+1}) ~\Bigl|\Bigr.~ s_0=s\right].
\end{equation*}
Applying the Markov property to the first expectation on the right-hand side
\begin{align*}
\bbE\bigl[R(s_0,a_1,s_1) ~|~ s_0=s\bigr] &= \sum_{s',a}\bbP(s_1=s', a_1=a|s_0=s)R(s,a,s')\\
&= \sum_{s',a}P(s,a,s')\pi(s,a)R(s,a,s').
\end{align*}
For the second term, we have
\begin{multline*}
\bbE\left[\sum_{t=1}^{\infty}\prod_{\tau=0}^{t-1}\Gamma(s_{\tau},a_{\tau+1},s_{\tau+1}) R(s_t,a_{t+1},s_{t+1}) ~\Bigl|\Bigr.~ s_0=s\right] \\
\begin{aligned}
&= \bbE_{s_1,a_1}\left\{\bbE\left[\sum_{t=1}^{\infty}\prod_{\tau=0}^{t-1}\Gamma(s_{\tau},a_{\tau+1},s_{\tau+1}) R(s_t,a_{t+1},s_{t+1})
 ~\Bigl|\Bigr.~ s_0=s, s_1, a_1\right] ~\Bigl|\Bigr.~ s_0=s\right\}\\ 
& \begin{split}
 &= \bbE_{s_1,a_1}\Biggl\{\Gamma(s_0,a_1,s_1)\bbE\biggl[ R(s_1,a_2,s_2) \;+ \\
& \qquad\qquad 
\sum_{t=2}^{\infty}\prod_{\tau=1}^{t-1}\Gamma(s_{\tau},a_{\tau+1},s_{\tau+1}) R(s_t,a_{t+1},s_{t+1})
 ~\Bigl|\Bigr.~ s_0=s, s_1, a_1\biggr] ~\Bigl|\Bigr.~ s_0=s\Biggr\}
 \end{split} \\
&= \bbE_{s_1,a_1}\bigl[ \Gamma(s_0,a_1,s_1)V^{\pi}(s_1) ~|~ s_0=s\bigr]\\
&= \sum_{s'}V^{\pi}(s')\sum_a P(s,a,s')\Gamma(s,a,s')\pi(s,a).
\end{aligned}
\end{multline*}
Putting these results together, we obtain the linear system of equations
\begin{equation*}
V^{\pi}(s) = \sum_{s',a}P(s,a,s')\pi(s,a)\bigl[R(s,a,s') + \Gamma(s,a,s')V^{\pi}(s')\bigr],\quad s\in S\,,
\end{equation*}
which is \eqref{eqn:vpi-linsys}.

\section{Analytical results and computational considerations for the compression step}
\subsection{Compressed transition matrix $\widetilde P$: Proof of Proposition~\ref{prop:coarsePsas}}
Let  $a\in\widetilde{A}$ be the coarse action corresponding to executing a policy $\pi_\clusterindex\in\picluster_\clusterindex$ in \clustertext\ $\clusterindex$, so that $(X_t)_{t\geq 0}$ is the (discrete time) Markov chain on the \clustertext \clusterindex with transition matrix $\Ppirestrtocluster$. Recall that the set of bottleneck states within the \clustertext is denoted $\dcluster \subseteq \B$, and the set of non-bottleneck (interior) states in the \clustertext is denoted $\Ui:=\clusterindex\setminus\dcluster $. 

First, observe that if $s\notin\clusterindex$, then the entries $\widetilde{P}(s,a,\cdot)$ are not defined because the action is unavailable. 
Second, if $s'\notin\clusterindex$, then we know that  $\widetilde{P}(s,a,s')=0$. 
Therefore we restrict our attention to pairs $s,s'\in\clusterindex$, i.e. $s,s'$ compatible with $a$.  
The transition probabilities among pairs of states in $\dcluster $ are computed by observing the Markov chain $(X_t)_{t\ge0}$ at the hitting times of $\dcluster$: 
\[
T_m =\inf\{t>T_{m-1} ~|~ X_t\in \dcluster \},\qquad m=1,2,\ldots
\]
with $T_0 = \inf\{t\geq 0 ~|~ X_t\in \dcluster \}$.  The hitting times are a.s. finite ($\bbP(T_m<\infty)=1$, $\forall m\geq 0$) in light of the fact that,  by construction, absorbing states are bottlenecks, and the assumption that \dcluster is $\pi$-reachable from any starting point. A new chain $(Y_m)_{m\geq 0}$ taking only values in $\dcluster $ can now be defined as $Y_m=X_{T_m}$. The transition probability matrix governing $(Y_m)_m$ is computed from that of $(X_t)_t$ by solving a linear system for a few different right hand sides as follows. Let $\bbP_s(B) := \bbP(B ~|~ X_0 = s)$, for any event $B$ (measurable w.r.t. a suitable $\sigma$-algebra). Consider the hitting probabilities $\bbP_s(X_{T_0}=s')$. 
Clearly $\bbP_s(X_{T_0}=s') = \delta_{s,s'}$ for $s,s'\in\dcluster $, where $\delta$ denotes the Kronecker delta function.  
The strong Markov property (see for instance~\citet[Thm 1.4.2]{Norris97}) allows one to apply the Markov property at (finite) stopping times, so that a one-step analysis gives the hitting probabilities for $s\in\Ui,s'\in\dcluster $ as 
\begin{align*}
\bbP_s(X_{T_0}=s') &= \bbE\bigl[\bbP_s(X_{T_0}=s'~|~X_1,a_1) ~\bigl|\bigr.~ X_0=s\bigr]\\
&=\smashoperator[r]{\sum_{s''\in\clusterindex,\, a\in A}}P_{\clusterindex}(s,a,s'')\pi_{\clusterindex}(s,a)\bbP_s(X_{T_0}=s'~|~X_1=s'',a_1=a) \\
&=\sum_{a\in A}P_{\clusterindex}(s,a,s')\pi_{\clusterindex}(s,a) + 
\smashoperator[r]{\sum_{s''\in \Ui,\, a\in A}}P_{\clusterindex}(s,a,s'')\pi_{\clusterindex}(s,a)\bbP_{s''}(X_{T_0}=s').
\end{align*}
The third equality follows from the second applying the fact that  $X_{T_0}$ is independent of $a_{1}$ given $X_{1}$, and the strong Markov property.
Summarizing, these probabilities may be computed by solving the linear system
\begin{equation}\label{eqn:t0_cases}
\bbP_s(X_{T_0}=s') =
\begin{cases}
\delta_{s,s'} & s\in\dcluster \\
%1 & s\in\dcluster \setminus \{s'\} \\
\Ppirestrtocluster(s,s') + \sum_{s''\in\Ui}\Ppirestrtocluster(s,s'')\bbP_{s''}(X_{T_0}=s') & s\in\Ui \,.
\end{cases}
\end{equation}
By the strong Markov property, we also have for $s,s'\in\dcluster $,
\begin{align*}
\widetilde{P}(s,a,s') & = \bbP(Y_{m+1}=s' ~|~ Y_m=s) \\
 &= \bbP(X_{T_{m+1}} = s' ~|~X_{T_m}=s)\\
&=\bbP(X_{T_1} = s' ~|~X_{T_0}=s) \\
&=\bbP(X_{T_1} = s' ~|~X_0 =s) =  \bbP_s(X_{T_1} = s').
\end{align*}
The law of total probability applied to the right-hand side of the third equality gives, for $s,s'\in\dcluster $, 
\begin{align}
\bbP(X_{T_1} = s' ~|~X_{T_0}=s) &= 
\bbE\bigl[\bbP(X_{T_1} = s' ~|~X_{T_0}=s, X_{T_0 + 1}, a_{T_0 + 1}) ~\bigl|\bigr.~ X_{T_0}=s \bigr] \nonumber\\
&= \sum_{\substack{s''\in\clusterindex\\ a\in A}}P_{\clusterindex}(s,a,s'')\pi_{\clusterindex}(s,a)\bbP(X_{T_1} = s' ~|~X_{T_0}=s, X_{T_0 + 1}=s'', a_{T_0 + 1}=a) \nonumber\\
&= \sum_{s''\in\clusterindex}\Ppirestrtocluster(s,s'')\bbP(X_{T_1} = s' ~|~X_{T_0}=s, X_{T_0 + 1}=s'') \nonumber\\
&= \Ppirestrtocluster(s,s') + \sum_{s''\in\Ui}\Ppirestrtocluster(s,s'')\bbP_{s''}(X_{T_0}=s'). \label{eqn:t1_sys}
\end{align}
The third equality follows from the second using the fact that $X_{T_1}$ is independent of $a_{T_0+1}$ given $X_{T_0+1}$.

Noticing that $(\bbP(X_{T_1} = s' ~|~X_{T_0}=s))_{s,s'\in\dcluster}$ depends on $(\bbP_s(X_{T_0}=s'))_{s,s'\in\Ui}$, but the latter do not depend on the former, we can combine Equations~\eqref{eqn:t0_cases} and~\eqref{eqn:t1_sys} into a single linear system for each $s'\in\dcluster $:
\begin{equation}\label{eqn:trans_probs_app}
H_{s,s'} =  \Ppirestrtocluster(s,s') + \sum_{s''\in\Ui}\Ppirestrtocluster(s,s'')H_{s'',s'},\quad s\in \clusterindex, s'\in\dcluster .
\end{equation}
%or in matrix-vector form,
%\begin{equation}\label{eqn:trans_probs}
%\bigl(I - \Ppirestrtocluster(I-J)\bigr)H = \Ppirestrtocluster
%\end{equation}
%where $J$ is a matrix of all zeros except $J_{kk}=1$ if state $k\in \dcluster $. 
We then have
\[
\widetilde{P}(s,a,s') = H_{s,s'},\quad  \text{ for all } s,s'\in \dcluster ,
\]
assuming $H$ is the {\em minimal non-negative solution} to~\eqref{eqn:trans_probs_app}, and $a$ is the action corresponding to executing the policy $\pi_\clusterindex$ in \clustertext\ $\clusterindex$. Consider the partitioning 
$$
\Ppirestrtocluster =
\begin{pmatrix}
Q & B\\
C & D
\end{pmatrix},
\qquad
H =
\begin{pmatrix}
h_q \\
h_b
\end{pmatrix}
$$
where the blocks $Q, D$ describe the interaction among non-bottleneck and bottleneck states within \clustertext\ $\clusterindex$ respectively. In matrix-vector form, we can solve for the compressed probabilities by computing the minimal non-negative solution to 
\begin{equation}\label{eqn:trprob_submtx}
(I-Q)h_q = B
\end{equation}
followed by $$h_b = D + Ch_q,$$ where $h_b$ is the desired transition probability matrix of the compressed MDP given the action $a$. If Equation~\eqref{eqn:trprob_submtx} has a unique solution, then the cost of these computations\footnote{Using for instance, an LU factorization ($\cO(|\Ui|^3)$) to efficiently solve for $|\dcluster |\ll |\Ui|$ right-hand sides at a cost of $\cO(|\Ui|^2)$ each.} is at most $\cO(|\Ui|^3 + |\dcluster ||\Ui|^2)$. If solving the linear system~\eqref{eqn:trprob_submtx} does not produce a non-negative solution, then algorithms for non-negative least-squares must be used.
 
 From these expressions, it is clear that the transition probabilities starting from non-bottleneck states $h_q$ do not depend on those starting from the bottleneck states or on entries of  $\Ppirestrtocluster$ outside of $Q$. In addition, by definition of the stopping times above, the transition probabilities enforce $\widetilde{P}(s,a,s')=\delta_{s,s'}$ whenever $s$ is absorbing.

%%%%%%%%%%%%%%%%%%%%%%%%%%%%%%%%%%%%%%%%%%%%%%%%%%%%%%%%%%%%%%%%
\subsection{Compressed rewards $\widetilde R$: Proof of Proposition \ref{prop:coarse_rewards}}

We will first need to define a controlled Markov process conditioned on future events.  The approach taken here is similar to that of the Doob $h$-transform (see~\cite{LPWBook}) for Markov chains, but differs in that we keep track of the actions. We fix $s'\in\dcluster$. Consider the event 
$\{X_{T_0}=s'\}$ and define
\begin{equation}\label{eqn:h_function}
h_{s'}(s):=\bbP_s(X_{T_0}=s'),\qquad s\in \clusterindex,
\end{equation}
with the probabilities $\bbP_s(X_{T_0}=s')$ given by Equation~\eqref{eqn:t0_cases}. 
%Although $h$ depends on the choice of $s'\in\dcluster $, we will not explicitly indicate this dependence; which $s'$ is implied will be clear from the context. 
It can be shown that the function $h_{s'}$ is $\Ppirestrtocluster$-harmonic. Using Bayes rule and the strong Markov property,
\begin{align}
\bbP_s(X_1=s'', a_1=a~|~X_{T_0}=s') &= \frac{\bbP_s(X_{T_0}=s' ~|~ X_1=s'',a_1=a)\bbP_s(X_1=s'',a_1=a)}{\bbP_s(X_{T_0}=s')} \notag \\
 &= \frac{\bbP_{s''}(X_{T_0}=s')P_\clusterindex(s,a,s'')\pi_\clusterindex(s,a)}{\bbP_s(X_{T_0}=s')} \notag \\
 &= \frac{P_{\clusterindex}(s,a,s'')\pi_\clusterindex(s,a)h_{s'}(s'')}{h_{s'}(s)} \notag \\
 & =: P_{h_{s'}}(s,a,s'')\label{eqn:Ph_defn}
\end{align}
for $s''\in \clusterindex_{s'}:=\{s\in \clusterindex~|~h_{s'}(s)>0\}$, $s\in \Uprime\setminus\{s'\}$, and $a\in A$.

Similarly, for $(s,s')\in\supp_a(\widetilde{P})\subseteq\dcluster $, $s''\in \clusterindex_{s'}$, $a\in A$,
\begin{equation}\label{eqn:Phtilde_defn}
\begin{aligned}
\bbP_s & (X_{T_0+1}=s'',a_{T_0+1}=a~|~X_{T_1}=s')\\
&= \frac{\bbP(X_{T_1}=s' ~|~ X_{T_0}=s,X_{T_0+1}=s'',a_{T_0+1}=a)\bbP_s(X_1=s'',a_1=a)}{\bbP_s(X_{T_1}=s')}\\
 &= \frac{P_{\clusterindex}(s,a,s'')\pi_\clusterindex(s,a)h_{s'}(s'')}{\widetilde{P}(s,a,s')}\\
 & =: P_{\tilde{h}_{s'}}(s,a,s'')
\end{aligned}
\end{equation}
since 
\begin{equation*}
\bbP(X_{T_1}=s' ~|~ X_{T_0}=s,X_{T_0+1}=s'') = 
\begin{cases}
\delta_{s'',s'} & \text{if } s''\in\dcluster \\
\bbP_{s''}(X_{T_0}=s') & \text{if } s''\in\Ui
\end{cases}
\end{equation*}
is equal to $h_{s'}(s)$ defined by Equation~\eqref{eqn:h_function}, and for $s\in\dcluster $ we have $\bbP_s(X_{T_1}=s')=\widetilde{P}(s,a,s')$ as given by Equation~\eqref{eqn:t1_sys}.
%, assuming that $a$ is the compressed action associated to running policy $\pi_\clusterindex$ in \clustertext\ $\clusterindex$ and that this policy has also been used to compute $h_{s'}(s)$. 

We now consider the expected rewards collected along paths between bottlenecks connected to a  \clustertext. The process is similar to that of the transition probabilities, where we first defined hitting probabilities at time $T_0$, and from those quantities defined conditional hitting probabilities at time $T_1$. Here we use discounted expected rewards collected up to time $T_0$ to ultimately compute rewards collected between $T_0$ and $T_1$. Recall that we assume a reward is collected only after transitioning. 
Let $T$ and $T'$ be two arbitrary stopping times satisfying $0\leq T<T'<\infty$ (a.s.). The discounted reward accumulated over the interval $T\leq t\leq T'$ is given by the random variable
\[
R_{T}^{T'}:= R(X_T,a_{T+1}, X_{T+1}) + \sum_{t=T+1}^{T'-1}\left[
\prod_{\tau=T}^{t-1}\Gamma\bigl(X_{\tau},a_{\tau+1}, X_{\tau+1}\bigr)\right]
R\bigl(X_t,a_{t+1}, X_{t+1}\bigr) 
\]
where $a_{t+1}\sim\pi_\clusterindex(X_t)$ for $t=T,\ldots,T'-1$, and we set $R_T^T \equiv 0$ for any $T$.

%Let $\bbE_s[Z]$ denote the conditional expectation $\bbE[Z~|~X_0=s]$, for an arbitrary random variable $Z$. 
Consider $\bbE_s[R_0^{T_0}~|~X_{T_0}=s']$ for some fixed $s'\in\dcluster $. We immediately have that $\bbE_s[R_0^{T_0}~|~X_{T_0}=s']=0$ if $s=s'$, and is undefined if $s\notin \Uprime:=\{ s~|~h_{s'}(s)>0\}$ (note that $(\dcluster \setminus\{s'\})\subseteq (\clusterindex\setminus \Uprime)$ from~\eqref{eqn:t0_cases}). We will need the following Lemma.
\begin{lemma}\label{lem:one-step-rews}
For $s\in\Ui\cap \Uprime, s'\in\dcluster , {s''}\in \Uprime, a\in A$,
\[
\bbE[R_1^{T_0} ~|~ X_{T_0}=s', X_1={s''}, T_0\geq 1]  =  \bbE_{s''}[R_0^{T_0}~|~ X_{T_0}=s']
\]
and therefore,
\[
\bbE_s[R_0^{T_0}~|~X_{T_0}=s', X_1={s''}, a_1=a] = R(s,a,{s''}) + \Gamma(s,a,{s''})\bbE_{s''}[R_0^{T_0}~|~X_{T_0}=s'] .
\]
\end{lemma}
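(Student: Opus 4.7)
The plan is to prove both identities as a pair of applications of the (strong) Markov property at time $t=1$, exploiting the fact that $R_1^{T_0}$ and $X_{T_0}$ depend only on the post-time-$1$ history $(X_t,a_t)_{t\ge 1}$, together with the decomposition
\[
R_0^{T_0} \;=\; R(X_0,a_1,X_1) \;+\; \Gamma(X_0,a_1,X_1)\,R_1^{T_0}\qquad\text{on }\{T_0\ge 1\},
\]
which is immediate from the definition of $R_T^{T'}$ in Section~\ref{sec:coarse_rewards}. This decomposition reduces the second displayed identity of the lemma to the first one, once we know that the conditional expectation of $R_1^{T_0}$ only sees $X_1$ (and the event $\{T_0\ge 1\}$) among the variables $X_0,a_1,X_1$.

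For the first identity I will argue by a time-shift. Introduce the shifted process $\widetilde X_t := X_{t+1}$, $\widetilde a_t := a_{t+1}$. By the Markov property, conditionally on $X_1=s''$, the process $(\widetilde X_t,\widetilde a_t)_{t\ge 0}$ has the same law as the original chain driven by $\pi_\clusterindex$ started from $s''$; in particular it is independent of $(X_0,a_1)$. Under this identification the random time $T_0-1$ becomes the first hitting time $\widetilde T_0$ of $\dcluster$ for the shifted chain, the random variable $R_1^{T_0}$ becomes the accumulated discounted reward $\widetilde R_0^{\widetilde T_0}$, and $X_{T_0}$ becomes $\widetilde X_{\widetilde T_0}$. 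Therefore
\[
\bbE[R_1^{T_0}\,|\,X_{T_0}=s',\,X_1=s'',\,T_0\ge 1]
 \;=\;
\bbE_{s''}[\widetilde R_0^{\widetilde T_0}\,|\,\widetilde X_{\widetilde T_0}=s']
 \;=\;
\bbE_{s''}[R_0^{T_0}\,|\,X_{T_0}=s'],
\]
which is the first claim. Two boundary cases need a quick sanity check: if $s''\in\dcluster$, so that $s''=s'$ necessarily (using $s''\in \Uprime$ and $T_0=1$), both sides are zero by the convention $R_T^T\equiv 0$; if $s''\in\Ui\cap \Uprime$, the conditioning on $\{X_{T_0}=s'\}$ has positive probability because $h_{s'}(s'')>0$, so the conditional expectations are well defined.

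For the second identity I would apply the decomposition above inside $\bbE_s[\,\cdot\,|\,X_{T_0}=s',X_1=s'',a_1=a]$. Since $s\in\Ui$ gives $T_0\ge 1$ automatically, the first term $R(X_0,a_1,X_1)$ is a.s.\ equal to the deterministic value $R(s,a,s'')$, and the factor $\Gamma(X_0,a_1,X_1)=\Gamma(s,a,s'')$ can be pulled outside. What remains is $\bbE[R_1^{T_0}\,|\,X_{T_0}=s',\,X_1=s'',\,X_0=s,\,a_1=a]$, and by the Markov property at time $1$ the conditioning on $X_0=s$ and $a_1=a$ is redundant once $X_1=s''$ is fixed, so this conditional expectation equals $\bbE[R_1^{T_0}\,|\,X_{T_0}=s',X_1=s'',T_0\ge 1]$; the first identity then finishes the argument.

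The main obstacle, and the place I would expect to be most careful, is verifying that all conditional probabilities are well defined and that the Markov-property manipulation at $t=1$ is valid when the conditioning involves the \emph{future} event $\{X_{T_0}=s'\}$. The cleanest way to avoid confusion is to factor the conditioning via Bayes' rule exactly as was done in~\eqref{eqn:Ph_defn} and~\eqref{eqn:Phtilde_defn}: one writes $\bbP(\{X_{T_0}=s'\}\cap\{X_1=s''\}\mid X_0=s,a_1=a)=P_\clusterindex(s,a,s'')\pi_\clusterindex(s,a)\,h_{s'}(s'')$ and uses the hypothesis $h_{s'}(s'')>0$ (i.e.\ $s''\in \Uprime$) to legitimize the conditioning before invoking the time-shift argument.
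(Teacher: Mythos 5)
Your proof is correct and follows essentially the same route as the paper: the same decomposition $R_0^{T_0}=R(X_0,a_1,X_1)+\Gamma(X_0,a_1,X_1)R_1^{T_0}$ on $\{T_0\ge 1\}$ reduces the second identity to the first, and the first identity is a time-shift/homogeneity argument that the paper simply carries out explicitly at the level of cylinder events $B_{m,n}^p$ and the trajectory functional $f$. Your version is a cleaner, higher-level phrasing of the same shift (and you are, if anything, slightly more careful than the paper in keeping the conditioning on the future event $\{X_{T_0}=s'\}$ throughout, justified via the Bayes factorization as in~\eqref{eqn:Ph_defn}).
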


\begin{proof}[Proof of Lemma \ref{lem:one-step-rews}]
We have
\begin{align*}
\bbE_s[R_0^{T_0} & ~|~X_{T_0}=s', X_1={s''}, a_1=a]\\
&= R(s,a,{s''}) + \Gamma(s,a,{s''})\bbE_s\biggl[R(X_1,a_2,X_2) \;+ \biggr. \\
&\qquad \biggl. \sum_{t=2}^{T_0-1}\left\lbrace\prod_{\tau=1}^{t-1}\Gamma\bigl(X_{\tau},a_{\tau+1}, X_{\tau+1}\bigr)\right\rbrace
R\bigl(X_t,a_{t+1}, X_{t+1}\bigr) ~\bigl|\bigr.~X_{T_0}=s', X_1={s''}, a_1=a
\biggr]\\
&= R(s,a,{s''}) + \Gamma(s,a,{s''})\bbE[R_1^{T_0}~|~X_{T_0}=s', X_1={s''}, T_0\geq 1].
\end{align*}
If ${s''}=s'$ then there is nothing more to show, since $R_1^1=0$. For ${s''}\in\Ui\cap \Uprime$, it will suffice to show that $\bbE[R_1^{T_0}~|~ X_1={s''}, T_0\geq 1] = \bbE_{s''}[R_0^{T_0}]$. Given a sequence of states $(i_n,\ldots,i_{n+p-m})$ and actions $(a_{n+1},\ldots,a_{n+p-m})$ with $0\leq n,m< p$, define the event
\[
B_{m,n}^p := \Biggl(\bigcap_{j=0}^{p-m} \{X_{m+j}=i_{n+j}\}\Biggr)\cap
\Biggl(\bigcap_{j=1}^{p-m} \{a_{m+j}=a_{n+j}\}\Biggr)
\]
%\begin{align*}
%B_m^n := \{X_m=i_m,a_{m+1}=a_{m+1},X_{m+1}=i_{m+1},\ldots,a_{n}=a_{n},X_{n}=i_{n}\},\\
%B_m^n := \{X_{m-1}=i_m,a_{m+1}=a_{m+1},X_{m+1}=i_{m+1},\ldots,a_{n}=a_{n},X_{n}=i_{n}\},
%\]
and consider the conditional probability, for $n\geq 1$,
\begin{align*}
\bbP(\{T_0=n\}\cap B_{1,1}^{T_0} ~|~ X_1={s''}, T_0\geq 1) &= 
\bbP(B_{1,1}^{T_0} ~|~ T_0=n, X_1={s''})\bbP(T=n~|~ X_1={s''}, T_0\geq 1) \\
&= \bbP(B_{1,1}^{n} ~|~ X_1={s''},X_2\notin\dcluster ,\ldots,X_{n-1}\notin\dcluster ,X_n\in\dcluster )\\
 &\qquad \times \bbP(T_0=n~|~ X_1={s''}, T_0\geq 1) \\
&= \bbP(B_{0,1}^{T_0} ~|~ T_0=n-1,X_0={s''})\bbP(T_0=n-1~|~ X_0={s''}) \\
&= \bbP(\{T_0=n-1\}\cap B_{0,1}^{T_0} ~|~ X_0={s''}),
\end{align*}
where we have used the fact that $\bbP(T_0=n~|~ X_1={s''}, T_0\geq 1) =\bbP(T_0=n-1~|~ X_0={s''})$. This latter equality is true, since, by homogeneity, 
\begin{align*}
\bbP(T_0=n~|~ X_m={s''}, T_0\geq m) &= \bbP\Biggl(\bigcap_{j=m}^{n-1}\{X_j\notin\dcluster \} \cap \{X_n\in\dcluster \} ~\Bigl|\Bigr.~ X_m={s''}\Biggr) \\
&=  \bbP\Biggl(\bigcap_{j=0}^{n-m-1}\{X_j\notin\dcluster \} \cap \{X_{n-m}\in\dcluster \} ~\Bigl|\Bigr.~ X_0={s''}\Biggr)\\
&= \bbP(T_0=n-m~|~ X_0={s''})
\end{align*}
for $n\geq m$. Next, let
\[
f(i_0,\ldots,i_{n},a_1,\ldots,a_{n}):= R(i_0,a_1, i_1) + \sum_{t=1}^{n-1}
\left[\prod_{\tau=0}^{t-1}\Gamma\bigl(i_{\tau},a_{\tau+1}, i_{\tau+1}\bigr)\right]
R\bigl(i_t,a_{t+1}, i_{t+1}\bigr).
\]
Then, assuming $T_0 <\infty$ a.s., we have
\begin{align*}
\bbE[R_1^{T_0} & ~|~ X_1={s''}, T_0\geq 1] \\ 
 &= 
  \sum_{1\leq n <\infty}\sum_{\substack{i_1,\ldots,i_n\in \clusterindex\\ a_2,\ldots,a_n\in A}} \bbP(\{T_0=n\}\cap B_{1,1}^{T_0} ~|~ X_1={s''}, T_0\geq 1) f(i_1,\ldots,i_{n},a_2,\ldots,a_{n})\\
  &= \sum_{1\leq n <\infty}\sum_{\substack{i_1,\ldots,i_n\in \clusterindex\\ a_2,\ldots,a_n\in A}}
  \bbP(\{T_0=n-1\}\cap B_{0,1}^{T_0} ~|~ X_0={s''}) f(i_1,\ldots,i_{n},a_2,\ldots,a_{n}) \\
&= \sum_{0\leq n <\infty}\sum_{\substack{i_0,\ldots,i_n\in \clusterindex\\ a_1,\ldots,a_n\in A}}
  \bbP(\{T_0=n\}\cap B_{0,0}^{T_0} ~|~ X_0={s''}) f(i_0,\ldots,i_{n},a_1,\ldots,a_{n})\\
&= \bbE[R_0^{T_0}~|~ X_0=s''] \,.
\end{align*}
\end{proof}

With the above definitions, we turn to proving the proposition.
\begin{proof}[Proof of Proposition \ref{prop:coarse_rewards}]
With the same choice of $s'$ used in $\bbE_s[R_0^{T_0}~|~X_{T_0}=s']$, define $h_{s'}$ as in Equation~\eqref{eqn:h_function} and let as above $\Uprime:=\{s\in \clusterindex ~|~h_{s'}(s)>0\}$.
For $s\in \Ui\cap \Uprime = \Uprime\setminus\{s'\}$, a one-step analysis gives
\begin{align*}
\bbE_s[R_0^{T_0} & ~|~X_{T_0}=s'] \\   
 &= \bbE_s\bigl\{\bbE_s[R_0^{T_0}~|~X_{T_0}=s', X_1, a_1] ~\bigl|\bigr.~ X_{T_0}=s'\bigr\} \\
&= \sum_{s''\in \clusterindex, a\in A}\bbP_s(X_1=s'',a_1=a ~|~X_{T_0}=s')\bbE_s[R_0^{T_0}~|~X_{T_0}=s', X_1=s'', a_1=a] \\
&= \sum_{s''\in \Uprime, a\in A}P_{h_{s'}}(s,a,s'')\bbE_s[R_0^{T_0}~|~X_{T_0}=s', X_1=s'', a_1=a] \\
&= \sum_{s''\in \Uprime, a\in A}P_{h_{s'}}(s,a,s'')\bigl(R(s,a,s'') + \Gamma(s,a,s'')\bbE_{s''}[R_0^{T_0}~|~X_{T_0}=s'] \bigr) \\ 
&= \smashoperator[r]{\sum_{s''\in \Ui\cap \Uprime, a\in A}} P_{h_{s'}}(s,a,s'')\Gamma(s,a,s'')\bbE_{s''}[R_0^{T_0}~|~X_{T_0}=s']
 + \sum_{s''\in \Uprime, a\in A}P_{h_{s'}}(s,a,s'')R(s,a,s'')
\end{align*}
where the third equality follows from the second using Equation~\eqref{eqn:Ph_defn} and the fact that $P_{h_{s'}}(s,a,s'')>0$ only for $s''\in \Uprime$, and the fourth follows from the third applying Lemma~\ref{lem:one-step-rews}. The last equality follows after rearranging terms.

With these expectations in hand, we can compute the discounted rewards between bottlenecks. Note that $\bbE[R_{T_0}^{T_1} ~|~ X_{T_0}=s, X_{T_1}=s'] = \bbE_s[R_{T_0}^{T_1} ~|~ X_{T_1}=s']$, for  $s,s'\in\dcluster $. By convention, we set $\bbE_s[R_{T_0}^{T_1} ~|~ X_{T_1}=s'] = 0$ if $\widetilde{P}(s,a,s')=0$. For $s\in\dcluster $ such that $(s,s')\in\supp_a(\widetilde{P})$, we have $T_0=0$, and a one-step analysis similar to the above gives
\begin{align*}
\bbE_s & [R_{T_0}^{T_1} ~|~ X_{T_1}=s']\\  
 &= \bbE_s\bigl\{\bbE_s[R_{T_0}^{T_1} ~|~ X_{T_1}=s', X_{T_0+1}, a_{T_0+1}] ~\bigl|\bigr.~ X_{T_1}=s'\bigr\} \\
&=\smashoperator[r]{\sum_{s''\in \clusterindex, a\in A}}\bbP_s(X_{T_0+1}=s'', a_{T_0+1}=a ~|~ X_{T_1}=s')
\bbE_s[R_{T_0}^{T_1} ~|~ X_{T_1}=s', X_{T_0+1}=s'', a_{T_0+1}=a] \\
%&= \sum_{\substack{s''\in \Ui\cap \Uprime\\ a\in A}}P_{\tilde{h}_{s'}}(s,a,s'')\bigl(R(s,a,s'') + %\Gamma(s,a,s'')\bbE_{s''}[R_0^{T_0}~|~X_{T_0}=s'] \bigr) 
%+ \sum_{a\in A}P_{\tilde{h}_{s'}}(s,a,s')R(s,a,s')
&= \smashoperator[r]{\sum_{s''\in \Ui\cap \Uprime, a\in A}} P_{\tilde{h}_{s'}}(s,a,s'')\Gamma(s,a,s'')\bbE_{s''}[R_0^{T_0}~|~X_{T_0}=s'] + \sum_{s''\in \Uprime, a\in A}P_{\tilde{h}_{s'}}(s,a,s'')R(s,a,s'')
\end{align*}
where we have used the fact that
\begin{multline*}
\bbE_s[R_{T_0}^{T_1} ~|~ X_{T_1}=s', X_{T_0+1}=s'', a_{T_0+1}=a]\\ 
= 
\begin{cases}
R(s,a,s'') + \Gamma(s,a,s'')\bbE_{s''}[R_0^{T_0}~|~X_{T_0}=s'] & \text{if } s''\in\Ui \\
R(s,a,s') & \text{if } s''=s'
\end{cases}.
\end{multline*}
As mentioned in Section~\ref{sec:trans_probs}, the boundary is reachable from any state by assumption, so $\bbP_s(T_m < \infty)=1$ for all $s\in \clusterindex, m <\infty$. Hence, the solution to the linear system above is unique and bounded~\citep[Thm. 4.2.3]{Norris97}.
\end{proof}

We express the linear systems describing the coarse rewards above in matrix-vector form for convenience.
Fix a destination bottleneck $s'\in\dcluster $. Consider $P_{h_{s'}},P_{\tilde{h}_{s'}}$ as tensors, and partition $P_{h_{s'}}$ into the pieces $(P_{h_{s'}}^{\circ})_{s,a,k} = P_{h_{s'}}(s,a,k)$ for $s,k\in\Ui\cap \Uprime, a\in A$ and $(P_{h_{s'}}^{\partial})_{s,a,k} =  P_{h_{s'}}(s,a,k)$ for $s\in\Ui\cap \Uprime, k=s',a\in A$. Next, partition $P_{\tilde{h}_{s'}}$ into the pieces $(P_{\tilde{h}_{s'}}^{\circ})_{s,a,k} = P_{\tilde{h}_{s'}}(s,a,k)$ for $(s,s')\in\supp_a(\widetilde{P}), k\in\Ui\cap \Uprime, a\in A$ and $(P_{\tilde{h}_{s'}}^{\partial})_{s,a,k} =  P_{\tilde{h}_{s'}}(s,a,k)$ for $(s,s')\in\supp_a(\widetilde{P}), k=s',a\in A$.
Similarly, partition $\Gamma_i,R_i$ into pieces $\Gamma_{h_{s'}}^{\circ},\Gamma_{h_{s'}}^{\partial},\Gamma_{\tilde{h}_{s'}}^{\circ},\Gamma_{\tilde{h}_{s'}}^{\partial}$ and 
$R_{h_{s'}}^{\circ},R_{h_{s'}}^{\partial},R_{\tilde{h}_{s'}}^{\circ},R_{\tilde{h}_{s'}}^{\partial}$
corresponding to the respective pieces of $P_{h_{s'}},P_{\tilde{h}_{s'}}$ mentioning the same state/action triples $(s,a,s')$. Equations~\eqref{eqn:coarse_rews_linsys} and~\eqref{eqn:coarse_rews_sums} may be written, respectively, as
\begin{align*}
\bigl(I - (P_{h_{s'}}^{\circ}\circ\Gamma_{h_{s'}}^{\circ})^{\pi_\clusterindex}\bigr)h_q &=
\bigl[(P_{h_{s'}}^{\circ}\circ R_{h_{s'}}^{\circ})^{\pi_\clusterindex} ~~ (P_{h_{s'}}^{\partial}\circ R_{h_{s'}}^{\partial})^{\pi_\clusterindex}\bigr]\bbone \\
h_b &= (P_{\tilde{h}_{s'}}^{\circ}\circ\Gamma_{\tilde{h}}^{\circ})^{\pi_\clusterindex}h_q +
\bigl[(P_{\tilde{h}_{s'}}^{\circ}\circ R_{\tilde{h}_{s'}}^{\circ})^{\pi_\clusterindex} ~~ (P_{\tilde{h}_{s'}}^{\partial}\circ R_{\tilde{h}_{s'}}^{\partial})^{\pi_\clusterindex}\bigr]\bbone.
\end{align*}

Two final observations of practical interest are in order. 
The solution of the linear systems defined above (one for each destination bottleneck $s'\in\dcluster $) can be potentially carried out efficiently by preconditioning some systems on the basis of solutions to the others. In particular, if a particular bottleneck $s'$ defining a system is close to another in the statespace graph in terms of the diffusion distance induced by $\Ppirestrtocluster$, then there is good reason to believe that the solutions will be close. Second, calculation of the rewards above is  closely related computationally to calculation of the coarse discount factors. The next section derives the discount factors, and discusses when one set of quantities can be obtained from the other at essentially no cost.

\subsection{Compressed discount factors $\widetilde{\Gamma}$: Proof of Proposition \ref{prop:coarse_discount_factors}}

\begin{proof}[Proof of Proposition \ref{prop:coarse_discount_factors}]
The approach is similar to that of the rewards. First note that if we set $R(s,a,s')\equiv 1$ uniformly for all $s,s'\in \clusterindex, a\in A$, then $\Delta_T^{T'} = R_T^{T'+1} - R_T^{T'}$ (and $T'+1$ is clearly a stopping time, so this quantity is well-defined). If $s\in\Ui\cap \Uprime, s'\in\dcluster , s''\in \Uprime, a\in A$, then invoking Lemma~\ref{lem:one-step-rews}, 
\begin{equation*}\label{eqn:delta_t0_onestep}
\begin{aligned}
\bbE_s[\Delta_{0}^{T_0} ~|~ X_{T_0}=s', X_1=s'', a_1=a] &=
\Gamma(s,a,s'')\bbE_s[\Delta_{1}^{T_0} ~|~ X_{T_0}=s', X_1=s'']\\
&= \Gamma(s,a,s'')\bbE_s[R_{1}^{T_0+1} - R_{1}^{T_0} ~|~ X_{T_0}=s', X_1=s'']\\
&= \Gamma(s,a,s'')\bbE_{s''}[R_{0}^{T_0+1} - R_{0}^{T_0} ~|~ X_{T_0}=s']\\
&= \Gamma(s,a,{s''})\bbE_{s''}[\Delta_{0}^{T_0} ~|~ X_{T_0}=s'], 
\end{aligned}
\end{equation*}
and $\bbE_s[\Delta_{0}^{T_0} ~|~ X_{T_0}=s', X_1={s''}, a_1=a] = \Gamma(s,a,s')$ if ${s''}=s'$.
Thus to obtain the expectations $\bbE_s[\Delta_{0}^{T_0} ~|~ X_{T_0}=s'], s\in\Ui\cap \Uprime$, we may solve the linear system defined by
\begin{align*}
\bbE_s[\Delta_{0}^{T_0} & ~|~ X_{T_0}=s'] \\
& = \sum_{{s''},a}P_{h_{s'}}(s,a,{s''})\bbE_s[\Delta_{0}^{T_0} ~|~ X_{T_0}=s', X_1={s''}, a_1=a] \\
&= \sum_{{s''}\in\Ui\cap \Uprime, a\in A}P_{h_{s'}}(s,a,{s''})\Gamma(s,a,{s''})\bbE_{s''}[\Delta_{0}^{T_0} ~|~ X_{T_0}=s'] + \sum_{a\in A}P_{h_{s'}}(s,a,s')\Gamma(s,a,s') 
\end{align*}
where $P_{h_{s'}}$ is given by Equation~\eqref{eqn:Ph_defn} and depends on the choice of $s'$.
Next, if $s,s'\in\dcluster , {s''}\in\Ui\cap \Uprime$, by the strong Markov property,
\begin{align*}
\bbE_s[\Delta_{T_0}^{T_1} ~|~ X_{T_1}=s', X_{T_0+1}={s''}, a_{T_0+1}=a]
 &= \Gamma(s,a,{s''})\bbE[\Delta_{1}^{T_1} ~|~ X_{T_1}=s', X_{1}={s''}, T_0=0]\\
&=\Gamma(s,a,{s''})\bbE[\Delta_{1}^{T_0} ~|~ X_{T_0}=s', X_{1}={s''}, T_0\geq 1 ] \\
&=\Gamma(s,a,{s''})\bbE_{s''}[\Delta_{0}^{T_0} ~|~ X_{T_0}=s'] 
\end{align*} 
where the third equality follows from the second applying Lemma~\ref{lem:one-step-rews} with 
$\Delta_1^{T_0} = R_1^{T_0+1} - R_1^{T_0}$. If ${s''}=s'$, then we simply have $\bbE_s[\Delta_{T_0}^{T_1} ~|~ X_{T_0}=s', X_1={s''}, a_1=a]=\Gamma(s,a,s')$.

With these facts in hand, the compressed discount factors may be found from the expectations  $\{\bbE_s[\Delta_{0}^{T_0} ~|~ X_{T_0}=s'], s\in\Ui\cap \Uprime\}$ computed above, as
\begin{align*}
\bbE_s[\Delta_{T_0}^{T_1} & ~|~ X_{T_1}=s'] \\
&= \sum_{{s''}\in \clusterindex,a\in A}P_{\tilde{h}_{s'}}(s,a,{s''})\bbE_s[\Delta_{T_0}^{T_1} ~|~ X_{T_1}=s', X_{T_0+1}={s''}, a_{T_0+1}=a] \\
&= \sum_{{s''}\in\Ui\cap \Uprime, a\in A}P_{\tilde{h}_{s'}}(s,a,{s''})\Gamma(s,a,{s''})\bbE_{s''}[\Delta_{0}^{T_0} ~|~ X_{T_0}=s'] + \sum_{a\in A}P_{\tilde{h}_{s'}}(s,a,{s'})\Gamma(s,a,s')
\end{align*}
for $(s,s')\in\supp_a(\widetilde{P})$, where $P_{\tilde{h}_{s'}}$ is defined by Equation~\eqref{eqn:Phtilde_defn} and depends on the choice of $s'$.
\end{proof}
We note that for each destination bottleneck $s'\in\dcluster $, the linear system appearing in Proposition~\ref{prop:coarse_discount_factors} has the same left-hand side as the corresponding linear system for $s'$ in Section~\ref{sec:coarse_rewards}: we can compute the compressed discounts essentially for free if we have previously computed the compressed rewards (or vice versa), provided the resulting  solutions to the discount factor equations are non-negative. If they are not non-negative, separate non-negative least squares solutions must be computed, although there may still be helpful preconditioning possibilities.

\bibliography{mdp_journal}

\end{document}